
\documentclass{article}

\usepackage{amsmath,amssymb,amsfonts}
\usepackage{amsthm}
\usepackage{algorithm}
\usepackage{algorithmic}
\usepackage{multirow}
\usepackage{caption}

\usepackage{thmtools} 
\usepackage{thm-restate}
\usepackage{mathtools}
\mathtoolsset{showonlyrefs}
\newtheorem{thm}{Theorem}[section]
\newtheorem{lem}[thm]{Lemma}
\newtheorem{remark}[thm]{Remark}

\newtheorem{defin}[thm]{Definition}

\usepackage{enumitem}
\setitemize{label=\textbullet, leftmargin=*, nolistsep}
\allowdisplaybreaks[2]
\usepackage{enumitem} 
\usepackage{longtable}
\usepackage{setspace}

\usepackage{bbm} 

\usepackage{microtype}
\usepackage{graphicx}
\usepackage{subfigure}
\usepackage{booktabs} 

\usepackage{hyperref}


\usepackage[accepted]{icml2023}

\newcommand{\bsigma}{\bar{\sigma}}
\newcommand{\upbdcstt}{D}
\newcommand{\Reg}{\mathrm{Reg}}
\newcommand{\Safeset}{\mathcal{S}}
\newcommand{\Subopt}{\mathcal{B}}
\newcommand{\Riskset}{\mathcal{R}}
\newcommand{\Deltav}{\Delta^\mathrm{v}}
\newcommand{\AlgSCombUCB}{{\sc PASCombUCB }}
\newcommand{\rmv}{\mathrm{v}}
\newcommand{\rmR}{\mathrm{R}}
\newcommand{\rmu}{\mathrm{u}}
\newcommand{\rml}{\mathrm{l}}
\newcommand{\phase}{p}
\newcommand{\addsln}{l}
\newcommand{\calG}{\mathcal{G}}
\newcommand{\calGmu}{\mathcal{G}^\mu}
\newcommand{\abcstt}{C}
\newcommand{\bomega}{\bar{\omega}}
\newcommand{\bDelta}{\bar{\Delta}}

\newcommand{\tomega}{\tilde{\omega}}
\newcommand{\rmsum}{\mathrm{sum}}
\newcommand{\instance}{\Lambda}
\newcommand{\Tp}{T^\prime}
\newcommand{\Regsub}{\mathrm{Reg}_{\mathrm{1}}}
\newcommand{\Regsafe}{\mathrm{Reg}_{\mathrm{2}}}
\newcommand{\Regfail}{\mathrm{Reg}_{\mathrm{3}}}
\newcommand{\Deltamu}{\Delta^\mu}
\newcommand{\epsilonv}{\epsilon^{\mathrm{v}}}
\newcommand{\epsilonmu}{\epsilon^{\mu}}
\newcommand{\safe}{\mathrm{safe}}
\newcommand{\bbN}{\mathbb{N}}
\newcommand{\bern}{\mathrm{Bern}}
\newcommand{\CombUCB}{\textsc{CombUCB1} }

\newcommand{\calA}{\mathcal{A}}
\newcommand{\calE}{\mathcal{E}}
\newcommand{\calH}{\mathcal{H}}
\newcommand{\bbE}{\mathbb{E}}
\newcommand{\bbP}{\mathbb{P}}
\newcommand{\rmKL}{\mathrm{KL}}

\DeclareMathOperator*{\argmax}{arg\,max}


\usepackage{hhline}
\makeatletter
\newcommand{\thickhline}{%
	\noalign {\ifnum 0=`}\fi \hrule height 1.5pt
	\futurelet \reserved@a \@xhline
}

\usepackage{url}



\begin{document}
	
	\twocolumn[
 \icmltitle{Probably Anytime-Safe Stochastic Combinatorial Semi-Bandits} 
	\icmlsetsymbol{equal}{*}
	
	%
	
	\begin{icmlauthorlist}
\icmlauthor{Yunlong Hou}{math}     
		\icmlauthor{Vincent Y.~F.~ Tan}{math,ece,iora}  
        \icmlauthor{Zixin Zhong}{cs}  
	\end{icmlauthorlist}
	
	\icmlaffiliation{math}{Department of Mathematics, National University of Singapore, Singapore}
    \icmlaffiliation{iora}{Institute of Operations Research and Analytics, National University of Singapore, Singapore}
    \icmlaffiliation{ece}{Department of Electrical and Computer Engineering, National University of Singapore, Singapore}
    \icmlaffiliation{cs}{Department of Computing Science, University of Alberta, Canada}
	\icmlcorrespondingauthor{Zixin Zhong}{zzhong10@ualberta.ca}
	
	\icmlkeywords{Machine Learning, ICML}
	
	\vskip 0.3in
	]
	
	
	
	\printAffiliationsAndNotice{}  

	\begin{abstract}
	Motivated by concerns about making online decisions that incur undue amount of risk at each time step, in this paper, we formulate the probably anytime-safe stochastic combinatorial semi-bandits problem. In this problem, the agent is given the option to select a subset of size at most $K$ from a set of $L$ ground items. Each item is associated to  a certain mean reward as well as a variance that represents its risk. To mitigate the risk that the agent incurs, we require that with probability at least $1-\delta$, over the entire horizon of time $T$, each of the choices that the agent makes should contain items whose sum of variances does not exceed a certain variance budget. We call this probably anytime-safe constraint. Under this constraint, we design and analyze an algorithm {\sc PASCombUCB} that minimizes the regret over the horizon of time $T$. By developing accompanying information-theoretic lower bounds, we show that under both the problem-dependent and problem-independent paradigms, {\sc PASCombUCB} is almost asymptotically optimal. 
    Experiments are conducted to corroborate our theoretical findings. Our problem setup, the proposed {\sc PASCombUCB} algorithm, and novel analyses are   applicable to domains such as  recommendation systems and transportation in which an agent is allowed to choose multiple items at a single time step and wishes to control the risk over the whole time horizon.
	\end{abstract}
	
		\vspace{-.15in}
	\section{Introduction}


Audrey, a  burgeoning social media influencer, makes profits by posting advertisements (ads) under her account. The advertiser pays her only if an ad is clicked.  
Having taken a class in online optimization, Audrey aims to leverage the theory of bandit algorithms to design an exploration-exploitation strategy to  ensure that the expected number of clicks of the ads she has posted is maximized. Since the platform is space-limited, Audrey can only post no more than $K$   out  of $L$ available ads everyday. Some of these ads, however,   include an innocuous-looking lottery or voucher that asks the viewer of the social media platform to provide personal information that may lead to fraud or information leakage. If a user clicks it and becomes a victim of fraud, this may damage Audrey's reputation. Audrey thus has to be circumspect in which and how many ads she posts. 

On the one hand, Audrey wants to post as many ads that she believes to have high click-through rates as possible; the  {\em expected reward} she obtains is then the sum of expected rewards of the individual ads. On the other hand, she should balance this with the  {\em total risk} of the ads that are posted over a period of time; similarly, the risk of a set of ads posted is modeled as the sum of the risks of the individual ads. How should  Audrey plan the  posts of her ads over a period of time to learn their individual expected rewards and risks to ensure that her total expected reward is maximized and, at the same time,  with high probability, the risk incurred {\em at any point in time} in her exploration-exploitation strategy is bounded by some fixed permissible threshold?

In addition to influencers like Audrey, online platforms that make profits by advertising such as YouTube and TikTok also encounter similar problems.
We are therefore motivated to
formulate the {\em probably anytime-safe stochastic combinatorial semi-bandits} (PASSCSB) problem which is a   {\em regret minimization} problem with an anytime safety constraint. More precisely, we aim to design and analyze the performance of an algorithm that,  with high probability, ensures that the risk (as measured by the variance) {\em at any time} step is below a given threshold and  whose regret is minimized. 

\textbf{\underline{Literature review.}}
There is a large body of works that take risk into account while conducting the exploration and/or exploitation of the unknown reward distributions in the stochastic multi-armed bandits (MABs) literature. 

Under the risk-constrained pure exploration framework, \citet{hou2023}  and \citet{David2018} attempted to identify the optimal arm within those low-risk (based on their variances or $\alpha$-quantiles) arms with probability at least $1-\delta$. 

Under the {\em risk-aware} regret minimization setup, \citet{Sani2013}, \citet{Vakili2016} and \citet{Zhu2020} consider the mean-variance as the measure to be minimized over a fixed time horizon. \citet{Cassel2018} provided a general and systematic instruction to analyzing risk-aware MABs, i.e., the risk was incorporated in the {\em Empirical Distribution Performance Measure} and the U-UCB algorithm is adopted to perform ``proxy regret minimization''.
While these risk-aware algorithms  reduce the overall risk during the exploration and exploitation process, the risk is not strictly enforced to be below a prescribed  threshold; rather the risk measure is penalized within the objective function, similarly to a Lagrangian. 
Another setup similar to the risk-aware setup is the {\em constrained} bandits regret minimization. \citet{Mahdavi2012efficiency} required that the number of times the constraint can only be violated is at most sublinear in the horizon $T$. \citet{Kagrecha2020Constrained} proposed a CVaR constraint and performed exploration on the feasible arm, followed by exploration among the feasible arm set. Unlike our formulation, these algorithm are permitted to sample  risky arms during  exploration.

A  more stringent constraint can be found in the literature on {\em conservative bandits} \citep{Wu2016Conservative}, which requires the cumulative return at any time step to be  above a constant fraction of the return resulting from repeatedly sampling the base arm. \citet{Kazerouni2017ConsContLB} extended this setup to conservative contextual linear bandits and this was  further improved by \citet{Garcelon2020Improved}. A similar problem is {\em bandits with knapsacks} \citep{Badanidiyuru2018}, which imposes a   budget on the cumulative consumed resources  and the algorithm stops when the budget is depleted.

The most stringent constraint can be found in the   {\em safe bandits} problem.
\citet{Khezeli2020Safe} and \citet{Moradipari2020Stagewise} presented the {\sc SEGE}, {\sc SCLUCB}, and {\sc SCLTS} algorithms to tackle this problem. This problem  demands that the expected reward of the pulled arm at each time step be greater than a prescribed threshold with high probability,  i.e.,  the {\em ``stagewise safety constraint''}. The authors assumed the convexity (continuity) of the arm set and performed exploration around the explored safe arms, starting from a baseline safe arm. This continuity of the (super) arm set does not hold under the combinatorial semi-bandits setup. More comparisons are presented in App.~\ref{sec:additional}.

For the (unconstrained) combinatorial semi-bandits (CSB) setup, \citet{Chen2013Combinatorial} presented a UCB-type algorithm {\sc ComUCB1} to balance the trade-off between exploration and exploitation. \citet{Tight2015Kveton} improved the analysis of {\sc ComUCB1} and achieved a tight upper bound (within a specific set of instances). \citet{Matroid2014Kveton} introduced matroid structure to CSB and leveraged the matroid structure to design and analyze a greedy algorithm {\sc OMM}.
The risk-aware CSB problem is less studied by the community. \citet{Ayyagari2021Risk} utilized CVaR as the risk-aware measure within the CSB problem, where the risk constraint was not explicitly specified. 

We observe that the existing literature mentioned above are not directly applicable to Audrey, while
our setting  (described formally below) dovetails neatly with her problem.
Audrey can utilize our algorithm to sequentially and adaptively select different sets of ads everyday and almost always (i.e., with high probability) avoids  sets of ads with unacceptably high risks.
Beyond any specific applications,  we believe that this problem is of fundamental theoretical importance in the broad context of regret minimization in combinatorial multi-armed bandits.

\noindent \underline{{\bf Main Contributions.}}
Our first contribution lies at the formulation of a novel PASSCSB problem.
In the PASSCSB problem, there are $L$ items with different reward distributions. At each time step, a random reward is generated from each item's distribution.
Based on the previous observations, the learning agent selects a {\em solution} at each time step. A solution consists of at most $K$ items. The expected return (variance) of a solution is the summation of the reward (variance) of its constituents. 
Given $T\in\mathbb{N}$, the agent aims to maximize the cumulative return over $T$ time steps and ensure that with probability $1-\delta$ the variance of all selected solutions are below a given threshold.

The key challenge of regret minimization under the PASSCSB lies in handling two distinct tasks---we seek optimality in the mean and safeness in the
variance  of each chosen solution. 
Our second contribution is the design and analysis of the \underline{P}robably \underline{A}nytime-\underline{S}afe \underline{Comb}inatorial \underline{UCB}  (or {\sc PASCombUCB}) algorithm. 

Thirdly, we also derive a problem-dependent upper bound on the regret of {\sc PASCombUCB}, which involves a {\em hardness} parameter $H( \Delta(\instance))$. We see that $H( \Delta(\instance))$ characterizes the effectiveness of ascertaining the safety of potential solutions in the regret.
To assess the optimality of {\sc PASCombUCB}, we prove an accompanying problem-dependent lower bound on the regret of any variance-constrained consistent
algorithm. The upper and lower problem-dependent bounds match in almost all the  parameters (except in $K$). Additionally, we show that if $\delta_T$ decays exponentially fast in $T$, the problem-dependent regret cannot be logarithmic in $T$.  
We further present a problem-independent upper bound on the regret of {\sc PASCombUCB} and a lower bound for any algorithm. Just as the problem-dependent bounds, these bounds also match in almost all the parameters.  

Lastly, experiments are conducted to illustrate the empirical performance and corroborate our theoretical findings.

In summary, this paper is the first to explore the regret minimization problem in the combinatorial bandits with an {\em anytime} constraint on the variance.
When $\delta\to 1$ and $\bsigma^2$ is large (so that the optimal safe solution is the one with the highest mean regardless of safety considerations), our problem reduces to the standard combinatorial semi-bandits~\citep{Tight2015Kveton}, and the regret incurred by the safety constraint vanishes, resulting in the same upper bound as the unconstrained case.
Furthermore, the framework and analysis of {\sc PASCombUCB} can be extended to other risk measures as long as there are appropriate concentration bounds, e.g.,   \citet{Bhat2019} or \citet{chang2021unifying} enables us to use CVaR or certain continuous functions as  risk measures within the generic {\sc PASCombUCB} framework. 

	\section{Problem Setup}\label{sec:prob_setup}
	For $ m\in\mathbb{N} $,  let $ [m]:=\{1,2,\ldots,m\} $.
	An instance of a  {\em variance-constrained stochastic combinatorial semi-bandit}  is a tuple $\instance=(E,\mathcal{A}_K,\nu,\bsigma^2)$. We describe the four elements of $\instance$ in the following. 
    Firstly,  the finite set $E = [L]$ is known as the {\em ground set} in which each $i\in E$ is known as an {\em item}. 
    Secondly, the family 
    $\mathcal{A}_K\subset\{S\in 2^E: |S|\leq K\}$
    is a collection of subsets of $E$ with cardinality at most $K$. Each element $S\in\mathcal{A}_K$ is known as a {\em solution} and $\mathcal{A}_K$ satisfies the condition that all subsets of $S \in \mathcal{A}_K$ remain solutions, i.e., $\mathcal{A}_K$ is  downward-closed. 
    Thirdly, the vector of probability distributions $\nu = (\nu_1, \nu_2, \ldots, \nu_L)$ contains $\sigma^2$-sub-Gaussian distributions $\{\nu_i\}_{i\in E}$  with means $\{\mu_i\}_{i\in E}$ and variances $\{\sigma_i^2\}_{i\in E}$.  
	The final element of an instance $\bsigma^2>0$ denotes the permissible upper bound on the variance. To avoid trivialities, we assume that $\bsigma^2>\sigma^2$ and $K\geq 2$.

    The {\em return} of item $i \in E$ is the random variable $W_i$   with distribution $\nu_i$.  
	The {\em (stochastic) return} of a solution $S\in\mathcal{A}_K$
    is $\sum_{i\in S}W_i$ where $W_i\sim \nu_i$.
	The {\em expected return} and {\em variance} of $S\in\mathcal{A}_K$ are 
	\begin{align}
		\mu_S:=\sum_{i\in S}\mu_i\quad\mbox{and}\quad \sigma_S^2:=\sum_{i\in S}\sigma^2_i
	\end{align}
 respectively.
    We further assume that every instance  $\instance$ satisfies  $\sigma_S^2\neq\bsigma^2$ for all  $S\in\mathcal{A}_K$ and each distribution $\nu_i$ is supported  in the interval $[0,1]$.

 
	Define $\Safeset:=\{S\in\mathcal{A}_K:\sigma_S^2< \bsigma^2\}$ to be the {\em safe set} which contains all the {\em safe} solutions. Let the complement of $\Safeset$ be the {\em unsafe set} $\Safeset^c$. 
	Denote the {\em optimal safe solution}  as
	$S^\star:=\argmax\{\mu_S:S\in\mathcal{S}\}$ with return $\mu^\star$. For simplicity, we assume that $S^\star$ is unique.  
	Denote the {\em suboptimal set} $\Subopt:=\{S\in\mathcal{A}_K:\mu_S<\mu^\star \}$ and the
	{\em risky set} $\Riskset:=\{S\in\mathcal{A}_K:\mu_S\geq\mu^\star ,S\neq S^\star\}$.
	For a solution $S$, let the mean gap $\Delta_S:=\mu^\star -\mu_S$ and the variance gap $\Deltav_S:=|\sigma_{S}^2-\bsigma^2|$.

An instance $\instance$, time horizon $T \in \mathbb{N}$ and confidence parameter $\delta\in(0,1)$ are specified. An agent, who knows  $E,\mathcal{A}_K$ and $\bsigma^2$ but not the vector of probability distributions $\nu$, interacts adaptively with the instance over $T $ time steps as follows.  At time step  $t\in[T]$, the agent uses a stochastic function $ \pi_t $ that selects a solution $ S_t\in \mathcal{A}_K $ based on the  observation history 
    $\mathcal{H}_{t-1}:=\left((S_s,\{W_i(s)\}_{i\in S_s})\right)_{s\in[t-1]}$. In other words, $S_t = \pi_t (\mathcal{H}_{t-1})$ is a  stochastic  function of the history $\mathcal{H}_{t-1}$. 
 	The agent  receives the random return $\sum_{i\in S_t}W_i(t)$, where $\{W(s) = \{W_i(s)\}_{i\in E}\}_{s\in[T]}$ are i.i.d.\ according to $\nu$  across time. The weights of the selected items  $\{W_i(t):i\in S_t\}$  are observed by the agent at each time $t\in [T]$. The collection of stochastic functions $\pi=\{\pi_t\}_{t\in [T]}$ is known as the agent's {\em policy}.

  {\setlength{\abovedisplayskip}{4.5pt}
\setlength{\belowdisplayskip}{4.5pt}
	The goal of the agent is to 
	minimize {\em the expected cumulative regret} (or simply {\em regret}) $\Reg (T)$ over the horizon $T$, subject to a certain risk constraint. More precisely, the  {\em regret} suffered by a policy $\pi$ employed by the agent   is defined as 
	\begin{align}
		\Reg^\pi(T):=\mathbb{E}_\pi\left[\sum_{t=1}^T \left(\sum_{i\in S^\star}W_i(t)-\sum_{i\in S_t}W_i(t)\right)\right] \label{eqn:regret}
  \end{align}
  The policy $\pi$ should satisfy the condition that all the solutions chosen $\{S_t^\pi\}_{t\in [T]} \subset \mathcal{A}_K$ are safe with probability at least $1-\delta$, i.e., 
  \begin{align}
		  \mathbb{P}_\pi\big[ \forall\, t\in [T], S_t^\pi\in\Safeset \big]\geq 1-\delta.\label{constraint}
	\end{align}
 This is referred to as the {\em probably anytime-safe} constraint. 

 
 In the problem-dependent lower bounds, we will refer to a certain class of ``good'' policies that operate as the time horizon $T\to\infty$ and the probability of being safe in the sense of~\eqref{constraint} tends to $1$. This is formalized in the following. 
    \begin{defin} 
    Fix an instance $\nu$ and a   vanishing sequence  $\{\delta_T\}_{T=1}^\infty\subset(0,1)$. A policy $\pi =\{\pi_t\}_{t=1}^\infty$ is said to be a  {\em $\{\delta_T\}_{T=1}^\infty$-variance-constrained consistent algorithm} if 
    \begin{itemize}
        \item $\Reg^\pi(T)=o(T^a)$ for all $a>0$ and 
        \item $\mathbb{P}_\pi\vphantom{\bigg[}\big[\forall\, t\in[T] , S_t^\pi\in\Safeset\big]\geq 1-\delta_T$ for all $T\in\mathbb{N}$.
    \end{itemize}
    \end{defin}
    \vspace*{-1.5em}
	We often omit the superscripts $\pi$ in $\Reg^\pi, S_t^\pi$ (or $A_t^\pi$ and $A_{t,r}^\pi$ in {\sc PASCombUCB}) and the subscripts $\pi$ in the probabilities and expectations if there is no risk of confusion.
}

\section{Our Algorithm: {\sc PASCombUCB}}
	Our algorithm \underline{P}robably \underline{A}nytime-\underline{S}afe \underline{Comb}inatorial \underline{UCB}  (or {\sc PASCombUCB}), presented in Algorithm~\ref{SafeCombUCB}, is designed to satisfy the probably anytime-safe constraint. In particular, we apply (and analyze) the {\sc Greedy-Split} subroutine in Line $11$; this subroutine has not been involved in an algorithm designed for standard combinatorial semi-bandits such as {\sc CombUCB1} \citep{Chen2013Combinatorial}.
	\begin{algorithm}[htbp]
		\caption{{\sc PASCombUCB}}
		\begin{algorithmic}[1]\label{SafeCombUCB}
			\STATE \textbf{Input}: 
			An instance $\instance$ (with unknown $\nu$), the horizon $T$ and the confidence parameter $\delta\in(0,1)$.
			\STATE Set phase counter $\phase=1$ and  time step counter $t=1$.
			\WHILE{$\exists\,  i\in E$ such that $ T_i(\phase-1)<2$}
			\STATE Pull $A_\phase\!=\!\argmax_{S:|S|\leq q} |\{i\! \in \!S: T_i(\phase-1)\!<\! 2\}|$.
			\STATE $\phase\leftarrow \phase+1$, $t\leftarrow t+1$.
			\ENDWHILE
			\STATE Update the sample mean, sample variance and confidence bounds according to \eqref{equ:Aestimates}.
			\STATE Update the empirically safe set $\mathcal{S}_\phase$ and possibly safe set $\bar{\mathcal{S}}_\phase$ according to \eqref{empsafe} and \eqref{possafe} respectively.
			\WHILE{$t <T$}
			\STATE Identify a solution $A_\phase\!=\!\argmax_{A\in\bar{S}_{\phase-1}}U^\mu_A(\phase\! -\! 1)$.
			\STATE Invoke {\sc Greedy-Split} to split the solution $A_\phase$ into 
 $n_p$ sub-solutions $\{A_{\phase,1},\ldots,A_{\phase,n_\phase}\}\subset\mathcal{S}_{\phase-1}$.
			\STATE Set $n_\phase\leftarrow\min\{n_\phase,T-t\}$.
			\STATE Choose solution $\{A_{\phase,1},\ldots,A_{\phase,n_\phase}\}$.
			\STATE Update the statistics of all solutions based on~\eqref{equ:Aestimates}.
			\STATE Update the empirical sets based on~\eqref{empsafe} and \eqref{possafe}.
			\STATE Set $t=t+n_\phase$ and $\phase=\phase+1$, 
			\ENDWHILE
		\end{algorithmic}
	\end{algorithm}

	\underline{\bf Statistics.}
 Since each item $i\in E$ is $\sigma^2$-sub-Gaussian, any  solution that contains at most $q:=\lfloor \frac{\bsigma^2}{\sigma^2}\rfloor$ items  is safe with probability (w.p.) $1$. We call such a solution  {\em absolutely safe}. 
    Algorithm~\ref{SafeCombUCB} ({\sc PASCombUCB}) is conducted  in  {\em phases}, where each phase consists of multiple time steps and each item can be pulled at most once during each phase. Thus we adopt a different notation ``$A$'' to denote the solution in our algorithm.
    Define $T_i(\phase):= \sum_{s=1}^{\phase}\mathbbm{1}\{i\in A_\phase \}$ as the number of times item $ i $ is pulled up to and including phase~$\phase$.
	Denote the sample mean and   sample variance of item $i $ at phase $\phase$ respectively as 
	\begin{align}
		\hat{\mu}_i(\phase) &:=\frac{1}{T_i(\phase)}\sum_{s=1}^{\phase}W_i(s)\cdot\mathbbm{1}\{i \in A_s\},\quad\mbox{ and}\label{sm}\\
		\hat{\sigma}^2_i(\phase)& :=\frac{1}{T_i(\phase)}\sum_{s=1}^{\phase}\left(W_i(s)-\hat{\mu}_i(\phase) \right)^2\cdot\mathbbm{1}\{i\in A_s\}.
		\label{sv}
	\end{align} 
	The bound based on the Law of Iterated Logarithms (LIL) is used to construct the confidence radii. For a fixed $\epsilon\in (0,1)$,  define
	$\mathrm{lil}(t,\rho):=\left(1+\sqrt{\epsilon}\right)\Big(\frac{1+\epsilon}{2t}\ln\big(\frac{\ln((1+\epsilon)t)}{\rho}\big) \Big)^{1/2}$
	and denote the confidence radius for the mean as 
	\begin{align}
		\alpha(t) :=\mathrm{lil}(t,\omega_\mu), \label{confidencebounds_mean}
    \end{align}
  where $\omega_\mu$ is a parameter to be chosen. 
  The confidence radii for the variance are asymmetric about the empirical variance and are parameterized by $\omega_{\rmv}$ and $\omega_{\rmv}'$ that may not necessarily be the same. They are defined as 
  \begin{align}
  \beta_\rmu(t):=3\cdot\mathrm{lil}(t,\omega_{\rmv})\quad\mbox{and}\quad 
		\beta_\rml(t):=3\cdot\mathrm{lil}(t,\omega_{\rmv}^\prime).\label{confidencebounds_var}
	\end{align}
 We denote the {\em upper} and {\em lower confidence bounds} (UCB and LCB) for the mean  of item $i$ as 
	\begin{align}
		&U_i^\mu(\phase):= \hat{\mu}_i(\phase)+\alpha(T_i(\phase))\quad\mbox{and}\\
		&L_i^\mu(\phase):= \hat{\mu}_i(\phase)-\alpha(T_i(\phase))\label{cb}
	\end{align} 
	respectively. The   UCB and LCB for the variance of item $i$  are defined as 
	\begin{align}
		&U_i^{\rmv}(\phase) := \min\{\hat{\sigma}^2_i(\phase)+\beta_\rmu(T_i(\phase)),\sigma^2\}\quad\mbox{and}\\
		&L_i^{\rmv}(\phase) := \max\{\hat{\sigma}^2_i(\phase)-\beta_\rml(T_i(\phase)),0\} \label{cbv}
	\end{align}
 respectively. 
	With the sample mean, sample variance, and confidence bounds for the items, we define the following statistics for all solution $S\in \mathcal{A}_K$:
	\begin{alignat}{3} 
		\hat{\mu}_S(\phase)&=\sum_{i\in S}\hat{\mu}_i(\phase),\quad &&\hat{\sigma}^2_S(\phase)=\sum_{i\in S}\hat{\sigma}^2_i(\phase),
		\\
		U^\mu_S(\phase)&=\sum_{i\in S}U^\mu_i(\phase),\quad && L^\mu_S(\phase) =\sum_{i\in S}L^\mu_i(\phase) ,\label{equ:Aestimates} \\
		U^\rmv_S(\phase)&=\sum_{i\in S}U^\rmv_i(\phase),\quad &&L^\rmv_S(\phase) =\sum_{i\in s}L^\rmv_i(\phase).
	\end{alignat}
	Denote the {\em empirically safe set} as 
	\begin{align}\label{empsafe}
		\mathcal{S}_\phase:=\{S\in\mathcal{A}_K:U^\rmv_S(\phase)<\bsigma^2\}
	\end{align}
	 and the {\em possibly safe set} as 
	 \begin{align}\label{possafe}
		\bar{\mathcal{S}}_\phase:=\{S\in\mathcal{A}_K:L^\rmv_S(\phase)<\bsigma^2\}.
	\end{align}
	The solutions in $\mathcal{S}_t$ and $\bar{\mathcal{S}}_t$  are called {\em empirically safe}  and   {\em possibly safe} solutions respectively.  
 
 \underline{\bf Dynamics.}
 In the {\em initialization stage} (lines $3$ to  $6$), {\sc PASCombUCB} greedily pulls the absolutely safe solutions. When each item has been pulled at least twice, this stage is terminated. 
	After initialization, during phase $\phase$, {\sc PASCombUCB} {\bf firstly} identifies a solution $A_\phase=\argmax_{A\in\bar{S}_\phase}U^\mu_A(\phase-1)$ via an {\em optimization oracle} (Line~$10$). 
	It {\bf then} calls a subroutine {\sc Greedy-Split} to greedily partition the solution $A_\phase$ into empirically safe sub-solutions (Line $11$, see Figure~\ref{Split_variance} for illustration).
	{\bf Subsequently}, these solutions are chosen and the stochastic rewards from the corresponding items are observed (Line~$13$). 
	{\bf Lastly}, the empirical estimates, the confidence bounds, and the empirical sets are updated (Lines $14$ and $15$).
 	\begin{algorithm}[ht]
		\caption{{\sc Greedy-Split}}
		\begin{algorithmic}[1]\label{alg:split}
			\STATE \textbf{Input}: 
			A solution $A_\phase$ and the upper confidence bound on the variance $U^\rmv(\phase-1)$ at phase $p-1$.
			\STATE Set $n_\phase=1,s=1$ and $A_{\phase,1}=\emptyset$.
            \STATE Index the items in $A_\phase$ by $i_1,\ldots,i_{|A_\phase|}$.
			\WHILE{$s\leq |A_\phase|$}
			\IF{$U_{A_{\phase,n_\phase}}^\rmv(\phase-1)+U_{i_s}^\rmv(\phase-1)\leq\bsigma^2$}
			\STATE Set $A_{\phase,n_\phase}\leftarrow A_{\phase,n_\phase}\cup\{i_s\}$.
			\ELSE
			\STATE  $n_\phase\leftarrow n_\phase+1$ and $A_{\phase,n_\phase}=   \{i_s\}$.
			\ENDIF
			\STATE $s \leftarrow s+1$.
			\ENDWHILE
			\STATE \textbf{return} $\{A_{\phase,1},\ldots,A_{\phase,n_\phase}\}$.
		\end{algorithmic}
	\end{algorithm}
    \begin{figure}[t]
        \centering
        \includegraphics[width=0.4\textwidth]{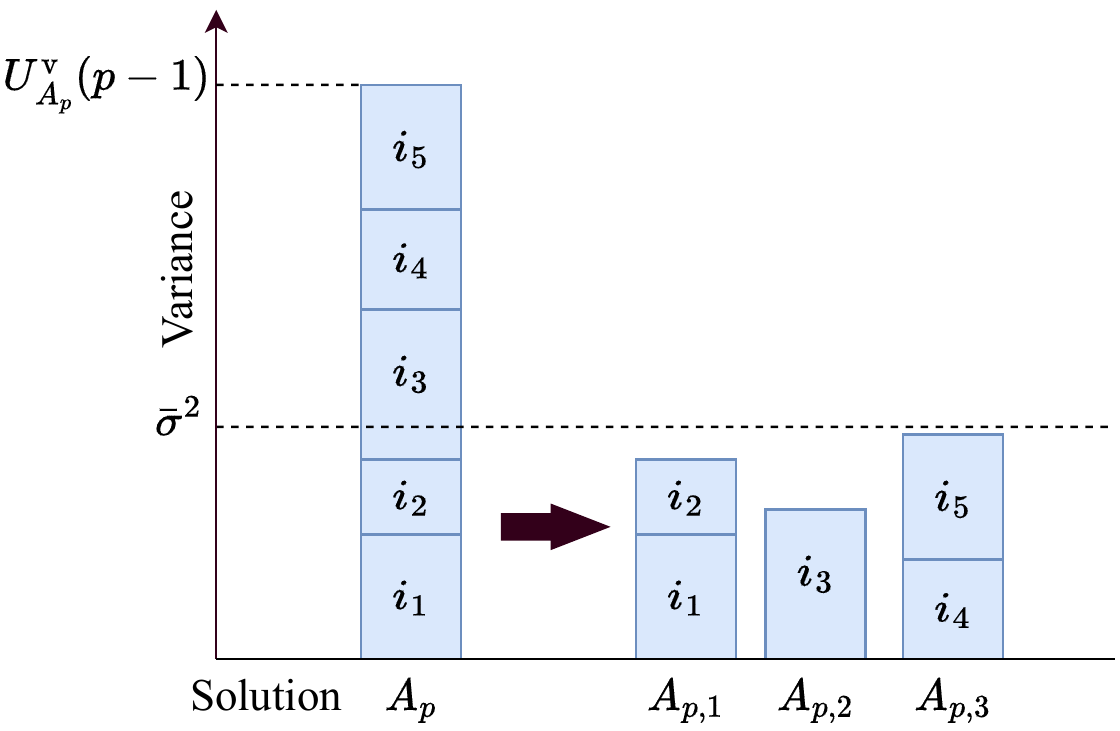}
        \caption{A diagram of a split to a solution $A_\phase$ containing $5$ items.}
        \label{Split_variance}
        \vspace{-1em}
    \end{figure}
         \begin{figure}[t]
		\centering
		\includegraphics[width=0.4\textwidth]{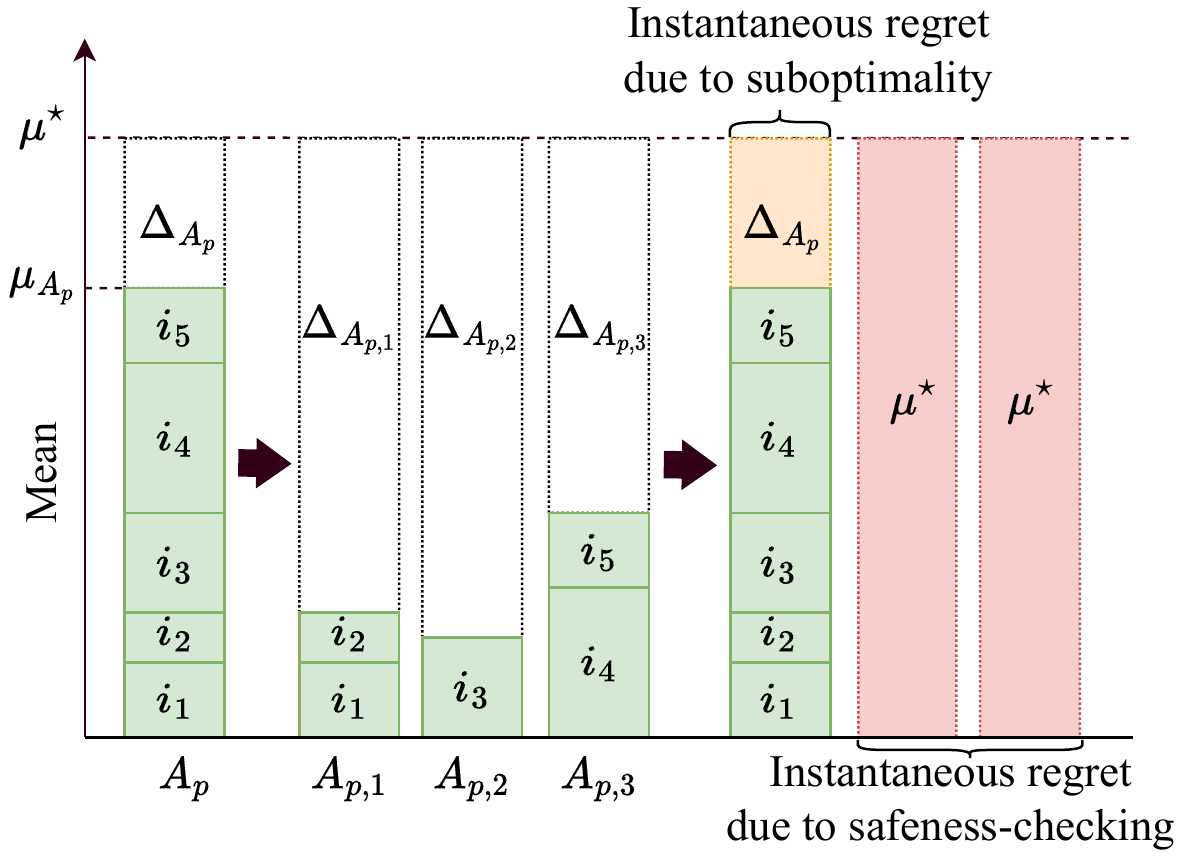}
		\caption{Solution $A_\phase$ is split into $n_\phase=3$ sub-solutions, the instantaneous regret at phase $p$ can be divided into the instantaneous regret due to suboptimality and the instantaneous regret due to safeness-checking.}
		\label{Split_mean}
        \vspace{-1em}
	\end{figure}
    \begin{figure}[t]
		\centering
		\includegraphics[width=0.4\textwidth]{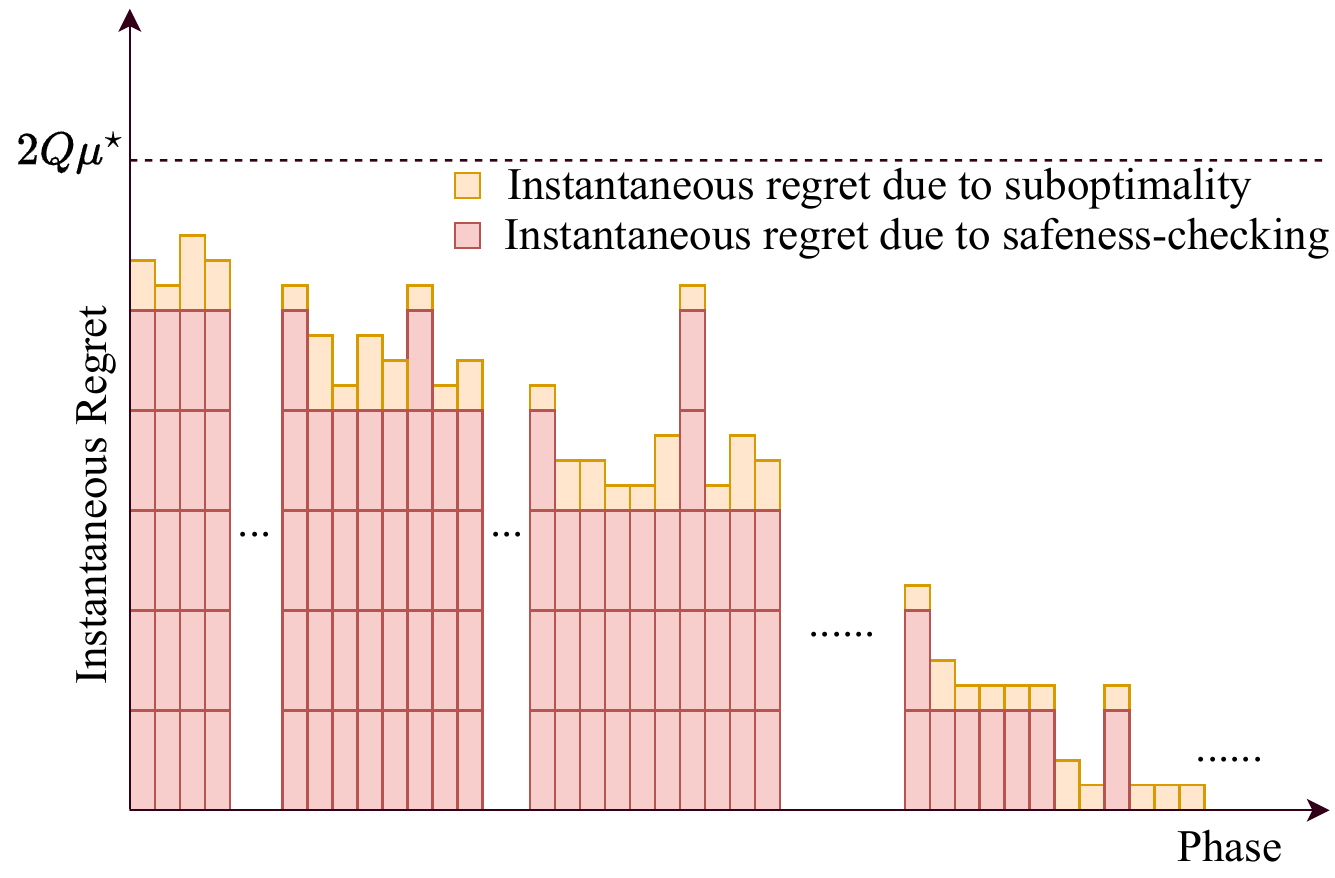}
		\caption{An illustration of the instantaneous regret yielded by {\sc PASCombUCB}. As the variances of the items are more determined, less regret due to safeness-checking is generated.}
		\label{Instant_Regret}
        \vspace{-1em}
	\end{figure}

    \underline{\bf Illustration.}
    Figures~\ref{Split_mean} and~\ref{Instant_Regret} illustrate the regret accumulated during phase $p$ and over the whole $T$ horizon respectively.
    As shown in Figure~\ref{Split_mean}, the regret accumulated during phase $p$
    can be decomposed into two parts
    \begin{align}
        \sum_{r=1}^{n_\phase}(\mu^\star -\mu_{A_{\phase,r}})
           =\Delta_{A_\phase}+\mu^\star (n_\phase-1)
    \end{align}
    where $\Delta_{A_\phase}$ is {\em the (phase-wise) instantaneous regret due to suboptimality} and $\mu^\star (n_\phase-1)$ is {\em the instantaneous regret due to safeness-checking};  the latter term results from the safeness constraint.
    At the beginning, since the upper confidence bounds of the variances of all solutions are large, each solution will be split into up to $2Q$ sub-solutions, where $Q:=\lceil\frac{K}{q}\rceil$, and hence the regret due to safeness checking can be large. 
    As the algorithm progresses, we obtain more observations of items and get more confident about their variances ($U_i^\rmv(\phase)$ decreases). Hence, during some later phase, it suffices to split some solutions into fewer sub-solutions and the regret due to safeness-checking reduces. Furthermore, when most items are sampled sufficiently many times, the unsafe solutions are excluded from the possibly safe set $\bar{\mathcal{S}}_p$, and the only contribution to the regret is via the suboptimality of the solution $A_\phase$.

    \begin{remark}
            The two parameters $\omega_\rmv$ and $\omega_\rmv'$ determine the confidence radii of variances  and do not necessarily have to be the same.
            The confidence parameter $\omega_\rmv^\prime$ is solely a parameter of {\sc PASCombUCB}; its choice does not rely on the confidence parameter $\delta$ and only affects $L_S^\rmv(p)$, the lower confidence bound of the variance, which determines when we ascertain a solution 
            to be unsafe. 
            The choice of $\omega_\rmv$ depends on $\delta$ and it influences $U_S^\rmv(p)$, the upper confidence bound of the variance, which guides {\sc PASCombUCB} to split the solution  to satisfy the probably anytime-safe constraint.  
    \end{remark}


	\section{Problem-dependent Bounds}
    For simplicity,
    when a time horizon $T$ and a confidence parameter $\delta=\delta_T$ are given, 
    we set the confidence parameters $\omega_\mu=\omega_\rmv^\prime=\frac{1}{T^2}$ and $\omega_\rmv=\frac{\delta_T}{T^2}$.
    
    We introduce various suboptimality gaps that 
    contribute to the regret due to the suboptimality.
    \begin{itemize}
        \item for $i\in E\setminus S^\star$, let  the {\em minimum safe-suboptimal gap} be 
        \begin{align}\label{equ:safemeangap}
            \Delta_{i,\Safeset\cap\Subopt,\min}:=\min_{S\ni i,S\in\Safeset\cap\Subopt}\Delta_S;
        \end{align}
        \item for $i\in E$, let   the {\em  minimum unsafe-suboptimal gap} be
        \begin{align}\label{equ:unsafemeangap}
            \Delta_{i,\Safeset^c\cap\Subopt,\min}:=\min_{S\ni i,\  S\in\Safeset^c\cap\Subopt}\Delta_S;
        \end{align}
        and  let  the {\em tension parameter between the mean gap $\Delta_S$ and variance gap} $\Delta_S^\rmv$ be
        \begin{align}
        c_i:=\max_{S\ni i,\  S\in\Safeset^c\cap\Subopt}\left(\frac{\Delta_S}{\max\{\Delta_S,\Deltav_S/3\}}\right)^2.
        \end{align}
    \end{itemize}
    We also define following safeness gaps that induce the conservative sampling strategy to guarantee the probably anytime-safe constraint. For $i\in E$, and
    \begin{itemize}
        \item for the risky set $\Riskset$, define \emph{the minimum unsafeness gap} $\Deltav_{i,\Riskset}:=\min_{S\ni i,S\in\Riskset}\Deltav_S.$
        \item for the safe and suboptimal set $\Safeset\cap\Subopt$, let 
        \begin{align}
            \Psi_{i, \Safeset \cap \Subopt} :=\max_{S\ni i,\  S\in\Safeset\cap\Subopt}\min\left\{ \frac{\ln T}{\Delta_S^2},\frac{9\ln (T/\delta_T)}{(\Delta_S^\rmv)^2}  \right\}
        \end{align} 
        which characterizes the order of the number of times that item $i$ needs to be sampled in order to identify the suboptimality of all safe and suboptimal solutions $A\ni i$ while satisfying the safeness constraint. We further define a variant of $\Psi_{i,\Safeset \cap \Subopt}$ as
        $$
            \Psi^\prime_{i, \Safeset \cap \Subopt} :=\max_{S\ni i, S\in\Safeset\cap\Subopt}\min\left\{ \frac{\ln T}{\Delta_S^2},\frac{9\ln (1/\delta_T)}{(\Delta_S^\rmv)^2}  \right\}
        $$
        which will be used to characterize the lower bound.
        \item for the unsafe and suboptimal set $\Safeset^c\cap\Subopt$, let
        \begin{align}\label{equ:Phi}
            \Phi_{i, \Safeset^c \cap \Subopt} :=\max_{S\ni i,\  S\in\Safeset^c\cap\Subopt}\min\left\{ \frac{\ln T}{\Delta_S^2},\frac{9\ln T}{(\Delta_S^\rmv)^2}  \right\}
        \end{align} 
        which characterizes the  hardness of identifying the unsafeness of suboptimality of all unsafe and suboptimal solutions that contain item $i$.
    \end{itemize}

    Define $\xi(\omega):=\frac{2+\epsilon}{\epsilon}\big(\frac{\omega}{\ln (1+\epsilon)}\big)^{1+\epsilon}$, where $\epsilon \in (0,1)$ is fixed.
    \subsection{Problem-dependent Upper Bound}
    \begin{restatable}[Problem-dependent upper bound]{thm}{thmUpperBdProbDep}\label{thm:upbd}
	Let   $\instance=(E,\mathcal{A}_K,\nu,\bsigma^2)$ be an instance and let $\{\delta_T\}_{T=1}^\infty \in o(1)$ be a sequence that satisfies $\ln(1/\delta_T)=o(T^b)$ for all $b>0$ (i.e., $\{\delta_T\}$ is not exponentially decaying).
        Then, {\sc PASCombUCB} is a $\{\delta_T\}_{T=1}^\infty$-variance-constrained consistent algorithm. 
        More precisely, given a time budget $T$, the probably anytime-safe constraint is satisfied and
        the regret of {\sc PASCombUCB} $\Reg(T)$ is upper bounded by
		\begin{align}
            &\min\left\{T\mu^\star ,\Regsub(T)+\Regsafe(T)\right\}+\Regfail(T),
		\end{align}
    where
    \begin{align}
        \!\Regsub(T)&=O\bigg(\sum_{i\in E\setminus S^\star}\!
            \frac{K\ln T}{\Delta_{i,\Safeset\cap\Subopt,\min}} 
            \!+\!\sum_{i\in E}\!
            \frac{c_i K \ln T}{\Delta_{i,\Safeset^c\cap\Subopt,\min}}\bigg)\\
        \!\Regsafe(T)&=
        2\mu^\star 
        H\left(\Delta(\instance)\right),\quad
        \Regfail(T)=2\mu^\star  (L+1)
    \end{align}
    where $\Delta(\instance)=\{\Deltav_{S^\star}\}\cup\{\Deltav_{i,\Riskset},\Psi_{i,\Safeset\cap\Subopt},\Phi_{i,\Safeset^c\cap\Subopt}\}_{i\in E}$ and 
    $H\left(\Delta(\instance)\right):=H(1,\instance)$ is defined in \eqref{equ:Hinstance} in App.~\ref{sec:safe_reg}.
 \end{restatable}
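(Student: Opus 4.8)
The plan is to condition on a single high-probability \emph{clean event} on which every LIL confidence bound holds, and to split the analysis into the three advertised pieces. I would define $\calE$ as the intersection, over all items $i\in E$ and all phases $\phase$, of the events that $\hat\mu_i(\phase)$ lies within $\alpha(T_i(\phase))$ of $\mu_i$ and that $\sigma_i^2$ lies in the band $[\hat\sigma_i^2(\phase)-\beta_\rml(T_i(\phase)),\,\hat\sigma_i^2(\phase)+\beta_\rmu(T_i(\phase))]$. Since $\mathrm{lil}(\cdot,\cdot)$ is an anytime (maximal) bound, one application of the LIL inequality bounds the failure probability of each family, summed over the $L$ items, by $L\,\xi(\omega_\mu)$, $L\,\xi(\omega_\rmv)$, and $L\,\xi(\omega_\rmv')$ respectively. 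With the choices $\omega_\mu=\omega_\rmv'=1/T^2$ and $\omega_\rmv=\delta_T/T^2$, the quantity $T\,\xi(\omega)$ stays bounded by a constant, so the contribution of $\calE^c$ to the expected regret (where it is trivially at most $T\mu^\star$), together with the $O(\mu^\star L)$ initialization overhead, is captured by $\Regfail(T)=2\mu^\star(L+1)$.

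Next I would establish the probably anytime-safe constraint~\eqref{constraint}. On the sub-event that the variance upper bounds never fail, every empirically safe solution ($S\in\Safeset_\phase$) is genuinely safe, because $\sigma_S^2\le U_S^\rmv(\phase)<\bsigma^2$; the absolutely safe solutions pulled during initialization and every sub-solution returned by {\sc Greedy-Split} (drawn from $\Safeset_{\phase-1}$) are therefore safe. Since this sub-event has probability at least $1-L\,\xi(\omega_\rmv)\ge 1-\delta_T$ by the choice of $\omega_\rmv$, the constraint holds.

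On $\calE$ I would then bound the regret through the per-phase decomposition $\sum_{r=1}^{n_\phase}(\mu^\star-\mu_{A_{\phase,r}})=\Delta_{A_\phase}+\mu^\star(n_\phase-1)$. For the suboptimality term $\sum_\phase\Delta_{A_\phase}$, observe that $S^\star$ is always possibly safe (its variance LCB is below $\bsigma^2$ on $\calE$), so the oracle choice $A_\phase=\argmax_{A\in\bar{\Safeset}_{\phase-1}}U_A^\mu(\phase-1)$ forces $U_{A_\phase}^\mu\ge\mu^\star$; a {\sc CombUCB1}-style counting argument over items, separated according to whether $A_\phase\in\Safeset\cap\Subopt$ or $A_\phase\in\Safeset^c\cap\Subopt$, yields $\Regsub(T)$, where the tension parameter $c_i$ records that an unsafe suboptimal solution is eliminated either through its mean gap or through its variance gap.

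Finally, and this is the crux, I would control the safeness-checking term $\sum_\phase\mu^\star(n_\phase-1)$. Greedy bin-packing in {\sc Greedy-Split} uses at most $2Q$ bins (with $Q=\lceil K/q\rceil$) when the variance upper bounds are loose, but as each $U_i^\rmv(\phase)$ contracts toward $\sigma_i^2$ the number of splits shrinks, reaching $n_\phase=1$ once the chosen solution is certified safe. The main obstacle is to make this quantitative: I would charge each excess split $n_\phase-1$ to an item whose variance confidence radius is still too wide, and show that the number of phases in which item $i$ forces an extra split is governed by the sample complexity of certifying safety/unsafeness of the solutions through $i$, as encoded by $\Deltav_{S^\star}$, $\Deltav_{i,\Riskset}$, $\Psi_{i,\Safeset\cap\Subopt}$, and $\Phi_{i,\Safeset^c\cap\Subopt}$. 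Summing these item-wise contributions produces the hardness functional $H(\Delta(\instance))$ and hence $\Regsafe(T)$. The trivial bound $\Reg(T)\le T\mu^\star$ supplies the outer minimum, and the consistency claim $\Reg(T)=o(T^a)$ for all $a>0$ follows because every surviving term is polylogarithmic in $T$ once $\ln(1/\delta_T)=o(T^b)$.
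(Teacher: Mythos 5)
Your proposal is correct and follows essentially the same route as the paper's own proof: the same LIL clean event $\mathcal{E}$, the same choice of $\omega_\mu=\omega_\rmv'=1/T^2$ and $\omega_\rmv=\delta_T/T^2$ to absorb both the constraint violation probability and the $\mathcal{E}^c$ regret into $\Regfail(T)$, the same per-phase identity $\Delta_{A_\phase}+\mu^\star(n_\phase-1)$ splitting the regret into $\Regsub$ and $\Regsafe$, the same Kveton-style item-counting (with the tension parameter $c_i$ distinguishing mean-gap versus variance-gap elimination) for $\Regsub$, and the same item-wise charging of excess splits—which the paper implements by layering the events $\{U^{\rmv}_{A_\phase}(\phase-1)>r\bsigma^2\}$ over $r\in[Q-1]$ and peeling via its $\mathcal{F}$/$\mathcal{G}$ events—to produce $H(\Delta(\instance))$. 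The differences are only presentational (e.g., the paper's Lemma~\ref{lem:np} gives the sharper $m\le n_\phase\le 2m-1$ split count, and the union bound carries factors of $2$ and $4$ from Lemma~\ref{lem:lil_var_bd} that your sketch omits), none of which alters the argument.
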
 
    \begin{remark}\label{rem:H}
            If the gaps in $\Delta(\instance)$ are sufficiently small and $\delta_T=T^{-\lambda}$ for a fixed $\lambda>0$, 
                \begin{align}\label{equ:HDelta}
                    &H\left(\Delta(\instance)\right)
                    =O\bigg(
                    \frac{(\lambda+1)K^2\ln T}{(\Deltav_{S^\star})^2}
                    +
                    K\sum_{i\in E}\Big(\frac{\ln T}{(\Deltav_{i,\Riskset})^2}
                    \\
                    &+
                    \max_{S\ni i,\atop S\in\Safeset\cap\Subopt}\min\left\{ \frac{\ln T}{\Delta_S^2},\frac{(\lambda+1)\ln T}{(\Delta_S^\rmv)^2}\right\}
                    +\Phi_{i, \Safeset^c \cap \Subopt}\Big)
                    \bigg).
                \end{align}
            See~\eqref{equ:Hinstance} for more details of this calculation.
    \end{remark}
    
    The first term $T\mu^\star $  in the regret bound provides a na\"ive upper bound
    for the expected regret conditional on the variance constraint holds. 
    The order of the regret ($o(T^a)$ for all $a >0$) implies the regret will be asymptotically  bounded by the second term when the time budget $T$ is sufficiently large.
    The second term is comprised of two parts---{\em the regret due to suboptimality} $\Regsub(T)$ and {\em the regret due to safeness-checking} $\Regsafe(T)$.
    The intuition for the regret due to suboptimality $\Regsub(T)$ is that
    \begin{itemize}
    \item Each item in any safe and suboptimal solution will be sampled $O(\frac{K\ln T}{\Delta_{i,\Safeset\cap\Subopt,\min}^2})$ times to ascertain the suboptimality of all safe and suboptimal solutions to which this item belongs to.
    \item Each item in an unsafe and suboptimal solution $S$ will be sampled 
    $O\left(\frac{K\ln T}{\max\{\Delta_{S},\Deltav_{S}/3\}^2 }\right)$
    times  to ascertain either the suboptimality or the unsafeness of  $S$.
    As this should be done for all the unsafe and suboptimal solutions, we need to take the maximum of the above time complexity.
    More precisely, when $c_i=1$, suboptimality identification of the unsafe and suboptimal solutions to which item $i$ belongs dominates the regret; and when $c_i<1$, the ascertaining of the unsafeness dominates the regret.
    \end{itemize}
    The intuition for the regret due to safeness checking $\Regsafe(T)$ is that $H\left(\Delta(\instance)\right)$
    provides an upper bound for the number of time steps needed for guaranteeing the safeness of all solutions. {\sc PASCombUCB} achieves this in a judicious manner  since it does not check the safeness of all the solutions at the start, followed by exploration and exploitation of the possibly high-return safe solutions. Instead, it takes advantage of the fact that 
    when a (safe or unsafe) suboptimal solution is ascertained to be   suboptimal, its safeness can be disregarded,
    as reflected in  the terms 
    $\Psi_{i,\Safeset\cap\Subopt}$ and $\Psi_{i,\Safeset^c\cap\Subopt}$.
    In addition, it will not sample an unsafe solution if it is identified as unsafe w.p.\ at least $1-2\xi(\omega_\rmv^\prime)$.
    The last term  $\Regfail(T)$ corresponds to the regret due to failure of the ``good'' event and at the initialization stage. A proof sketch is presented in Section~\ref{sec:proofSketchProDep}.

    


    \subsection{Problem-dependent Lower Bound}

\begin{restatable}[Problem-dependent lower bound]{thm}{thmLowBdProbDep}
\label{thm:low_bd_prob_dep}
Let $\{\delta_T\}_{T=1}^\infty \in o(1)$ be a sequence that satisfies $\ln(1/\delta_T)=o(T^b)$ for all $b>0$. There exists an instance $\instance$ such that for any $\{\delta_T\}_{T\in\mathbb{N}} $-variance-constrained consistent algorithm $\pi$, the regret is lower bounded by 
\begin{align}
    \Omega&\bigg(\sum_{i\in E}\frac{\ln T}{\Delta_{i,\Safeset\cap\Subopt,\min}}
    \bigg)+
    \frac{\mu^\star }{K}
    \cdot\Omega\bigg(\frac{K\ \ln(1/\delta_T)}{(\Deltav_{S^\star})^2}
    \\
    & +\sum_{i\in E}\Big(\Psi^\prime_{i,\Safeset\cap\Subopt} 
    +\frac{\ln T}{(\Deltav_{i,\Riskset})^2}
    +\Phi_{i,\Safeset^c\cap\Subopt} \Big)
        \bigg).
\end{align}
\end{restatable}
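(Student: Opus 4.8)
The plan is to prove the lower bound by information-theoretic change-of-measure arguments, building a single base instance $\instance$ together with four families of perturbed alternatives, one family per group of terms. The workhorse is the semi-bandit divergence decomposition: for instances $\nu,\nu'$ differing only in the law of item $i$, $\rmKL(\bbP_\nu^\pi\|\bbP_{\nu'}^\pi)=\bbE_\nu[N_i]\,\rmKL(\nu_i\|\nu_i')$, where $N_i$ is the number of pulls of $i$; pairing this with the Bretagnolle--Huber inequality $\bbP_\nu[A]+\bbP_{\nu'}[A^c]\geq\frac12\exp\!\big(\!-\rmKL(\bbP_\nu^\pi\|\bbP_{\nu'}^\pi)\big)$ for a well-chosen event $A$ turns each alternative into a lower bound on the relevant $\bbE_\nu[N_i]$. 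I would design $\instance$ so that its solutions populate all four regions $\{S^\star\}$, $\Safeset\cap\Subopt$, $\Riskset$, $\Safeset^c\cap\Subopt$, and so that every safe sub-solution available for ``safeness-checking'' has mean bounded away from $\mu^\star$. The latter is what converts pull counts into regret: a single checking phase yields at most $K$ item-samples yet costs $\Theta(\mu^\star)$ in regret, so $\Reg\geq\frac{\mu^\star}{K}\sum_i\bbE_\nu[N_i^{\mathrm{check}}]$, which is exactly the source of the $\mu^\star/K$ prefactor and of the matching factor $K$ in the $S^\star$ term.

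For the prefactor-free term I would run the classical semi-bandit argument: for each $S\in\Safeset\cap\Subopt$ with $i\in S$, raise the means of $S$'s items so $S$ just overtakes $S^\star$ while staying safe. With $A=\{T_S\geq T/2\}$, consistency on $\instance$ makes $\bbP_\nu[A]$ small and consistency on the alternative makes $\bbP_{\nu'}[A^c]$ small, so Bretagnolle--Huber gives $\bbE_\nu[N_i]=\Omega(\ln T/\Delta_S^2)$; weighting by the paid gap $\Delta_S$ and maximizing over $S\ni i$ before summing yields $\Omega(\ln T/\Delta_{i,\Safeset\cap\Subopt,\min})$.

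The safety terms use variance perturbations. For $S^\star$ I would inflate the variances of its items until $\sigma_{S^\star}^2$ just exceeds $\bsigma^2$ (gap $\Deltav_{S^\star}$) and run a \emph{stopping-time} argument at the first phase $\tau$ at which the full $S^\star$ is played: the probably-anytime-safe constraint forces $\bbP_{\nu'}[\tau\le T]\le\delta_T$, whereas not committing to $S^\star$ on $\instance$ is nearly ruled out by consistency, so Bretagnolle--Huber applied to the observations up to $\tau$ gives $\sum_{i\in S^\star}\bbE_\nu[N_i(\tau)]=\Omega(\ln(1/\delta_T)/(\Deltav_{S^\star})^2)$; since all these pre-commitment pulls occur in checking phases, this is regret, and the binding error being a safety violation is precisely why the confidence level is $\delta_T$ rather than $\ln T$. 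The $\Riskset$ and $\Phi_{i,\Safeset^c\cap\Subopt}$ terms instead deflate variances to make a risky (resp.\ unsafe-suboptimal) solution safe---and, for $\Riskset$, thereby optimal; here the binding requirement is that a consistent policy must commit to the now-attractive solution on the alternative, so the representative confidence is $\ln T$, giving $\Omega(\ln T/(\Deltav_{i,\Riskset})^2)$ and $\Omega(\Phi_{i,\Safeset^c\cap\Subopt})$. For the two $\min$-type quantities ($\Psi'$ and $\Phi$) I would pit a mean-perturbation against a variance-perturbation of the same solution: whichever certificate the algorithm completes first---suboptimality (cost $\Omega(\ln T/\Delta_S^2)$) or (un)safety (cost $\Omega(\ln(\cdot)/(\Deltav_S)^2)$)---a change of measure against the matching alternative shows it needs at least that certificate's budget, so the checking pulls are $\Omega(\min\{\cdot,\cdot\})$, and taking the max over $S\ni i$ reflects that one item must resolve every solution containing it. Summing the four contributions (keeping the item families driving distinct terms separated in $\instance$) and applying the $\mu^\star/K$ conversion gives the claimed bound.

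The main obstacle I anticipate is the \emph{simultaneous} realization: constructing one $\instance$ in which $\Deltav_{S^\star}$, $\Delta_S$ and $\Deltav_S$ can be tuned independently across the four regions while every $\nu_i$ stays valid (supported in $[0,1]$, $\sigma^2$-sub-Gaussian, with $\sigma_S^2\neq\bsigma^2$), and in which each perturbation flips only the intended region's optimality/safety status. Three delicate sub-points are (i) exhibiting $[0,1]$-valued perturbations that attain an arbitrary variance gap with per-sample KL of order $(\Deltav_S)^2$; (ii) getting the $\delta_T$-versus-$\ln T$ dichotomy exactly right, which hinges entirely on whether the governing error is a safety violation (charged at $\delta_T$, via a commitment stopping time) or a consistency failure on the alternative (charged at $\ln T$); and (iii) making the amortized $\mu^\star/K$ regret-per-sample conversion a genuine lower bound, which is why $\instance$ must offer no high-mean safe sub-solution to ``cheat'' the checking phases.
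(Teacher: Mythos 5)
Your plan follows essentially the same route as the paper's proof (App.~\ref{sec:proof_LwBd_dep}): a multi-path base instance whose solution classes populate all four regions (the paper uses a $K$-path semi-bandit, where solutions are subsets of disjoint $K$-item paths, precisely so that each perturbation flips only one region's status), the divergence decomposition of Lemma~\ref{lem:lb_KLdecomp} combined with Pinsker-type inequalities (Lemma~\ref{lem:pinsker}), mean perturbations plus consistency for the $\ln T$ terms, a safety-certification stopping-time argument (Lemma~\ref{lem:safe_check}) charged at confidence $\delta_T$ for the $S^\star$ term, and a mean-versus-variance competition producing the $\min$-type quantities $\Psi^\prime$ and $\Phi$. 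The paper also first proves a Gaussian version (Theorem~\ref{thm:low_bd_prob_dep_gauss}) and then redoes the construction with Bernoulli rewards, which resolves your sub-point (i).

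There is, however, a genuine flaw in your pull-count-to-regret conversion. You require an instance in which ``every safe sub-solution available for safeness-checking has mean bounded away from $\mu^\star$'' so that each checking phase costs $\Theta(\mu^\star)$, yielding $\Reg\geq\frac{\mu^\star}{K}\sum_i\bbE[N_i^{\mathrm{check}}]$. This requirement is unachievable: $\mathcal{A}_K$ is downward-closed, every subset of $S^\star$ has variance below $\sigma^2_{S^\star}<\bsigma^2$ and is therefore safe, and any $(K-1)$-subset of $S^\star$ has mean $(1-1/K)\mu^\star$ when the item means are equal (and if they are unequal, the algorithm simply drops the smallest item). Moreover, such a subset's variance gap to $\bsigma^2$ is $\Theta(\bsigma^2/K)$, independent of $\Deltav_{S^\star}$, so its safety can be certified cheaply; a policy that certifies these subsets and then rotates through them, using a confidence bound on the \emph{sum} of the item variances (width $\Theta(\sqrt{K\ln(1/\delta_T)/n})$ after $n$ samples per item), certifies the safety of $S^\star$ with $\Theta\bigl(K^2\ln(1/\delta_T)/(\Deltav_{S^\star})^2\bigr)$ item samples spread over $\Theta\bigl(K\ln(1/\delta_T)/(\Deltav_{S^\star})^2\bigr)$ phases at regret $\mu^\star/K$ each, i.e.\ total regret $\Theta\bigl(\mu^\star\ln(1/\delta_T)/(\Deltav_{S^\star})^2\bigr)$. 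This matches the theorem's $\frac{\mu^\star}{K}\cdot\frac{K\ln(1/\delta_T)}{(\Deltav_{S^\star})^2}$ and refutes your formula, which on this instance would assert a bound a factor of $K$ larger. The paper's accounting avoids this: Lemma~\ref{lem:safe_check} only guarantees that at most $K-1$ items of the path are pulled per checking phase at per-phase cost $\Delta=\mu^\star/K$, and the stated prefactor then emerges because in the paper's instance ($\bsigma^2=\Theta(K)$, item variances $\Theta(1)$, the solution gap $\Deltav_{S^\star}$ spread as $\Deltav_{S^\star}/K$ per item) the per-item KL is $\Theta\bigl((\Deltav_{S^\star})^2/K^2\bigr)$, so the required number of item samples is $\Theta\bigl(K^2\ln(1/\delta_T)/(\Deltav_{S^\star})^2\bigr)$. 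The same correction is needed for your $\Riskset$ and $\Safeset^c\cap\Subopt$ terms, where the per-phase cost is likewise $\Theta(\mu^\star/K)$, not $\Theta(\mu^\star)$.
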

The proof is presented at App.~\ref{sec:proofUppBd}.
With Theorem~\ref{thm:low_bd_prob_dep}, the problem-dependent upper bound is tight for polynomially decaying~$\{\delta_T\}_{T\in\mathbb{N}}$.
\begin{restatable}[Tightness of problem-dependent bounds]{cor}{corTightProbdep}\label{cor:tightnessDep}
Let $\delta_T=T^{-\lambda}$ with a fixed $\lambda>0$, the regret  
 \begin{align}
        \Reg(T)\in\;&\Omega\bigg(\sum_{i\in E}\frac{\ln T}{\Delta_{i,\Safeset\cap\Subopt,\min}}
        +\frac{\mu^\star}{K^2} H\left(\Delta(\instance)\right)
        \bigg)\\
        \cap&\,
        O\bigg(
        \sum_{i\in E}\frac{K\ln T}{\Delta_{i,\Safeset\cap\Subopt,\min}}
        +\mu^\star  H\left(\Delta(\instance)\right)
        \bigg)
    \end{align}
    where $H\left(\Delta(\instance)\right)$ is defined in Remark~\ref{rem:H}.
The upper bound above is achieved by {\sc PASCombUCB}.
\end{restatable}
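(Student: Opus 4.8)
The plan is to obtain both the $\Omega(\cdot)$ and $O(\cdot)$ bounds by specializing Theorems~\ref{thm:upbd} and~\ref{thm:low_bd_prob_dep} to $\delta_T=T^{-\lambda}$ and then collapsing every logarithmic factor using the two identities $\ln(1/\delta_T)=\lambda\ln T$ and $\ln(T/\delta_T)=(1+\lambda)\ln T$. First I would verify that $\delta_T=T^{-\lambda}$ is admissible for both results: it is $o(1)$ and $\ln(1/\delta_T)=\lambda\ln T=o(T^b)$ for every $b>0$, so it is not exponentially decaying. Hence Theorem~\ref{thm:upbd} applies (and in particular {\sc PASCombUCB} is $\{\delta_T\}$-variance-constrained consistent, which justifies both the claim that the upper bound is achieved by {\sc PASCombUCB} and the applicability of the lower bound), and Theorem~\ref{thm:low_bd_prob_dep} applies as well.

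For the $O(\cdot)$ half I would start from the bound $\min\{T\mu^\star,\Regsub(T)+\Regsafe(T)\}+\Regfail(T)$ of Theorem~\ref{thm:upbd}. Since $\Regfail(T)=2\mu^\star(L+1)$ is a constant and the second argument of the minimum is $o(T)$, for $T$ large the bound reduces to $O(\Regsub(T)+\Regsafe(T))$. The term $\Regsafe(T)=2\mu^\star H(\Delta(\instance))$ is exactly the target $\mu^\star H(\Delta(\instance))$, and the first sum of $\Regsub(T)$ matches $\sum_{i\in E}\frac{K\ln T}{\Delta_{i,\Safeset\cap\Subopt,\min}}$ directly. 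The remaining work is to absorb the second sum of $\Regsub(T)$ into $\mu^\star H(\Delta(\instance))$. For this I would use the elementary case analysis on whether $\Delta_S\geq\Deltav_S/3$, which yields $\frac{c_i\ln T}{\Delta_S^2}=\min\{\frac{\ln T}{\Delta_S^2},\frac{9\ln T}{(\Deltav_S)^2}\}$; this identifies the second $\Regsub$ sum as being of order $K\sum_{i}\Phi_{i,\Safeset^c\cap\Subopt}$, precisely one of the summands inside the expression for $H(\Delta(\instance))$ in Remark~\ref{rem:H}, and hence dominated by $\mu^\star H(\Delta(\instance))$.

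For the $\Omega(\cdot)$ half I would start from Theorem~\ref{thm:low_bd_prob_dep} and simplify its safeness terms: the leading term becomes $\frac{\mu^\star}{K}\cdot\frac{K\lambda\ln T}{(\Deltav_{S^\star})^2}$, the risky-set and $\Phi_{i,\Safeset^c\cap\Subopt}$ terms are unchanged, and $\Psi^\prime_{i,\Safeset\cap\Subopt}$ becomes $\max_{S}\min\{\frac{\ln T}{\Delta_S^2},\frac{9\lambda\ln T}{(\Deltav_S)^2}\}$. I would then match these, term by term, against $\frac{\mu^\star}{K^2}H(\Delta(\instance))$ using Remark~\ref{rem:H}: each summand inside $H$ carries a factor $K$ (and $K^2$ for the $(\Deltav_{S^\star})^{-2}$ term), so $\frac{\mu^\star}{K^2}H(\Delta(\instance))$ has exactly the per-item prefactor $\frac{\mu^\star}{K}$ of the lower bound. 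The only nontrivial comparison is between $\Psi^\prime$ (carrying $9\lambda$) and the safe-suboptimal summand of $H$ (carrying $\lambda+1$); since $\lambda$ is a fixed positive constant and $\min\{a,c\,b\}$ is within a constant factor of $\min\{a,c'b\}$, these agree up to a $\lambda$-dependent constant. The residual $\sum_{i}\frac{\ln T}{\Delta_{i,\Safeset\cap\Subopt,\min}}$ term carries over verbatim.

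The main obstacle is this last matching step. The upper bound's safeness complexity is governed by $\ln(T/\delta_T)$ through $\Psi_{i,\Safeset\cap\Subopt}$, whereas the lower bound is governed by $\ln(1/\delta_T)$ through $\Psi^\prime_{i,\Safeset\cap\Subopt}$; these quantities coincide up to constants precisely when $\ln(1/\delta_T)=\Theta(\ln T)$, i.e.\ exactly under polynomial decay. For slower decay they separate, and for exponential decay Theorem~\ref{thm:upbd} does not even apply. Making this quantitative, and confirming that the resulting gap between the $\Omega(\cdot)$ and $O(\cdot)$ bounds is confined to the powers of $K$ rather than to any logarithmic or gap factor, is the crux of establishing tightness and is exactly why the corollary restricts to $\delta_T=T^{-\lambda}$.
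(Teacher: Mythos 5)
Your proposal is correct and takes essentially the same route as the paper: the corollary is stated without a separate proof precisely because it is the direct specialization of Theorem~\ref{thm:upbd} and Theorem~\ref{thm:low_bd_prob_dep} to $\delta_T=T^{-\lambda}$, using Remark~\ref{rem:H} together with the identities $\ln(1/\delta_T)=\lambda\ln T$ and $\ln(T/\delta_T)=(1+\lambda)\ln T$ to collapse every logarithmic factor to $\Theta_\lambda(\ln T)$, which is exactly what you do (including the observation that this is the only decay regime where $\Psi$ and $\Psi'$ coincide up to constants). Your absorption of the unsafe-suboptimal term of $\Regsub(T)$ into $\mu^\star H(\Delta(\instance))$ via $c_S/\Delta_S^2=1/\max\{\Delta_S,\Deltav_S/3\}^2$ and $\Delta_S\le\mu^\star$ mirrors the step the paper itself performs implicitly when its corollary omits the $\sum_{i}c_iK\ln T/\Delta_{i,\Safeset^c\cap\Subopt,\min}$ term, with the same (per-solution rather than aggregated) level of rigor as the paper.
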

\vspace{-.1in}
Under  different rates of decay of $\{\delta_T\}_{T\in\mathbb{N}}$ (see App.~\ref{sec:tightness} for the cases where $\ln(1/\delta_T)=\omega(\ln T)$ and $o(\ln T)$), the upper bound of the regret due to suboptimality $\Reg_1(T)$ (the first term in the total regret) and the upper bound of the regret due to safeness-checking $\Reg_2(T)$ (the latter term) match their corresponding lower bounds up to factors of $K$ and $K^2$ respectively; this gap is  acceptable as $K$ (e.g., number of ads displayed) is usually small relative to $L$ (total number of ads). 
More discussions are postponed to App.~\ref{sec:additional}.
One may  naturally wonder whether we can tolerate a much more stringent   probably anytime-safe constraint. 
The following theorem (with $b=1$) indicates no algorithm is $\{\delta_T\}_{T\in\mathbb{N}} $-variance-constrained consistent if $\delta_T$ decays {\em exponentially fast} in $T$. Detailed proofs are postponed to~App.~\ref{sec:lower_bd}.
 \begin{restatable}[Impossibility result]{thm}{lemReqOnDeltaT}
\label{thm:impos}
    Let $\{\delta_T\}_{T=1}^\infty \in o(1)$ be a sequence that satisfies that there exists $b\in(0,1]$ such that $\ln(1/\delta_T)=\Omega(T^b)$. For any instance $\instance$,
    the regret of any algorithm is lower bounded by
    $\Omega(T^b)$.
\end{restatable}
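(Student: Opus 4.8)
The plan is to establish the bound \emph{for each fixed} instance $\instance$ through a change-of-measure argument that plays the safety constraint \eqref{constraint} against the goal of low regret; crucially, the confusing alternative is built from $\instance$ itself rather than from a single fixed witness. Throughout I use the only meaningful reading of ``any algorithm'': since the agent does not know $\nu$, a valid policy must respect \eqref{constraint} \emph{whatever} the unknown $\nu$ is, so the fixed policy $\pi$ obeys the probably anytime-safe constraint on every instance sharing $(E,\mathcal{A}_K,\bsigma^2)$. Given $\instance$ with optimal safe solution $S^\star$, I would construct $\instance'$ by keeping all means and every item outside $S^\star$ unchanged while inflating the variances of the items in $S^\star$ (remaining $\sigma^2$-sub-Gaussian and $[0,1]$-supported) just enough that $\sigma^2_{S^\star}>\bsigma^2$, i.e.\ $S^\star$ is rendered \emph{unsafe} under $\instance'$. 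Because only the laws of $S^\star$'s items change, the per-round divergence is a finite instance-dependent constant $d:=\sum_{i\in S^\star}\rmKL(\nu_i\,\|\,\nu_i')$, and the regret reference is unaffected.

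The heart of the argument is an information bound showing that $S^\star$ cannot be played early on $\instance$. Let $\tau$ be the first round in which $S^\star$ is selected and set $\tau_0:=\min\{T,\lfloor c\,\ln(1/\delta_T)\rfloor\}$ with a fixed $c<\tfrac{1}{2d}$. The event $\{\tau<\tau_0\}$ depends only on the first $\tau_0$ rounds, during which each item is observed at most $\tau_0$ times, so the divergence decomposition yields $\rmKL\big(\bbP_\instance\,\|\,\bbP_{\instance'}\big)\big|_{\mathcal{F}_{\tau_0}}=\sum_{i\in S^\star}\bbE_\instance[N_i(\tau_0)]\,\rmKL(\nu_i\,\|\,\nu_i')\le\tau_0 d$. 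Under $\instance'$ the solution $S^\star$ is unsafe, hence \eqref{constraint} gives $\bbP_{\instance'}[\tau<\tau_0]\le\bbP_{\instance'}[\exists\,t\le T:S_t\notin\Safeset]\le\delta_T$. Feeding these into the data-processing inequality for binary relative entropy together with $d_{\mathrm{bin}}(p,q)\ge p\ln(1/q)-\ln2$, I would conclude $\bbP_\instance[\tau<\tau_0]<\tfrac12$ for all large $T$: otherwise $d_{\mathrm{bin}}(\bbP_\instance[\tau<\tau_0],\bbP_{\instance'}[\tau<\tau_0])\ge\tfrac12\ln(1/\delta_T)-\ln2$, which exceeds the upper bound $\tau_0 d\le cd\,\ln(1/\delta_T)$ once $(\tfrac12-cd)\ln(1/\delta_T)>\ln2$, a contradiction. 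Thus on $\instance$, with probability at least $\tfrac12$, $S^\star$ is avoided throughout the first $\tau_0=\Omega(T^b)$ rounds.

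Finally I convert avoidance of $S^\star$ into regret. Let $\Delta^{\Safeset}_{\min}:=\min_{S\in\Safeset,\,S\ne S^\star}\Delta_S$, a positive instance-dependent constant since $S^\star$ is the unique safe maximizer and $\mathcal{A}_K$ is finite. On the intersection of $\{\tau\ge\tau_0\}$ with the safety event $\{\forall\,t\le\tau_0:S_t\in\Safeset\}$, which has probability at least $\tfrac12-\delta_T\ge\tfrac14$ for large $T$, each of the first $\tau_0$ selected solutions is safe yet distinct from $S^\star$ and hence incurs instantaneous regret at least $\Delta^{\Safeset}_{\min}$. Plays violating safety occur with total probability at most $\delta_T$ and subtract at most $\tau_0 G\delta_T$ from the regret, where $G:=\max_S\mu_S-\mu^\star$; as $\delta_T\to0$ this term is dominated. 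Collecting terms gives $\Reg(T)\ge\tfrac14\tau_0\Delta^{\Safeset}_{\min}-\tau_0 G\delta_T=\Omega(T^b)$.

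The main obstacle is guaranteeing that the confusing instance $\instance'$ is admissible: raising the variances of $S^\star$ above $\bsigma^2$ within the $\sigma^2$-sub-Gaussian, $[0,1]$-supported class is possible exactly when $S^\star$ carries spare variance budget, i.e.\ $|S^\star|>q$ so that $S^\star$ is not absolutely safe. This is the sole point where the structure of the given instance enters, and it is also the meaningful regime: if $S^\star$ were absolutely safe the constraint would be vacuous (the agent knows $q$ and $\bsigma^2$) and the problem would collapse to ordinary combinatorial semi-bandits with logarithmic regret, so the impossibility genuinely concerns instances whose optimal solution requires safety verification. A secondary care point is the filtration and stopping-time bookkeeping behind the truncated divergence bound, together with graceful degradation when $\ln(1/\delta_T)$ is so large that $\tau_0$ saturates at $T$: there $\{\tau\ge T\}$ holds with probability at least $\tfrac12$ and the same computation yields $\Reg(T)=\Omega(T)$, which is $\Omega(T^b)$ since $b\le1$.
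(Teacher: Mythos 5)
Your proposal is correct and follows essentially the same route as the paper's proof: both construct the confusing instance by inflating the variances of the items in $S^\star$ past $\bsigma^2$ so that $S^\star$ becomes unsafe, both apply a KL change-of-measure argument (the paper via Lemma~\ref{lem:safe_check} and Lemma~\ref{lem:lb_KLdecomp}, you via a truncated divergence decomposition at $\tau_0$ together with the binary-KL bound $d(p,q)\ge p\ln(1/q)-\ln 2$) to conclude that $S^\star$ cannot be selected during the first $\Omega(\ln(1/\delta_T))=\Omega(T^b)$ rounds, and both convert this avoidance into regret through the constant gap $\min_{S\in\Safeset\cap\Subopt}\Delta_S$. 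Your flagged caveat that the construction needs $S^\star$ not to be absolutely safe (i.e., $|S^\star|>q$) is a genuine refinement the paper's proof silently assumes: for instances with $|S^\star|\le q$ the literal ``for any instance'' claim fails, since running \textsc{CombUCB1} restricted to solutions of size at most $q$ satisfies the constraint with probability one and attains logarithmic regret.
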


    \section{Problem-independent Bounds}
We can derive a problem-independent upper bound on the regret of {\sc PASCombUCB} from the problem-dependent one in Theorem~\ref{thm:upbd} with some delicate calculations (see App.~\ref{sec:proofUppBd}).
\vspace{-1em}
    \begin{restatable}[Problem-independent upper bound]{thm}{thmproblemindependentupbd}
\label{thm:problemindependent_upbd}
        Let $\{\delta_T\}_{T=1}^\infty \in o(1)$ be a sequence that satisfies $\ln(1/\delta_T)=o(T^b)$ for all $b>0$.
        If $T>L$, for any instance $\instance$ with variance gaps lower bounded by $\Delta^\rmv \leq\min_{S\in \calA_K} \Deltav_S$, the regret of {\sc PASCombUCB} is upper bounded by
        \begin{align}
             O\bigg( \sqrt{KLT\ln T} +
        \frac{ LK^2}{(\Deltav)^2}\ln \Big(\frac{1}{\delta_T}\Big)\bigg).
        \end{align}
    \end{restatable}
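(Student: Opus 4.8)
The plan is to derive the bound directly from the problem-dependent guarantee of Theorem~\ref{thm:upbd}, which is instantiated with the same parameters $\omega_\mu=\omega_\rmv'=1/T^2$ and $\omega_\rmv=\delta_T/T^2$, so that the probably anytime-safe constraint is inherited for free; the only work left is to convert the gap-dependent quantities into gap-free ones. Concretely, Theorem~\ref{thm:upbd} bounds $\Reg(T)$ by $\min\{T\mu^\star,\Regsub(T)+\Regsafe(T)\}+\Regfail(T)$, and I would show that $\Regsub(T)$ produces the exploration term $\sqrt{KLT\ln T}$, that $\Regsafe(T)=2\mu^\star H(\Delta(\instance))$ produces the safety term $\frac{LK^2}{(\Deltav)^2}\ln(1/\delta_T)$, and that the additive $\Regfail(T)=2\mu^\star(L+1)=O(KL)$ is of lower order and absorbed into the stated bound for $T>L$.

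For $\Regsub(T)$ I would run the standard minimax peeling argument for combinatorial semi-bandits in the style of \citet{Tight2015Kveton}: fix a gap threshold $\varepsilon>0$, bound the contribution of solutions whose mean gap exceeds $\varepsilon$ by the gap-dependent sum $\sum_{i\in E}\frac{cK\ln T}{\varepsilon}=\frac{cKL\ln T}{\varepsilon}$, bound the small-gap contribution by taking the minimum with the trivial per-step regret, and optimize $\varepsilon=\Theta(\sqrt{KL\ln T/T})$ to obtain $O(\sqrt{KLT\ln T})$. The same device handles the mean-gap branch $\ln T/\Delta_S^2$ inside the $\Psi$- and $\Phi$-terms of $H(\Delta(\instance))$, so that every contribution governed by a \emph{mean} gap collapses into $\sqrt{KLT\ln T}$.

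For $\Regsafe(T)=2\mu^\star H(\Delta(\instance))$ I would start from the explicit definition of $H$ in~\eqref{equ:Hinstance} and split every confidence factor as $\ln(T/\delta_T)=\ln T+\ln(1/\delta_T)$. The conceptual point is that the $\ln(1/\delta_T)$ component is the irreducible price of the \emph{hard} safety guarantee: it cannot be traded against regret through a gap threshold the way exploration can, so I would bound it by replacing each variance gap $\Deltav_{S^\star}$, $\Deltav_{i,\Riskset}$, $\Delta_S^\rmv$ by the uniform lower bound $\Deltav$ and summing over the $L$ items, which with $\mu^\star\le K$ and $K\le L$ yields $\mu^\star\cdot K\cdot L\cdot\frac{\ln(1/\delta_T)}{(\Deltav)^2}=O\!\big(\frac{LK^2}{(\Deltav)^2}\ln(1/\delta_T)\big)$. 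The residual $\ln T$ component of the safeness regret is either absorbed into the $\ln(1/\delta_T)$ term (in the main regime $\ln(1/\delta_T)=\Omega(\ln T)$, e.g.\ $\delta_T=T^{-\lambda}$ as in Remark~\ref{rem:H}), or, for more slowly decaying $\delta_T$, folded into $\sqrt{KLT\ln T}$ by exploiting the self-bounding structure of {\sc Greedy-Split}: the number of \emph{extra} sub-solutions created in phase $\phase$ is proportional to the variance-UCB overshoot $\sum_{i\in A_\phase}\beta_\rmu(T_i(\phase))$, and since $\sum_{t\le n}t^{-1/2}=O(\sqrt n)$ together with Cauchy--Schwarz gives $\sum_i\sqrt{T_i}\le\sqrt{LKT}$, the total extra splits telescope to a $\sqrt{\cdot}$ quantity rather than a $1/(\Deltav)^2$ one.

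The hard part is precisely this last reconciliation. Unlike the suboptimality regret, the safeness-checking regret cannot be made gap-free by a naive threshold, because mechanically plugging the worst-case variance gap into the $\ln T/(\Deltav)^2$ form of $H$ produces a $\frac{LK^2\ln T}{(\Deltav)^2}$ term that genuinely exceeds $\sqrt{KLT\ln T}$ once $\Deltav$ is small. The resolution must avoid passing through the gap-dependent form for the standard-confidence part, instead bounding the extra splits directly by the telescoped variance uncertainty and taking the minimum with $T\mu^\star$; verifying that this self-bounding estimate and the irreducible $\ln(1/\delta_T)$ safety term together reproduce the clean two-term bound, with every power of $K$ and $L$ correctly tracked (the $K^2$ from $\mu^\star\le K$ times the $K$ inside $H$, and the $L$ from the $L$ items, together with the identity $K^3\le LK^2$), is the delicate calculation alluded to in the statement and the step I expect to require the most care.
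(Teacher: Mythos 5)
Your treatment of the first two pieces coincides with the paper's: the threshold trick at $\Deltamu=\Theta(\sqrt{KL\ln T/T})$ for $\Regsub(T)$, and, for the safety term, replacing every variance gap in the $h$ functions of Lemma~\ref{lem:safereg} by the uniform bound $\Deltav$, splitting $\ln(T/\delta_T)=\ln T+\ln(1/\delta_T)$, and using $\mu^\star\le K$ with the sum over $L$ items to get $\frac{LK^2}{(\Deltav)^2}\ln(1/\delta_T)$. The genuine gap is in your handling of the residual $\ln T$ component when $\ln(1/\delta_T)=o(\ln T)$. The self-bounding inequality you propose --- that the number of extra sub-solutions in phase $\phase$ is proportional to the variance-UCB overshoot $\sum_{i\in A_\phase}\beta_\rmu(T_i(\phase-1))$ --- is false as an upper bound. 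By the mechanics of \textsc{Greedy-Split} and Lemma~\ref{lem:np}, splitting is a \emph{threshold} event: $n_\phase\ge 2$ as soon as $U^\rmv_{A_\phase}(\phase-1)$ exceeds $\bsigma^2$ by \emph{any} margin, and for a safe solution this is triggered whenever $2\sum_{i\in A_\phase}\beta_\rmu(T_i(\phase-1))>\Deltav_{A_\phase}$, i.e.\ by an overshoot that can be of order $\Deltav_{A_\phase}\ll\bsigma^2$. Each such phase costs a full $\mu^\star$ of safeness-checking regret, not $\mu^\star\cdot\sum_i\beta_\rmu/\bsigma^2$, so your telescoped Cauchy--Schwarz sum undercounts by a factor that blows up as $\Deltav\to 0$. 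Counting the threshold crossings correctly gives $\Theta\big(K^2\ln(T/\delta_T)/(\Deltav)^2\big)$ splitting phases per item --- exactly the $1/(\Deltav)^2$ dependence you were trying to avoid --- and this dependence is not an artifact of loose analysis: the problem-dependent lower bound (Theorem~\ref{thm:low_bd_prob_dep}) contains the terms $\ln T/(\Deltav_{i,\Riskset})^2$, so the safeness-checking regret genuinely carries this form and cannot be converted into a $\sqrt{KLT\ln T}$-type quantity by any refinement.

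The paper's resolution of the difficulty you (correctly) identified is more prosaic: the bound is asymptotic. After plugging $\Deltav$ into the $h$ functions, it keeps $\frac{LK^2}{(\Deltav)^2}\ln(1/\delta_T)$ and absorbs every remaining $\frac{LK^2Q}{(\Deltav)^2}\ln T$ term into $O(\sqrt{KLT\ln T})$ by invoking $\sqrt{T\ln T}\ge\frac{QK^2}{(\Deltav)^2}\ln T$ ``when $T$ is sufficiently large'', where sufficiently large depends on $K$, $L$ and $\Deltav$. Your objection that $\frac{LK^2\ln T}{(\Deltav)^2}$ ``genuinely exceeds'' $\sqrt{KLT\ln T}$ once $\Deltav$ is small is a non-asymptotic reading of a statement whose proof, and whose regime of validity ($T\to\infty$ with the instance class parameterized by $\Deltav$ fixed, $\ln(1/\delta_T)$ sub-polynomial), is explicitly asymptotic; for small $T$ the theorem's bound is simply not claimed, and the trivial $T\mu^\star$ cap from Theorem~\ref{thm:upbd} covers that regime. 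Once this is accepted, no telescoping device is needed and the proof closes along the lines of your first two steps.
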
 \vspace{-.1in}


\begin{restatable}[Problem-independent lower bound]{thm}{thmLowBdProbIndep}
\label{thm:low_bd_prob_indep}
    Let the minimum variance gap be $\Delta^\rmv : = \min_{S\in \calA_K} \Delta_S^\rmv$.
When $K^3 \ge L^2$, we have
\begin{align*}
    \Reg (T)
    =  \Omega\bigg(   \sqrt{KLT} + \min \Big\{  \frac{L}{(\Delta^\rmv)^2}  \ln\Big(\frac{1}{\delta_T }\Big) , T \Big\} \bigg).
\end{align*}
\end{restatable}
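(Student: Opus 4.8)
\emph{Overall strategy.} Because $A+B=\Theta(\max\{A,B\})$ and a minimax lower bound need only exhibit one worst-case instance, it suffices to lower bound $\Reg(T)$ by each summand on a (possibly different) instance and take the worse of the two: a ``mean'' bound $\Reg(T)=\Omega(\sqrt{KLT})$ on an instance where the safety constraint is vacuous, and a ``safety'' bound $\Reg(T)=\Omega(\min\{L(\Delta^\rmv)^{-2}\ln(1/\delta_T),\,T\})$ on an instance where all means are essentially tied. Both rest on the same device: fix a base instance $\instance$, introduce a family of perturbations $\{\instance^{(i)}\}$ that an algorithm must disambiguate, control the divergence between the laws of the length-$T$ histories by the chain rule as $\rmKL(\bbP_\instance\|\bbP_{\instance^{(i)}})=\bbE_\instance[N_i]\,\rmKL(\nu_i\|\nu_i^{(i)})$ where $N_i$ counts pulls of item $i$, and convert a failure to disambiguate into regret through the Bretagnolle--Huber inequality.

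\emph{The $\sqrt{KLT}$ term.} I would set every variance far below $\bsigma^2$ so that every $S\in\calA_K$ is absolutely safe and \eqref{constraint} is automatically met; the problem then collapses to unconstrained stochastic combinatorial semi-bandits, and I can reuse the classical product construction of \citet{Tight2015Kveton}. Partitioning $E$ into blocks and letting $\calA_K$ contain the partition-matroid bases of width $K$ decomposes the problem into $\Theta(K)$ independent multi-armed bandits of $\Theta(L/K)$ arms each; planting a single better item per block with mean gap of order $\sqrt{L/(KT)}$ makes the per-block minimax regret $\Omega(\sqrt{(L/K)T})$, and summing over the $\Theta(K)$ blocks yields $\Omega(\sqrt{KLT})$. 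The hypothesis $K^3\ge L^2$ is what lets me choose block sizes so that this product structure is embeddable in the downward-closed, width-$K$ family $\calA_K$ with the planted gaps in the admissible range.

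\emph{The safety term.} Here I would tie all means (so suboptimality gaps are negligible) and instead place a family of items whose variances sit a distance $\Delta^\rmv$ below the boundary $\bsigma^2$, arranged so that a near-optimal solution $S$ is safe under $\instance$ but a single-coordinate perturbation $\instance^{(i)}$ pushes $\sigma^2_S$ above $\bsigma^2$. The anytime constraint \eqref{constraint} forces $\bbP_{\instance^{(i)}}[\exists t: S_t=S]\le\delta_T$ on the unsafe instance, while low regret under $\instance$ demands that $S$ (or a solution of comparable value) be played on a constant fraction of rounds. Feeding these two requirements into Bretagnolle--Huber gives $\bbE_\instance[N_i]=\Omega((\Delta^\rmv)^{-2}\ln(1/\delta_T))$ for each such item, i.e.\ item $i$ must be sampled this many times \emph{before} it may be safely committed to a full-value solution. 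Since each such pull happens inside a conservative, safe-by-construction solution that forgoes $\Theta(1)$ of reward, accumulating the required samples over the $\Theta(L)$ near-boundary items costs $\Omega(L(\Delta^\rmv)^{-2}\ln(1/\delta_T))$ regret; the outer $\min\{\cdot,T\}$ appears because, once the sample requirement exceeds the budget, the algorithm is forced into conservative play for all $T$ rounds and already incurs $\Omega(T)$ regret.

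\emph{Main obstacle.} The delicate step is the safety term's accounting: I must argue that the $\Omega((\Delta^\rmv)^{-2}\ln(1/\delta_T))$ verification pulls per item genuinely translate into $\Theta(1)$ regret per pull and cannot be amortised by testing many near-boundary items in parallel within a single solution. This requires calibrating the construction so that any solution carrying more than a controlled number of unverified near-boundary items is itself unsafe (hence forbidden by \eqref{constraint} with high probability), which pins down the achievable degree of parallelism; this is where the balance condition $K^3\ge L^2$ is genuinely used. A secondary subtlety, absent from classical bandit lower bounds, is that the change of measure is driven by the anytime event $\{\exists t: S_t\in\Safeset^c\}$ rather than a terminal decision, so Bretagnolle--Huber must be applied to a first-violation event and one must verify that the divergence bound still factorises through $\bbE_\instance[N_i]$.
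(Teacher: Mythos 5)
Your high-level architecture is close to the paper's in spirit: the change of measure driven by the anytime first-violation event (with a stopping time, so that the divergence still factorises through $\bbE[N_i]$) is exactly the paper's Lemma~\ref{lem:safe_check}, the $\sqrt{KLT}$ part via an unconstrained Kveton-style construction is fine, and splitting the two terms across two instances and taking the max is a legitimate substitute for the paper's single $K$-path construction that delivers both terms at once. The genuine gap is in the quantitative accounting of the safety term, and it is a factor of $K$.

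Concretely: with single-coordinate perturbations, to make a near-boundary solution $S$ (with $\sigma_S^2=\bsigma^2-\Delta^\rmv$ and per-item variances $\Theta(\bsigma^2/K)$) unsafe you must raise one item's variance by at least $\Delta^\rmv$, which gives per-pull KL of order $(\Delta^\rmv)^2$ when $\bsigma^2=\Theta(K)$ so that item variances are $\Theta(1)$; hence ruling out all such alternatives for $S$ requires only $\sum_{i\in S}\bbE[N_i]=\Omega\big(K\ln(1/\delta_T)/(\Delta^\rmv)^2\big)$ samples. Moreover, your proposed parallelism limitation cannot hold for this family: a solution is robust to \emph{every} single-item alternative as soon as it has $\Delta^\rmv$ of variance headroom, because the alternatives perturb one coordinate at a time and their effects never compound; dropping a single $\Theta(1)$-variance item already provides that headroom, so a conservative solution can carry $K-1$ unverified near-boundary items simultaneously no matter how you calibrate, and enlarging the perturbations to force smaller parallelism destroys the $(\Delta^\rmv)^{-2}$ scaling. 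The arithmetic then gives, per solution, roughly $\ln(1/\delta_T)/(\Delta^\rmv)^2$ conservative steps at $\Theta(1)$ regret each, and over $L/K$ disjoint solutions a total of $\Omega\big(\tfrac{L}{K}(\Delta^\rmv)^{-2}\ln(1/\delta_T)\big)$ --- a factor $K$ short of the claimed bound. The idea you are missing is the paper's: perturb \emph{all} $K$ items of a path simultaneously, each by only $\Delta^\rmv/K$, so the per-pull KL drops to $\Theta\big((\Delta^\rmv/K)^2\big)$ and the per-path sample requirement rises to $K^2\ln(1/\delta_T)/(\Delta^\rmv)^2$; combined with the path structure's natural $(K-1)$-wise parallelism this yields $K\ln(1/\delta_T)/(\Delta^\rmv)^2$ costly steps per path and $\Omega\big(L(\Delta^\rmv)^{-2}\ln(1/\delta_T)\big)$ in total. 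Two smaller points: your attribution of the $K^3$-versus-$L^2$ condition is off --- in the paper it is used to make the mean-gap choice $\mu_1-\mu_2=\sqrt{L/(KT)}$ admissible, not to control parallelism in the safety term; and the paper's proof and Corollary~\ref{cor:tightnessIndep} actually work under $K^3\le L^2$, the theorem statement's $K^3\ge L^2$ notwithstanding.
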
 \vspace{-.1in}

\begin{remark}
    The assumption that the variance gaps of all solutions are lower bounded by $\Deltav$ is needed to achieve a non-vacuous problem-independent bound. Given any algorithm and time budget $T$, the variance gap of $S^\star$ can be arbitrarily small if $\Deltav$ is  not  bounded away from zero, so the $\min$ in  Theorem~\ref{thm:low_bd_prob_indep} will be dominated by the linear term $T$, and hence, no algorithm can attain sublinear regret. 
\end{remark}


\begin{restatable}[Tightness of problem-independent bounds]{cor}{corTightProbIndep}\label{cor:tightnessIndep}
    Let $K^3\le L^2$, and $\{\delta_T\}_{T=1}^\infty \in o(1)$ satisfies $\ln(1/\delta_T)=o(T^b)$ for all $b>0$. We have
\begin{align*}
    \Reg (T)
    \in  
    &
    ~\Omega\bigg(   \sqrt{KLT} + \frac{L  }{ (\Delta^\rmv )^2 } \  \ln\Big(\frac{1}{\delta_T}\Big)\bigg)
    \\&
    \cap 
    O\bigg( \sqrt{KLT\ln T} +
        \frac{ LK^2}{(\Deltav)^2}\ln\Big(\frac{1}{\delta_T}\Big)\bigg).
\end{align*}
The upper bound is achieved by  {\sc PASCombUCB}.
\end{restatable}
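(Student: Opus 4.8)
The plan is to assemble the two matching halves from the problem-independent results already in place, treating the instance parameters $L,K,\Deltav$ as fixed and letting $T\to\infty$; the hypothesis $\ln(1/\delta_T)=o(T^b)$ for every $b>0$ is used repeatedly to treat $\ln(1/\delta_T)$ as lower order than any positive power of $T$. For the upper bound there is essentially nothing new: since $\{\delta_T\}$ is not exponentially decaying and $T>L$ holds for all large $T$, Theorem~\ref{thm:problemindependent_upbd} applies verbatim to every instance whose variance gaps are bounded below by $\Deltav$, and yields that the regret of {\sc PASCombUCB} is $O\big(\sqrt{KLT\ln T}+\frac{LK^2}{(\Deltav)^2}\ln(1/\delta_T)\big)$. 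This gives the ``$O$'' half of the intersection and the closing claim that the upper bound is attained by {\sc PASCombUCB}.

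The substance is the lower bound $\Omega\big(\sqrt{KLT}+\frac{L}{(\Deltav)^2}\ln(1/\delta_T)\big)$, which I would obtain by exhibiting hard instances for the two additive terms separately and combining them via $a+b\le 2\max\{a,b\}$. (i) The \emph{combinatorial} term $\Omega(\sqrt{KLT})$ is the standard stochastic semi-bandit minimax bound (cf.\ \citet{Tight2015Kveton}): partition the $L$ items into $\Theta(K)$ blocks of size $\Theta(L/K)$, let every size-$K$ solution pick one hard two-point arm per block, and add up the per-block change-of-measure arguments. (ii) The \emph{safety} term $\Omega\big(\frac{L}{(\Deltav)^2}\ln(1/\delta_T)\big)$ is where the anytime-safe constraint bites. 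I would build an instance in which each item is a priori ambiguous, its variance contribution sitting either a gap $\Deltav$ below or above the budget, with the means arranged so that the unique optimal safe solution $S^\star$ contains exactly the low-variance items. Certifying an item safe before placing it in a played solution requires distinguishing its two candidate bounded distributions, which differ in $\rmKL$ by only $O((\Deltav)^2)$; a Bretagnolle--Huber / change-of-measure argument then forces any policy meeting the constraint~\eqref{constraint} at level $\delta_T$ to collect $\Omega\big(\frac{1}{(\Deltav)^2}\ln(1/\delta_T)\big)$ samples of each ambiguous item before it may be safely exploited, each pre-certification round costing $\Omega(1)$ regret, so summing over the $L$ items gives the claimed term. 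Finally, to reach the clean statement I would discharge the truncation $\min\{\cdot,T\}$ appearing in Theorem~\ref{thm:low_bd_prob_indep}: since $\frac{L}{(\Deltav)^2}\ln(1/\delta_T)=o(T)$ by the non-exponential-decay hypothesis, the minimum equals its first argument for all large $T$.

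The main obstacle is the lower bound in the regime $K^3\le L^2$, which is complementary to the regime $K^3\ge L^2$ covered by Theorem~\ref{thm:low_bd_prob_indep} and therefore must be argued, not cited. Two points need care. First, the safety construction must pack the $L$ ambiguous items into size-$\le K$, downward-closed solutions whose means still single out $S^\star$, with the regret bookkeeping arranged so that each forced safety-sample is charged $\Omega(1)$ and no amortization across items is possible; the condition $K^3\le L^2$ is what keeps the block sizes $\Theta(L/K)$ of (i) and this packing of (ii) simultaneously realizable with the stated constants, and it is exactly the source of the $K^2$ slack against the upper bound, since only $\Omega\big(\frac{L}{(\Deltav)^2}\ln(1/\delta_T)\big)$ — not $\Omega\big(\frac{LK^2}{(\Deltav)^2}\ln(1/\delta_T)\big)$ — can be guaranteed from below. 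Second, the change-of-measure must be driven by the \emph{anytime} event $\{\forall\, t\le T,\ S_t\in\Safeset\}$ rather than by a single decision, so one must convert the horizon-wide confidence $1-\delta_T$ into a per-item sample-complexity bound through the $\rmKL$-contraction (data-processing) inequality applied to this event, and verify that the sub-Gaussian variance gap $\Deltav$ indeed controls the relevant $\rmKL$ by $O((\Deltav)^2)$. The residual gaps — a factor $\sqrt{\ln T}$ in the $\sqrt{KLT}$ term and $K^2$ in the safety term — are expected and, as remarked after Corollary~\ref{cor:tightnessDep}, acceptable since $K$ is typically small relative to $L$.
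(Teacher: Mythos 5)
Your upper-bound half coincides with the paper's: Theorem~\ref{thm:problemindependent_upbd} applies verbatim and gives the $O\big(\sqrt{KLT\ln T}+\frac{LK^2}{(\Deltav)^2}\ln(1/\delta_T)\big)$ term, achieved by {\sc PASCombUCB}, and your discharge of the truncation --- using $\ln(1/\delta_T)=o(T^b)$ to conclude $\frac{L}{(\Deltav)^2}\ln(1/\delta_T)=o(T)$, so the $\min\{\cdot,T\}$ in the lower bound collapses to its first argument --- is exactly the step needed. However, your premise that the lower bound ``must be argued, not cited'' rests on a typo in the paper rather than a real gap in coverage: although the statement of Theorem~\ref{thm:low_bd_prob_indep} reads ``$K^3\ge L^2$'', its proof in App.~\ref{sec:proof_LwBd_indep} explicitly operates in the opposite regime --- the construction chooses $\mu_1-\mu_2=\sqrt{L/KT}$ ``when $K^3\le L^2$ and $LK/T\le\underline{a}^2$'' --- i.e., the lower bound is proved precisely under the corollary's hypothesis $K^3\le L^2$. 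The intended proof of the corollary is therefore a direct combination of Theorems~\ref{thm:problemindependent_upbd} and~\ref{thm:low_bd_prob_indep} together with your truncation step; no new construction is required.

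This matters because your replacement construction for the safety term has a genuine gap. You charge $\Omega(1)$ regret to each of the $\Omega\big(\frac{1}{(\Deltav)^2}\ln(1/\delta_T)\big)$ pre-certification samples of each of the $L$ ambiguous items and sum over items, asserting that ``no amortization across items'' can be arranged; two things are left unproven there. First, if a played solution may contain up to $K$ uncertified items, the $L\cdot\frac{1}{(\Deltav)^2}\ln(1/\delta_T)$ forced samples can be collected a factor of $K$ faster, with each time step still costing only $O(1)$ regret, so the argument as stated yields only $\frac{L}{K(\Deltav)^2}\ln(1/\delta_T)$; preventing the packing by making each ambiguous item's variance nearly exhaust $\bsigma^2$ alters the instance's solution-level gap (recall $\Delta^\rmv$ is defined over solutions, not items) and the mean structure, and you do not verify that these remain consistent. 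Second, and more fundamentally, nothing in your sketch forces a policy to certify all $L$ items: a policy could certify one good solution and exploit it forever. The paper's construction (proof of Theorem~\ref{thm:low_bd_prob_indep}, via Lemma~\ref{lem:safe_check}) resolves both issues simultaneously: the ambiguity is spread over the $K$ items of each path (per-item gap $\Deltav/K$, so certifying a path needs $\Omega\big(\frac{K^2}{(\Deltav)^2}\ln\frac{1}{\delta_T}\big)$ samples taken at most $K-1$ per step), and mean-side indistinguishability from the symmetric base instance forces every suboptimal path to be explored about $TK/L$ times, so the extra one-item loss per pre-certification step is paid on every path, giving $\frac{L}{K}\cdot\frac{K}{(\Deltav)^2}\ln\frac{1}{\delta_T}=\frac{L}{(\Deltav)^2}\ln\frac{1}{\delta_T}$. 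This coupling of the safety cost to exploration forced by the mean gaps is the crux and is absent from your sketch; relatedly, $K^3\le L^2$ enters the paper's proof through the feasibility of the mean-gap choice $\sqrt{L/KT}$, not through any packing consideration.
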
\vspace{-.1in}

We observe that the gap between the upper and lower bounds is manifested on $\sqrt{\ln T}$ and $K^2$. The presence of  $\sqrt{\ln T}$ is not unexpected as it is also involved in the gap between the bounds on the regret for the (unconstrained) combinatorial bandits \citep{Tight2015Kveton}.  
Besides, the term $K^2$ is induced by the design of  {\sc PASCombUCB}. Additional discussions are provided in App.~\ref{sec:additional}.

    \section{Proof Sketch of the Problem-Dependent Upper Bound (Theorem~\ref{thm:upbd})}\label{sec:proofSketchProDep}
     {\setlength{\abovedisplayskip}{4pt}
 \setlength{\belowdisplayskip}{4pt}
    Assume that \AlgSCombUCB has processed $T^\prime$ phases with $T$ time steps, we have $\mathbb{P}[T^\prime\leq T]=1$ since each phase is composed by multiple time steps.
	Denote the expected regret of \AlgSCombUCB with $\phase$ phases as $\mathbb{E}[\rmR(\phase)]$.
	The expected regret of \AlgSCombUCB  after $T$ time steps is
	\begin{align}\label{expreg}
		\mathbb{E}[\rmR(T^\prime)]:=\mathbb{E}\bigg[\sum_{\phase=1}^{T^\prime}\sum_{r=1}^{n_\phase}(\mu^\star -\mu_{A_{\phase,r}})\bigg].
	\end{align}%
 }
    

	In the proof of Theorem~\ref{thm:upbd}, we first  show a regret decomposition lemma (Lemma~\ref{lem:upbd_decomposition}) that separates the total regret into {\em the regret due to suboptimality} $\mathbb{E}[\rmR_1(T^\prime)]$, {\em the regret due to safeness-checking} $\mathbb{E}[\rmR_2(T^\prime)]$ and the regret due to the failure of the ``good'' event and the initialization. 
    Then we upper bound $\rmR_1(T^\prime)$ and $\rmR_2(T^\prime)$ separately.
    To elucidate the dependence of the regret on the confidence parameters $\omega_\mu,\omega_\rmv$ and $\omega_\rmv^\prime$, we retain these notations henceforth. 
    Detailed proofs are presented in App.~\ref{sec:proof_upper_bd}.
    
	For $ \phase\in [T], i\in E$, define the ``good'' events that the sample mean and the sample variance are near their ground truths:
        $\mathcal{E}^\mu_{i,T_i(\phase)}:=
        \left\{\hat{\mu}_i(\phase)-\alpha(T_i(\phase))\leq \mu_i\leq\hat{\mu}_i(\phase)+\alpha(T_i(\phase))\right\}$
        and 
        $\mathcal{E}^\rmv_{i,T_i(\phase)}(\rho):=
        \{\hat{\sigma}^2_i(\phase)-3\cdot \mathrm{lil}(T_i(p),\rho)\leq \sigma_i^2 \leq
                    \hat{\sigma}^2_i(\phase)+3\cdot \mathrm{lil}(T_i(p),\rho)\}$
    and 
    \begin{align}
		\mathcal{E}_{i,T_i(\phase)}&:=\mathcal{E}^\mu_{i,T_i(\phase)}\cap \mathcal{E}^\rmv_{i,T_i(\phase)}(\omega_\rmv)\cap\mathcal{E}^\rmv_{i,T_i(\phase)}(\omega_\rmv^\prime)
        \\
        \mathcal{E} &:=\bigcap_{i\in E}\bigcap_{\phase\in[T']}\mathcal{E}_{i,T_i(\phase-1)}.
	\end{align}
    For $r\in[Q-1]$, define $\mathcal{U}_\phase(r):=\{U_{A_\phase}^\rmv(\phase-1)>r\bsigma^2\}.$
	 When event $\mathcal{U}_p(r)$ occurs at phase $\phase$, it indicates at least $r+1$ sub-solutions are needed in order to sample the items in $A_\phase$ and guarantee the safeness constraint.

    \begin{restatable}{lem}{lemUpBdDecom}\label{lem:upbd_decomposition}
        Assume that \AlgSCombUCB has processed $T^\prime$ phases with $T$ time steps, the expected regret of \AlgSCombUCB can be decomposed into three parts
        as follows
        \begin{align}
             \mathbb{E}[\rmR(\Tp)]&
             \leq
             \mathbb{E}[\rmR_1(T^\prime)|\mathcal{E}]+\mathbb{E}[\rmR_2(T^\prime)|\mathcal{E}]+\rmR_3(T)
        \\
           \text{where}\quad &\rmR_1(\Tp):=\sum_{\phase=1}^{\Tp}
            \mathbbm{1}\{A_\phase\in\Subopt\}
            \Delta_{A_\phase}
            \\
			&\rmR_2(\Tp):=\mu^\star \sum_{\phase=1}^{\Tp}
            \bigg[2\sum_{r=1}^{Q-1}\mathbbm{1}\{\mathcal{U}_p(r)\}\bigg]\\
   \rmR_3(T) & :=2\mu^\star L\big(1+T\big(\xi(\omega_\mu)+2\xi(\omega_v)+2\xi(\omega_v^\prime\big)\big)
		\end{align}
    \end{restatable}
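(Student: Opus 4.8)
The plan is to begin from the phase-wise form of the regret in~\eqref{expreg} and control a single phase's contribution before assembling the three terms. Since {\sc Greedy-Split} returns a partition $\{A_{p,1},\ldots,A_{p,n_p}\}$ of $A_p$, the means telescope, $\sum_{r=1}^{n_p}\mu_{A_{p,r}}=\mu_{A_p}$, so that
\begin{align}
\sum_{r=1}^{n_p}(\mu^\star-\mu_{A_{p,r}})=(\mu^\star-\mu_{A_p})+(n_p-1)\mu^\star.
\end{align}
For the first summand I would use only the definition $\Delta_{A_p}=\mu^\star-\mu_{A_p}$ together with the observation that $A_p\notin\Subopt$ forces $\mu_{A_p}\ge\mu^\star$, so the gap is nonpositive; hence $\mu^\star-\mu_{A_p}\le\Delta_{A_p}\mathbbm{1}\{A_p\in\Subopt\}$, which is exactly the per-phase summand of $\rmR_1$. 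The residual $(n_p-1)\mu^\star$ is the safeness-checking contribution, and the heart of the proof is to match it to $\rmR_2$.

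The core combinatorial step is the bound $n_p-1\le 2\sum_{r=1}^{Q-1}\mathbbm{1}\{\mathcal U_p(r)\}$. I would analyze {\sc Greedy-Split} as next-fit bin packing with capacity $\bsigma^2$ and item ``weights'' $U^\rmv_i(p-1)\le\sigma^2<\bsigma^2$ (recall $U^\rmv_i$ is truncated at $\sigma^2$ in~\eqref{cbv}). By construction, whenever a new sub-solution is opened, the item that triggered it did not fit into the previous one, so any two consecutive sub-solutions have combined upper-confidence variance exceeding $\bsigma^2$. Grouping them into $\lfloor n_p/2\rfloor$ disjoint consecutive pairs whose total weight is at most $U^\rmv_{A_p}(p-1)$ gives $\lfloor n_p/2\rfloor\,\bsigma^2< U^\rmv_{A_p}(p-1)$, hence $\lfloor n_p/2\rfloor\le\lceil U^\rmv_{A_p}(p-1)/\bsigma^2\rceil-1$ and therefore $n_p-1\le 2\big(\lceil U^\rmv_{A_p}(p-1)/\bsigma^2\rceil-1\big)$. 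Because $U^\rmv_{A_p}(p-1)\le K\sigma^2$ and $\bsigma^2\ge q\sigma^2$ with $q=\lfloor\bsigma^2/\sigma^2\rfloor$, we get $\lceil U^\rmv_{A_p}(p-1)/\bsigma^2\rceil\le\lceil K/q\rceil=Q$, so the count is capped at $Q-1$ and equals $\sum_{r=1}^{Q-1}\mathbbm{1}\{U^\rmv_{A_p}(p-1)>r\bsigma^2\}=\sum_{r=1}^{Q-1}\mathbbm{1}\{\mathcal U_p(r)\}$. Summing over phases yields $\sum_{p}\mu^\star(n_p-1)\le\rmR_2(T')$. Crucially, this bound is deterministic---it uses only $U^\rmv_i\le\sigma^2$ and $|A_p|\le K$---which is why $\rmR_2$ may be freely conditioned on $\mathcal E$ afterward.

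It remains to bring in the good event. I would split $\mathbb{E}[\rmR(T')]=\mathbb{E}[\rmR(T')\mathbbm{1}\{\mathcal E\}]+\mathbb{E}[\rmR(T')\mathbbm{1}\{\mathcal E^c\}]$. On $\mathcal E$ the two preceding steps give $\rmR(T')\mathbbm{1}\{\mathcal E\}\le(\rmR_1(T')+\rmR_2(T'))\mathbbm{1}\{\mathcal E\}$, and since $\rmR_1,\rmR_2\ge 0$ and $\mathbb{P}[\mathcal E]\le 1$, taking expectations gives $\mathbb{E}[(\rmR_1+\rmR_2)\mathbbm{1}\{\mathcal E\}]\le\mathbb{E}[\rmR_1\mid\mathcal E]+\mathbb{E}[\rmR_2\mid\mathcal E]$. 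On $\mathcal E^c$ I would use the crude bound $\rmR(T')\le\mu^\star\sum_p n_p\le\mu^\star T$ (valid as $\mu_{A_{p,r}}\ge0$ and $\sum_p n_p\le T$), so this part contributes at most $\mu^\star T\,\mathbb{P}[\mathcal E^c]$. To control $\mathbb{P}[\mathcal E^c]$ I would invoke the anytime LIL concentration property underlying $\mathrm{lil}(\cdot,\cdot)$ in~\eqref{confidencebounds_mean}--\eqref{confidencebounds_var}, which bounds the probability that $\hat\mu_i$ or $\hat\sigma^2_i$ ever leaves its confidence band uniformly over all sample sizes, so that no union over phases is required; together with a union over the two tails of each variance event and over the $L$ items this gives $\mathbb{P}[\mathcal E^c]\le L\big(\xi(\omega_\mu)+2\xi(\omega_\rmv)+2\xi(\omega_\rmv')\big)$. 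Finally, the regret accumulated during the initialization stage is at most $\mu^\star$ per phase over at most $2\lceil L/q\rceil\le 2L$ phases, i.e.\ at most $2\mu^\star L$; adding this to $\mu^\star T\,\mathbb{P}[\mathcal E^c]$ and comparing with $\rmR_3(T)=2\mu^\star L\big(1+T(\xi(\omega_\mu)+2\xi(\omega_\rmv)+2\xi(\omega_\rmv'))\big)$ closes the estimate, the extra factor of $2$ and the additive $2\mu^\star L$ leaving ample slack.

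The main obstacle I anticipate is the combinatorial step: verifying that the next-fit/pairing argument produces exactly the factor $2$ and the truncation at $Q-1$ rather than an off-by-one error. The integer rounding in $\lceil U^\rmv_{A_p}(p-1)/\bsigma^2\rceil$ must be tracked via the floor bound on $\lfloor n_p/2\rfloor$ (a naive ``$n_p-1<2U^\rmv_{A_p}/\bsigma^2$'' estimate is too loose and fails for integer ratios), and one must confirm $\lceil U^\rmv_{A_p}(p-1)/\bsigma^2\rceil\le Q$ from $U^\rmv_i\le\sigma^2$ and $q=\lfloor\bsigma^2/\sigma^2\rfloor$. The only other delicate point is the precise form of the anytime LIL failure probabilities that yield the combination $\xi(\omega_\mu)+2\xi(\omega_\rmv)+2\xi(\omega_\rmv')$, which I would import from the LIL concentration lemma supporting the confidence radii.
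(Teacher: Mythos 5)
Your proposal follows the same overall decomposition as the paper's proof: telescope the per-phase regret into $\Delta_{A_p}+\mu^\star(n_p-1)$, absorb the first term into $\rmR_1$ using that non-suboptimal solutions have nonpositive gap, bound $n_p-1$ by $2\sum_{r=1}^{Q-1}\mathbbm{1}\{\mathcal{U}_p(r)\}$, and then split the expectation over $\mathcal{E}$, $\mathcal{E}^c$ and the initialization stage, with the anytime LIL bound (Lemma~\ref{lem:lil_var_bd}) controlling $\mathbb{P}[\mathcal{E}^c]$. Where you genuinely depart from the paper is the combinatorial step. The paper delegates it to Lemma~\ref{lem:np}, whose proof constructs one particular admissible split of $A_p$ into at most $2m-1$ pieces (prefix blocks plus at most $k-1$ leftover singletons) and identifies that count with $n_p$; strictly speaking this bounds the size of a feasible split, not the size of the split that the next-fit {\sc Greedy-Split} routine actually outputs. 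Your pairing argument---consecutive sub-solutions have combined upper-confidence weight exceeding $\bsigma^2$ because the item that opened a new sub-solution failed to fit in the previous one, hence $\lfloor n_p/2\rfloor\bsigma^2 < U^\rmv_{A_p}(p-1)\le m\bsigma^2$ and so $n_p\le 2m-1$---bounds the greedy output directly; this is the standard next-fit analysis, it is arguably more faithful to the algorithm as implemented, and together with $U^\rmv_i\le\sigma^2$ and $q\sigma^2\le\bsigma^2$ it yields the cap $m\le Q$ exactly as required. So each approach buys the same bound, but yours closes a small inferential gap in the paper's own treatment of this step.

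Two corrections on constants, neither of which breaks your proof. First, Lemma~\ref{lem:lil_var_bd} gives per-item failure probability $2\xi(\omega_\mu)+4\xi(\omega_\rmv)+4\xi(\omega_\rmv^\prime)$ (each variance band costs $4\xi$, not $2\xi$, because the empirical variance deviation is controlled by two two-sided LIL events, one for the mean and one for the second moment), so the union over the $L$ items gives $\mathbb{P}[\mathcal{E}^c]\le 2L\big(\xi(\omega_\mu)+2\xi(\omega_\rmv)+2\xi(\omega_\rmv^\prime)\big)$, a factor of $2$ larger than what you wrote. Second, with this corrected constant the bad-event contribution $\mu^\star T\,\mathbb{P}[\mathcal{E}^c]$ equals $2\mu^\star TL(\cdots)$ and the initialization contributes $2\mu^\star L$, so the comparison against $\rmR_3(T)=2\mu^\star L\big(1+T(\xi(\omega_\mu)+2\xi(\omega_\rmv)+2\xi(\omega_\rmv^\prime))\big)$ is exactly tight rather than leaving ``ample slack.''
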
\vspace{-.1in}
    
    In  Lemma~\ref{lem:upbd_decomposition}, the first term $\rmR_1(\Tp)$ is  the  {\em (high-probability) regret due to suboptimality}, in the sense that only the mean gaps of the suboptimal solutions contribute to $\rmR_1(T)$. 
    The second term $\rmR_2(\Tp)$ is called {\em the (high-probability) regret due to safeness-checking}, since it depends on the variance gaps and  goes to $0$ if $\bsigma^2$ is sufficiently large. 
    The last  term $\rmR_3(T)$ contains the regret from the initialization stage and the regret results from the failure of the ``good'' event $\mathcal{E}$. 

    The regret due to suboptimality can be bounded in terms of the minimum safe/unsafe-suboptimal gaps as follows.
    \begin{restatable}{lem}{lemsubreg}\label{lem:subreg}
        Conditioned on event $\mathcal{E}$, the regret due to suboptimality $\rmR_1(T^\prime)$ can be bounded by
        \begin{align}
            \! O\bigg(\sum_{i\in E\setminus S^\star}\!
            \frac{K}{\Delta_{i,\Safeset\cap\Subopt,\min}} 
            \ln\frac{1}{\omega_\mu}\!+\!\sum_{i\in E}\!
            \frac{c_i K}{\Delta_{i,\Safeset^c\cap\Subopt,\min}} \ln\frac{1}{\omega_\rmv^\prime}\bigg).\!
        \end{align}  
    \end{restatable}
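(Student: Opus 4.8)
The plan is to work throughout on the good event $\mathcal{E}$ and to convert the greedy selection rule into quantitative ``under-sampling'' inequalities for the items of any chosen suboptimal solution. First I would record two consequences of $\mathcal{E}$. Since $L^\rmv_{S^\star}(\phase-1)\le\sigma^2_{S^\star}<\bsigma^2$ on $\mathcal{E}$, the optimal safe solution $S^\star$ always lies in the possibly safe set $\bar{\mathcal{S}}_{\phase-1}$; hence the selection $A_\phase=\argmax_{A\in\bar{\mathcal{S}}_{\phase-1}}U^\mu_A(\phase-1)$ forces $U^\mu_{A_\phase}(\phase-1)\ge U^\mu_{S^\star}(\phase-1)\ge\mu^\star$. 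Expanding the mean UCB on $\mathcal{E}$ then yields the \emph{mean under-sampling inequality} $2\sum_{i\in A_\phase}\alpha(T_i(\phase-1))\ge\Delta_{A_\phase}$ whenever $A_\phase\in\Subopt$. Second, if in addition $A_\phase\in\Safeset^c$, then membership $A_\phase\in\bar{\mathcal{S}}_{\phase-1}$ means $L^\rmv_{A_\phase}(\phase-1)<\bsigma^2$, which on $\mathcal{E}$ expands to the \emph{variance under-sampling inequality} $2\sum_{i\in A_\phase}\beta_\rml(T_i(\phase-1))>\Deltav_{A_\phase}$.

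Next I would split $\rmR_1(\Tp)$ according to whether the chosen suboptimal solution is safe or unsafe, writing $\rmR_1=\sum_{\phase}\mathbbm{1}\{A_\phase\in\Safeset\cap\Subopt\}\Delta_{A_\phase}+\sum_{\phase}\mathbbm{1}\{A_\phase\in\Safeset^c\cap\Subopt\}\Delta_{A_\phase}$. For the safe term only the mean inequality is available, and the argument reduces to the tight combinatorial semi-bandit analysis of \citet{Tight2015Kveton}: a phase in which a safe-suboptimal $A_\phase$ is chosen is witnessed by items whose pull counts are still below a $\mathrm{lil}$-threshold $\asymp\ln(1/\omega_\mu)/\Delta_{A_\phase}^2$, and their refined weighting/peeling bookkeeping charges only $O(K\ln(1/\omega_\mu)/\Delta_{i,\Safeset\cap\Subopt,\min})$ total regret to each item $i$, giving the first term.

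For the unsafe term I would run the same counting machinery but with the two inequalities combined. Writing $\alpha(t)=\mathrm{lil}(t,\omega_\mu)$ and $\beta_\rml(t)=3\,\mathrm{lil}(t,\omega_\rmv^\prime)$, the mean inequality controls the items relative to the threshold set by $\Delta_{A_\phase}$ while the variance inequality controls them relative to $\Deltav_{A_\phase}/3$; since an unsafe-suboptimal $A_\phase$ can be chosen only while \emph{both} hold, its items must remain under-sampled with respect to the larger effective gap $\max\{\Delta_{A_\phase},\Deltav_{A_\phase}/3\}$. Running the tight counting scheme with this effective gap, but keeping the per-phase regret weighted by the mean gap $\Delta_{A_\phase}$, produces the ratio $(\Delta_{A_\phase}/\max\{\Delta_{A_\phase},\Deltav_{A_\phase}/3\})^2$; taking the worst case over $S\ni i$ gives the constant $c_i$, and using the minimum unsafe-suboptimal gap in the denominator yields the second term $O(c_i K\ln(1/\omega_\rmv^\prime)/\Delta_{i,\Safeset^c\cap\Subopt,\min})$.

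The main obstacle is the tight counting argument itself: to obtain a single power of $K$ rather than $K^2$ one cannot simply charge the full instantaneous regret $\Delta_{A_\phase}$ to the least-pulled item, but must use the refined layered weighting over the item gaps. The genuinely new difficulty here is that in the unsafe case the elimination threshold is governed by the effective gap $\max\{\Delta_S,\Deltav_S/3\}$ while the regret accrued is the mean gap $\Delta_S$; keeping these two scales separate throughout the peeling, so that the mismatch surfaces cleanly as the factor $c_i$, is the crux of the calculation. A minor point to verify is that summing the $\mathrm{lil}$ counts over phases contributes the $\ln(1/\omega_\mu)$ and $\ln(1/\omega_\rmv^\prime)$ factors, with the doubly-logarithmic corrections absorbed into the $O(\cdot)$.
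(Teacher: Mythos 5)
Your proposal follows essentially the same route as the paper's proof: your two ``under-sampling'' inequalities are exactly the paper's events $\mathcal{F}^\mu_\phase$ and $\mathcal{F}_\phase\big(\Deltav_{A_\phase}/3,\omega_\rmv^\prime\big)$ (Lemma~\ref{lem:upbd_decomposition1}), the safe/unsafe split of $\rmR_1(\Tp)$ is the paper's case split, and the layered peeling that charges regret to under-sampled items with a single factor of $K$ is the paper's $\mathcal{G}$-event argument (Lemma~\ref{lem:FtoG}) built on the analysis of \citet{Tight2015Kveton}, with the mismatch between the counting gap and the mean-gap weight surfacing as $c_i$, just as in the paper.

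One step deserves more care than you give it: when $\omega_\mu \neq \omega_\rmv^\prime$ the two inequalities live at different confidence scales, so they cannot be combined directly into under-sampling at the effective gap $\max\{\Delta_{A_\phase},\Deltav_{A_\phase}/3\}$. The paper's merging step (Lemma~\ref{lem:FFtoF}) rescales one radius by $\bomega=\sqrt{\ln(1/\omega_\rmv^\prime)/\ln(1/\omega_\mu)}$, so the correct effective gap is $\max\{\bomega\Delta_{A_\phase},\Deltav_{A_\phase}/3\}$ and the appendix defines $c_i$ with this $\bomega$ inside. Under the operative choice $\omega_\mu=\omega_\rmv^\prime=1/T^2$ one has $\bomega=1$ and your statement coincides with the paper's, so this is an imprecision rather than a fatal flaw; but for general confidence parameters, the unsafe-case bound with $\ln(1/\omega_\rmv^\prime)$ and the un-rescaled $c_i$ does not follow from the two inequalities you have (e.g., if $\Delta_S\gg\Deltav_S/3$ and $\omega_\mu\ll\omega_\rmv^\prime$, the justified phase count is governed by $\ln(1/\omega_\mu)/\Delta_S^2$, which exceeds the $\ln(1/\omega_\rmv^\prime)/\Delta_S^2$ your effective-gap claim would assert).
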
\vspace{-.3in}
    The regret due to safeness-checking involves more critical parameters of the instance and we encode them in  $\Tp_{r^\prime}$ and $H(r^\prime,\instance)$ for $r^\prime\in[Q]$ (see Figure~\ref{Regret_safeness_H}); these terms   are
    defined formally in~\eqref{equ:Tpr} and~\eqref{equ:Hinstance} respectively.
    
    \begin{figure}[t]
        \centering
        \includegraphics[width=0.40\textwidth]{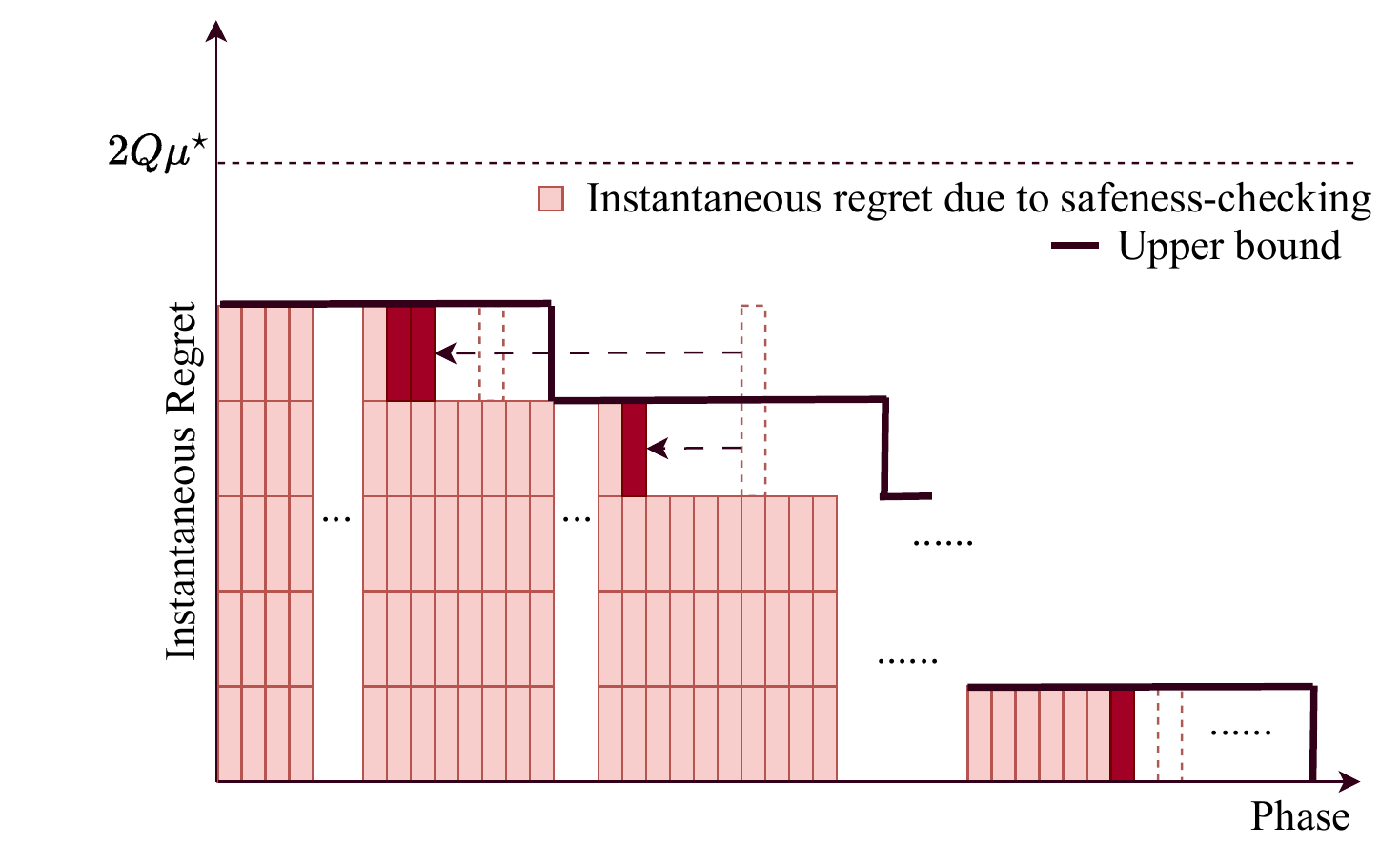}
        \vspace*{-1.2em}
        \caption{We assume the algorithm will sample those solutions with large $U_A^\rmv(p)$, i.e., those phases in which more sub-solutions are sampled are moved forward (the dark red ones). Based on this, an upper bound can be  derived (the thick black lines).}
        \label{Regret_safeness}
        \vspace*{-.5em}
    \end{figure}
    \begin{figure}[t]
        \centering
        \includegraphics[width=0.40\textwidth]{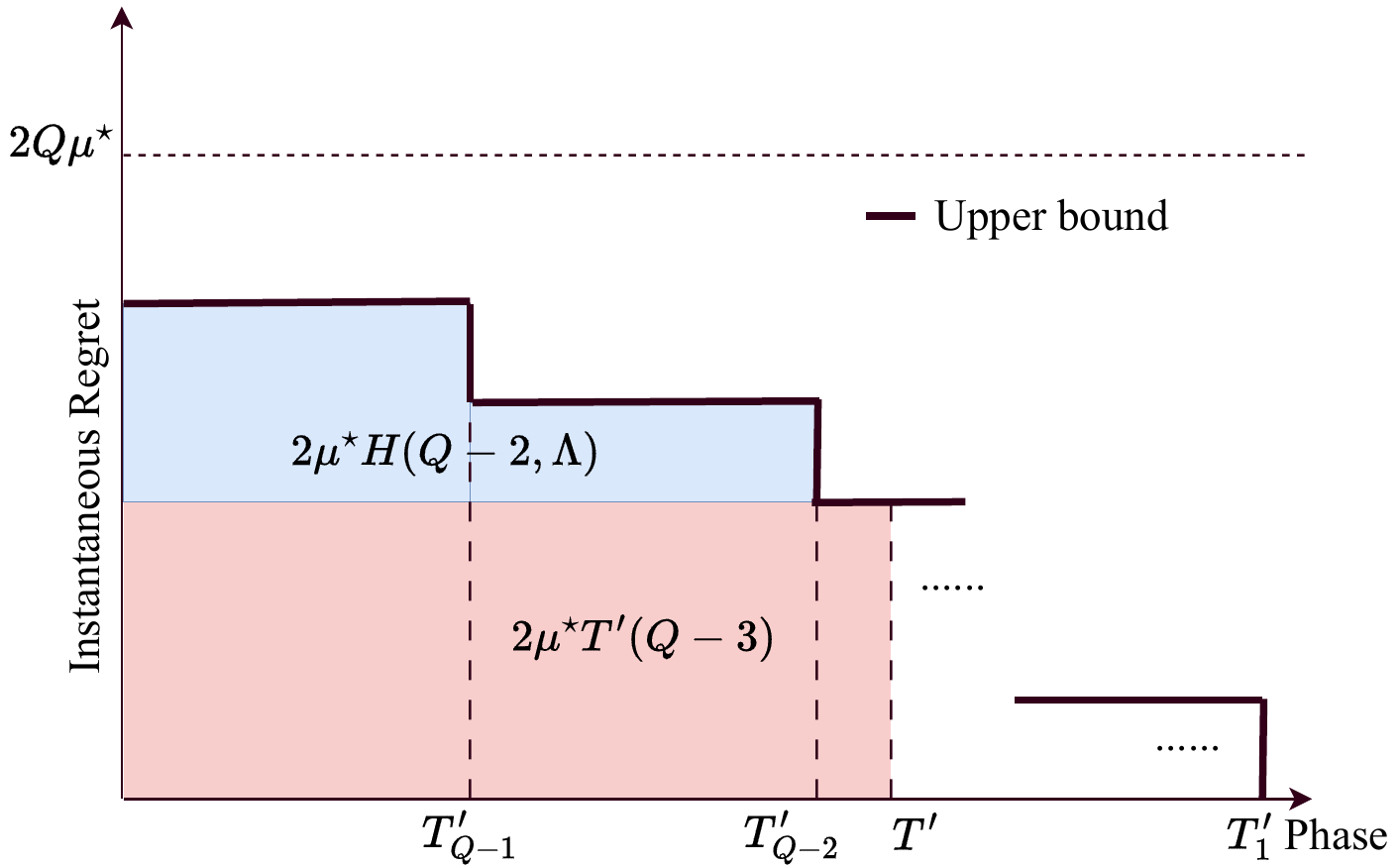}
        \vspace*{-.8em}
        \caption{An illustration of the upper bound of $\rmR_2(\Tp)$ for  phase $\Tp_{Q-2}\leq\Tp<\Tp_{Q-3}$. When $r^\prime=1$, $2\mu^\star H(1,\instance)$ is the area below the thick line, i.e., the upper bound for $\rmR_2(\Tp)$ for any~$\Tp$.}
        \label{Regret_safeness_H}
        \vspace*{-1em}
    \end{figure}
    \begin{restatable}{lem}{lemsafereg}\label{lem:safereg}
     {\setlength{\abovedisplayskip}{4pt}
\setlength{\belowdisplayskip}{4pt}
        On the event $\mathcal{E}$, if $\Tp\in[\Tp_{r^\prime},\Tp_{r^\prime-1})$ 
        then 
        \begin{align}
            \rmR_2(\Tp)\leq 2\mu^\star [\Tp (r^\prime-1)+H(r^\prime,\instance)]\leq 2\mu^\star H(1,\instance)
    \end{align}%
    }
    \end{restatable}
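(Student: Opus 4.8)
The plan is to turn the bound on $\rmR_2(\Tp)$ into a level-by-level count of how often the ``extra-split'' events $\mathcal{U}_\phase(r)$ fire. Starting from $\rmR_2(\Tp)=2\mu^\star\sum_{\phase=1}^{\Tp}\sum_{r=1}^{Q-1}\mathbbm{1}\{\mathcal{U}_\phase(r)\}$, I would exchange the order of summation to get $\rmR_2(\Tp)=2\mu^\star\sum_{r=1}^{Q-1}N_{\Tp}(r)$, where $N_{\Tp}(r):=\sum_{\phase=1}^{\Tp}\mathbbm{1}\{\mathcal{U}_\phase(r)\}$ counts the phases up to $\Tp$ in which the variance UCB of the chosen solution exceeds $r\bsigma^2$. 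For each level two bounds are available: trivially $N_{\Tp}(r)\le\Tp$, and on the event $\mathcal{E}$ the total number of phases in which $\mathcal{U}_\phase(r)$ can ever occur is capped by the deterministic quantity $\Tp_r$ from~\eqref{equ:Tpr}, so $N_{\Tp}(r)\le\min\{\Tp,\Tp_r\}$. This $\min$ envelope is exactly the ``worst-case ordering'' depicted in Figure~\ref{Regret_safeness}, in which the split-heavy phases are pushed to the front.

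The structural cap $N_{\Tp}(r)\le\Tp_r$ is where the real work lies, and I expect it to be the main obstacle. On $\mathcal{E}$ each item's variance UCB is sandwiched around $\sigma_i^2$ with radius $3\,\mathrm{lil}(T_i(\phase),\omega_\rmv)$, so $U_{A_\phase}^\rmv(\phase-1)>r\bsigma^2$ can hold only when the items of $A_\phase$ remain collectively under-sampled (their confidence radii are still large). Because every phase in which $\mathcal{U}_\phase(r)$ fires pulls each item of $A_\phase$ once, those radii shrink, and after a controlled number of such phases the UCB must drop below $r\bsigma^2$; charging each firing against the items' sample counts produces the deterministic cap $\Tp_r$. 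I would import this counting estimate from the lemma establishing~\eqref{equ:Tpr} and, treating it as given, also record the monotonicity $\Tp_{Q-1}\le\cdots\le\Tp_1$, which is built into~\eqref{equ:Tpr} and reflects the nesting $\mathcal{U}_\phase(r)\subseteq\mathcal{U}_\phase(r-1)$.

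Everything after the envelope is bookkeeping. Under the hypothesis $\Tp\in[\Tp_{r^\prime},\Tp_{r^\prime-1})$, monotonicity gives $\Tp_r\ge\Tp_{r^\prime-1}>\Tp$ for $r\le r^\prime-1$ and $\Tp_r\le\Tp_{r^\prime}\le\Tp$ for $r\ge r^\prime$, so $\min\{\Tp,\Tp_r\}=\Tp$ on the first block and $=\Tp_r$ on the second. Summing yields
\[
\rmR_2(\Tp)\le 2\mu^\star\Big[(r^\prime-1)\,\Tp+\sum_{r=r^\prime}^{Q-1}\Tp_r\Big]=2\mu^\star\big[\Tp(r^\prime-1)+H(r^\prime,\instance)\big],
\]
where I read off $H(r^\prime,\instance)=\sum_{r=r^\prime}^{Q-1}\Tp_r$ from~\eqref{equ:Hinstance}, matching the ``area under the staircase'' picture of Figure~\ref{Regret_safeness_H}. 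For the final inequality I would compare against $2\mu^\star H(1,\instance)=2\mu^\star\sum_{r=1}^{Q-1}\Tp_r$: the gap equals $2\mu^\star\big[\sum_{r=1}^{r^\prime-1}\Tp_r-(r^\prime-1)\Tp\big]$, which is nonnegative since $\Tp<\Tp_{r^\prime-1}\le\Tp_r$ for every $r\le r^\prime-1$. This closes both inequalities, and the only genuinely substantive ingredient is the per-level cap $\Tp_r$ discussed above.
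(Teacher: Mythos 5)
Your outer bookkeeping is sound and coincides with the final step of the paper's own proof: exchanging the order of summation, bounding each level count $\sum_{\phase=1}^{\Tp}\mathbbm{1}\{\mathcal{U}_\phase(r)\}$ by $\min\{\Tp,\Tp_r\}$, splitting the levels at $r^\prime$ using the monotonicity of the caps, and verifying the final comparison with $2\mu^\star H(1,\instance)$ via $\Tp<\Tp_{r^\prime-1}\le\Tp_r$ for $r\le r^\prime-1$. The problem is what you chose to ``import''. There is no separate lemma in the paper establishing the per-level cap $\sum_{\phase=1}^{\Tp}\mathbbm{1}\{\mathcal{U}_\phase(r)\}\le\Tp_r$ on the event $\mathcal{E}$: equation \eqref{equ:Tpr} is a \emph{definition} that appears inside the proof of Lemma~\ref{lem:safereg} itself, and proving that cap is the entire substance of that proof (its Steps 1 and 2). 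Treating it as given is circular---you have assumed precisely the statement whose derivation is the point of the lemma, and supplied only the (comparatively easy) envelope argument around it.

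Moreover, your heuristic for why the cap should hold---each firing of $\mathcal{U}_\phase(r)$ samples the items of $A_\phase$, shrinking their confidence radii until the variance UCB drops below $r\bsigma^2$---would not produce the cap $\Tp_r$ of \eqref{equ:Tpr}, because that cap is not a pure variance-concentration count. The paper's argument first decomposes the phases where $\mathcal{U}_\phase(r)$ fires according to whether $A_\phase$ equals $S^\star$, lies in $\Safeset\cap\Subopt$, in $\Riskset$, or in $\Safeset^c\cap\Subopt$, and for the suboptimal classes it exploits the fact that a solution ceases to be selected as soon as its \emph{suboptimality in mean} is identified, which can happen well before its variance is resolved; this is why the $g$ (hence $h$, hence $H$) functions contain minima of mean-gap and variance-gap sample complexities (the $\Psi$- and $\Phi$-type terms in \eqref{equ:Hinstance}). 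Making this precise requires the implications from ``$A_\phase$ was selected and $\mathcal{U}_\phase(r)$ occurred'' to under-sampling events (Lemmas~\ref{lem:upbd_decomposition1} and~\ref{lem:upbd_decomposition2}), the reconciliation of the different confidence scales $\omega_\mu,\omega_\rmv,\omega_\rmv^\prime$ (Lemma~\ref{lem:FFtoF}), and the conversion of solution-level under-sampling into a countable per-item budget via the $\mathcal{G}$-events (Lemma~\ref{lem:FtoG}). A purely variance-based shrinkage argument, even made rigorous, yields a strictly larger cap and hence a bound with a different, larger constant than the $H(r^\prime,\instance)$ defined in \eqref{equ:Hinstance}, so it does not prove the stated lemma. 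To complete your proof you must carry out this per-level counting; that is where essentially all of the work lies.
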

    To  upper bound  $\rmR_2(\Tp)$, 
    we assume the algorithm samples  solutions with large $U_A^\rmv(p)$ in $\bar{\mathcal{S}}_p$, which will then be split into several sub-solutions (see Figure~\ref{Regret_safeness}).
    Furthermore,
    for $r^\prime=Q-1,Q-2,\ldots, 1$, we derive an upper  bound for the number of phases in which event $\mathcal{U}_{\phase}(r^\prime)\cap(\mathcal{U}_{\phase}(r^\prime+1))^c$ occurs (at most $2r^\prime+1$ sub-solutions are being pulled in these phases). To be more specific (see Figure~\ref{Regret_safeness_H}), for $r^\prime=Q-1$, we compute the maximum number of phases $\Tp_{Q-1}$ in which at most $2Q-1$ sub-solutions are  sampled. Then for $r^\prime=Q-2$, we compute the maximum number of phases $\Tp_{Q-2}-\Tp_{Q-1}$ in which at most $2Q-3$ sub-solutions are  sampled. We  do this  until the time budget runs out.
    As $\Tp$ increases, $r^\prime$ decreases and $H(r^\prime,\instance)$ increases.
    When $r^\prime=1$, i.e. $\Tp\geq \Tp_1$, $H(1,\instance)$ is an upper bound for the total number of sub-solutions being pulled (up to a constant) for the safeness-checking or the price of satisfying the probably anytime-safe constraint. The upper bound for the regret due to safeness-checking is the  instance-dependent constant $2\mu^\star H(1,\instance) $ when $\Tp\geq \Tp_1$. More  discussions are postponed to Step~3 in the proof in App.~\ref{sec:safe_reg}. 


    \section{Experiments}
    \begin{figure*}[t]
		\centering
		\subfigure[Experiment 1]{
                    \includegraphics[width=0.33\textwidth]{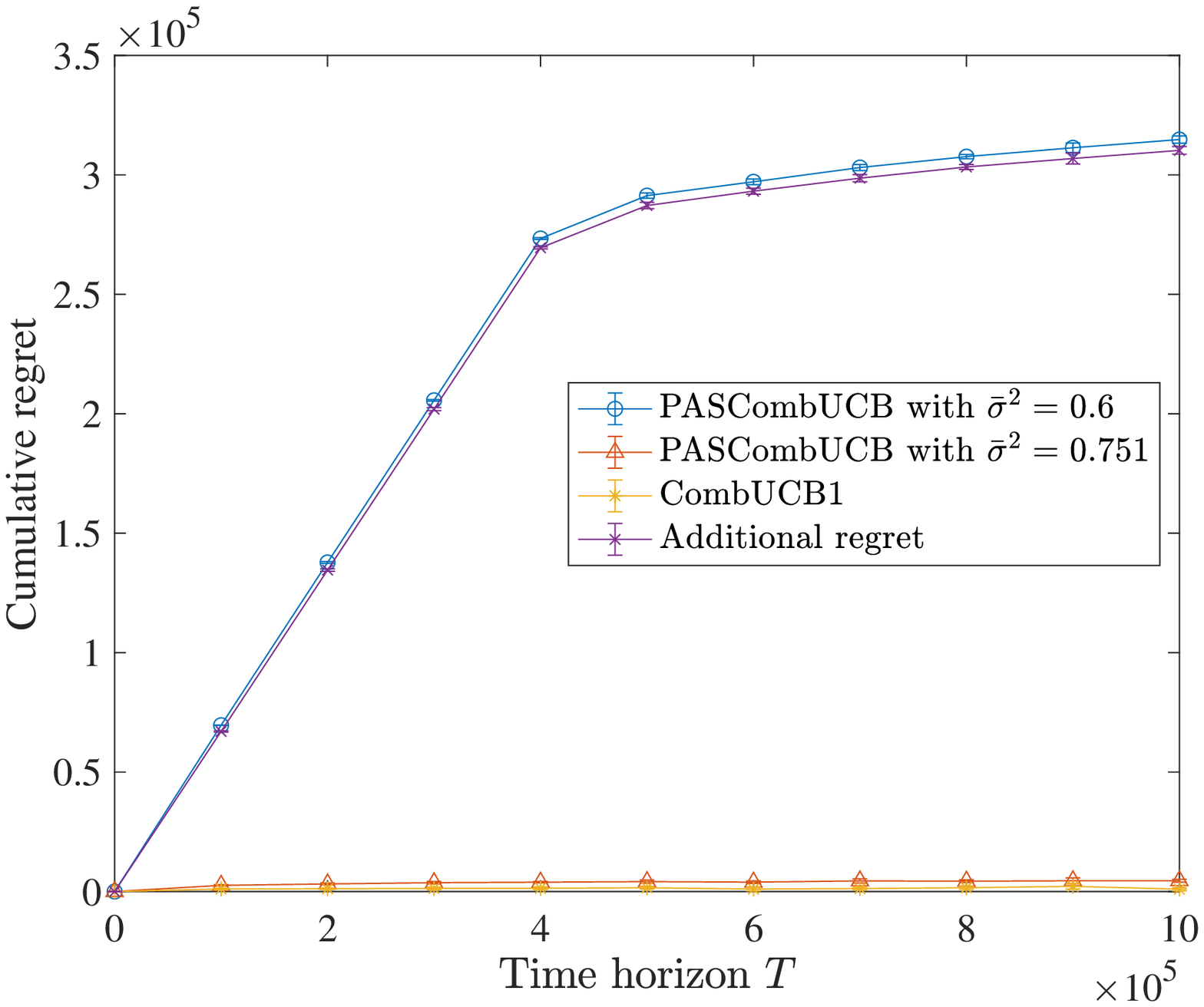}
                    \label{exp:experiment1cumreg}
            } 
            \hspace{-.1in}
		\subfigure[Experiment 2]{
            	\includegraphics[width=0.33\textwidth]{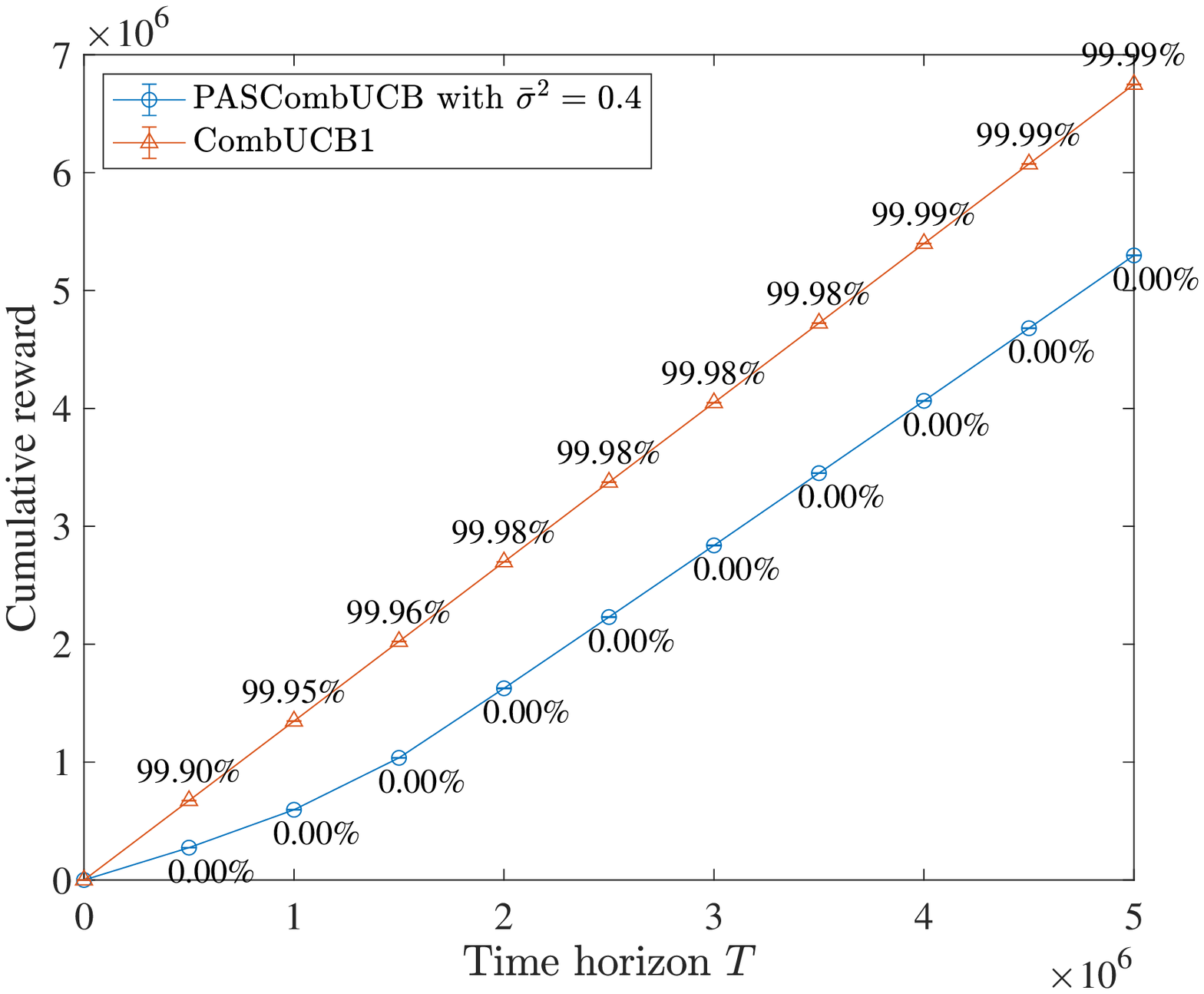}
             \label{exp:experiment2cumrew}
            }  
            \hspace{-.1in}
		\subfigure[Experiment 3]{
            	\includegraphics[width=0.33\textwidth]{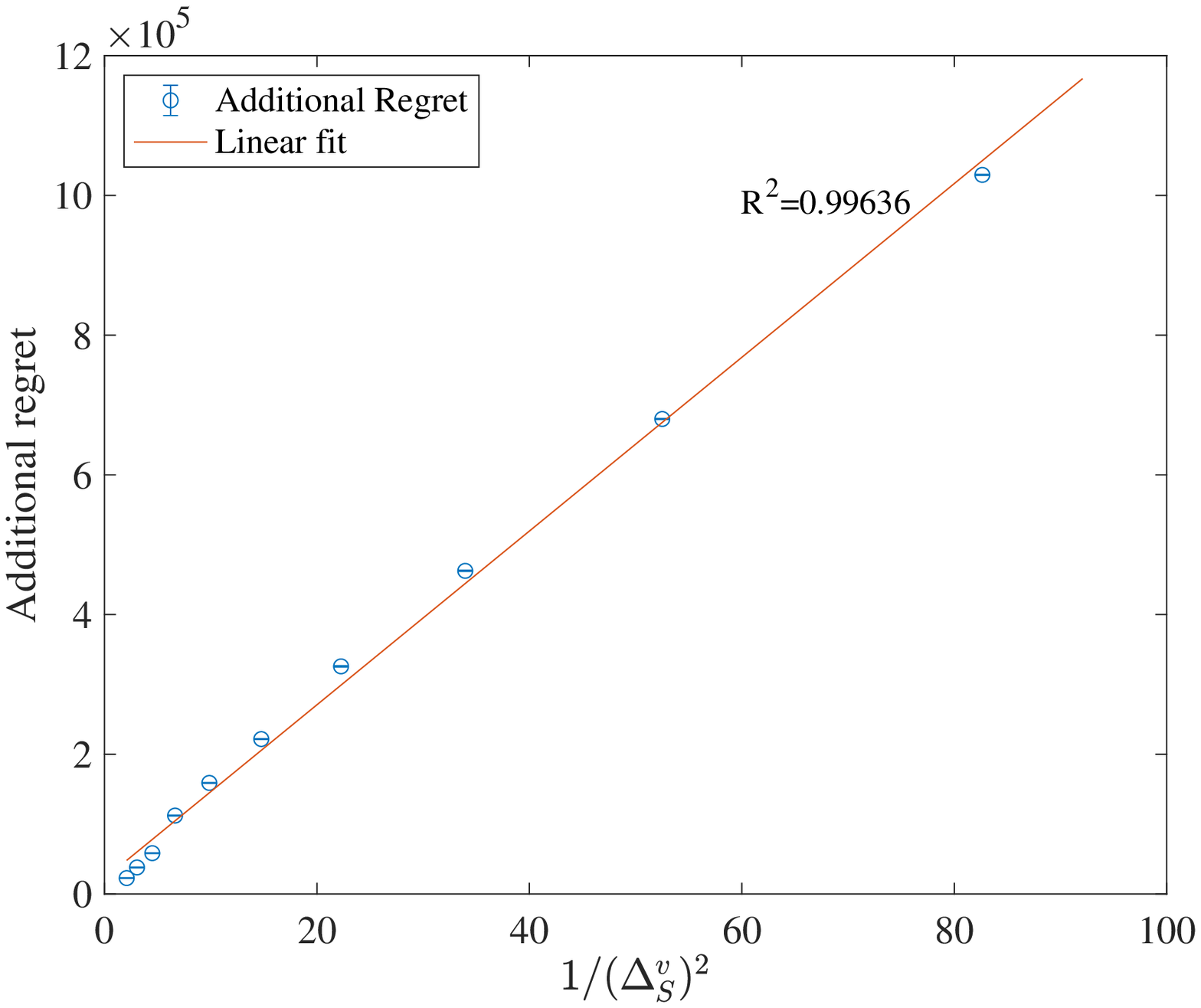}
             \label{exp:experiment3AddReg}
             }
            \vspace{-.1in}
		\caption{Results of the Experiments: (a) Experiment 1: Cumulative regret v.s.\ Time horizon; (b) Experiment 2: Cumulative reward v.s.\ Time horizon with the percentages of violations 
         besides the data points; (c) Experiment 3: Additional regret v.s.\ $1/(\Delta_S^{\rmv})^2$.}
		\label{example}
    \vspace{-.1in}
	\end{figure*} 
    \vspace{-.1in}
    In this section, we ran $3$ sets of experiments to illustrate the empirical performance of \AlgSCombUCB and to corroborate its theoretical guarantees. As {\sc CombUCB1} \citep{Tight2015Kveton,Khezeli2020Safe,Amani2019Linear} has tight regret guarantees, we adopt it as the benchmark in the unconstrained case. Codes are accessible at \url{https://github.com/Y-Hou/PASSCSB.git}.

    \underline{{\bf Experimental Design:}} We design two instances where the rewards are  $\mathrm{Beta}$ distributed with   means and variances as in Table~\ref{table:items}. 
    There are $L=10$ base arms and the admissible solution set $\mathcal{A}_K$ contains all subsets of $[L]$ with cardinality no greater than $K=3$ (so $|\mathcal{A}_K|=175$). Since the arm distributions are supported on $[0,1]$, the sub-Gaussian parameter $\sigma^2=0.25$. The confidence parameter $\delta=0.05$. 

    \begin{table}[ht]
        \begin{tabular}{|c|c|c|c|c|c|}
        \hline
        \textbf{Item index} & \textbf{1} & \textbf{2} & \textbf{3} & \textbf{4} & \textbf{5 to 10} \\ \hline
        Means                           & $0.5$        & $0.45$       & $0.4$        & $0.35$       & $0.3 $             \\ \hline
        Variances  (\textbf{Set 1})                      & $0.24 $      & $0.24$       & $0.04$       & $0.01 $      & $0.01$             \\ \hline
         Variances     (\textbf{Set 2})                  & $0.01 $      &$ 0.01$       & $0.01 $      & $0.01$       & $0.01$             \\ \hline
        \end{tabular}
        \caption{Two sets of items with equal means for each item.}
        \label{table:items} \vspace{-.25in}
    \end{table}

    \textbf{In Experiment 1}, we quantify the additional regret due to the safeness checking and evaluate the performance of \AlgSCombUCB with \textbf{Set 1} under the unconstrained case. We run (1) \AlgSCombUCB with $\bsigma=0.6$ which needs to check the safeness of the solutions; (2) \AlgSCombUCB with $\bsigma=0.751$ which can be regarded as a variant of \AlgSCombUCB without the safeness constraint, since our algorithm is aware of the safeness of all solutions; (3) {\sc CombUCB1} which is a baseline algorithm.
    \newline
    \textbf{In Experiment 2}, we illustrate the effectiveness of \AlgSCombUCB in satisfying the safety constraint. Furthermore, we show that if an algorithm ignores the safety constraint, it will violate the safety constraint $\Omega(T)$ times if there exists a risky solution. We run \AlgSCombUCB and {\sc CombUCB1} with \textbf{Set 1} under the constrained case where $\bsigma=0.4$, the optimal safe solution is $\{1,3,4\}$, and the optimal solution under the unconstrained case $\{1,2,3\}$ is unsafe (risky). 
    \newline
    \textbf{In Experiment 3}, we empirically verify the dependence of the additional regret on the hardness parameter $H(\Delta(\Lambda))$ in \eqref{equ:Hinstance} using \textbf{Set 2}. We fix  the time horizon $T = 2\times 10^6$ and vary the threshold on the variance from $0.14$  to $0.72$ (i.e., $\bar{\sigma}^{2}= 0.14\times 1.2^k$ for $k=0,1,\dots,9$). As any solution that is comprised of $3$ items has variance $0.03$, we have $\Delta_S^{v}=\bar{\sigma}^2-0.03$. We compare the \emph{additional regret} with respect to $1/(\Delta_S^v)^2$, which is proportional to $H(\Delta(\Lambda))$ under this setup according to \eqref{equ:Hinstance}.

    \underline{{\bf   Experimental Results:}} For \textbf{ Experiment 1}, we present the results in Figure~\ref{exp:experiment1cumreg}.
    We first observe when $\bar{\sigma}^2=0.751$, the regret incurred by \AlgSCombUCB is similar to that by \CombUCB  for all $T$ considered, which suggests that \AlgSCombUCB is comparable to \CombUCB under the unconstrained case, and hence in the following experiments we refer the difference between the regret of \AlgSCombUCB and the regret of \CombUCB as the ``additional regret''. Secondly, when $\bar{\sigma}^2=0.6$, the regret of \AlgSCombUCB increases rapidly at the beginning and plateaus when $T>4\times 10^5$. This corroborates the design of \AlgSCombUCB: (i) at the beginning, \AlgSCombUCB pulls solutions conservatively to meet the anytime-safe constraint w.h.p.; (ii) after a number of time steps ($T>4\times 10^5$), the safeness of the optimal (safe) solution can be ascertained, it then exploits the optimal solution aggressively and eventually matches the performance of {\sc CombUCB1}.
    \newline
    For \textbf{Experiment 2}, we plot
    the percentage of times each algorithm violates the safeness constraint $\sigma_{A_t}^2<\bar{\sigma}^2$ as well as the cumulative rewards in Figure~\ref{exp:experiment2cumrew}.
    The reward of \AlgSCombUCB increases slowly at the start and then more rapidly when $T>1.5\times 10^6$, when the safeness of the optimal safe solution has been ascertained. However, while the reward of \CombUCB increases linearly (as it pulls the risky  solution $\{1,2,3\}$ $\Omega(T)$ times), it  violates the safeness constraint $\sigma_{S_t}^2<\bar{\sigma}^2$ at almost all times. This implies that the safety constraint is almost always violated by \CombUCB ($\Omega(T)$ times) whereas \AlgSCombUCB can meet the probably anytime-safe requirement. 
    \newline
    For \textbf{Experiment 3}, the results are in Figure~\ref{exp:experiment3AddReg}.
    As suggested by Theorem~\ref{thm:upbd}, the regret due to safeness checking is proportional to $H(\Delta(\Lambda))$. Figure \ref{exp:experiment3AddReg} indicates that empirically, the additional regret  scales linearly in $1/(\Delta_S^{\rmv})^2$, which corroborates our theoretical results.

    Additional discussions on the tightness results, the problem formulation and comparisons with other literature, as well as future research directions, are presented in App.~\ref{sec:additional}.


\vspace{-0.1in}
\subsection*{Acknowledgements}
\vspace{-0.05in}
The authors are  supported by Singapore Ministry of Education (MOE) grants  (Grant Numbers: A-0009042-01-00, A-8000189-01-00, A-8000980-00-00, A-8000423-00-00) and funding from CIFAR through Amii and NSERC.
\bibliographystyle{icml2023}
\bibliography{reference_SafeComb}

\begin{thebibliography}{24}
\providecommand{\natexlab}[1]{#1}
\providecommand{\url}[1]{\texttt{#1}}
\expandafter\ifx\csname urlstyle\endcsname\relax
  \providecommand{\doi}[1]{doi: #1}\else
  \providecommand{\doi}{doi: \begingroup \urlstyle{rm}\Url}\fi

\bibitem[Amani et~al.(2019)Amani, Alizadeh, and Thrampoulidis]{Amani2019Linear}
Amani, S., Alizadeh, M., and Thrampoulidis, C.
\newblock Linear stochastic bandits under safety constraints.
\newblock In \emph{Proceedings of the 33rd International Conference on Neural
  Information Processing Systems}, volume~32, pp.\  9256--9266, 2019.

\bibitem[Ayyagari \& Dukkipati(2021)Ayyagari and Dukkipati]{Ayyagari2021Risk}
Ayyagari, S. and Dukkipati, A.
\newblock Risk-aware algorithms for combinatorial semi-bandits, 2021.

\bibitem[Badanidiyuru et~al.(2018)Badanidiyuru, Kleinberg, and
  Slivkins]{Badanidiyuru2018}
Badanidiyuru, A., Kleinberg, R., and Slivkins, A.
\newblock Bandits with knapsacks.
\newblock \emph{Journal of the ACM}, 65\penalty0 (3), 2018.

\bibitem[Bhat \& Prashanth(2019)Bhat and Prashanth]{Bhat2019}
Bhat, S.~P. and Prashanth, L.~A.
\newblock Concentration of risk measures: a wasserstein distance approach.
\newblock In \emph{Proceedings of the 33rd International Conference on Neural
  Information Processing Systems}, volume~32, pp.\  11762--11771. Curran
  Associates, Inc., 2019.

\bibitem[Cassel et~al.(2018)Cassel, Mannor, and Zeevi]{Cassel2018}
Cassel, A., Mannor, S., and Zeevi, A.
\newblock A general approach to multi-armed bandits under risk criteria.
\newblock In \emph{Proceedings of the 31st Conference On Learning Theory},
  volume~75 of \emph{Proceedings of Machine Learning Research}, pp.\
  1295--1306. PMLR, 2018.

\bibitem[Chang \& Tan(2022)Chang and Tan]{chang2021unifying}
Chang, J. Q.~L. and Tan, V. Y.~F.
\newblock A unifying theory of {Thompson} sampling for continuous risk-averse
  bandits.
\newblock In \emph{Proceedings of the 36th AAAI Conference on Artificial
  Intelligence (AAAI)}, 2022.

\bibitem[Chen et~al.(2013)Chen, Wang, and Yuan]{Chen2013Combinatorial}
Chen, W., Wang, Y., and Yuan, Y.
\newblock Combinatorial multi-armed bandit: General framework and applications.
\newblock In \emph{Proceedings of the 30th International Conference on Machine
  Learning}, volume~28, pp.\  151--159. PMLR, 2013.

\bibitem[David et~al.(2018)David, Sz{\"o}r{\'e}nyi, Ghavamzadeh, Mannor, and
  Shimkin]{David2018}
David, Y., Sz{\"o}r{\'e}nyi, B., Ghavamzadeh, M., Mannor, S., and Shimkin, N.
\newblock {PAC} bandits with risk constraints.
\newblock In \emph{ISAIM}, 2018.

\bibitem[Garcelon et~al.(2020)Garcelon, Ghavamzadeh, Lazaric, and
  Pirotta]{Garcelon2020Improved}
Garcelon, E., Ghavamzadeh, M., Lazaric, A., and Pirotta, M.
\newblock Improved algorithms for conservative exploration in bandits.
\newblock In \emph{Proceedings of the AAAI Conference on Artificial
  Intelligence}, volume~34, pp.\  3962--3969, 2020.

\bibitem[Hou et~al.(2023)Hou, Tan, and Zhong]{hou2023}
Hou, Y., Tan, V. Y.~F., and Zhong, Z.
\newblock Almost optimal variance-constrained best arm identification.
\newblock \emph{IEEE Transactions on Information Theory}, 2023.

\bibitem[Jamieson et~al.(2014)Jamieson, Malloy, Nowak, and
  Bubeck]{jamieson14lil}
Jamieson, K., Malloy, M., Nowak, R., and Bubeck, S.
\newblock lil'{UCB}: An optimal exploration algorithm for multi-armed bandits.
\newblock In \emph{Proceedings of the 27th Conference on Learning Theory},
  volume~35 of \emph{Proceedings of Machine Learning Research}, pp.\  423--439,
  Barcelona, Spain, 2014. PMLR.

\bibitem[Kagrecha et~al.(2023)Kagrecha, Nair, and
  Jagannathan]{Kagrecha2020Constrained}
Kagrecha, A., Nair, J., and Jagannathan, K.
\newblock Constrained regret minimization for multi-criterion multi-armed
  bandits.
\newblock \emph{Machine Learning}, 2023.

\bibitem[Kaufmann et~al.(2016)Kaufmann, Capp\'{e}, and Garivier]{Kaufmann2016}
Kaufmann, E., Capp\'{e}, O., and Garivier, A.
\newblock On the complexity of best-arm identification in multi-armed bandit
  models.
\newblock \emph{Journal of Machine Learning Research}, 17\penalty0
  (1):\penalty0 1--42, 2016.

\bibitem[Kazerouni et~al.(2017)Kazerouni, Ghavamzadeh, Abbasi-Yadkori, and
  Van~Roy]{Kazerouni2017ConsContLB}
Kazerouni, A., Ghavamzadeh, M., Abbasi-Yadkori, Y., and Van~Roy, B.
\newblock Conservative contextual linear bandits.
\newblock In \emph{Proceedings of the 31st International Conference on Neural
  Information Processing Systems}, pp.\  3913–3922. Curran Associates Inc.,
  2017.

\bibitem[Khezeli \& Bitar(2020)Khezeli and Bitar]{Khezeli2020Safe}
Khezeli, K. and Bitar, E.
\newblock Safe linear stochastic bandits.
\newblock \emph{Proceedings of the AAAI Conference on Artificial Intelligence},
  34\penalty0 (06):\penalty0 10202--10209, 2020.

\bibitem[Kveton et~al.(2014)Kveton, Wen, Ashkan, Eydgahi, and
  Eriksson]{Matroid2014Kveton}
Kveton, B., Wen, Z., Ashkan, A., Eydgahi, H., and Eriksson, B.
\newblock Matroid bandits: Fast combinatorial optimization with learning.
\newblock In \emph{Proceedings of the 30th Conference on Uncertainty in
  Artificial Intelligence}, pp.\  420–429, 2014.

\bibitem[Kveton et~al.(2015)Kveton, Wen, Ashkan, and
  Szepesvari]{Tight2015Kveton}
Kveton, B., Wen, Z., Ashkan, A., and Szepesvari, C.
\newblock {Tight Regret Bounds for Stochastic Combinatorial Semi-Bandits}.
\newblock In \emph{Proceedings of the 18th International Conference on
  Artificial Intelligence and Statistics}, volume~38, pp.\  535--543. PMLR,
  2015.

\bibitem[Mahdavi et~al.(2012)Mahdavi, Jin, and Yang]{Mahdavi2012efficiency}
Mahdavi, M., Jin, R., and Yang, T.
\newblock Trading regret for efficiency: Online convex optimization with long
  term constraints.
\newblock \emph{Journal of Machine Learning Research}, 13\penalty0
  (1):\penalty0 2503–2528, 2012.

\bibitem[Moradipari et~al.(2020)Moradipari, Thrampoulidis, and
  Alizadeh]{Moradipari2020Stagewise}
Moradipari, A., Thrampoulidis, C., and Alizadeh, M.
\newblock Stage-wise conservative linear bandits.
\newblock In \emph{Proceedings of the 34th International Conference on Neural
  Information Processing Systems}, volume~33, pp.\  11191--11201. Curran
  Associates, Inc., 2020.

\bibitem[Sani et~al.(2012)Sani, Lazaric, and Munos]{Sani2013}
Sani, A., Lazaric, A., and Munos, R.
\newblock Risk-aversion in multi-armed bandits.
\newblock In \emph{Proceedings of the 25th International Conference on Neural
  Information Processing Systems}, pp.\  3275–3283. Curran Associates Inc.,
  2012.

\bibitem[Vakili \& Zhao(2016)Vakili and Zhao]{Vakili2016}
Vakili, S. and Zhao, Q.
\newblock Risk-averse multi-armed bandit problems under mean-variance measure.
\newblock \emph{IEEE Journal of Selected Topics in Signal Processing},
  10\penalty0 (6):\penalty0 1093--1111, 2016.

\bibitem[Wu et~al.(2016)Wu, Shariff, Lattimore, and
  Szepesvari]{Wu2016Conservative}
Wu, Y., Shariff, R., Lattimore, T., and Szepesvari, C.
\newblock Conservative bandits.
\newblock In \emph{Proceedings of the 33rd International Conference on Machine
  Learning}, volume~48, pp.\  1254--1262. PMLR, 2016.

\bibitem[Zhong et~al.(2021)Zhong, Cheung, and Tan]{zhong2021thompson}
Zhong, Z., Cheung, W.~C., and Tan, V. Y.~F.
\newblock Thompson sampling algorithms for cascading bandits.
\newblock \emph{Journal of Machine Learning Research}, 22\penalty0
  (218):\penalty0 1--66, 2021.

\bibitem[Zhu \& Tan(2020)Zhu and Tan]{Zhu2020}
Zhu, Q. and Tan, V. Y.~F.
\newblock Thompson sampling algorithms for mean-variance bandits.
\newblock In \emph{Proceedings of the 37th International Conference on Machine
  Learning}, pp.\  11599--11608. PMLR, 2020.

\end{thebibliography}

	\newpage
	\onecolumn
 \begin{center}
    {\Large{\bf Appendices}} 
 \end{center}
 \noindent
 The contents of the appendices are organized as follows:
 \begin{itemize}
     \item In App.~\ref{sec:auxiliary}, we list $3$ useful lemmas concerning the LIL concentration bound.
     \item In App.~\ref{sec:proof_upper_bd}, we present detailed proofs of the upper bounds.
     \begin{itemize}
         \item App.~\ref{sec:proof_UpBd_pre}: preliminary results for the proof of the upper bound;
         \item App.~\ref{sec:proof_UpBd_decomp}: the proof of the decomposition lemma Lemma~\ref{lem:upbd_decomposition};
        \item  App.~\ref{sec:proof_UpBd_sub}: the proof of Lemma~\ref{lem:subreg} (the regret due to suboptimality);
        \item App.~\ref{sec:safe_reg}: the proof of Lemma~\ref{lem:safereg} (the regret due to safeness-checking);
        \item App.~\ref{sec:proofUppBd}: the proofs of Theorem~\ref{thm:upbd} (problem-dependent upper bound) and Theorem~\ref{thm:problemindependent_upbd} (problem-independent upper bound).
     \end{itemize}
     \item In App.~\ref{sec:lower_bd}, we present detailed proofs of the lower bounds.
     \begin{itemize}
         \item App.~\ref{sec:proof_LwBd_imp}: preliminary results for the proof of the lower bound and the proof of the impossibility result Theorem~\ref{thm:impos}.
         \item App.~\ref{sec:proof_LwBd_dep}: the proof of Theorem~\ref{thm:low_bd_prob_dep} (problem-dependent lower bound);
         \item App.~\ref{sec:proof_LwBd_indep}: the proof of Theorem~\ref{thm:low_bd_prob_indep} (problem-independent lower bound);
     \end{itemize}
     \item In App.~\ref{sec:tightness}, we present a corollary characterizing the tightness of the upper bound in Theorem~\ref{thm:upbd}.
     \item In App.~\ref{sec:additional}, we provide additional discussions on the tightness results, the problem formulation and comparisons with other literature, as well as future research directions.
 \end{itemize}

	\appendix
	\section{Auxiliary results}\label{sec:auxiliary}
	\begin{lem}[Lemma 3 in \cite{jamieson14lil}]\label{lilbound}
		Let $\{X_i\}_{i=1}^\infty$ be a sequence of i.i.d.\ centered sub-Gaussian random variables with scale parameter $\sigma$. Fix any $\epsilon\in(0,1)$ and $\delta\in(0,\ln(1+\epsilon)/e)$. Then one has 
		\begin{align}
			&\mathbb{P}\bigg[\forall\, t\!\in\!\mathbb{N}:\!\sum_{s=1}^t X_s\!\leq\! (1\!+\!\sqrt{\epsilon})\sqrt{2\sigma^2\left(1\!+\!\epsilon\right)t\ln\left(\frac{\ln\left((1\!+\!\epsilon)t\right)}{\delta}\right)}
			\bigg]
            \geq
			1-\xi(\delta),
		\end{align}
		where $\xi(\delta):=\frac{2+\epsilon}{\epsilon}\big(\frac{\delta}{\ln (1+\epsilon)}\big)^{1+\epsilon}$.
	\end{lem}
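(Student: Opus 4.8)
The plan is to establish this uniform‑in‑time deviation bound by pairing an exponential‑supermartingale maximal inequality with a geometric peeling of the time axis. The building block I would record first is that, since the $X_s$ are centered and $\sigma$‑sub‑Gaussian, the sub‑Gaussian moment bound $\mathbb{E}[e^{\lambda X_s}]\le e^{\lambda^2\sigma^2/2}$ makes
\[
M_t^{\lambda}:=\exp\Big(\lambda\sum_{s=1}^t X_s-\tfrac12\lambda^2\sigma^2 t\Big)
\]
a nonnegative supermartingale with $M_0^{\lambda}=1$, for every fixed $\lambda>0$. Ville's maximal inequality for nonnegative supermartingales then gives, for any $\eta\in(0,1)$,
\[
\mathbb{P}\Big[\exists\, t\in\mathbb{N}:\ \sum_{s=1}^t X_s\ge \tfrac12\lambda\sigma^2 t+\tfrac1\lambda\ln(1/\eta)\Big]\le\eta .
\]
A single $\lambda$ produces only a uniform \emph{linear} boundary, which is too crude to reproduce the $\sqrt{t\ln\ln t}$ growth, since the $\lambda$ that optimizes the boundary depends on the scale of $t$.

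To recover the iterated‑logarithm envelope I would peel the time axis into geometric blocks $D_k:=\{t:(1+\epsilon)^k\le t<(1+\epsilon)^{k+1}\}$ for $k=0,1,2,\dots$, and on each $D_k$ apply the maximal inequality with a block‑tuned pair $\lambda_k:=\sqrt{2\ln(1/\eta_k)/(\sigma^2(1+\epsilon)^k)}$ and $\eta_k:=\big(\delta/((k+1)\ln(1+\epsilon))\big)^{1+\epsilon}$, so that $\ln(1/\eta_k)=(1+\epsilon)\ln\!\big((k+1)\ln(1+\epsilon)/\delta\big)$. The hypothesis $\delta<\ln(1+\epsilon)/e$ guarantees $\eta_k\in(0,1)$ (indeed the argument of each logarithm exceeds $e$) for all $k$. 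For $t\in D_k$ the linear boundary $\tfrac12\lambda_k\sigma^2 t+\lambda_k^{-1}\ln(1/\eta_k)$, after using $t\le(1+\epsilon)^{k+1}$ in the drift term, is at most $\tfrac{2+\epsilon}{2}\sqrt{2\sigma^2(1+\epsilon)^{k+1}\ln((k+1)\ln(1+\epsilon)/\delta)}$.

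The next step is to check that the target envelope dominates this on each block. Since $t\ge(1+\epsilon)^k$ and $\ln((1+\epsilon)t)\ge(k+1)\ln(1+\epsilon)$ throughout $D_k$, the target $(1+\sqrt\epsilon)\sqrt{2\sigma^2(1+\epsilon)t\ln(\ln((1+\epsilon)t)/\delta)}$ is at least $(1+\sqrt\epsilon)\sqrt{2\sigma^2(1+\epsilon)^{k+1}\ln((k+1)\ln(1+\epsilon)/\delta)}$, and the elementary inequality $\tfrac{2+\epsilon}{2}=1+\tfrac\epsilon2\le 1+\sqrt\epsilon$ (valid for $\epsilon\in(0,1)$) shows the per‑block linear boundary lies below the target. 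Hence the target is violated somewhere in $D_k$ with probability at most $\eta_k$. A union bound over the blocks then controls the total failure probability by $\sum_{k\ge0}\eta_k=(\delta/\ln(1+\epsilon))^{1+\epsilon}\sum_{k\ge0}(k+1)^{-(1+\epsilon)}$, and a crude integral estimate of this convergent $p$‑series bounds it by $\tfrac{2+\epsilon}{\epsilon}(\delta/\ln(1+\epsilon))^{1+\epsilon}=\xi(\delta)$. As every $t\in\mathbb{N}$ belongs to some $D_k$, the claimed uniform bound follows.

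I expect the main obstacle to be the bookkeeping in the peeling step: one must choose $(\lambda_k,\eta_k)$ so that \emph{simultaneously} the per‑block linear boundary sits under the concave $\sqrt{t\ln\ln t}$ target for \emph{all} $t\in D_k$ (not merely at the endpoints), and the sequence $\eta_k$ sums to exactly the prescribed $\xi(\delta)$. Reproducing the precise prefactor $(1+\sqrt\epsilon)$ and the form of $\xi(\delta)$ amounts to tracking how the two slack factors of $(1+\epsilon)$—one from replacing $t$ by a block endpoint in the drift term, the other from the exponent in $\eta_k$—propagate through the square root; everything else is a routine combination of the sub‑Gaussian moment generating function bound and Ville's inequality.
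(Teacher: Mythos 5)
Your proof is correct, and it is essentially the same argument as the one behind this lemma in the cited source \cite{jamieson14lil} (the paper itself imports the lemma without reproving it): geometric peeling of the time axis into blocks $[(1+\epsilon)^k,(1+\epsilon)^{k+1})$, a maximal inequality for the exponential sub-Gaussian supermartingale on each block, and a union bound whose tail sum $\sum_{k\ge 0}(k+1)^{-(1+\epsilon)}\le (1+\epsilon)/\epsilon\le(2+\epsilon)/\epsilon$ yields $\xi(\delta)$. Your constant bookkeeping checks out: the block-tuned boundary equals $\tfrac{2+\epsilon}{2}\sqrt{2\sigma^2(1+\epsilon)^{k+1}\ln((k+1)\ln(1+\epsilon)/\delta)}$, which is dominated by the target envelope on the block since $1+\tfrac{\epsilon}{2}\le 1+\sqrt{\epsilon}$, matching (up to the choice of finite-horizon Doob versus infinite-horizon Ville maximal inequality) the original proof.
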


    \begin{lem}\label{lem:comlilbound}
        For $t\geq 1,\epsilon\in(0,1),\omega\in(0,1]$ and $u>0$, let 
        $\gamma:=\frac{(1+\epsilon)(1+\sqrt{\epsilon})^2}{2}$, 
        $c:=\frac{(u\cdot s)^2}{\gamma}=\frac{2(u\cdot s)^2}{(1+\epsilon)(1+\sqrt{\epsilon})^2}$ and
        $m:=\frac{\gamma}{u^2\cdot s^2} \left(2\ln\frac{1}{\omega}+\ln\ln_+\frac{1}{s^2}+\ln \frac{2\gamma(1+\epsilon) }{u^2}\right)$. 
        If $t> m $, it holds that
        $$s> \mathrm{lil}(t,\omega)=(1+\sqrt{\epsilon})\sqrt{\frac{1+\epsilon}{2t}\ln\left(\frac{\ln\left((1\!+\!\epsilon)t\right)}{\omega}\right)}.$$
    \end{lem}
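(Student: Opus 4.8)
The plan is to reduce the claimed bound to a single elementary transcendental inequality in $t$ and then dispatch it by a monotonicity argument together with an endpoint check. First I would square both sides of the target $s > \mathrm{lil}(t,\omega)$. Since $t\geq 1$ and $\epsilon\in(0,1)$ force $(1+\epsilon)t>1$ (so $\ln((1+\epsilon)t)>0$), and using the identity $\gamma=\frac{(1+\epsilon)(1+\sqrt\epsilon)^2}{2}$ to cancel the prefactor, the inequality $s>\mathrm{lil}(t,\omega)$ is equivalent, in the regime where the radicand is positive, to
\begin{align}
  \frac{s^2 t}{\gamma} \;>\; \ln\frac{1}{\omega} + \ln\ln\big((1+\epsilon)t\big).
\end{align}
Writing $g(t):=\frac{s^2 t}{\gamma}-\ln\frac1\omega-\ln\ln\big((1+\epsilon)t\big)$, the goal becomes to show $g(t)>0$ for every $t>m$.

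Second, I would show that $g$ is strictly increasing on $[m,\infty)$, so that it suffices to verify $g(m)\ge 0$. Differentiating gives $g'(t)=\frac{s^2}{\gamma}-\frac{1}{t\ln((1+\epsilon)t)}$, so monotonicity on $[m,\infty)$ amounts to $t\ln((1+\epsilon)t)\ge \gamma/s^2$, and since the left-hand side is increasing it is enough to check this at $t=m$. Substituting the definition $m\asymp \frac{\gamma}{u^2 s^2}$ times a bracket that exceeds $1$, together with $\ln((1+\epsilon)m)$ being of order $\ln\frac1{s^2}$, makes this inequality routine in the relevant regime. Granting this, $g(t)>g(m)$ for all $t>m$, and the entire problem collapses to the single endpoint inequality.

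Third — and this is the crux — I would verify $g(m)\ge 0$, i.e.\ $\frac{s^2 m}{\gamma}\ge \ln\frac1\omega+\ln\ln((1+\epsilon)m)$. By construction the left-hand side equals $\frac{1}{u^2}\big(2\ln\frac1\omega+\ln\ln_+\frac1{s^2}+\ln\frac{2\gamma(1+\epsilon)}{u^2}\big)$. The content is to control the slowly growing term $\ln\ln((1+\epsilon)m)$ by a self-consistent substitution: plugging $m\asymp\frac{\gamma}{u^2 s^2}$ into $\ln((1+\epsilon)m)$ yields $\ln((1+\epsilon)m)=\ln\frac1{s^2}+\ln\frac{(1+\epsilon)\gamma}{u^2}+\ln(\text{bracket})$, so that $\ln\ln((1+\epsilon)m)$ is dominated by $\ln\ln_+\frac1{s^2}$ plus the constant $\ln\frac{2\gamma(1+\epsilon)}{u^2}$. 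Matching these two contributions against the corresponding budgeted terms on the left, while the doubled coefficient $2\ln\frac1\omega$ absorbs $\ln\frac1\omega$ with slack to spare, closes the inequality. This ``plug the solution back in'' step is exactly why the threshold $m$ carries a \emph{double} logarithm $\ln\ln_+\frac1{s^2}$ rather than the single logarithm $\ln\frac1{s^2}$ that a crude tangent-line linearization of $\ln\ln$ would produce.

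The hard part will be precisely this endpoint verification: justifying the self-consistent estimate of $\ln\ln((1+\epsilon)m)$ rigorously and confirming that the constants hard-wired into $m$ — the factor $2$ on $\ln\frac1\omega$, the clipping $\ln_+$, and the $u$-dependent offset $\ln\frac{2\gamma(1+\epsilon)}{u^2}$ — leave a nonnegative residual uniformly over all admissible parameters. The delicate cases are the regime $s^2\ge 1$, where $\ln\frac1{s^2}\le 0$ and the clipping in $\ln_+$ becomes essential to keep the estimate valid, and the tracking of the role of $u$ (taken in $(0,1]$ in the applications, which is what makes the $u^{-2}\ge 1$ prefactor supply slack rather than erode it). By contrast, the squaring reduction and the monotonicity step are straightforward; the real work lies in the careful constant-level bookkeeping of the endpoint inequality.
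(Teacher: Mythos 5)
Your first step (squaring) is fine and matches what the paper's manipulation amounts to, but your second step --- that $g(t)=\frac{s^2t}{\gamma}-\ln\frac{1}{\omega}-\ln\ln((1+\epsilon)t)$ is increasing on $[m,\infty)$ --- is false in general, and the whole plan hinges on it. Since $t\mapsto t\ln((1+\epsilon)t)$ is increasing for $t\ge 1$, $g'$ is increasing, so $g$ is unimodal: it decreases up to the point $t^*$ solving $t^*\ln((1+\epsilon)t^*)=\gamma/s^2$ and increases only afterwards. Your argument therefore needs $m\ge t^*$, i.e.\ $m\ln((1+\epsilon)m)\ge \gamma/s^2$, which you describe as ``routine in the relevant regime''; it is not. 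Take $\omega$ equal (or close) to $1$, $s=1$, $u=1$ and $\epsilon$ small: then the bracket in $m$ collapses to $\ln\left(2\gamma(1+\epsilon)\right)=2\ln\left((1+\epsilon)(1+\sqrt{\epsilon})\right)\approx 0$, so $m\ll 1\le t^*$ and in fact $m\ln((1+\epsilon)m)<0<\gamma/s^2$. In this (perfectly admissible) regime the minimum of $g$ over the actual domain $\{t\ge 1,\ t>m\}$ sits at the interior point $t^*$, and neither monotonicity-from-$m$ nor the endpoint value $g(m)$ controls it; a correct direct proof would have to lower-bound $g(t^*)$ itself, or split into cases, which is a genuinely different and harder argument than the one you outline. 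You also never reconcile the hypothesis $t\ge 1$ with $t>m$ when $m<1$. A secondary gap is your reliance on $u\in(0,1]$: the lemma quantifies over all $u>0$, and in the paper's own invocation (inside Lemma~\ref{lem:FtoG}) one takes $u=\sqrt{1/(a_jK^2)}$ with $a_j=4/9^{j-2}\to 0$, so $u$ is unbounded above and the ``$u^{-2}\ge 1$ supplies slack'' bookkeeping does not apply. (Part of this traces to a typo in the statement --- the intended conclusion is $u\cdot s>\mathrm{lil}(t,\omega)$, as the paper's proof and applications make clear --- but even with that correction your monotonicity step fails exactly as above.)

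The paper avoids this trap entirely by arguing in the contrapositive: it assumes $u\cdot s\le \mathrm{lil}(t,\omega)$ at the given $t$, rewrites this as $c\le \frac{1}{t}\ln\left(\frac{\ln((1+\epsilon)t)}{\omega}\right)$, and then invokes the inversion inequality of \citet{jamieson14lil} (their equation~(1)), which bounds \emph{any} $t$ satisfying this relation by $\frac{1}{c}\ln\left(\frac{2\ln((1+\epsilon)/(c\omega))}{\omega}\right)$, with no monotonicity claim needed --- the function $t\mapsto\frac{1}{t}\ln\left(\frac{\ln((1+\epsilon)t)}{\omega}\right)$ is precisely not monotone, which is the source of your difficulty. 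The remaining work in the paper is only the algebra showing this bound is at most $m$ (via $\ln(\ln A+\ln B)\le \ln A+\ln\ln_+ B$), which is the analogue of your ``plug the solution back in'' step. If you want a self-contained direct proof, the missing idea is a bound on $g$ at its interior minimizer $t^*$ (or an inversion statement of Jamieson's type), not an endpoint check at $m$.
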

    \begin{proof}[Proof of Lemma~\ref{lem:comlilbound}]
        Note that fact that 
        \begin{align}
            u\cdot s\leq (1+\sqrt{\epsilon})\sqrt{\frac{1+\epsilon}{2t}\ln\left(\frac{\ln\left((1\!+\!\epsilon)t\right)}{\omega}\right)}
            \Longleftrightarrow
            c=\frac{2(u\cdot s)^2}{(1+\epsilon)(1+\sqrt{\epsilon})^2}
            \leq
            \frac{1}{t}\ln\left(\frac{\ln\left((1\!+\!\epsilon)t\right)}{\omega}\right)
        \end{align}
        According to the computations in \citet{jamieson14lil} equation (1), i.e.,
        \begin{align}
            \frac{1}{t} \ln \left(\frac{\ln ((1+\epsilon) t)}{\omega}\right) \geq c^\prime \Rightarrow t \leq \frac{1}{c^\prime} \ln \left(\frac{2 \ln ((1+\epsilon) /(c^\prime \omega))}{\omega}\right)
        \end{align}
        for $t\geq 1,\epsilon\in(0,1),c^\prime>0,\omega\in(0,1]$. 
        We take $c^\prime=c$, thus
        \begin{align}
            t&\leq\frac{1}{ c} \ln \left(\frac{2 \ln ((1+\epsilon) /( c \omega))}{\omega}\right)
            \\
            &=
                \frac{1}{c} \left(\ln\frac{2}{\omega}
                +\ln \left(\ln \frac{\gamma(1+\epsilon)}{u^2\cdot\omega}
                +\ln\frac{1}{s^2}\right)\right)
            \\
            &\overset{(a)}{\leq}
             \frac{1}{ c} \left(\ln\frac{2}{\omega}+\ln \frac{\gamma(1+\epsilon)}{u^2\cdot \omega}+\ln\ln_+\frac{1}{s^2}\right)
            \\
            &=
             \frac{\gamma}{u^2\cdot s^2} \left(2\ln\frac{1}{\omega}+\ln\ln_+\frac{1}{s^2}+\ln \frac{2\gamma(1+\epsilon) }{u^2}\right)=m
        \end{align}
        where we adopt $\ln(x+y)\leq x+\ln\ln_+y,\forall x,y\in\mathbb{R}_+$ in $(a)$.
        Therefore, if $t>m$, we must have $$u\cdot s> (1+\sqrt{\epsilon})\sqrt{\frac{1+\epsilon}{2t}\ln\left(\frac{\ln\left((1\!+\!\epsilon)t\right)}{\omega}\right)}.$$
    \end{proof}
 
	\begin{lem}\label{lem:lil_var_bd}
		With the choice of the confidence radii in \eqref{confidencebounds_mean} and \eqref{confidencebounds_var}, for all $i\in E$, we have 
		\begin{align}
			&\mathbb{P}\left[\forall\, \phase\!\in\!\mathbb{N}:|\hat{\mu}_i(\phase)-\mu_i|\leq\alpha(T_i(\phase))\right]\geq	1-2\xi(\omega_\mu)\label{lil:mean}\\
			&\mathbb{P}\left[\forall\, \phase\!\in\!\mathbb{N}:|\hat{\sigma}_i^2(\phase)-\sigma_i^2|\leq\beta_\rmu(T_i(\phase))\right]\geq	1-4\xi(\omega_\rmv)\label{lil:upvar}\\
			&\mathbb{P}\left[\forall\, \phase\!\in\!\mathbb{N}:|\hat{\sigma}_i^2(\phase)-\sigma_i^2|\leq\beta_\rml(T_i(\phase))\right]\geq	1-4\xi(\omega_\rmv^\prime)\label{lil:lowvar}\\
		\end{align}
	\end{lem}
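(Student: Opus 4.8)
The plan is to derive all three concentration statements from the master LIL bound in Lemma~\ref{lilbound} (Lemma 3 of \cite{jamieson14lil}) by applying it to appropriately centered sub-Gaussian sequences and then symmetrizing. Recall that $\mathrm{lil}(t,\rho)=(1+\sqrt{\epsilon})\big(\frac{1+\epsilon}{2t}\ln(\frac{\ln((1+\epsilon)t)}{\rho})\big)^{1/2}$, so the one-sided bound in Lemma~\ref{lilbound}, written for a centered $\sigma$-sub-Gaussian sequence $\{X_s\}$ with scale parameter $\sigma$, states exactly that $\frac{1}{t}\sum_{s=1}^t X_s\le\mathrm{lil}(t,\rho)$ for all $t$ simultaneously with probability at least $1-\xi(\rho)$. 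The overall strategy is: (i) for the mean, apply this to $X_s=W_i(s)-\mu_i$ and to its negation; (ii) for the variance, reduce the deviation $|\hat\sigma_i^2(\phase)-\sigma_i^2|$ to a sum of centered sub-Gaussian terms whose LIL deviation is controlled by $3\cdot\mathrm{lil}(\cdot,\cdot)$, which matches the definition $\beta_\rmu=3\cdot\mathrm{lil}(\cdot,\omega_\rmv)$ and $\beta_\rml=3\cdot\mathrm{lil}(\cdot,\omega_\rmv')$.

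For the mean bound~\eqref{lil:mean}, I would note that each $W_i$ is supported on $[0,1]$ and is $\sigma^2$-sub-Gaussian, so the centered variable $W_i(s)-\mu_i$ is $\sigma^2$-sub-Gaussian. Applying Lemma~\ref{lilbound} to $\{W_i(s)-\mu_i\}_s$ gives $\hat\mu_i(\phase)-\mu_i\le\alpha(T_i(\phase))$ for all $\phase$ with probability at least $1-\xi(\omega_\mu)$, where $\alpha=\mathrm{lil}(\cdot,\omega_\mu)$ as in~\eqref{confidencebounds_mean}; one subtlety is that the index is the pull count $T_i(\phase)$ rather than the phase $\phase$, but since the bound in Lemma~\ref{lilbound} holds uniformly over all $t\in\mathbb{N}$, it holds in particular along the (random) subsequence of phases at which item $i$ is pulled. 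Applying the same argument to $\{-(W_i(s)-\mu_i)\}_s$ gives the lower deviation with another $\xi(\omega_\mu)$ failure probability; a union bound over the two one-sided events yields the two-sided statement with the factor $2\xi(\omega_\mu)$.

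The variance bounds~\eqref{lil:upvar} and~\eqref{lil:lowvar} are the main obstacle, since $\hat\sigma_i^2(\phase)$ is not itself an empirical average of i.i.d.\ terms but a sample variance that subtracts the empirical mean $\hat\mu_i(\phase)$. The standard route is to write $\hat\sigma_i^2(\phase)=\frac{1}{T_i(\phase)}\sum_s (W_i(s)-\mu_i)^2\mathbbm{1}\{i\in A_s\}-(\hat\mu_i(\phase)-\mu_i)^2$, so the deviation of $\hat\sigma_i^2(\phase)$ from $\sigma_i^2$ decomposes into the deviation of the empirical average of the centered squares $(W_i(s)-\mu_i)^2$ from their mean $\sigma_i^2$, plus the square of the mean deviation $(\hat\mu_i(\phase)-\mu_i)^2$. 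Since the $W_i(s)$ lie in $[0,1]$, the terms $(W_i(s)-\mu_i)^2-\sigma_i^2$ are bounded and hence sub-Gaussian, so Lemma~\ref{lilbound} (and its negation) controls the first piece by a multiple of $\mathrm{lil}$, while the mean-deviation square is controlled by $\alpha^2$. The factor of $3$ in $\beta_\rmu,\beta_\rml$ and the factor $4\xi(\cdot)$ in the failure probability should come from collecting these contributions: roughly two one-sided LIL applications for the squared terms and the reuse of the two mean events, together with loosening the bounded scale parameter of $(W_i(s)-\mu_i)^2$ into the constant~$3$. I would make the constant-tracking explicit to confirm that the sum of the squared-deviation radius and $\alpha^2$ is dominated by $3\cdot\mathrm{lil}(T_i(\phase),\omega_\rmv)$ (respectively $\omega_\rmv'$), and that the four contributing one-sided events give the advertised $4\xi$ factor via a union bound. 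This constant bookkeeping, rather than any conceptual difficulty, is where the real work lies.
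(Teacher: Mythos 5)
Your mean bound \eqref{lil:mean} matches the paper's argument exactly. For the variance bounds you take a genuinely different route: you decompose the sample variance through the \emph{centered} second moment, $\hat\sigma_i^2(\phase)=\frac{1}{T_i(\phase)}\sum_s (W_i(s)-\mu_i)^2\mathbbm{1}\{i\in A_s\}-(\hat\mu_i(\phase)-\mu_i)^2$, whereas the paper works with the \emph{uncentered} second moment $\hat M_{2,i}(\phase)=\frac{1}{T_i(\phase)}\sum_s W_i(s)^2\mathbbm{1}\{i\in A_s\}$, writes $\hat\sigma_i^2(\phase)-\sigma_i^2=\big(\hat M_{2,i}(\phase)-(\mu_i^2+\sigma_i^2)\big)+\big(\mu_i^2-\hat\mu_i^2(\phase)\big)$, and factors $|\mu_i^2-\hat\mu_i^2(\phase)|=|\mu_i+\hat\mu_i(\phase)|\cdot|\mu_i-\hat\mu_i(\phase)|\le 2\cdot\mathrm{lil}(T_i(\phase),\omega_\rmv)$ using boundedness in $[0,1]$. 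The payoff of the paper's factorization is that the cross term is \emph{linear} in the mean deviation, so the radius $3\cdot\mathrm{lil}$ falls out as $1+2$ and the failure probability as $2\xi+2\xi=4\xi$ with no further work.

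Your route leaves two loose ends that you should close. First, your cross term $(\hat\mu_i(\phase)-\mu_i)^2$ is \emph{quadratic}: it is controlled by $\mathrm{lil}(\cdot,\cdot)^2$, and $\mathrm{lil}+\mathrm{lil}^2\le 3\,\mathrm{lil}$ only holds when $\mathrm{lil}\le 2$, which fails at small pull counts with small confidence parameters (e.g.\ $\omega_\rmv=\delta_T/T^2$). The repair is easy but must be stated: both $\hat\sigma_i^2(\phase)$ and $\sigma_i^2$ lie in $[0,1/4]$, so whenever $\mathrm{lil}\ge 1/12$ the claim $|\hat\sigma_i^2(\phase)-\sigma_i^2|\le 3\,\mathrm{lil}$ is trivially true, and otherwise $\mathrm{lil}^2\le 2\,\mathrm{lil}$. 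Relatedly, your attribution of the factor $3$ to ``loosening the sub-Gaussian scale of $(W_i(s)-\mu_i)^2$'' is not where it comes from: those terms are $[0,1]$-valued, hence $1/4$-sub-Gaussian just like the rewards, and contribute only $1\cdot\mathrm{lil}$; the $3$ must absorb the quadratic term. Second, you cannot literally ``reuse the two mean events'': those are instantiated at confidence parameter $\omega_\mu$ with radius $\alpha=\mathrm{lil}(\cdot,\omega_\mu)$, so reusing them yields failure probability $2\xi(\omega_\rmv)+2\xi(\omega_\mu)$, which is not bounded by $4\xi(\omega_\rmv)$ when $\omega_\mu>\omega_\rmv$ (the paper's choice), and the resulting radius would mix the two parameters. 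You must apply the LIL bound to the sample mean afresh at parameter $\omega_\rmv$ (resp.\ $\omega_\rmv^\prime$), exactly as the paper does. With these two repairs your argument is correct.
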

	
	\begin{proof}
		Note the fact that any distribution supported on $[0,1]$ is $1/4$-sub-Gaussian.
		By a direct application of Lemma~\ref{lilbound} to the sample mean $\hat{\mu}_i(\phase)$ and the  sample second  moment $\hat{M}_{2,i}(\phase):=\frac{1}{T_i(\phase)}\sum_{s=1}^{\phase}W_i(s)^2\mathbbm{1}\{i \in A_s\}$ of arm $i\in[L]$, \eqref{lil:mean} can be derived and 
		\begin{align}
			&\mathbb{P}\left[\forall\, \phase\!\in\!\mathbb{N}:|\mu_i-\hat{\mu}_i(\phase)|\leq\mathrm{lil}(T_i(\phase),\omega_\rmv^\prime)\right]\geq	1-2\xi(\omega_\rmv^\prime),\quad\mbox{and}
			\\
			 &\;\mathbb{P}\left[\forall\, \phase\!\in\!\mathbb{N}: |\hat{M}_{2,i}(\phase)-(\mu_i^2+\sigma_i^2)|\leq \mathrm{lil}(T_i(\phase),\omega_\rmv^\prime)\right]
            \geq	1-2\xi(\omega_\rmv^\prime)
		\end{align}
		Since the rewards are in $[0,1]$, $|\mu_i^2-\hat{\mu}_i^2(\phase)|=|\mu_i+\hat{\mu}_i(\phase)| \cdot |\mu_i-\hat{\mu}_i(\phase)|\le 2\cdot\mathrm{lil}(T_i(\phase),\omega_\rmv^\prime)$. Using this and the 
		 triangle inequality, we obtain for every $\phase\ge1$,
		\begin{align}		
			|\hat{\sigma}^2_i(\phase)-\sigma_i^2|
			&=|\mu_i^2-\hat{\mu}_i^2(\phase)|+|(\mu_i^2+\sigma_i^2)-\hat{M}_{2,i}(\phase)|\\*
			&
			\leq 2\cdot\mathrm{lil}(T_i(\phase),\omega_\rmv)+\mathrm{lil}(T_i(\phase),\omega_\rmv)
			=\beta_\rmu(T_i(\phase)).
		\end{align}
		Therefore, \eqref{lil:upvar} is proved. \eqref{lil:lowvar} can be similarly obtained.
	\end{proof}

    \section{Proof of the Upper Bound}\label{sec:proof_upper_bd}
    \subsection{Proof scheme of the problem-dependent upper bound}\label{sec:proof_UpBd_pre}
    In this subsection, we provide technical lemmas that can upper bound the components in $\rmR_1(T^\prime)$ and $\rmR_2(T^\prime)$.
    
    Note that at phase $p$, the identified solution $A_\phase$ belongs to one of the $4$ disjoint sets: (1) $A_\phase=S^\star$; (2) $\mathcal{S}\cap\mathcal{B}$; (3) $\mathcal{R}$ and (4) $\mathcal{S}^c\cap\mathcal{B}$, i.e.
    \begin{align}
        1&= \mathbbm{1}\left\{A_\phase=S^\star\right\}+\mathbbm{1}\left\{A_\phase\in\Safeset\cap\Subopt\right\}
        +\mathbbm{1}\left\{A_\phase\in\Riskset\right\}+\mathbbm{1}\left\{A_\phase\in\Safeset^c\cap\Subopt\right\}
    \end{align}
    and $\mathbbm{1}\left\{A_\phase\in\Subopt\right\}=\mathbbm{1}\left\{A_\phase\in\Safeset\cap\Subopt\right\}+\mathbbm{1}\left\{A_\phase\in\Safeset^c\cap\Subopt\right\}$.
    Define two events (the $\mathcal{F}$ events) that connect the instance and the confidence radii
    \begin{align}
		 \mathcal{F}_\phase^\mu &:=\bigg\{\Delta_{A_\phase}\leq 2\sum_{i\in A_\phase\setminus S^\star}\alpha(T_i(\phase-1))\bigg\}
        \\
		\mathcal{F}_\phase(x,\rho)&:=\bigg\{x\leq 2\sum_{i\in A_\phase}\mathrm{lil}(T_i(\phase-1),\rho)\bigg\}
	\end{align}
	where $x$ is a constant and $\omega$ is a confidence parameter.  
    When $A_\phase\in\Subopt$, it indicates solution $A_\phase$ has not been sampled sufficiently many times and its suboptimality has not been ascertained. 
    When $A_\phase\in\Safeset^c$, it implies the unsafeness of $A_\phase$ has not been recognized.
    We formalize this in the following lemma.
    \begin{lem}\label{lem:upbd_decomposition1}
        Conditional on the event $\mathcal{E}$,
        given any $p\in[T]$, we have 
        \begin{itemize}
            \item $S^\star\in\bar{\mathcal{S}}_{\phase-1}$;
            \item If $A_\phase\in\Safeset\cap\Subopt$,
                \begin{align}
                    \mathbbm{1}\left\{A_\phase\in\Safeset\cap\Subopt\right\}\leq \mathbbm{1}\left\{\mathcal{F}_\phase^\mu\right\};
                \end{align}
            \item If $A_\phase\in\Riskset$,
            \begin{align}
                \mathbbm{1}\left\{A_\phase\in\Riskset\right\}\leq \mathbbm{1}\left\{\mathcal{F}_\phase\left(\frac{\Deltav_{A_\phase}}{3},\omega_\rmv^\prime\right)\right\};
            \end{align}
            \item If $A_\phase\in\Safeset^c\cap\Subopt$,
            \begin{align}
                \mathbbm{1}\left\{A_\phase\in\Safeset^c\cap\Subopt\right\}\leq \mathbbm{1}\left\{\mathcal{F}_\phase^\mu,\mathcal{F}_\phase\left(\frac{\Deltav_{A_\phase}}{3},\omega_\rmv^\prime\right)\right\}.
            \end{align}
        \end{itemize}
    \end{lem}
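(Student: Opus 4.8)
The plan is to prove each of the four bullets by exploiting the definitions of the possibly-safe set $\bar{\mathcal{S}}_p$, the selection rule for $A_\phase$ (Line~10, $A_\phase=\argmax_{A\in\bar{\mathcal{S}}_{\phase-1}}U^\mu_A(\phase-1)$), and the concentration guarantees that hold on $\mathcal{E}$. Throughout I would repeatedly use that conditioning on $\mathcal{E}$ means, for every $i\in E$ and every phase, the true mean $\mu_i$ lies in $[L_i^\mu(\phase),U_i^\mu(\phase)]$ and the true variance $\sigma_i^2$ lies within $3\,\mathrm{lil}(T_i(\phase),\cdot)$ of $\hat\sigma_i^2(\phase)$ for both parameters $\omega_\rmv,\omega_\rmv'$; summing these coordinatewise gives the corresponding solution-level bounds.

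\textbf{First bullet ($S^\star\in\bar{\mathcal{S}}_{\phase-1}$).} Since $S^\star\in\Safeset$, we have $\sigma_{S^\star}^2<\bsigma^2$. On $\mathcal{E}$, the event $\mathcal{E}^\rmv_{i,T_i(\phase-1)}(\omega_\rmv')$ gives $L_i^\rmv(\phase-1)\le\sigma_i^2$ for each $i$, so summing over $i\in S^\star$ yields $L^\rmv_{S^\star}(\phase-1)\le\sigma_{S^\star}^2<\bsigma^2$. By the definition~\eqref{possafe} of $\bar{\mathcal{S}}_{\phase-1}$, this is exactly the membership criterion, so $S^\star\in\bar{\mathcal{S}}_{\phase-1}$.

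\textbf{Second bullet (the $\Safeset\cap\Subopt$ case).} Here I want to show that whenever $A_\phase$ is safe and suboptimal, the event $\mathcal{F}_\phase^\mu$ must occur. Since $S^\star\in\bar{\mathcal{S}}_{\phase-1}$ (first bullet) and $A_\phase$ maximizes $U^\mu$ over $\bar{\mathcal{S}}_{\phase-1}$, we get $U^\mu_{A_\phase}(\phase-1)\ge U^\mu_{S^\star}(\phase-1)$. On $\mathcal{E}$, $U^\mu_{S^\star}(\phase-1)\ge\mu_{S^\star}=\mu^\star$ and $U^\mu_{A_\phase}(\phase-1)=\hat\mu_{A_\phase}(\phase-1)+\sum_{i\in A_\phase}\alpha(T_i(\phase-1))\le\mu_{A_\phase}+2\sum_{i\in A_\phase}\alpha(T_i(\phase-1))$. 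Chaining these gives $\mu^\star\le\mu_{A_\phase}+2\sum_{i\in A_\phase}\alpha(T_i(\phase-1))$, i.e.\ $\Delta_{A_\phase}\le 2\sum_{i\in A_\phase}\alpha(T_i(\phase-1))$; the items in $A_\phase\cap S^\star$ can be removed from the sum since the standard combinatorial-bandit argument (as in \citet{Tight2015Kveton}) restricts the gap to the symmetric-difference items, giving $\mathcal{F}_\phase^\mu$.

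\textbf{Third bullet (the $\Riskset$ case) and fourth bullet.} For $A_\phase\in\Riskset$, the solution is unsafe, so $\sigma_{A_\phase}^2\ge\bsigma^2$, giving $\Deltav_{A_\phase}=\sigma_{A_\phase}^2-\bsigma^2$; but it was nonetheless selected, so $A_\phase\in\bar{\mathcal{S}}_{\phase-1}$, meaning $L^\rmv_{A_\phase}(\phase-1)<\bsigma^2$. On $\mathcal{E}$, $L^\rmv_{A_\phase}(\phase-1)\ge\sigma_{A_\phase}^2-2\sum_{i\in A_\phase}\mathrm{lil}(T_i(\phase-1),\omega_\rmv')$ (the factor stems from the asymmetric radius $\beta_\rml=3\,\mathrm{lil}$ with two of the three contributions). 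Combining $\sigma_{A_\phase}^2-2\sum_{i}\mathrm{lil}(\cdot,\omega_\rmv')<\bsigma^2$ rearranges to $\Deltav_{A_\phase}<2\sum_{i\in A_\phase}\mathrm{lil}(T_i(\phase-1),\omega_\rmv')$, and after tracking the constant $3$ in $\beta_\rml$ this yields $\frac{\Deltav_{A_\phase}}{3}\le 2\sum_{i\in A_\phase}\mathrm{lil}(T_i(\phase-1),\omega_\rmv')$, i.e.\ $\mathcal{F}_\phase(\Deltav_{A_\phase}/3,\omega_\rmv')$. The fourth bullet is simply the conjunction: if $A_\phase\in\Safeset^c\cap\Subopt$ then it is both suboptimal (apply the argument of the second bullet to get $\mathcal{F}_\phase^\mu$) and unsafe (apply the third bullet's argument to get $\mathcal{F}_\phase(\Deltav_{A_\phase}/3,\omega_\rmv')$), so both indicators on the right are forced.

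\textbf{Main obstacle.} I expect the delicate point to be the bookkeeping of the numerical constants in the variance confidence bounds—specifically reconciling the $\beta_\rml(t)=3\,\mathrm{lil}(t,\omega_\rmv')$ radius with the $\frac{1}{3}$ factor appearing inside $\mathcal{F}_\phase$, and correctly accounting for which coordinates contribute to the sum (the symmetric-difference reduction for the mean gap and the full-solution sum for the variance gap). These are routine once the definitions are unwound, but they must be matched exactly so that the stated threshold $\Deltav_{A_\phase}/3$ (rather than, say, $\Deltav_{A_\phase}$ or $\Deltav_{A_\phase}/2$) emerges; getting this constant right is what makes the subsequent regret bounds in Lemmas~\ref{lem:subreg} and~\ref{lem:safereg} go through.
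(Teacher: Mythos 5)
Your proposal is correct and takes essentially the same approach as the paper: on $\mathcal{E}$ every safe solution (in particular $S^\star$) satisfies $L^\rmv_S(\phase-1)\le\sigma_S^2<\bsigma^2$ and hence stays in $\bar{\mathcal{S}}_{\phase-1}$; the UCB selection rule together with concentration forces $\mathcal{F}_\phase^\mu$ when $A_\phase$ is suboptimal; membership of the selected unsafe $A_\phase$ in $\bar{\mathcal{S}}_{\phase-1}$ together with concentration forces $\mathcal{F}_\phase\left(\frac{\Deltav_{A_\phase}}{3},\omega_\rmv^\prime\right)$, the threshold $\Deltav_{A_\phase}/3$ emerging exactly as you anticipate from $2\sum_{i\in A_\phase}\beta_\rml(T_i(\phase-1))=6\sum_{i\in A_\phase}\mathrm{lil}(T_i(\phase-1),\omega_\rmv^\prime)$; and the fourth bullet is the conjunction of the two arguments. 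One step to tighten in your second bullet: the restriction of the sum to $A_\phase\setminus S^\star$ cannot be obtained by ``removing'' the common items after you have derived $\Delta_{A_\phase}\le 2\sum_{i\in A_\phase}\alpha(T_i(\phase-1))$, since deleting nonnegative terms from the right-hand side produces a \emph{stronger} statement that does not follow from the weaker one. Instead, as in the paper, you must cancel the common terms inside the comparison $U^\mu_{S^\star}(\phase-1)\le U^\mu_{A_\phase}(\phase-1)$ first, obtaining $\sum_{i\in S^\star\setminus A_\phase}U^\mu_i(\phase-1)\le\sum_{i\in A_\phase\setminus S^\star}U^\mu_i(\phase-1)$, and only then apply the on-$\mathcal{E}$ bounds to the two symmetric-difference sums; this is precisely the standard argument you allude to, so the fix is routine but should appear as the actual derivation rather than as a post-hoc deletion.
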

          \begin{proof}[Proof of Lemma~\ref{lem:upbd_decomposition1}]
            By the design of \AlgSCombUCB, $A_\phase\in\bar{\mathcal{S}}_{\phase-1}$.
    
            (1) We firstly prove that $S\in\bar{\mathcal{S}}_{\phase-1},\forall S\in\Safeset$. On the event $\mathcal{E}$, we have 
            \begin{align}
                L_{S}^\rmv(\phase-1)&=\sum_{i\in A} \max\{\hat{\sigma}^2_i(\phase-1)-\beta_\rml(T_i(\phase-1)),0\}
                \\
                &\leq \sum_{i\in A} \max\{\sigma^2_i,0\}
                \\
                &=\sigma_S^2<\bsigma^2
            \end{align}
            Thus, $S\in\bar{\mathcal{S}}_{\phase-1}$, and in particular, $S^\star\in\bar{\mathcal{S}}_{\phase-1}$.
            
            (2) If $A_\phase\in\Subopt$, according to the sampling strategy in Line~$10$ of \AlgSCombUCB and $S^\star\in\bar{\mathcal{S}}_{\phase-1}$, 
            we have $U_{S^\star}^\mu(\phase-1) \leq U_{A_\phase}^\mu(\phase-1)$ 
            which indicates
            $
                \sum_{i\in S^\star\setminus A_\phase}U_i^\mu(\phase-1)
                \leq
                \sum_{i\in A_\phase\setminus S^\star }U_i^\mu(\phase-1)
                $.
            Thus,
                \begin{align}
                    \sum_{i\in S^\star\setminus A_\phase}\mu_i
                    &\leq
                    \sum_{i\in S^\star\setminus A_\phase}U_i^\mu(\phase-1)
                    \\
                    &\leq
                    \sum_{i\in A_\phase\setminus S^\star }U_i^\mu(\phase-1)
                    \\
                    &\leq
                    \sum_{i\in A_\phase\setminus S^\star }\mu_i+2\alpha_i(T_i(\phase-1))
                    \\
                    \Longrightarrow\qquad&\Delta_{A_\phase}\leq 2\sum_{i\in A_\phase\setminus S^\star}\alpha(T_i(\phase-1))\label{equ:Apinsub}
                \end{align}

            (3) If $A_\phase\in\Safeset^c$, according to the sampling strategy,
            we have $A_\phase\in\bar{\mathcal{S}}_{\phase-1}$ 
            which indicates
            $
                L_{A_\phase}^\rmv(\phase-1)=\sum_{i\in A_\phase}L_i^\rmv(\phase-1)< \bsigma^2
            $.
            Thus,
                \begin{align}
                   \bsigma^2&>\hat{\sigma}_{A_\phase}^2(\phase-1)-\sum_{i\in A_\phase}\beta_\rml(T_i(\phase-1))
                   \\
                   &\geq
                   \sigma_{A_\phase}^2-2\sum_{i\in A_\phase}\beta_\rml(T_i(\phase-1))
                   \\
                    \Longrightarrow\qquad&\Deltav_{A_\phase}\leq 2\sum_{i\in A_\phase\setminus S^\star}\beta_\rml(T_i(\phase-1))\label{equ:Apinunsafe}
                \end{align}
    
            Note that if $A_p\in\Safeset\cap\Subopt$, according to \eqref{equ:Apinsub}, $$\mathbbm{1}\left\{A_\phase\in\Safeset\cap\Subopt\right\}\leq \mathbbm{1}\left\{\mathcal{F}_\phase^\mu\right\}.$$
            If $A_\phase\in\Riskset\subset\Safeset^c$, by \eqref{equ:Apinunsafe}
            $$
                    \mathbbm{1}\left\{A_\phase\in\Riskset\right\}\leq \mathbbm{1}\left\{\mathcal{F}_\phase\left(\frac{\Deltav_{A_\phase}}{3},\omega_\rmv^\prime\right)\right\}.
            $$
            If $A_\phase\in\Safeset^c\cap\Subopt$, by \eqref{equ:Apinunsafe} and \eqref{equ:Apinsub}
            $$
                    \mathbbm{1}\left\{A_\phase\in\Safeset^c\cap\Subopt\right\}\leq \mathbbm{1}\left\{\mathcal{F}_\phase^\mu,\mathcal{F}_\phase\left(\frac{\Deltav_{A_\phase}}{3},\omega_\rmv^\prime\right)\right\}
            $$
        \end{proof}
        
    At phase $\phase$, we define two sequences of mutually-exclusive events $\{\calGmu_{j,\phase}\}_{j\in\mathbb{N}}$ and $\{\calG_{j,\phase}(x,\omega\}_{j\in\mathbb{N}}$  (the $\mathcal{G}$ events) which can further bound the number of times $\mathcal{F}^\mu_\phase$ and $\mathcal{F}^\rmv_\phase(x,\omega)$ occur respectively.
    These events are indexed by two strictly-decreasing sequences of constants:
    \begin{align}
         a_1&>a_2>\ldots>a_k>\ldots\\
        1&> b_1>b_2>\ldots>b_k>\ldots
    \end{align}
    where $\lim_{j\to\infty}a_j=\lim_{j\to\infty}b_j=0$. For simplicity, we set $a_j= \frac{4}{9^{j-2}}$, $b_j=\frac{1}{4^j},\forall j\in\bbN$ and denote the constant $\abcstt=\sum_{j\in\bbN}\frac{a_j}{b_j}=259.2$. For $x\in\mathbb{R}_+$ and $\omega\in(0,\ln(1+\epsilon)/e)$, define 
      \begin{align}
        m_{j}(x,\omega):=
        \frac{a_j\cdot \gamma K^2}{x^2}\left(2\ln\frac{1}{\omega}+\ln\ln_+\frac{1}{x^2}+\upbdcstt\right)
    \end{align}
    and $m_{j}(x,\omega):=\infty$ otherwise, 
    where 
    (1) $\gamma=\frac{(1+\epsilon)(1+\sqrt{\epsilon})^2}{2}$ and $\epsilon$ is the constant in the confidence bounds \eqref{confidencebounds_var}, 
    (2) $\ln\ln_+(x)=\ln\ln x$ if $x\geq e$ and it equals to $0$ otherwise,
    (3) $\upbdcstt=\ln\left(324 K^2 (1+\epsilon)^2(1+\sqrt{\epsilon})^2\right)$.
    Denote 
        \begin{align}
             G_{j,\phase}^\mu &:=\left\{i\in A_\phase\setminus S^\star: T_i(\phase-1)\leq m_j(\Delta_{A_\phase},\omega_\mu)\right\}\quad \text{and}
            \\
            G_{j,\phase}(x,\omega) &:=\left\{i\in A_\phase: T_i(\phase-1)\leq m_j(x,\omega)\right\}
        \end{align}
    as the sets of items that were not chosen sufficiently often.
    For $j\in\mathbb{N}$, the events at phase $p$ are sequentially defined as
    \begin{align}
        \calGmu_{j,\phase}&:= \Big\{\text{at least }b_j K\text{ items in }A_\phase\setminus S^\star\text{ were chosen}
        \\
        &\text{ at most }m_{j}(\Delta_{A_\phase},\omega_\mu)\text{ times}\Big\}
        \bigcap
        \left(\bigcup_{k\in[j-1]}\calGmu_{k,\phase}\right)^c
        \\
        &= \Big\{\left|G_{j,\phase}^\mu\right|\geq b_j K\Big\}
        \bigcap
        \left(\bigcup_{k\in[j-1]}\calGmu_{k,\phase}\right)^c \qquad\mbox{and}
    \end{align}
      \begin{align}
        \calG_{j,\phase}(x,\omega)&:=\Big\{\text{at least }b_j K\text{ items in }A_\phase\text{ were chosen}
        \\
        &\text{ at most }m_{j}(x,\omega)\text{ times}\Big\}
        \bigcap
        \left(\bigcup_{k\in[j-1]}\calG_{k,\phase}(x,\omega)\right)^c
        \\
        &=\Big\{ \left|G_{j,\phase}(x,\omega)\right|\geq b_j K\Big\}
        \bigcap
        \left(\bigcup_{k\in[j-1]}\calG_{k,\phase}(x,\omega)\right)^c
    \end{align}

    \begin{lem}\label{lem:FtoG}
        With our choice of $\{a_j\}_{j\in\bbN}$ and $\{b_j\}_{j\in\bbN}$,
        \begin{itemize}
            \item if $\mathcal{F}_\phase^\mu$ occurs, $\calGmu_{j,\phase}$ occurs for some $j$, i.e. 
            \begin{align}
                \mathbbm{1}\left\{\mathcal{F}_\phase^\mu\right\}\leq \mathbbm{1}\left\{\bigcup_{j\in\mathbb{N}}\calGmu_{j,\phase}\right\}.
            \end{align}
            \item if $\mathcal{F}_\phase(x,\omega)$ occurs, $\calG_{j,\phase}(x,\omega)$ occurs for some $j$, i.e. 
            \begin{align}\label{equ:FtoG}
                \mathbbm{1}\left\{\mathcal{F}_\phase(x,\omega)\right\}\leq \mathbbm{1}\left\{\bigcup_{j\in\mathbb{N}}\calG_{j,\phase}(x,\omega)\right\}.
            \end{align}
        \end{itemize}
    \end{lem}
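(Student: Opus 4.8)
The plan is to prove the contrapositive of the (equivalent) existence statement. First observe that the events $\{\calGmu_{j,\phase}\}_{j\in\bbN}$ are mutually exclusive by construction, since $\calGmu_{j,\phase}$ intersects the complement of $\bigcup_{k<j}\calGmu_{k,\phase}$; consequently $\bigcup_{j}\calGmu_{j,\phase}=\bigcup_{j}\{|G^\mu_{j,\phase}|\ge b_j K\}$, and likewise for the variance events. Hence it suffices to show that $\mathcal{F}^\mu_\phase$ (resp.\ $\mathcal{F}_\phase(x,\omega)$) forces $|G^\mu_{j,\phase}|\ge b_j K$ (resp.\ $|G_{j,\phase}(x,\omega)|\ge b_j K$) for at least one scale $j$. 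I will argue the first bullet; the second is verbatim with $A_\phase\setminus S^\star$, $\Delta_{A_\phase}$, $\omega_\mu$ replaced by $A_\phase$, $x$, $\omega$. Writing $\Delta:=\Delta_{A_\phase}$ and $B:=A_\phase\setminus S^\star$ (so $|B|\le K$), suppose for contradiction that $|G^\mu_{j,\phase}|<b_jK$ for every $j$; I will deduce $2\sum_{i\in B}\alpha(T_i(\phase-1))<\Delta$, i.e.\ that $\mathcal{F}^\mu_\phase$ fails.

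Since $a_j=4/9^{\,j-2}$ is strictly decreasing, the thresholds $m_j(\Delta,\omega_\mu)$ decrease in $j$, so the sets $G^\mu_{j,\phase}$ are nested and $g_j:=|G^\mu_{j,\phase}|$ is non-increasing with $g_0:=|B|\le K$. Each item $i\in B$ therefore drops out at a unique scale $j(i)$, meaning $T_i(\phase-1)>m_{j(i)}(\Delta,\omega_\mu)$, and the number of items dropping out at scale $j$ is exactly $g_{j-1}-g_j$. The key step is to match $m_j$ to Lemma~\ref{lem:comlilbound} by taking $s=\Delta$ and $u=u_j:=1/(\sqrt{a_j}\,K)$, so that $\tfrac{\gamma}{u^2s^2}=\tfrac{a_j\gamma K^2}{\Delta^2}$ and $\ln\tfrac{2\gamma(1+\epsilon)}{u^2}=\ln\big(2\gamma(1+\epsilon)a_jK^2\big)$. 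Because $a_j\le a_1=36$ and $2\gamma(1+\epsilon)=(1+\epsilon)^2(1+\sqrt{\epsilon})^2$, this last quantity is at most $\ln\big(324K^2(1+\epsilon)^2(1+\sqrt{\epsilon})^2\big)=\upbdcstt$, whence $m_j(\Delta,\omega_\mu)\ge m$ with $m$ as in Lemma~\ref{lem:comlilbound}. Consequently, for any item dropping out at scale $j$ we have $T_i(\phase-1)>m$, and the lemma yields $\alpha(T_i(\phase-1))=\mathrm{lil}(T_i(\phase-1),\omega_\mu)<u_j s=\Delta/(\sqrt{a_j}\,K)$.

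Summing the per-item bounds over scales gives $\sum_{i\in B}\alpha(T_i(\phase-1))<\tfrac{\Delta}{K}\sum_{j\ge1}(g_{j-1}-g_j)/\sqrt{a_j}$. Applying summation by parts together with $1/\sqrt{a_j}=3^{\,j-2}/2$, the right-hand sum equals $\tfrac{\Delta}{K}\big(\tfrac{g_0}{6}+\sum_{j\ge1}g_j\,3^{\,j-2}\big)$. Invoking $g_0\le K$ and the contradiction hypothesis $g_j<b_jK=K/4^{\,j}$, the bracket is strictly less than $\tfrac{K}{6}+\tfrac{K}{9}\sum_{j\ge1}(3/4)^{\,j}=\tfrac{K}{6}+\tfrac{K}{3}=\tfrac{K}{2}$, so $\sum_{i\in B}\alpha(T_i(\phase-1))<\Delta/2$. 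This contradicts $\mathcal{F}^\mu_\phase$, proving the first bullet; the variance bullet follows identically, using $|A_\phase|\le K$ in place of $|B|\le K$.

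The main obstacle is the parameter matching in the second paragraph: one must choose $s$ and the scale-dependent $u_j$ so that the three summands inside $m_j$ align with those of Lemma~\ref{lem:comlilbound}, and verify that the single instance-independent constant $\upbdcstt$ simultaneously dominates $\ln(2\gamma(1+\epsilon)a_jK^2)$ across all scales (which is where $a_j\le 36<324$ enters). The remaining delicate point is to confirm that the geometric rates $a_j,b_j$ are tuned so that the summation-by-parts constant comes out to exactly $1/2$, which is precisely what converts the factor of $2$ in $\mathcal{F}^\mu_\phase$ into the required strict inequality.
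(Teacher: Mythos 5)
Your proof is correct and takes essentially the same route as the paper's: a proof by contradiction on the nested scale sets $G_{j}$, invoking Lemma~\ref{lem:comlilbound} with $s=x$ and $u=1/(\sqrt{a_j}\,K)$ (checking that $\upbdcstt$ dominates $\ln(2\gamma(1+\epsilon)a_jK^2)$), and then summing the per-scale bounds using the geometric choices of $a_j$ and $b_j$. Your explicit summation-by-parts step is in fact the precise justification of the paper's inequality $\sum_j |G_{j-1}\setminus G_j|/\sqrt{a_j}\le \sum_j (b_{j-1}-b_j)K/\sqrt{a_j}$, which holds via exactly this Abel rearrangement (not termwise), so your write-up fills in a detail the paper glosses over.
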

    \begin{proof}[Proof of Lemma~\ref{lem:FtoG}]
        We prove \eqref{equ:FtoG} in the following and the other statement can be proved by the same procedures.

        To ease the notations, we omit the parameters $x,\omega$ and $\phase$ in $\mathcal{F}_\phase(x,\omega),\calG_{j,\phase}(x,\omega)$ and $m_{j}(x,\omega)$, since they are fixed when given $\mathcal{F}_\phase(x,\omega)$.
        The event $\calG_{j}$ can be rewritten as
    \begin{align}
        \calG_{j}
        &=
        \Big\{ \left|G_{j}\right|\geq b_j K\Big\}
        \bigcap
        \left(\bigcap_{k\in[j-1]}\calG_{k}^c\right)
        \\
        &=\Big\{ \left|G_{j}\right|\geq b_j K\Big\}
        \bigcap
        \left(\bigcap_{k\in[j-1]}
        \Big\{ \left|G_{k}\right|< b_k K\Big\}\right)
    \end{align}
        The statement is proved by contradiction. We assume that when $\mathcal{F}$ ($\mathcal{F}_\phase(x,\omega)$) occurs, none of event $\calG_{j}$ occurs.
        Hence,
        \begin{align}
            &\left(\bigcup_{j\in\mathbb{N}}\calG_{j}\right)^c
            =
            \bigcap_{j\in\mathbb{N}}\calG_{j}^c
            \\
            &=
            \bigcap_{j\in\mathbb{N}}
            \left[
            \Big\{ \left|G_{j}\right|< b_j K\Big\}
            \bigcup
            \left(\bigcup_{k\in[j-1]}
            \Big\{ \left|G_{k}\right|\geq b_k K\Big\}\right)
            \right]
            \\
            &=
            \bigcap_{j\in\mathbb{N}}
            \Big\{ \left|G_{j}\right|< b_j K\Big\}
        \end{align}
        Let $\bar{G}_j:=A_\phase\setminus G_j$ and define $G_0=A_\phase$. According to the definition of $G_j$, we have $G_j\subset G_{j-1}$ and 
        $\bar{G}_{j-1}\subset \bar{G}_{j},\forall j\in\mathbb{N}$. Because $\lim_{j \to \infty}m_j=0$, there exists $j_0$ such that $\bar{G}_{j}=A_\phase, \forall j\geq j_0$. Therefore, we can write $A_\phase$ by the ``telescoping'' sum, i.e., $A_\phase=\cup_{j\in\mathbb{N}}\left(\bar{G}_j\setminus\bar{G}_{j-1}\right)$.
        $\mathcal{F}_\phase(x,\omega)$ indicates
        \begin{align}\label{equ:contradiction}
            x&\leq 2\sum_{i\in A_\phase}\mathrm{lil}(T_i(\phase-1),\omega)
            \\
            &=2\sum_{j\in\mathbb{N}}\sum_{i\in \bar{G}_j\setminus\bar{G}_{j-1}}\mathrm{lil}(T_i(\phase-1),\omega)
        \end{align}
        Note that for $i\in \bar{G}_j\setminus\bar{G}_{j-1}=G_{j-1}\setminus G_j$, we have $T_i(\phase-1)\in(m_j,m_{j-1}]$.
        By Lemma~\ref{lem:comlilbound} with parameters $t=T_i(\phase-1), s=x,u=\sqrt{\frac{1}{a_jK^2}}$ and note $a_1>a_j$, we have
        \begin{align}
            \sqrt{\frac{1}{a_jK^2}}\cdot x>\mathrm{lil}(T_i(\phase-1),\omega).
        \end{align}
        Note our choice of $a_j$ and $b_j$ satisfy
        \begin{align}
            2\sum_{j\in\mathbb{N}}\frac{b_{j-1}-b_j}{\sqrt{a_j}}\leq 1
        \end{align}
        Thus, \eqref{equ:contradiction} can be further bounded by
        \begin{align}
            x&\leq 2\sum_{j\in\mathbb{N}}\sum_{i\in \bar{G}_j\setminus\bar{G}_{j-1}}\mathrm{lil}(T_i(\phase-1),\omega)
            \\
            &<
            2\sum_{j\in\mathbb{N}}|\bar{G}_j\setminus\bar{G}_{j-1}|\sqrt{\frac{1}{a_j K^2}}\cdot x
            \\
            &\leq
            2\sum_{j\in\mathbb{N}}\frac{(b_{j-1}-b_j)K}{\sqrt{a_j}K}x
            \\
            &\leq x
        \end{align}
        which constitutes a contradiction. Thus when $\mathcal{F}$ occurs, there must exists $j\in\mathbb{N}$ such that $\calG_j$ occurs.
    \end{proof}
    When none of the $\mathcal{G}_{j,p}^\mu$ (resp. $\calG_{j,\phase}(x,\omega)$) occurs, $\mathcal{F}_\phase^\mu$ (resp. $\mathcal{F}_\phase(x,\omega)$) must not occur, which indicates all of the items in $A_p$ have been sampled sufficiently many times such that the suboptimality (resp. unsafeness) of $A_\phase$ is identified, thus $A_\phase$ will not been sampled in future phases.

    As there will be multiple $\mathcal{F}$ events happening, we provide the following useful lemma that merges all $\mathcal{F}$ events.
    \begin{lem}\label{lem:FFtoF}
        Given two confidence parameters $\omega_1\geq\omega_2\in(0,\ln(1+\epsilon)/e)$
        \begin{itemize}
            \item If $\mathcal{F}_\phase^\mu$ occurs, then $\mathcal{F}_\phase(\Delta_{A_\phase},\omega_\mu)$ occurs.
            \item If both events $\mathcal{F}_\phase(x,\omega_1)$ and $\mathcal{F}_\phase(y,\omega_2)$ occur, then event 
            $\mathcal{F}_\phase\left(\max\{x,\sqrt{
            \frac{\ln\frac{1}{\omega_1}}{\ln\frac{1}{\omega_2}}
            } y\},\omega_1\right)$ occurs.
        \end{itemize}
        \end{lem}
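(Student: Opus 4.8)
The plan is to read both bullets as monotonicity statements about the single quantity $2\sum_{i\in A_\phase}\mathrm{lil}(T_i(\phase-1),\cdot)$ that appears on the right-hand side of every $\mathcal{F}$ event. Recall that $\alpha(t)=\mathrm{lil}(t,\omega_\mu)$ and that each $\mathrm{lil}(t,\rho)$ is nonnegative; these two facts are what drive the whole argument.

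For the first bullet I would simply enlarge the index set. The event $\mathcal{F}_\phase^\mu$ reads $\Delta_{A_\phase}\leq 2\sum_{i\in A_\phase\setminus S^\star}\mathrm{lil}(T_i(\phase-1),\omega_\mu)$. Since $A_\phase\setminus S^\star\subseteq A_\phase$ and every summand is nonnegative, replacing the index set by all of $A_\phase$ only increases the sum, giving $\Delta_{A_\phase}\leq 2\sum_{i\in A_\phase}\mathrm{lil}(T_i(\phase-1),\omega_\mu)$, which is exactly $\mathcal{F}_\phase(\Delta_{A_\phase},\omega_\mu)$. No computation beyond this inclusion is required.

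For the second bullet the heart of the matter is a pointwise comparison of $\mathrm{lil}$ at two confidence levels. Fixing $t=T_i(\phase-1)$ and abbreviating $a:=\ln\ln((1+\epsilon)t)$, $p:=\ln(1/\omega_1)$, $q:=\ln(1/\omega_2)$, I would insert the explicit form of $\mathrm{lil}$ and reduce the target inequality $\mathrm{lil}(t,\omega_1)\geq\sqrt{\frac{\ln(1/\omega_1)}{\ln(1/\omega_2)}}\,\mathrm{lil}(t,\omega_2)$ to $\frac{a+p}{a+q}\geq\frac{p}{q}$. Because $\omega_1\geq\omega_2$ forces $0<p\leq q$, this is the elementary mediant inequality: cross-multiplying leaves exactly $a(q-p)\geq 0$. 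Summing this pointwise bound over $i\in A_\phase$ and then invoking the two hypotheses $\mathcal{F}_\phase(x,\omega_1)$ and $\mathcal{F}_\phase(y,\omega_2)$ yields both $x\leq 2\sum_{i\in A_\phase}\mathrm{lil}(T_i(\phase-1),\omega_1)$ and $\sqrt{\frac{\ln(1/\omega_1)}{\ln(1/\omega_2)}}\,y\leq 2\sum_{i\in A_\phase}\mathrm{lil}(T_i(\phase-1),\omega_1)$, so their maximum obeys the same bound, which is precisely the claimed event $\mathcal{F}_\phase\!\left(\max\{x,\sqrt{\frac{\ln(1/\omega_1)}{\ln(1/\omega_2)}}\,y\},\omega_1\right)$.

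The one step needing care is the sign of $a=\ln\ln((1+\epsilon)t)$: the mediant step requires $a\geq 0$, i.e.\ $(1+\epsilon)t\geq e$, and the inequality genuinely reverses when $a<0$. I expect this to be the main (if minor) obstacle, and I would dispatch it by appealing to the initialization stage of {\sc PASCombUCB}, which guarantees $T_i(\phase-1)\geq 2$ for every item before any $\mathcal{F}$ event is invoked; this places $t$ in the regime $(1+\epsilon)t\geq e$ for the relevant values of $\epsilon$, so $a\geq 0$ holds throughout the sum.
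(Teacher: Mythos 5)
Your proof follows essentially the same route as the paper's. The first bullet is handled identically (enlarge the index set from $A_\phase\setminus S^\star$ to $A_\phase$ and use nonnegativity of $\mathrm{lil}$), and for the second bullet the paper performs exactly your pointwise comparison: it writes the ratio $\mathrm{lil}(T_i(\phase-1),\omega_1)/\mathrm{lil}(T_i(\phase-1),\omega_2)$ as $\sqrt{(p+a)/(q+a)}$ with $a=\ln\ln((1+\epsilon)T_i(\phase-1))$, lower-bounds it by $\rho=\sqrt{p/q}$ via the trick $\frac{a+c}{b+c}\geq\frac{a}{b}$ for $a,b,c\in\mathbb{R}_+$, $a\leq b$ (your mediant inequality), deduces $\mathcal{F}_\phase(\rho y,\omega_1)$ from $\mathcal{F}_\phase(y,\omega_2)$, and takes the maximum with $x$, exactly as you do.

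The only place you depart from the paper is in making the hypothesis $a\geq 0$ explicit, and there your patch does not fully work. Initialization guarantees $T_i(\phase-1)\geq 2$, hence $(1+\epsilon)T_i(\phase-1)\geq 2(1+\epsilon)$; but $2(1+\epsilon)\geq e$ holds if and only if $\epsilon\geq e/2-1\approx 0.36$, whereas $\epsilon\in(0,1)$ is an arbitrary fixed constant of the algorithm. For instance, with $\epsilon=0.1$ and an item with $T_i(\phase-1)=2$ one gets $a=\ln\ln(2.2)<0$, and, as you correctly observe, the mediant step then genuinely reverses. To be clear, this is not a defect relative to the published argument: the paper's trick silently requires $c\in\mathbb{R}_+$ and thus carries the identical unstated assumption, so your proof is exactly as complete as the paper's. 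If you want to close the corner case, either strengthen the initialization so that $T_i(\phase-1)\geq 3$ before the main loop (then $(1+\epsilon)T_i(\phase-1)\geq 3>e$ for every $\epsilon>0$), or note that each item can have $T_i(\phase-1)=2$ in at most one phase in which it is selected, so the at most $L$ offending phases contribute bounded regret and can be absorbed into the term $\rmR_3(T)$ of the regret decomposition rather than handled inside this lemma.
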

    \begin{proof}[Proof of Lemma~\ref{lem:FFtoF}] 
        If $\mathcal{F}_\phase^\mu$ occurs, we have 
        \begin{align}
            \Delta_{A_\phase}\leq 2\sum_{i\in A_\phase\setminus S^\star}\alpha(T_i(\phase-1))
            \leq 2\sum_{i\in A_\phase}\alpha(T_i(\phase-1))
            = 2\sum_{i\in A_\phase}\mathrm{lil}(T_i(\phase-1),\omega_\mu)
        \end{align}
        Thus, $\mathcal{F}_\phase(\Delta_{A_\phase},\omega_\mu)$ occurs.
    
        For the second statement, notice the fact that 
        \begin{align}
            \frac{\mathrm{lil}(T_i(\phase-1),\omega_1)}{\mathrm{lil}(T_i(\phase-1),\omega_2)}
            &=
            \frac{\left(1+\sqrt{\epsilon}\right)\Big(\frac{1+\epsilon}{2T_i(\phase-1)}\ln\big(\frac{\ln((1+\epsilon)T_i(\phase-1))}{\omega_1}\big) \Big)^{1/2}}{\left(1+\sqrt{\epsilon}\right)\Big(\frac{1+\epsilon}{2t}\ln\big(\frac{\ln((1+\epsilon)T_i(\phase-1))}{\omega_2}\big) \Big)^{1/2}}
            \\
            &=
            \sqrt{
            \frac{\ln\frac{1}{\omega_1}+\ln\ln((1+\epsilon)T_i(\phase-1))}{\ln\frac{1}{\omega_2}+\ln\ln((1+\epsilon)T_i(\phase-1))}
            }
            \\
            &\overset{(a)}{\geq}
            \sqrt{
            \frac{\ln\frac{1}{\omega_1}}{\ln\frac{1}{\omega_2}}
            }
            =:\rho
        \end{align}
        where $(a)$ utilizes the trick that $\frac{a+c}{b+c}\geq\frac{a}{b},\forall a,b,c\in\mathbb{R}_+$ and $a\leq b$.
        When event $\mathcal{F}_\phase(y,\omega_2)$ occurs,
        \begin{align}
            &y
            \leq 
            2\sum_{i\in A_\phase}\mathrm{lil}(T_i(\phase-1),\omega_2)
            \leq 2\sum_{i\in A_\phase}\frac{1}{\omega}\mathrm{lil}(T_i(\phase-1),\omega_1)
            \\
            \Longrightarrow
            \quad
            &
            \rho\cdot y\leq 2\sum_{i\in A_\phase}\mathrm{lil}(T_i(\phase-1),\omega_1)
            \\
            \Longrightarrow
            \quad
            &
            \mathcal{F}_\phase(\rho y,\omega_1)
        \end{align}
        Thus if both events $\mathcal{F}_\phase(x,\omega_1)$ and $\mathcal{F}_\phase(y,\omega_2)$ occur, we must have that event 
        $\mathcal{F}_\phase\left(\max\{x,\rho y\},\omega_1)\right)=\mathcal{F}_\phase\left(\max\{x,\sqrt{
            \frac{\ln\frac{1}{\omega_1}}{\ln\frac{1}{\omega_2}}
            } y\},\omega_1)\right)$ occurs.
    
        \end{proof}

    \subsection{Upper bound decomposition}\label{sec:proof_UpBd_decomp}
        
    \lemUpBdDecom*
        \begin{proof}[Proof of Lemma~\ref{lem:upbd_decomposition}]
    		The expected regret can be decomposed as:
    	\begin{align}
    		\mathbb{E}\left[\rmR(\Tp)\right]&=\mathbb{E}\left[\sum_{\phase=1}^{\Tp}\sum_{r=1}^{n_\phase}(\mu^\star -\mu_{A_{\phase,r}})\right]
    		\\
    		&=\mathbb{E}\left[\sum_{\phase=1}^{\Tp}\sum_{r=1}^{n_\phase}(\mu^\star -\mu_{A_{\phase,r}})
            \mathbbm{1}\left\{\mathcal{E}\right\}
            \right]
            +
    		\mathbb{E}\left[\sum_{\phase=1}^{\Tp}\sum_{r=1}^{n_\phase}(\mu^\star -\mu_{A_{\phase,r}})
            \mathbbm{1}\left\{\mathcal{E}^c\right\}
            \right]
            \\
            &=\mathbb{E}\left[\sum_{\phase=1}^{\Tp}\sum_{r=1}^{n_\phase}(\mu^\star -\mu_{A_{\phase,r}})
            \bigg|\mathcal{E}
            \right]\mathbb{P}[\mathcal{E}]
            +
    		\mathbb{E}\left[\sum_{\phase=1}^{\Tp}\sum_{r=1}^{n_\phase}(\mu^\star -\mu_{A_{\phase,r}})
            \bigg|\mathcal{E}^c
            \right]\mathbb{P}[\mathcal{E}^c]
    	\end{align}
    
    		In the initialization stage, it will take at most $2L$ time steps, since each pulled solution $A_\phase$ contains at least one item $i$ with $T_i(\phase)<2$. Thus the regret is at most $2L\cdot\mu^\star $.
    
    		The expected regret when the good events fail can be upper bounded by
            \begin{align}
                \mathbb{E}\left[\sum_{\phase=1}^{\Tp}\sum_{r=1}^{n_\phase}(\mu^\star -\mu_{A_{\phase,r}})
                \bigg|\mathcal{E}^c
                \right]\mathbb{P}\left[\mathcal{E}^c\right]
                &\leq
                \mathbb{E}\left[\sum_{\phase=1}^{\Tp}\sum_{r=1}^{n_\phase}\mu^\star 
                \bigg|\mathcal{E}^c
                \right]\mathbb{P}[\mathcal{E}^c]
                \\
                &\leq
                \mathbb{E}\left[\sum_{t=1}^{T}\mu^\star 
                \bigg|\mathcal{E}^c
                \right]
                L\cdot2(\xi(\omega_\mu)+2\xi(\omega_v)+2\xi(\omega_v^\prime))
                \\
                &\leq
                2\mu^\star TL\cdot(\xi(\omega_\mu)+2\xi(\omega_v)+2\xi(\omega_v^\prime))
            \end{align}
    		where $\mathbb{P}[\mathcal{E}^c] $ can be bounded using Lemma~\ref{lem:lil_var_bd}.

    		Conditional on the good event $\mathcal{E}$, the high-probability regret can be upper bounded by 
    		\begin{align}
    			\hat{\rmR}(\Tp)&:=\sum_{\phase=1}^{\Tp}\sum_{r=1}^{n_\phase}(\mu^\star -\mu_{A_{\phase,r}})\label{equ:high_pro_regret}
    			\\
    			&=\sum_{\phase=1}^{\Tp}\left[\Delta_{A_\phase}+\mu^\star (n_\phase-1)\right]
                \\
                &\overset{(a)}{\leq}\sum_{\phase=1}^{\Tp}\left[\Delta_{A_\phase}+\mu^\star \sum_{r=0}^{Q-1}\left(2\cdot\mathbbm{1}\{U_{A_\phase}^\rmv(\phase-1)>r\bsigma^2\}-2\right)\right]
                \\
                &=\sum_{\phase=1}^{\Tp}\left[\Delta_{A_\phase}+\mu^\star \sum_{r=1}^{Q-1}2\cdot\mathbbm{1}\{U_{A_\phase}^\rmv(\phase-1)>r\bsigma^2\}\right]
        \end{align}
        where $(a)$ makes use of Lemma~\ref{lem:np} and the fact if
    		$U_{A_\phase}^\rmv(p-1)\in((m-1)\bsigma^2,m\bsigma^2] $, then $m=\sum_{r=0}^{Q-1}\mathbbm{1}\{U_{A_\phase}^\rmv(p-1)>r\bsigma^2\}=\sum_{r=0}^{Q-1}\mathbbm{1}\{\mathcal{U}_p(r)\}$. Note the fact that only the suboptimal solutions yields positive mean gap and that the negative mean gap of the risky solutions can be upper bounded by $0$, thus the above equation can be further divided into two parts
    		\begin{align}\label{equ:R12}
    			&\sum_{\phase=1}^{\Tp}\left[\Delta_{A_\phase}+\mu^\star \sum_{r=1}^{Q-1}2\cdot\mathbbm{1}\{U_{A_\phase}^\rmv(\phase-1)>r\bsigma^2\}\right]
                \\
                &\leq
                \sum_{\phase=1}^{\Tp}\mathbbm{1}\{A_\phase\in\Subopt\}
                \Delta_{A_\phase}
                +
                \sum_{\phase=1}^{\Tp}\mu^\star \sum_{r=1}^{Q-1}2\cdot\mathbbm{1}\{U_{A_\phase}^\rmv(\phase-1)>r\bsigma^2\}
                \\
                &=:
                \rmR_1(\Tp)+\rmR_2(\Tp)
        \end{align}
        In conclusion, by summarizing the regret from the initialization stage, the regret due to failure of the good event and the high-probability regret, the expected regret can be bounded by
        \begin{align}
             \mathbb{E}[\rmR(\Tp)]
             &
             \leq \mathbb{E}[\rmR_1(\Tp)|\mathcal{E}]\mathbb{P}[\mathcal{E}]+\mathbb{E}[\rmR_2(\Tp)|\mathcal{E}]\mathbb{P}[\mathcal{E}]+2\mu^\star \cdot TL\left(\xi(\omega_\mu)+2\xi(\omega_v)+2\xi(\omega_v^\prime)\right)+2\mu^\star L
             \\
            &\leq \mathbb{E}[\rmR_1(\Tp)|\mathcal{E}]+\mathbb{E}[\rmR_2(\Tp)|\mathcal{E}]
            +2\mu^\star \cdot TL\left(\xi(\omega_\mu)+2\xi(\omega_v)+2\xi(\omega_v^\prime)\right)+2\mu^\star L
              \\
             &= \mathbb{E}[\rmR_1(T^\prime)|\mathcal{E}]+\mathbb{E}[\rmR_2(T^\prime)|\mathcal{E}]+\rmR_3(T)
        \end{align}
        \end{proof}

    \subsection{Regret due to suboptimality}\label{sec:proof_UpBd_sub}
    \lemsubreg*
        \begin{proof}[Proof of Lemma~\ref{lem:subreg}]
        Notice the fact that the suboptimal solution $A_\phase$ can be safe, i.e. $A_\phase\in\Safeset\cap\Subopt$, or unsafe, i.e., $A_\phase\in\Safeset^c\cap\Subopt$, we upper bound the regret under these the two scenarios separately.

        
        \noindent
        \textbf{\underline{Case $1$: $A_p\in\Safeset\cap\Subopt$}}

        By the definition of the event $\mathcal{G}_{j,\phase}^\mu$, we have 
        $$|G_{j,\phase}^\mu|=\left|\left\{i\in A_\phase\setminus S^\star: T_i(\phase-1)\leq m_j(\Delta_{A_\phase},\omega_\mu)\right\}\right|\leq b_jK$$
        which indicates
        \begin{align}\label{equ:factofGmu}
            \mathbbm{1}\left\{A_\phase\in\Safeset\cap\Subopt,\mathcal{G}_{j,\phase}^\mu\right\}
            \leq
            \frac{1}{b_j K}\sum_{i\in A_\phase\setminus S^\star}\mathbbm{1}\left\{A_\phase\in\Safeset\cap\Subopt,T_i(\phase-1)\leq m_j(\Delta_{A_\phase},\omega_\mu)\right\}.
        \end{align}
        Given an item $i\in E\setminus S^\star$, assume it is included $v_i$ solutions in $\Safeset\cap\Subopt$, we index them according to the decreasing order of their mean gaps, i.e. $\{A^{i,k}\}_{k\in [v_i]}$ with $\Delta_{A^{i,1}}\geq \ldots \geq \Delta_{A^{i,v_i}}$.
        Therefore,
        \begin{align}
           &\sum_{\phase=1}^T
            \mathbbm{1}\left\{A_\phase\in\Safeset\cap\Subopt\right\}
            \Delta_{A_\phase}\cdot
             \mathbbm{1}
			\left\{\mathcal{E}\right\}
            \\
            &\overset{(a)}{\leq}
            \sum_{\phase=1}^T
            \mathbbm{1}\left\{A_\phase\in\Safeset\cap\Subopt,\mathcal{F}_\phase^\mu\right\}
            \Delta_{A_\phase}
            \\
            &\overset{(b)}{\leq}
            \sum_{\phase=1}^T   \sum_{j\in\mathbb{N}}
            \mathbbm{1}\left\{A_\phase\in\Safeset\cap\Subopt,\mathcal{G}_{j,\phase}^\mu\right\}
            \Delta_{A_\phase}
            \\
            &\leq
            \sum_{\phase=1}^T   \sum_{j\in\mathbb{N}} 
            \frac{1}{b_j K} \sum_{i\in A_\phase\setminus S^\star}
            \mathbbm{1}\left\{A_\phase\in\Safeset\cap\Subopt,T_i(\phase-1)\leq m_j(\Delta_{A_\phase},\omega_\mu)\right\}
            \Delta_{A_\phase}
            \\
            &\leq
            \sum_{\phase=1}^T  
            \sum_{j\in\mathbb{N}} 
            \frac{1}{b_j K} 
            \sum_{i\in E\setminus S^\star}
            \sum_{k\in[v_i]}
            \mathbbm{1}\left\{A^{i,k}=A_\phase,i\in A_\phase,T_i(\phase-1)\leq m_j(\Delta_{A^{i,k}},\omega_\mu)\right\}
            \Delta_{A^{i,k}}
            \\
            &\leq
            \sum_{i\in E\setminus S^\star}
            \sum_{j\in\mathbb{N}} 
            \sum_{\phase=1}^T 
            \sum_{k\in[v_i]}
            \frac{\Delta_{A^{i,k}}}{b_j K} 
            \mathbbm{1}\left\{A^{i,k}=A_\phase,i\in A^{i,k},T_i(\phase-1)\leq 
            \frac{a_j\cdot \gamma K^2}{\Delta_{A^{i,k}}^2}\left(2\ln\frac{1}{\omega_\mu}+\ln\ln_+\frac{1}{\Delta_{A^{i,k}}^2}+\upbdcstt\right)
            \right\}
            \\
            &\overset{(c)}{\leq}
            \sum_{i\in E\setminus S^\star}
            \sum_{j\in\mathbb{N}} 
            \sum_{\phase=1}^T 
            \sum_{k\in[v_i]}
            \frac{\Delta_{A^{i,k}}}{b_j K} 
            \mathbbm{1}\left\{A^{i,k}=A_\phase,i\in A^{i,k},T_i(\phase-1)\leq 
            \frac{a_j\cdot \gamma K^2}{\Delta_{A^{i,k}}^2}\left(2\ln\frac{1}{\omega_\mu}+\ln\ln_+\frac{1}{\Delta_{A^{i,v_i}}^2}+\upbdcstt\right)\right\}
            \\
            &\overset{(d)}{\leq}
            \sum_{i\in E\setminus S^\star}
            \sum_{j\in\mathbb{N}} 
            \frac{a_j\cdot \gamma K^2}{b_j K} 
            \left(2\ln\frac{1}{\omega_\mu}+\ln\ln_+\frac{1}{\Delta_{A^{i,v_i}}^2}+\upbdcstt\right)
            \left(
                \frac{1}{\Delta_{A^{i,v_i}}}+\sum_{k=2}^{v_i-1}\Delta_{A^{i,k+1}}
                \left(\frac{1}{\Delta^2_{A^{i,k+1}}}-\frac{1}{\Delta^2_{A^{i,k}}}\right)
            \right)
            \\
            &\leq
            \sum_{i\in E\setminus S^\star}
            \frac{2\abcstt\gamma K}{\Delta_{A^{i,v_i}}} 
            \left(2\ln\frac{1}{\omega_\mu}+\ln\ln_+\frac{1}{\Delta_{A^{i,v_i}}^2}+\upbdcstt\right)
		\end{align}
        where $(a)$ and $b$ make use of Lemma~\ref{lem:upbd_decomposition1} and Lemma~\ref{lem:FtoG}, $(c)$ is obtained by relaxing $\ln\ln_+\frac{1}{\Delta_{A^{i,k}}}$ to $\ln\ln_+\frac{1}{\Delta_{A^{i,v_i}}}$, $(d)$ is obtained by solving the optimization problem.
        
        \noindent
        \textbf{\underline{Case $2$: $A_p\in\Safeset^c\cap\Subopt$}}
        
        For the case where $\omega_\mu\leq\omega_\rmv^\prime$, denote
        $\bomega:=\sqrt{
            \frac{\ln\frac{1}{\omega_\rmv^\prime}}{\ln\frac{1}{\omega_\mu}}
            }
            $. 
        Given an item $i\in E$, assume it is included $v_i$ solutions in $\Safeset^c\cap\Subopt$, we index them according to the decreasing order of their gaps 
        $ \bDelta_{A}:=\max\left\{\bomega\Delta_{A},\frac{\Deltav_{A}}{3}\right\}$
        , i.e. $\{A^{i,k}\}_{k\in [v_i]}$ with $\bDelta_{A^{i,1}}\geq \ldots \geq \bDelta_{A^{i,v_i}}$.
        Denote $ c_{i}:=\max_{k\in[v_i]}\left(\frac{\Delta_{A^{i,k}}}{\bDelta_{{A^{i,k}}}}\right)^2$,
        $k_{i}=\argmax_{k\in[v_i]}\Delta_{A^{i,k}}$ and
        $d_i=\min_{k\in[v_i]}\Delta_{A^{i,k}}$, i.e., the minimum mean gap.

        We have 
        \begin{align}
           &\sum_{\phase=1}^T
            \mathbbm{1}\left\{A_\phase\in\Safeset^c\cap\Subopt\right\}
            \Delta_{A_\phase}\cdot
            \mathbbm{1}\left\{\mathcal{E}\right\}
            \\
            &\overset{(a)}{\leq}
            \sum_{\phase=1}^T
            \mathbbm{1}\left\{A_\phase\in\Safeset^c\cap\Subopt,\mathcal{F}_\phase^\mu,\mathcal{F}_\phase\left(\frac{\Deltav_{A_\phase}}{3},\omega_\rmv^\prime\right)\right\}
            \Delta_{A_\phase}
            \\
            &\overset{(b)}{\leq}
            \sum_{\phase=1}^T
            \mathbbm{1}\left\{A_\phase\in\Safeset^c\cap\Subopt,\mathcal{F}_\phase\left(
            \max\left\{\bomega\Delta_{A_\phase},\frac{\Deltav_{A_\phase}}{3}\right\}
            ,\omega_\rmv^\prime\right)\right\}
            \Delta_{A_\phase}
            \\
            &\overset{(c)}{\leq}
            \sum_{\phase=1}^T   \sum_{j\in\mathbb{N}}
            \mathbbm{1}\left\{A_\phase\in\Safeset^c\cap\Subopt,\mathcal{G}_{j,\phase}
            \left(
            \max\left\{\bomega\Delta_\mu,\frac{\Deltav_{A_\phase}}{3}\right\}
            ,\omega_\rmv^\prime\right)
            \right\}
            \Delta_{A_\phase}
            \\
            &\leq
            \sum_{\phase=1}^T   \sum_{j\in\mathbb{N}} 
            \frac{1}{b_j K} \sum_{i\in A_\phase}
            \mathbbm{1}\left\{A_\phase\in\Safeset^c\cap\Subopt,T_i(\phase-1)\leq 
            m_j\left(
                \bDelta_{A_\phase}
                ,\omega_\rmv^\prime
            \right)
            \right\}
            \Delta_{A_\phase}
            \\
            &=
            \sum_{\phase=1}^T  
            \sum_{j\in\mathbb{N}} 
            \frac{1}{b_j K} 
            \sum_{i\in E\setminus S^\star}
            \sum_{k\in[v_i]}
            \mathbbm{1}\left\{A^{i,k}=A_\phase,i\in A_\phase,T_i(\phase-1)\leq 
            m_j\left(
                \bDelta_{A_\phase}
                ,\omega_\rmv^\prime
            \right)
            \right\}
            \Delta_{A^{i,k}}
            \\
            &\leq
            \sum_{i\in E}
            \sum_{j\in\mathbb{N}} 
            \sum_{\phase=1}^T 
            \sum_{k\in[v_i]}
            \frac{\Delta_{A^{i,k}}}{b_j K} 
            \mathbbm{1}\left\{A^{i,k}=A_\phase,i\in A^{i,k},T_i(\phase-1)\leq 
            \frac{a_j\cdot \gamma K^2}{\bDelta_{A^{i,k}}^2}\left(2\ln\frac{1}{\omega_\rmv^\prime}+\ln\ln_+\frac{1}{\bDelta_{A^{i,k}}^2}+\upbdcstt\right)
            \right\}
            \\
            &\overset{(d)}{\leq}
            \sum_{i\in E}
            \sum_{j\in\mathbb{N}} 
            \sum_{\phase=1}^T 
            \sum_{k\in[v_i]}
            \frac{\Delta_{A^{i,k}}}{b_j K} 
            \mathbbm{1}\left\{A^{i,k}=A_\phase,i\in A^{i,k},T_i(\phase-1)\leq 
            \frac{a_j\cdot \gamma K^2}{\bDelta_{A^{i,k}}^2}\left(2\ln\frac{1}{\omega_\rmv^\prime}+\ln\ln_+\frac{1}{\bDelta_{A^{i,v_i}}^2}+\upbdcstt\right)\right\}
            \label{equ:ScapBoptimization}
            \\
            &\overset{(e)}{\leq}
            \sum_{i\in E}
            \sum_{j\in\mathbb{N}} 
            \frac{a_j\cdot \gamma K^2}{b_j K} 
            \left(2\ln\frac{1}{\omega_\rmv^\prime}+\ln\ln_+\frac{1}{\bDelta_{A^{i,v_i}}^2}+\upbdcstt\right)
            \left(
                \frac{c_{i}}{d_i}+c_{i}\cdot\left(\frac{1}{d_i}-\frac{1}{\Delta_{A^{i,k_i}}}\right)
            \right)
            \\
            &\leq
            \sum_{i\in E}
            \frac{2c_i\cdot \abcstt\gamma K}{d_i} 
            \left(2\ln\frac{1}{\omega_\rmv^\prime}+\ln\ln_+\frac{1}{\bDelta_{A^{i,v_i}}^2}+\upbdcstt\right)
		\end{align}
        where $(a)$ comes from Lemma~\ref{lem:upbd_decomposition1}, $(b)$ results from Lemma~\ref{lem:FFtoF}, $(c)$ is due to Lemma~\ref{lem:FtoG}, $(d)$ is obtained by relaxing $\ln\ln_+\frac{1}{\bDelta_{A^{i,k}}^2}$ to $\ln\ln_+\frac{1}{\bDelta_{A^{i,v_i}}^2}$ and $(e)$ is achieved by solving the optimization problem in \eqref{equ:ScapBoptimization}:
        \begin{align}
            &
            \sum_{\phase=1}^T 
            \sum_{k\in[v_i]}
            \frac{\Delta_{A^{i,k}}}{b_j K} 
            \mathbbm{1}\left\{A^{i,k}=A_\phase,i\in A^{i,k},T_i(\phase-1)\leq 
            \frac{a_j\cdot \gamma K^2}{\bDelta_{A^{i,k}}^2}\left(2\ln\frac{1}{\omega_\rmv^\prime}+\ln\ln_+\frac{1}{\bDelta_{A^{i,v_i}}^2}+\upbdcstt\right)\right\}
            \\
            &
            \leq
            \frac{a_j\cdot \gamma K^2}{b_j K} \left(2\ln\frac{1}{\omega_\rmv^\prime}+\ln\ln_+\frac{1}{\bDelta_{A^{i,v_i}}^2}+\upbdcstt\right)\cdot 
            \left(
                \frac{c_{i}}{d_i}+\int_{d_i}^{\Delta_{A^{i,k_i}}}\frac{c_i}{x^2}\,\mathrm{dx}
            \right)
            \\
            &=
            \frac{a_j\cdot \gamma K^2}{b_j K} 
            \left(2\ln\frac{1}{\omega_\rmv^\prime}+\ln\ln_+\frac{1}{\bDelta_{A^{i,v_i}}^2}+\upbdcstt\right)
            \left(
                \frac{c_{i}}{d_i}+c_{i}\cdot\left(\frac{1}{d_i}-\frac{1}{\Delta_{A^{i,k_i}}}\right)
            \right)
        \end{align}

        \textbf{In conclusion}, 
        for $i\in E\setminus S^\star$, denote 
        \begin{align}
            \Delta_{i,\Safeset\cap\Subopt,\min}:=\min_{S\ni i,S\in\Safeset\cap\Subopt}\Delta_S
        \end{align} 
        For $i\in E$, denote 
        \begin{align}
            \Delta_{i,\Safeset^c\cap\Subopt,\min}&:=\min_{S\ni i,S\in\Safeset^c\cap\Subopt}\Delta_S
            \\
            \bDelta_{i,\Safeset^c\cap\Subopt}^\prime&:=\min_{S\ni i,S\in\Safeset^c\cap\Subopt}\max
            \{\bomega\Delta_S,\Deltav_S/3\}
            \\
            c_i&:=\max_{S\ni i,S\in\Safeset^c\cap\Subopt}\left(\frac{\Delta_S}{\max\{\bomega\Delta_S,\Deltav_S/3\}}\right)^2
        \end{align}
        The regret due to suboptimality can be upper bounded by
        \begin{align}
            \rmR_1(T)&\leq
            \sum_{i\in E\setminus S^\star}
            \frac{2\abcstt\gamma K}{\Delta_{i,\Safeset\cap\Subopt,\min}} 
            \left(2\ln\frac{1}{\omega_\mu}+\ln\ln_+\frac{1}{\Delta_{i,\Safeset\cap\Subopt,\min}^2}+\upbdcstt\right)
            \\
            &\quad
            +
            \sum_{i\in E}
            \frac{2c_i\cdot \abcstt\gamma K}{\Delta_{i,\Safeset^c\cap\Subopt,\min}} 
            \left(2\ln\frac{1}{\omega_\rmv^\prime}+\ln\ln_+\frac{1}{(\bDelta_{i,\Safeset^c\cap\Subopt}^\prime)^2}+\upbdcstt\right)
        \end{align}

    \end{proof}
    
    \subsection{Regret due to safeness-checking}\label{sec:safe_reg}
    We firstly introduce two more technical lemmas in order to upper bound the regret.
    We characterize the number of sub-solutions $n_\phase$ in Algorithm~\ref{alg:split} by the following lemma.
	\begin{lem}\label{lem:np}
		At any phase $\phase$, we have $\mathbb{P}[n_\phase\leq Q]=1$. Furthermore, 
		if $U_{A_\phase}^\rmv(\phase-1)\in((m-1)\bsigma^2,m\bsigma^2]$ for some $m\in\mathbb{N}$, then $m \leq n_\phase\leq 2m-1.$
	\end{lem}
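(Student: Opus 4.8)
The plan is to establish two deterministic (probability-one) structural properties of the {\sc Greedy-Split} output and then combine them. Throughout, write $V := U_{A_\phase}^\rmv(\phase-1)$ for the total variance UCB of the selected solution, and record two facts valid at every phase: (i) by the definition in~\eqref{cbv}, each item satisfies $U_i^\rmv(\phase-1)\leq\sigma^2$ because of the truncation at $\sigma^2$; and (ii) by the acceptance test in Line~5 of Algorithm~\ref{alg:split}, every returned sub-solution obeys $U_{A_{\phase,r}}^\rmv(\phase-1)\leq\bsigma^2$ (a singleton bin is safe since $\sigma^2<\bsigma^2$, and the greedy test preserves the bound as items are added). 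Since the bins partition $A_\phase$, one also has $V=\sum_{r=1}^{n_\phase}U_{A_{\phase,r}}^\rmv(\phase-1)$.

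For the claim $n_\phase \leq Q$, I would first argue that each of the first $n_\phase-1$ bins contains at least $q=\lfloor\bsigma^2/\sigma^2\rfloor$ items. Indeed, if some non-final bin held at most $q-1$ items when the next item $i_s$ was rejected, then by (i) its variance UCB would be at most $(q-1)\sigma^2$, so adding $i_s$ would give at most $q\sigma^2\leq\bsigma^2$; the test in Line~5 would then have accepted $i_s$, a contradiction. Counting the items gives $(n_\phase-1)q+1\leq|A_\phase|\leq K$, hence $n_\phase\leq (K-1)/q+1=(K+q-1)/q$, and integrality of $n_\phase$ yields $n_\phase\leq\lfloor (K+q-1)/q\rfloor=\lceil K/q\rceil=Q$. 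As this uses only the deterministic bound (i), the inequality holds with probability one.

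For the refined bounds under $V\in((m-1)\bsigma^2,m\bsigma^2]$, the lower bound $n_\phase\geq m$ is immediate from (ii): $V=\sum_r U_{A_{\phase,r}}^\rmv\leq n_\phase\bsigma^2$ together with $V>(m-1)\bsigma^2$ forces $n_\phase>m-1$. The upper bound $n_\phase\leq 2m-1$ is the crux. The key observation is that whenever a new bin $A_{\phase,r+1}$ is opened, its first item $i_s$ failed the test against the now-closed bin $A_{\phase,r}$, i.e.\ $U_{A_{\phase,r}}^\rmv+U_{i_s}^\rmv>\bsigma^2$; since $i_s\in A_{\phase,r+1}$ gives $U_{i_s}^\rmv\leq U_{A_{\phase,r+1}}^\rmv$, consecutive bins satisfy $U_{A_{\phase,r}}^\rmv+U_{A_{\phase,r+1}}^\rmv>\bsigma^2$. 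Pairing the bins as $(1,2),(3,4),\dots$ produces $\lfloor n_\phase/2\rfloor$ disjoint consecutive pairs, so $V>\lfloor n_\phase/2\rfloor\,\bsigma^2$; comparing with $V\leq m\bsigma^2$ gives $\lfloor n_\phase/2\rfloor\leq m-1$, whence $n_\phase\leq 2\lfloor n_\phase/2\rfloor+1\leq 2m-1$.

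The main obstacle is spotting and correctly applying the consecutive-pair inequality that drives the upper bound; once it is in hand, the remaining arithmetic is routine. A secondary subtlety is the ceiling/integrality step in the proof of $n_\phase\leq Q$, where the boundary case $q\mid K$ must be treated with care so that the floor of $(K+q-1)/q$ collapses exactly to $\lceil K/q\rceil$.
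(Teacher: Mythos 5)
Your proof is correct. The first claim $n_\phase\leq Q$ and the lower bound $n_\phase\geq m$ follow essentially the paper's own reasoning: every non-final bin must contain at least $q$ items (because $U_i^\rmv(\phase-1)\leq\sigma^2$ and $q\sigma^2\leq\bsigma^2$, so a rejection from a bin with fewer than $q$ items is impossible), and every bin satisfies $U_{A_{\phase,r}}^\rmv(\phase-1)\leq\bsigma^2$, so $(m-1)\bsigma^2<U_{A_\phase}^\rmv(\phase-1)\leq n_\phase\bsigma^2$; you phrase the counting directly with a floor/ceiling identity where the paper argues by contradiction, but the idea is identical. Your proof of the upper bound $n_\phase\leq 2m-1$, however, takes a genuinely different route. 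The paper defines breakpoints $j_1<\cdots<j_k$ via running prefix sums of the $U_{i_s}^\rmv(\phase-1)$, shows $k\leq m$ by contradiction, and then exhibits an order-respecting splitting into the $k$ prefix segments plus the $k-1$ leftover singletons $\{i_{j_l}\}$, i.e., at most $2m-1$ pieces; concluding a bound on $n_\phase$ from the existence of \emph{some} such splitting tacitly uses the (unstated) fact that the next-fit rule in {\sc Greedy-Split} never produces more bins than any contiguous segmentation. You instead analyze the bins that {\sc Greedy-Split} actually outputs: the rejection that opens bin $r+1$ gives the consecutive-pair inequality $U_{A_{\phase,r}}^\rmv(\phase-1)+U_{A_{\phase,r+1}}^\rmv(\phase-1)>\bsigma^2$, summing over the $\lfloor n_\phase/2\rfloor$ disjoint pairs yields $U_{A_\phase}^\rmv(\phase-1)>\lfloor n_\phase/2\rfloor\bsigma^2$, and comparison with $U_{A_\phase}^\rmv(\phase-1)\leq m\bsigma^2$ finishes. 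This is the classical next-fit bin-packing argument; what it buys you is self-containedness (no appeal to optimality of greedy segmentation is needed, so your write-up is logically tighter than the paper's), while the paper's construction has the minor virtue that the same prefix decomposition simultaneously produces the witness for both the lower and upper bounds.
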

     	\begin{proof}[Proof of Lemma~\ref{lem:np}]
    		Recall that an absolutely safe solution $S$ is safe w.p. $1$ and $S\in\mathcal{S}_{\phase-1}$, thus $|A_{\phase,r}|\geq q,\forall r\in[n_\phase-1]$. 
    		If 
    		$n_\phase> Q$, i.e. $n_\phase\geq Q+1$, then
    		\begin{align}
    			K\geq |A_\phase|=\sum_{r=1}^{n_\phase}|A_{\phase,r}|> \sum_{r=1}^{Q}|A_{\phase,r}|\geq Q\cdot q\geq K
    		\end{align}
    		which constitutes a contradiction. Therefore, we have $\mathbb{P}[n_\phase\leq Q]=1$.
    
    		If $U_{A_\phase}^\rmv(\phase-1)\in((m-1)\bsigma^2,m\bsigma^2]$ for some $m\in\mathbb{N}_+$, we sequentially define $\{j_1,j_2,\ldots\}\subset [|A_\phase|]$ as follows: denote $j_0:=0$ and let $j_1$ be the integer such that 
    		$$\sum_{s=1}^{j_1-1}U_{i_s}^\rmv(\phase-1)\leq \bsigma^2\quad\mbox{and}\quad\sum_{s=1}^{j_1}U_{i_s}^\rmv(\phase-1)> \bsigma^2.$$
    		Then let $j_2>j_1$ be the integer such that
    		\begin{align}
    			\sum_{s=j_1+1}^{j_2-1}U_{i_s}^\rmv(\phase-1)\leq \bsigma^2\quad&\mbox{and}\quad\sum_{s=j_1+1}^{j_2}U_{i_s}^\rmv(\phase-1)> \bsigma^2.\\
    			&\vdots
    		\end{align}
    		The last integer $j_k=|A_\phase|$ satisfies
    		$$0<\sum_{s=j_{k-1}+1}^{j_k}U_{i_s}^\rmv(\phase-1)\leq \bsigma^2.$$
    		If $k\geq m+1$, we must have 
    		\begin{align}
    			U_{A_t}^\rmv(\phase-1)&=\sum_{l=1}^k\sum_{s=j_{l-1}+1}^{j_l}U_{i_s}^\rmv(\phase-1)\\
    			&\geq 
    			\sum_{l=1}^{m}\sum_{s=j_{l-1}+1}^{j_l}U_{i_s}^\rmv(\phase-1)>m\bsigma^2
    		\end{align}
    		which contradicts with $U_{A_\phase}^\rmv(\phase-1)\in((m-1)\bsigma^2,m\bsigma^2]$. Hence, $k\leq m$.
    		Then we construct the sub-solutions by 
    		\begin{align}
    			 &A_{\phase,l}=\{i_{j_{l-1}+1},\ldots,i_{j_l-1}\},\quad\forall l\in[k-1]\\
    			 &\mbox{and}\quad A_{\phase,k}=\{i_{j_{k-1}+1},\ldots,i_{j_k}\}.
    		\end{align}
    		There are $k-1$ items $\{i_{j_l}:l\in[k-1]\}$ left which will compose at most $k-1$ additional sub-solutions. 
    		In conclusion, we need at most $2m-1$ sub-solutions, i.e., $n_\phase\leq 2m-1$. Obviously, we need at least $m$ sub-solutions since $\bsigma^2>\sigma^2$. So $m\leq n_\phase\leq 2m-1$.
    	\end{proof}
    \begin{remark}
        Indexing the items in Line~$3$ of {\sc Greedy-Split} can be done arbitrarily, i.e., it does not require any specific order of the items. 
            As such, {\sc Greedy-Split} is an efficient greedy algorithm. We note that finding the optimal order that leads to the minimum number of sub-solutions $n_p$ is a combinatorial problem which is generally hard to solve.
    \end{remark}     
    Due to the fact that the upper confidence of any solution $S$ satisfies $U_S^\rmv(\phase)\leq Q\bsigma^2$, thus $n_\phase$ can at most be $2Q-1$.
    
        Lemma~\ref{lem:upbd_decomposition} implies the key to upper bound the regret due to safeness-checking is to upper bound $\sum_{r=1}^{Q-1}\mathbbm{1}\{\mathcal{U}_p(r)\}$ over the horizon $T$.
    From the definition of $\mathcal{U}_\phase(r):=\{U_{A_\phase}^\rmv(\phase-1)>r\bsigma^2\}$, for $r_1,r_2\in[Q-1]$ with $r_1>r_2$, event $\mathcal{U}_p(r_1)$ indicates event $\mathcal{U}_p(r_2)$. Thus in order to upper bound $\sum_{r=1}^{Q-1}\mathbbm{1}\{\mathcal{U}_p(r)\}$, it suffices to upper bound $\mathbbm{1}\{\mathcal{U}_p(r)\}$ for $r\in[Q-1]$. To be more specific, given $\addsln\in[Q-1]$, in order to compute the maximum number of times $\sum_{r=1}^{Q-1}\mathbbm{1}\{\mathcal{U}_\phase(r)\}\geq \addsln$, we only need to compute the maximum number of times event $\mathcal{U}_p(\addsln)$ occurs.

    In the following lemma, we show a necessary condition (in terms of event $\mathcal{F}_\phase(x,\omega)$) for event $\mathcal{U}_\phase(r)$.
	\begin{lem}\label{lem:upbd_decomposition2}
        On the event 
        $\mathcal{E} $, 
        \begin{itemize}
            \item for $A_p\in\Safeset$:
                \begin{align}
                    \mathbbm{1}\left\{\mathcal{U}_p(r)\right\}
                    \leq
                    \mathbbm{1}\left\{\mathcal{F}_p\left(\frac{(r-1)\bsigma^2+\Deltav_{A_p}}{3},\omega_\rmv\right)\right\}
                \end{align}
            \item for $A_p\in\Safeset^c$
                \begin{align}
                \mathbbm{1}\left\{\mathcal{U}_p(r)\right\}
                \leq
                \mathbbm{1}\left\{\mathcal{F}_p\left(\frac{(r-1)\bsigma^2-\Deltav_{A_p}}{3},\omega_\rmv\right)\right\}
            \end{align}
        \end{itemize}      
	\end{lem}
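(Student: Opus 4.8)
The plan is to translate the event $\mathcal{U}_p(r)=\{U^\rmv_{A_p}(p-1)>r\bsigma^2\}$, which is phrased in terms of the summed upper confidence bound on the variance, into a statement purely about the confidence radii $\mathrm{lil}(T_i(p-1),\omega_\rmv)$, which is exactly the shape of the event $\mathcal{F}_p(\cdot,\omega_\rmv)$. The bridge between the two is the good event $\mathcal{E}$, on which the empirical variance of each item is within $\beta_\rmu(T_i(p-1))=3\,\mathrm{lil}(T_i(p-1),\omega_\rmv)$ of its true value $\sigma_i^2$.

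First I would bound the per-item UCB. By the definition in \eqref{cbv} we have $U^\rmv_i(p-1)\leq \hat{\sigma}^2_i(p-1)+\beta_\rmu(T_i(p-1))$, discarding the harmless $\min$ with $\sigma^2$, and on $\mathcal{E}$ the factor event $\mathcal{E}^\rmv_{i,T_i(p-1)}(\omega_\rmv)$ gives $\hat{\sigma}^2_i(p-1)\leq \sigma_i^2+\beta_\rmu(T_i(p-1))$. Chaining these and recalling that $\beta_\rmu=3\,\mathrm{lil}(\cdot,\omega_\rmv)$ yields $U^\rmv_i(p-1)\leq \sigma_i^2+6\,\mathrm{lil}(T_i(p-1),\omega_\rmv)$. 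Summing over $i\in A_p$ and using $\sigma^2_{A_p}=\sum_{i\in A_p}\sigma_i^2$ gives $U^\rmv_{A_p}(p-1)\leq \sigma^2_{A_p}+6\sum_{i\in A_p}\mathrm{lil}(T_i(p-1),\omega_\rmv)$.

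Next I would impose $\mathcal{U}_p(r)$: combining $r\bsigma^2<U^\rmv_{A_p}(p-1)$ with the displayed bound and dividing by $3$ shows that $\frac{r\bsigma^2-\sigma^2_{A_p}}{3}<2\sum_{i\in A_p}\mathrm{lil}(T_i(p-1),\omega_\rmv)$, i.e.\ the event $\mathcal{F}_p\big(\tfrac{r\bsigma^2-\sigma^2_{A_p}}{3},\omega_\rmv\big)$ holds. It then remains only to rewrite $r\bsigma^2-\sigma^2_{A_p}$ through the variance gap. Since $\Deltav_{A_p}=|\sigma^2_{A_p}-\bsigma^2|$ and the instance assumption $\sigma^2_S\neq\bsigma^2$ forces strict inequalities, when $A_p\in\Safeset$ we have $\sigma^2_{A_p}=\bsigma^2-\Deltav_{A_p}$, so $r\bsigma^2-\sigma^2_{A_p}=(r-1)\bsigma^2+\Deltav_{A_p}$, and when $A_p\in\Safeset^c$ we have $\sigma^2_{A_p}=\bsigma^2+\Deltav_{A_p}$, so $r\bsigma^2-\sigma^2_{A_p}=(r-1)\bsigma^2-\Deltav_{A_p}$. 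Substituting these into the argument of $\mathcal{F}_p$ produces the two claimed inequalities.

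The argument is essentially bookkeeping, so I do not anticipate a serious obstacle; the only place demanding care is tracking the two separate appearances of $\beta_\rmu$ (one inside the definition of $U^\rmv_i$ and one inside the confidence bound on $\hat{\sigma}^2_i$), which together produce the factor $6=2\cdot 3$ and hence the division by $3$ in the argument of $\mathcal{F}_p$, as well as the sign flip on $\Deltav_{A_p}$ when passing from the safe to the unsafe case.
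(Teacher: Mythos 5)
Your proof is correct and takes essentially the same route as the paper's: drop the $\min$ in the definition of $U^\rmv_i$, invoke the good event $\mathcal{E}$ to replace the empirical variance by $\sigma_i^2+\beta_\rmu(T_i(p-1))$ (yielding the factor $2\cdot 3=6$ and hence the division by $3$), and rewrite $r\bsigma^2-\sigma^2_{A_p}$ as $(r-1)\bsigma^2\pm\Deltav_{A_p}$ according to whether $A_p\in\Safeset$ or $A_p\in\Safeset^c$. The only cosmetic difference is that you carry out the bounds item-by-item before summing, whereas the paper chains the same two inequalities directly at the solution level.
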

        \begin{proof}[Proof of Lemma~\ref{lem:upbd_decomposition2}]
             The proof is straightforward
            \begin{align}
                &U_{A_\phase}^\rmv(\phase-1)>r\bsigma^2
                \\
                \overset{(a)}{\Rightarrow}\quad
                &\hat{\sigma}_{A_\phase}^2(\phase-1)+\sum_{i\in A_\phase}\beta_\rmu(T_i(\phase-1))>r\bsigma^2
                \\
                \overset{(b)}{\Rightarrow}\quad
                &\sigma_{A_\phase}^2+2\sum_{i\in A_\phase}\beta_\rmu(T_i(\phase-1))>r\bsigma^2
                \\
                \Rightarrow\quad
                &2\sum_{i\in A_\phase}3\cdot \mathrm{lil}(T_i(\phase-1),\omega_\rmv)>(r-1)\bsigma^2+(\bsigma^2-\sigma_{A_\phase}^2)
            \end{align}
            where $(a)$ is due to the definition of the confidence bounds for a solution \eqref{equ:Aestimates}, and $(b)$ utilizes the event $\bigcap_{i\in A_{\phase}}\mathcal{E}_{i,T_i(\phase-1)}$.
            For $A_\phase\in\Safeset$, the above event is equivalent to
            $\mathcal{F}_\phase\left(\frac{(r-1)\bsigma^2+\Deltav_{A_\phase}}{3},\omega_\rmv\right)$; and for $A_p\in\Safeset^c$, it is equivalent to
            $\mathcal{F}_\phase\left(\frac{(r-1)\bsigma^2-\Deltav_{A_\phase}}{3},\omega_\rmv\right)$.
        \end{proof}
    The above lemma and Lemma~\ref{lem:upbd_decomposition1} upper bound the components in $\rmR_2(\Tp)$ by the $\mathcal{F}$ events.

    We are now ready to bound the regret due to safeness checking.

    \lemsafereg*
        \begin{proof}[Proof of Lemma~\ref{lem:safereg}]
    From Lemma~\ref{lem:upbd_decomposition}, the high-probability regret due to safeness-checking is
    \begin{align}
            \rmR_2(\Tp)
            &=
            \mu^\star \sum_{\phase=1}^{\Tp}
            \left[2\sum_{r=1}^{Q-1}\mathbbm{1}\left\{\mathcal{U}_p(r)\right\}\right]
            \\
            &=
            2\mu^\star \sum_{r=1}^{Q-1}
            \sum_{\phase=1}^{\Tp}
            \mathbbm{1}\left\{\mathcal{U}_p(r)\right\}
    \end{align} 
    In the following, given $r\in[Q-1]$, we are going to upper bound $ \sum_{\phase=1}^{\Tp}
            \mathbbm{1}\{\mathcal{U}_p(r)\}$ conditional on event $\mathcal{E}$.
    When $U_{A_\phase}^\rmv(p-1)>r\bsigma^2$ holds, there are at most $2r+1$ solutions being chosen at phase $p$, i.e. $n_p\leq 2r+1$, according to Lemma~\ref{lem:np}. 
    Therefore, we are also deriving an upper bound for number of phases in which there are at most $2r+1$ sub-solutions being sampled.

    

    The proof scheme is planned as follows:
    in \textbf{Step 1},
    we decompose the event $\mathcal{U}_p(r) $ into $4$ events according to where $A_\phase$ lies, i.e. (1) $A_\phase=S^\star$; (2) $\mathcal{S}\cap\mathcal{B}$; (3) $\mathcal{R}$ and (4) $\mathcal{S}^c\cap\mathcal{B}$. We will upper bound the regret under each of these cases.
    
    In  \textbf{Step 2}, we apply Lemma~\ref{lem:upbd_decomposition1} to upper bound the number of times a solution $A$ can be selected via the events $\mathcal{F}_\phase^\mu$ and $\mathcal{F}_\phase(x,\omega)$. Because there will be multiple $\mathcal{F}_\phase^\mu$ and $\mathcal{F}_\phase(x,\omega)$ events in the indicator function, Lemma~\ref{lem:FFtoF} will be adopted to merge them into one event.
    After that, Lemma~\ref{lem:FtoG} is utilized to bridge the number of times a solution is identified to the number of times of an item is sampled. At the end of this step, we conclude the number of times $\mathbbm{1}\{\mathcal{U}_p(r)\}$ occurs under the four cases.

    In \textbf{Step 3}, we upper bound $\sum_{r=1}^{Q-1}
            \sum_{\phase=1}^{\Tp}
            \mathbbm{1}\left\{\mathcal{U}_p(r)\right\}$
    based on the results from Step 2.

    \noindent
    \textbf{\underline{Step 1:}}
    We decompose  $ \sum_{\phase=1}^{\Tp}
            \mathbbm{1}\{\mathcal{U}_p(r)\}$ conditional on event $\mathcal{E}$ into four parts:
    \begin{align}
            &\sum_{\phase=1}^{\Tp}
            \mathbbm{1}\{\mathcal{U}_p(r)\}
            \\
            &\leq
            \sum_{\phase=1}^{\Tp}
            \mathbbm{1}\left\{ U_{A_\phase}^\rmv(p-1)>r\bsigma^2 \right\}
            \\
            &=
            \sum_{\phase=1}^{\Tp}
            \Big(\mathbbm{1}\left\{A_\phase=S^\star\right\}+\mathbbm{1}\left\{A_\phase\in\Safeset\cap\Subopt\right\}
            +\mathbbm{1}\left\{A_\phase\in\Riskset\right\}+\mathbbm{1}\left\{A_\phase\in\Safeset^c\cap\Subopt\right\}\Big)
               \cdot\mathbbm{1}\left\{ U_{A_\phase}^\rmv(p-1)>r\bsigma^2 \right\}
            \\
            &=
            \sum_{\phase=1}^{\Tp}
            \mathbbm{1}\left\{A_\phase=S^\star\right\}
                \mathbbm{1}\left\{ U_{A_\phase}^\rmv(p-1)>r\bsigma^2 \right\}
            +\sum_{\phase=1}^{\Tp}\mathbbm{1}\left\{A_\phase\in\Safeset\cap\Subopt\right\}
                \mathbbm{1}\left\{ U_{A_\phase}^\rmv(p-1)>r\bsigma^2 \right\}
            \\
            &\quad
            +\sum_{\phase=1}^{\Tp}\mathbbm{1}\left\{A_\phase\in\Riskset\right\}
                \mathbbm{1}\left\{ U_{A_\phase}^\rmv(p-1)>r\bsigma^2 \right\}
            +\sum_{\phase=1}^{\Tp}\mathbbm{1}\left\{A_\phase\in\Safeset^c\cap\Subopt\right\}\Big)
                \mathbbm{1}\left\{ U_{A_\phase}^\rmv(p-1)>r\bsigma^2 \right\}
            \\
    \end{align} 

    \noindent
    \textbf{\underline{Step 2:}} For each of the scenarios, we firstly upper bound the regret by the ``$\mathcal{F}$ events'' and they can be further bounded in terms of ``$\mathcal{G}$ events''.

    \noindent
    \underline{Case 1: $A_\phase=S^\star$}
    \begin{align}
        &\sum_{\phase=1}^{\Tp}
            \mathbbm{1}\left\{A_\phase=S^\star\right\}
                \mathbbm{1}\left\{ U_{A_\phase}^\rmv(\phase-1)>r\bsigma^2 \right\}
        \\
        &\overset{(a)}{\leq}
        \sum_{\phase=1}^{\Tp}
            \mathbbm{1}\left\{A_\phase=S^\star\right\}
                \mathbbm{1}\left\{\mathcal{F}_\phase\left(\frac{(r-1)\bsigma^2+\Deltav_{A_\phase}}{3},\omega_\rmv\right)\right\}
        \\
        &\overset{(b)}{\leq}
        \sum_{\phase=1}^{\Tp}
        \sum_{j\in\mathbb{N}}
            \mathbbm{1}\left\{A_\phase=S^\star\right\}
                \mathbbm{1}\left\{\mathcal{G}_{j,\phase}\left(\frac{(r-1)\bsigma^2+\Deltav_{A_\phase}}{3},\omega_\rmv\right)\right\}
        \\
        &\overset{(c)}{\leq}
        \sum_{\phase=1}^{\Tp}
        \sum_{j\in\mathbb{N}}\mathbbm{1}\left\{A_\phase=S^\star\right\}
        \frac{1}{b_j K} \sum_{i\in A_\phase}
                \mathbbm{1}\left\{T_i(\phase-1)\leq m_j\left(\frac{(r-1)\bsigma^2+\Deltav_{A_\phase}}{3},\omega_\rmv\right)\right\}
        \\
        &=
        \sum_{i\in S^\star}
        \sum_{j\in\mathbb{N}}
        \sum_{\phase=1}^{\Tp}
        \frac{1}{b_j K} 
                \mathbbm{1}\left\{T_i(\phase-1)\leq m_j\left(\frac{(r-1)\bsigma^2+\Deltav_{S^\star}}{3},\omega_\rmv\right)\right\}
        \\
        &\leq
        \sum_{i\in S^\star}
        \sum_{j\in\mathbb{N}}
        \frac{1}{b_j K} 
         m_j\left(\frac{(r-1)\bsigma^2+\Deltav_{S^\star}}{3},\omega_\rmv\right)
         \\
         &=
        \sum_{i\in S^\star}
        \sum_{j\in\mathbb{N}}
        \frac{a_j\cdot \gamma K^2}{b_j K} 
        \frac{9}{((r-1)\bsigma^2+\Deltav_{S^\star})^2}\left(2\ln\frac{1}{\omega_\rmv}+\ln\ln_+\frac{1}{((r-1)\bsigma^2+\Deltav_{S^\star})^2}+\upbdcstt\right)
         \\
         &\leq
        \sum_{i\in S^\star}
        \abcstt\cdot
        \frac{9\cdot\gamma K}{((r-1)\bsigma^2+\Deltav_{S^\star})^2}
        \left(2\ln\frac{1}{\omega_\rmv}+\ln\ln_+\frac{1}{{\Deltav_{S^\star}}^2}+\upbdcstt\right)
    \end{align}
    where $(a)$ utilizes Lemma~\ref{lem:upbd_decomposition2}, $(b)$ makes use of Lemma~\ref{lem:FtoG} and $(c)$ follows the definition of $\mathcal{G}_{j,\phase}$.
    For simplicity, we denote 
    \begin{align}
        g_{S^\star}(r,\Deltav_{S^\star})=
        \abcstt\cdot
        \frac{9\cdot\gamma K}{((r-1)\bsigma^2+\Deltav_{S^\star})^2}
        \left(2\ln\frac{1}{\omega_\rmv}+\ln\ln_+\frac{1}{{\Deltav_{S^\star}}^2}+\upbdcstt\right).
    \end{align}
    Thus
    \begin{align}
        \sum_{\phase=1}^{\Tp}
        \mathbbm{1}\left\{A_\phase=S^\star\right\}
        \mathbbm{1}\left\{ U_{A_\phase}^\rmv(\phase-1)>r\bsigma^2 \right\}
            \leq
            \sum_{i\in S^\star}g_{S^\star}(r,\Deltav_{S^\star})
    \end{align}
    \noindent
    \underline{Case 2: $A_\phase\in\Safeset\cap\Subopt$}
    
    Under this case, there will be a comparison between $\omega_\mu$ and $\omega_\rmv$ thus we denote
    $\tomega=\sqrt{
        \frac{\ln\frac{1}{\omega_\mu}}{\ln\frac{1}{\omega_\rmv}}}$.
    For $i\in E$, denote $\bDelta_{i,\Safeset\cap\Subopt}:=\min_{S\ni i,S\in\Safeset\cap\Subopt}\max\left\{\frac{\Delta_{S}}{\sqrt{\ln(1/\omega_\mu)}},\frac{\Deltav_{S}}{3\sqrt{\ln(1/\omega_\rmv)}}\right\}$ which is achieved by solution $S_{i,\Safeset\cap\Subopt}$,
    and assume $\bDelta_{i,\Safeset\cap\Subopt}\in(\frac{r_i\bsigma^2}{3\sqrt{\ln(1/\omega_\rmv)}},\frac{(r_i+1)\bsigma^2}{3\sqrt{\ln(1/\omega_\rmv)}}]$ for some $r_i\in\mathbb{N}$.
    
    \textbf{Scenario 1: $\omega_\mu\geq\omega_\rmv$}
    
    We firstly deal with the case where $\omega_\mu\geq\omega_\rmv$, i.e., $\tomega\leq 1$. 
    We have
    \begin{align}
        &\sum_{\phase=1}^{\Tp}
            \mathbbm{1}\left\{A_\phase\in\Safeset\cap\Subopt\right\}
                \mathbbm{1}\left\{ U_{A_\phase}^\rmv(\phase-1)>r\bsigma^2 \right\}
        \\
        &\overset{(a)}{\leq}
        \sum_{\phase=1}^{\Tp}
            \mathbbm{1}\left\{A_\phase\in\Safeset\cap\Subopt\right\}
                \mathbbm{1}\left\{\mathcal{F}_\phase^\mu,\mathcal{F}_\phase\left(\frac{(r-1)\bsigma^2+\Deltav_{A_\phase}}{3},\omega_\rmv\right)\right\}
        \\
        &\overset{(b)}{\leq}
        \sum_{\phase=1}^{\Tp}
            \mathbbm{1}\left\{A_\phase\in\Safeset\cap\Subopt\right\}
                \mathbbm{1}\left\{\mathcal{F}_\phase\left(\Delta_{A_\phase},\omega_\mu\right),\mathcal{F}_\phase\left(\frac{(r-1)\bsigma^2+\Deltav_{A_\phase}}{3},\omega_\rmv\right)\right\}
        \\
        &\overset{(c)}{\leq}
        \sum_{\phase=1}^{\Tp}
            \mathbbm{1}\left\{A_\phase\in\Safeset\cap\Subopt\right\}
                \mathbbm{1}\left\{\mathcal{F}_\phase\left(\max\left\{\Delta_{A_\phase},\tomega\frac{(r-1)\bsigma^2+\Deltav_{A_\phase}}{3}\right\},\omega_\mu\right)\right\}
        \\
        &\overset{(d)}{\leq}
        \sum_{\phase=1}^{\Tp}
        \sum_{j\in\mathbb{N}}
            \mathbbm{1}\left\{A_\phase\in\Safeset\cap\Subopt\right\}
                \mathbbm{1}\left\{\mathcal{G}_{j,\phase}\left(\max\left\{\Delta_{A_\phase},\tomega\frac{(r-1)\bsigma^2+\Deltav_{A_\phase}}{3}\right\},\omega_\mu\right)\right\}
        \\
        &\overset{(e)}{\leq}
       \sum_{\phase=1}^{\Tp}
        \sum_{j\in\mathbb{N}}\mathbbm{1}\left\{A_\phase\in\Safeset\cap\Subopt\right\}
        \frac{1}{b_j K} \sum_{i\in A_\phase}
                \mathbbm{1}\left\{T_i(\phase-1)\leq m_j\left(\max\left\{\Delta_{A_\phase},\tomega\frac{(r-1)\bsigma^2+\Deltav_{A_\phase}}{3}\right\},\omega_\mu\right)\right\}
        \\
        &=
        \sum_{i\in E}
        \sum_{j\in\mathbb{N}}\frac{1}{b_j K}
        \sum_{\phase=1}^{\Tp}
        \mathbbm{1}\left\{A_\phase\in\Safeset\cap\Subopt\right\}
                \mathbbm{1}\left\{i\in A_\phase,T_i(\phase-1)\leq m_j\left(\max\left\{\Delta_{A_\phase},\tomega\frac{(r-1)\bsigma^2+\Deltav_{A_\phase}}{3}\right\},\omega_\mu\right)\right\}
        \label{equ:discussiononri}
        \end{align}
        where $(a)$ utilizes Lemma~\ref{lem:upbd_decomposition2},
    $(b)$ and $(c)$ make use of Lemma~\ref{lem:FFtoF}, 
    $(d)$ is due to Lemma~\ref{lem:FtoG} 
    and $(e)$ follows the definition of $\mathcal{G}_{j,\phase}$.

    Given $i\in E$,
    
    (1) if $r\geq r_i+2$, for any $S\in\Safeset\cap\Subopt$ that contains item $i$,
        \begin{align}
            \max\left\{\Delta_{S},\tomega\frac{(r-1)\bsigma^2+\Deltav_{S}}{3}\right\}
        \geq
        \tomega\frac{(r-1)\bsigma^2}{3}
        \geq
        \tomega\frac{(r_i+1)\bsigma^2}{3}
        \geq
        \sqrt{\ln\frac{1}{\omega_\mu}}\cdot\bDelta_{i,\Safeset\cap\Subopt}.
        \end{align}
        Thus,
    \begin{align}
        &\sum_{j\in\mathbb{N}}\frac{1}{b_j K}
        \sum_{\phase=1}^{\Tp}
        \mathbbm{1}\left\{A_\phase\in\Safeset\cap\Subopt\right\}
                \mathbbm{1}\left\{i\in A_\phase,T_i(\phase-1)\leq m_j\left(\max\left\{\Delta_{A_\phase},\tomega\frac{(r-1)\bsigma^2+\Deltav_{A_\phase}}{3}\right\},\omega_\mu\right)\right\}
        \\
        &
        \sum_{j\in\mathbb{N}}\frac{1}{b_j K}
        \sum_{\phase=1}^{\Tp}
        \mathbbm{1}\left\{A_\phase\in\Safeset\cap\Subopt\right\}
                \mathbbm{1}\left\{i\in A_\phase,T_i(\phase-1)\leq m_j\left(\tomega\frac{(r-1)\bsigma^2}{3},\omega_\mu\right)\right\}
        \\
        &=
        \sum_{j\in\mathbb{N}}
        \frac{1}{b_j K} 
                m_j\left(\tomega\frac{(r-1)\bsigma^2}{3},\omega_\mu\right)
        \\
        &=
        \sum_{j\in\mathbb{N}}
        \frac{a_j\cdot \gamma K^2}{b_j K} 
        \frac{9}{((r-1)\bsigma^2)^2}\left(2\ln\frac{1}{\omega_\rmv}+\frac{1}{\tomega^2}\ln\ln_+\frac{9}{(\tomega(r-1)\bsigma^2)^2}+\frac{1}{\tomega^2}\upbdcstt\right)
         \\
         &\leq
        \abcstt\cdot
        \frac{9\cdot\gamma K}{((r-1)\bsigma^2)^2}
        \left(2\ln\frac{1}{\omega_\rmv}+\frac{1}{\tomega^2}\ln\ln_+\frac{1}{\ln(1/\omega_\mu)\bDelta_{i,\Safeset\cap\Subopt}^2}+\frac{1}{\tomega^2}\upbdcstt\right)
        \\
        &=
        \abcstt\cdot
        \frac{\gamma K}{(\frac{(r-1)\bsigma^2}{3\sqrt{\ln(1/\omega_\rmv)}})^2}
        \left(2+\frac{1}{\ln(1/\omega_\mu)}\ln\ln_+\frac{1}{\ln(1/\omega_\mu)\bDelta_{i,\Safeset\cap\Subopt}^2}+\frac{1}{\ln(1/\omega_\mu)}\upbdcstt\right)
    \end{align}
    (2) if $r\leq r_i+1$, for any $S\in\Safeset\cap\Subopt$ that contains item $i$
        \begin{align}
            \max\left\{\Delta_{S},\tomega\frac{(r-1)\bsigma^2+\Deltav_{S}}{3}\right\}
        \geq
        \max\left\{\Delta_{S},\tomega\frac{\Deltav_{S}}{3}\right\}
        \geq
        \sqrt{\ln\frac{1}{\omega_\mu}}\cdot\bDelta_{i,\Safeset\cap\Subopt}.
        \end{align}
    Thus,
    \begin{align}
        &
        \sum_{j\in\mathbb{N}}\frac{1}{b_j K}
        \sum_{\phase=1}^{\Tp}
        \mathbbm{1}\left\{A_\phase\in\Safeset\cap\Subopt\right\}
                \mathbbm{1}\left\{i\in A_\phase,T_i(\phase-1)\leq m_j\left(\max\left\{\Delta_{A_\phase},\tomega\frac{(r-1)\bsigma^2+\Deltav_{A_\phase}}{3}\right\},\omega_\mu\right)\right\}
        \\
        &\leq
        \sum_{j\in\mathbb{N}}\frac{1}{b_j K}
        \sum_{\phase=1}^{\Tp}
        \mathbbm{1}\left\{A_\phase\in\Safeset\cap\Subopt\right\}
                \mathbbm{1}\left\{i\in A_\phase,T_i(\phase-1)\leq m_j\left(\sqrt{\ln\frac{1}{\omega_\mu}}\cdot\bDelta_{i,\Safeset\cap\Subopt},\omega_\mu\right)\right\}
        \\
        & \overset{(a)} {\leq}
        \sum_{j\in\mathbb{N}}
        \frac{1}{b_j K} 
                m_j\left(\sqrt{\ln\frac{1}{\omega_\mu}}\cdot\bDelta_{i,\Safeset\cap\Subopt},\omega_\mu\right)
        \\
     &=
        \sum_{j\in\mathbb{N}}
        \frac{a_j\cdot \gamma K^2}{b_j K} 
        \frac{1}{(\sqrt{\ln\frac{1}{\omega_\mu}}\cdot\bDelta_{i,\Safeset\cap\Subopt})^2}\left(2\ln\frac{1}{\omega_\mu}+\ln\ln_+\frac{1}{(\sqrt{\ln\frac{1}{\omega_\mu}}\cdot\bDelta_{i,\Safeset\cap\Subopt})^2}+\upbdcstt\right)
         \\
         &\leq
        \abcstt\cdot
        \frac{\gamma K}{\bDelta_{i,\Safeset\cap\Subopt}^2}
        \left(2
        +\frac{1}{\ln(1/\omega_\mu)}\ln\ln_+\frac{1}{\ln(1/\omega_\mu)\bDelta_{i,\Safeset\cap\Subopt}^2}+\frac{1}{\ln(1/\omega_\mu)}\upbdcstt\right)
    \end{align}
    where $(a)$ is achieved by sampling $A_{i,\Safeset\cap\Subopt}$.
    Therefore, if we denote $g_{\Safeset\cap\Subopt,1}(r,\bDelta_{i,\Safeset\cap\Subopt}):= $
        \begin{align}
            \left\{
            \begin{aligned}
                &
                \frac{\abcstt\gamma K}{(\frac{(r-1)\bsigma^2}{3\sqrt{\ln(1/\omega_\rmv)}})^2}
                \left(2+\frac{1}{\ln(1/\omega_\mu)}\ln\ln_+\frac{1}{\ln(1/\omega_\mu)\bDelta_{i,\Safeset\cap\Subopt}^2}+\frac{1}{\ln(1/\omega_\mu)}\upbdcstt\right), 
                r\geq \left\lfloor\frac{3\sqrt{\ln(1/\omega_\rmv)}\cdot\bDelta_{i,\Safeset\cap\Subopt} }{\bsigma^2}\right\rfloor+2
                \\
                &
                \frac{\abcstt\gamma K}{\bDelta_{i,\Safeset\cap\Subopt}^2}
                \left(2
                +\frac{1}{\ln(1/\omega_\mu)}\ln\ln_+\frac{1}{\ln(1/\omega_\mu)\bDelta_{i,\Safeset\cap\Subopt}^2}+\frac{1}{\ln(1/\omega_\mu)}\upbdcstt\right),
                r\leq \left\lfloor\frac{3\sqrt{\ln(1/\omega_\rmv)}\cdot\bDelta_{i,\Safeset\cap\Subopt} }{\bsigma^2}\right\rfloor+1
            \end{aligned}
            \right.
        \end{align}
        then \eqref{equ:discussiononri} can be upper bounded by
        $$\sum_{i\in E}g_{\Safeset\cap\Subopt,1}(r,\bDelta_{i,\Safeset\cap\Subopt})$$

    \textbf{Scenario 2: $\omega_\mu\leq\omega_\rmv$}
    
    For the case where $\omega_\mu\leq\omega_\rmv$, i.e., $\tomega\geq 1$. 
    We have
    \begin{align}
        &\sum_{\phase=1}^{\Tp}
            \mathbbm{1}\left\{A_\phase\in\Safeset\cap\Subopt\right\}
                \mathbbm{1}\left\{ U_{A_\phase}^\rmv(\phase-1)>r\bsigma^2 \right\}
        \\
        &\overset{(a)}{\leq}
        \sum_{\phase=1}^{\Tp}
            \mathbbm{1}\left\{A_\phase\in\Safeset\cap\Subopt\right\}
                \mathbbm{1}\left\{\mathcal{F}_\phase^\mu,\mathcal{F}_\phase\left(\frac{(r-1)\bsigma^2+\Deltav_{A_\phase}}{3},\omega_\rmv\right)\right\}
        \\
        &\overset{(b)}{\leq}
        \sum_{\phase=1}^{\Tp}
            \mathbbm{1}\left\{A_\phase\in\Safeset\cap\Subopt\right\}
                \mathbbm{1}\left\{\mathcal{F}_\phase\left(\Delta_{A_\phase},\omega_\mu\right),\mathcal{F}_\phase\left(\frac{(r-1)\bsigma^2+\Deltav_{A_\phase}}{3},\omega_\rmv\right)\right\}
        \\
        &\overset{(c)}{\leq}
        \sum_{\phase=1}^{\Tp}
            \mathbbm{1}\left\{A_\phase\in\Safeset\cap\Subopt\right\}
                \mathbbm{1}\left\{\mathcal{F}_\phase\left(\max\left\{\frac{\Delta_{A_\phase}}{\tomega},\frac{(r-1)\bsigma^2+\Deltav_{A_\phase}}{3}\right\},\omega_\rmv\right)\right\}
        \\
        &\overset{(d)}{\leq}
        \sum_{\phase=1}^{\Tp}
        \sum_{j\in\mathbb{N}}
            \mathbbm{1}\left\{A_\phase\in\Safeset\cap\Subopt\right\}
                \mathbbm{1}\left\{\mathcal{G}_{j,\phase}\left(\max\left\{\frac{\Delta_{A_\phase}}{\tomega},\frac{(r-1)\bsigma^2+\Deltav_{A_\phase}}{3}\right\},\omega_\rmv\right)\right\}
        \\
        &\overset{(e)}{\leq}
       \sum_{\phase=1}^{\Tp}
        \sum_{j\in\mathbb{N}}\mathbbm{1}\left\{A_\phase\in\Safeset\cap\Subopt\right\}
        \frac{1}{b_j K} \sum_{i\in A_\phase}
                \mathbbm{1}\left\{T_i(\phase-1)\leq m_j\left(\max\left\{\frac{\Delta_{A_\phase}}{\tomega},\frac{(r-1)\bsigma^2+\Deltav_{A_\phase}}{3}\right\},\omega_\rmv\right)\right\}
        \\
        &=
        \sum_{i\in E}
        \sum_{j\in\mathbb{N}}\frac{1}{b_j K}
        \sum_{\phase=1}^{\Tp}
        \mathbbm{1}\left\{A_\phase\in\Safeset\cap\Subopt\right\}
                \mathbbm{1}\left\{i\in A_\phase,T_i(\phase-1)\leq m_j\left(\max\left\{\frac{\Delta_{A_\phase}}{\tomega},\frac{(r-1)\bsigma^2+\Deltav_{A_\phase}}{3}\right\},\omega_\rmv\right)\right\}
        \label{equ:discussiononri2}
        \end{align}

        Given $i\in E$,
        
        (1) if $r\geq r_i+2$, for any $S\in\Safeset\cap\Subopt$ that contains item $i$,
            \begin{align}
                \max\left\{\frac{\Delta_{S}}{\tomega},\frac{(r-1)\bsigma^2+\Deltav_{S}}{3}\right\}
            \geq
            \frac{(r-1)\bsigma^2}{3}
            \geq
            \frac{(r_i+1)\bsigma^2}{3}
            \geq
            \sqrt{\ln\frac{1}{\omega_\rmv}}\cdot\bDelta_{i,\Safeset\cap\Subopt}.
            \end{align}
            Thus, 
    \begin{align}
        &\sum_{j\in\mathbb{N}}\frac{1}{b_j K}
        \sum_{\phase=1}^{\Tp}
        \mathbbm{1}\left\{A_\phase\in\Safeset\cap\Subopt\right\}
                \mathbbm{1}\left\{i\in A_\phase,T_i(\phase-1)\leq m_j\left(\max\left\{\frac{\Delta_{A_\phase}}{\tomega},\frac{(r-1)\bsigma^2+\Deltav_{A_\phase}}{3}\right\},\omega_\rmv\right)\right\}
        \\
        &
        \leq\sum_{j\in\mathbb{N}}\frac{1}{b_j K}
        \sum_{\phase=1}^{\Tp}
        \mathbbm{1}\left\{A_\phase\in\Safeset\cap\Subopt\right\}
                \mathbbm{1}\left\{i\in A_\phase,T_i(\phase-1)\leq m_j\left(\frac{(r-1)\bsigma^2}{3},\omega_\rmv\right)\right\}
        \\
        &=
        \sum_{j\in\mathbb{N}}
        \frac{1}{b_j K} 
                m_j\left(\frac{(r-1)\bsigma^2}{3},\omega_\rmv\right)
        \\
     &=
        \sum_{j\in\mathbb{N}}
        \frac{a_j\cdot \gamma K^2}{b_j K} 
        \frac{9}{((r-1)\bsigma^2)^2}\left(2\ln\frac{1}{\omega_\rmv}+\ln\ln_+\frac{9}{((r-1)\bsigma^2)^2}+\upbdcstt\right)
         \\
         &\leq
        \abcstt\cdot
        \frac{9\cdot\gamma K}{((r-1)\bsigma^2)^2}
        \left(2\ln\frac{1}{\omega_\rmv}+\ln\ln_+\frac{1}{\ln(1/\omega_\rmv)\bDelta_{i,\Safeset\cap\Subopt}^2}+\upbdcstt\right)
        \\
        &=
        \abcstt\cdot
        \frac{\gamma K}{(\frac{(r-1)\bsigma^2}{3\sqrt{\ln(1/\omega_\rmv)}})^2}
        \left(2+\frac{1}{\ln(1/\omega_\rmv)}\ln\ln_+\frac{1}{\ln(1/\omega_\rmv)\bDelta_{i,\Safeset\cap\Subopt}^2}+\frac{1}{\ln(1/\omega_\rmv)}\upbdcstt\right)
    \end{align}

    (2) if $r\leq r_i+1$,  for any $S\in\Safeset\cap\Subopt$ that contains item $i$,
    \begin{align}
         \max\left\{\frac{\Delta_{S}}{\tomega},\frac{(r-1)\bsigma^2+\Deltav_{S}}{3}\right\}
    \geq
    \max\left\{\frac{\Delta_{S}}{\tomega},\frac{\Deltav_{S}}{3}\right\}
    \geq
    \sqrt{\ln\frac{1}{\omega_\rmv}}\cdot\bDelta_{i,\Safeset\cap\Subopt}.
    \end{align}
    Thus,
    \begin{align}
        &\sum_{j\in\mathbb{N}}\frac{1}{b_j K}
        \sum_{\phase=1}^{\Tp}
        \mathbbm{1}\left\{A_\phase\in\Safeset\cap\Subopt\right\}
                \mathbbm{1}\left\{i\in A_\phase,T_i(\phase-1)\leq m_j\left(\max\left\{\frac{\Delta_{A_\phase}}{\tomega},\frac{(r-1)\bsigma^2+\Deltav_{A_\phase}}{3}\right\},\omega_\rmv\right)\right\}
        \\
        &\leq
        \sum_{j\in\mathbb{N}}\frac{1}{b_j K}
        \sum_{\phase=1}^{\Tp}
        \mathbbm{1}\left\{A_\phase\in\Safeset\cap\Subopt\right\}
                \mathbbm{1}\left\{i\in A_\phase,T_i(\phase-1)\leq m_j\left(\sqrt{\ln\frac{1}{\omega_\rmv}}\cdot\bDelta_{i,\Safeset\cap\Subopt},\omega_\rmv\right)\right\}
        \\
        &=
        \sum_{j\in\mathbb{N}}
        \frac{1}{b_j K} 
                m_j\left(\sqrt{\ln\frac{1}{\omega_\rmv}}\cdot\bDelta_{i,\Safeset\cap\Subopt},\omega_\rmv\right)
        \\
     &=
        \sum_{j\in\mathbb{N}}
        \frac{a_j\cdot \gamma K^2}{b_j K} 
        \frac{1}{(\sqrt{\ln\frac{1}{\omega_\rmv}}\cdot\bDelta_{i,\Safeset\cap\Subopt})^2}\left(2\ln\frac{1}{\omega_\rmv}+\ln\ln_+\frac{1}{(\sqrt{\ln\frac{1}{\omega_\rmv}}\cdot\bDelta_{i,\Safeset\cap\Subopt})^2}+\upbdcstt\right)
         \\
         &\leq
        \abcstt\cdot
        \frac{\gamma K}{\bDelta_{i,\Safeset\cap\Subopt}^2}
        \left(2
        +\frac{1}{\ln(1/\omega_\rmv)}\ln\ln_+\frac{1}{\ln(1/\omega_\rmv)\bDelta_{i,\Safeset\cap\Subopt}^2}+\frac{1}{\ln(1/\omega_\rmv)}\upbdcstt\right)
    \end{align}
    Therefore, if we denote $g_{\Safeset\cap\Subopt,2}(r,\bDelta_{i,\Safeset\cap\Subopt}):=$
        \begin{align}
            \left\{
            \begin{aligned}
                &
                \frac{\abcstt\gamma K}{(\frac{(r-1)\bsigma^2}{3\sqrt{\ln(1/\omega_\rmv)}})^2}
                \left(2+\frac{1}{\ln(1/\omega_\rmv)}\ln\ln_+\frac{1}{\ln(1/\omega_\rmv)\bDelta_{i,\Safeset\cap\Subopt}^2}+\frac{1}{\ln(1/\omega_\rmv)}\upbdcstt\right), 
                r\geq \left\lfloor\frac{3\sqrt{\ln(1/\omega_\rmv)}\cdot\bDelta_{i,\Safeset\cap\Subopt} }{\bsigma^2}\right\rfloor+2
                \\
                &
                \frac{\abcstt\gamma K}{\bDelta_{i,\Safeset\cap\Subopt}^2}
                \left(2
                +\frac{1}{\ln(1/\omega_\rmv)}\ln\ln_+\frac{1}{\ln(1/\omega_\rmv)\bDelta_{i,\Safeset\cap\Subopt}^2}+\frac{1}{\ln(1/\omega_\rmv)}\upbdcstt\right),
                r\leq \left\lfloor\frac{3\sqrt{\ln(1/\omega_\rmv)}\cdot\bDelta_{i,\Safeset\cap\Subopt} }{\bsigma^2}\right\rfloor+1
            \end{aligned}\right.
        \end{align}
        then \eqref{equ:discussiononri2} can be upper bounded by
        $$\sum_{i\in E}g_{\Safeset\cap\Subopt,2}(r,\bDelta_{i,\Safeset\cap\Subopt}).$$

    In conclusion, for $i\in E$, we denote $\bDelta_{i,\Safeset\cap\Subopt}:=\min_{S\ni i,S\in\Safeset\cap\Subopt}\max\left\{\frac{\Delta_{S}}{\sqrt{\ln(1/\omega_\mu)}},\frac{\Deltav_{A}}{3\sqrt{\ln(1/\omega_\rmv)}}\right\}$, 
    $\omega_{\mu\rmv}:=\max\{\omega_\mu,\omega_\rmv\}$ and $g_{\Safeset\cap\Subopt}(r,\bDelta_{i,\Safeset\cap\Subopt}):=$
   \begin{align}
            \left\{
            \begin{aligned}
                &
                \frac{\abcstt\gamma K}{(\frac{(r-1)\bsigma^2}{3\sqrt{\ln(1/\omega_\rmv)}})^2}
                \left(2+\frac{1}{\ln(1/\omega_{\mu\rmv})}\left(\ln\ln_+\frac{1}{\ln(1/\omega_{\mu\rmv})\bDelta_{i,\Safeset\cap\Subopt}^2}+\upbdcstt\right)\right), 
                r\geq \left\lfloor\frac{3\sqrt{\ln(1/\omega_\rmv)}\cdot\bDelta_{i,\Safeset\cap\Subopt} }{\bsigma^2}\right\rfloor+2
                \\
                &
                \frac{\abcstt\gamma K}{\bDelta_{i,\Safeset\cap\Subopt}^2}
                \left(2
                +\frac{1}{\ln(1/\omega_{\mu\rmv})}\left(\ln\ln_+\frac{1}{\ln(1/\omega_{\mu\rmv})\bDelta_{i,\Safeset\cap\Subopt}^2}+\upbdcstt\right)\right),
                r\leq \left\lfloor\frac{3\sqrt{\ln(1/\omega_\rmv)}\cdot\bDelta_{i,\Safeset\cap\Subopt} }{\bsigma^2}\right\rfloor+1
            \end{aligned}\right.
    \end{align}
    then 
    \begin{align}
         &\sum_{\phase=1}^{\Tp}
            \mathbbm{1}\left\{A_\phase\in\Safeset\cap\Subopt\right\}
                \mathbbm{1}\left\{ U_{A_\phase}^\rmv(\phase-1)>r\bsigma^2 \right\}
        \leq\sum_{i\in E}g_{\Safeset\cap\Subopt}(r,\bDelta_{i,\Safeset\cap\Subopt})
    \end{align}

    \noindent
    \underline{Case 3: $A_\phase\in\Riskset$}

     Under this case, there will be a comparison between $\omega_\rmv$ and $\omega_\rmv^\prime$ thus we denote
    $\bomega=\sqrt{
        \frac{\ln\frac{1}{\omega_\rmv^\prime}}{\ln\frac{1}{\omega_\rmv}}}$ and $\omega_\rmsum:=\sqrt{\ln\frac{1}{\omega_\rmv^\prime}}+\sqrt{\ln\frac{1}{\omega_\rmv}}$.

    For $i\in E$, denote $\Deltav_{i,\Riskset}:=\min_{S\ni i,S\in\Riskset}\Deltav_{S}$ and assume
    $\Deltav_{i,\Riskset}\in (r_i\frac{\bomega}{\bomega+1}\bsigma^2,(r_i+1)\frac{\bomega}{\bomega+1}\bsigma^2]$.

    \textbf{Scenario 1: $\omega_\rmv\leq \omega_\rmv^\prime$}
        For the case $\omega_\rmv\leq \omega_\rmv^\prime$, we have $\bomega\leq 1$.



    \begin{align}
        &\sum_{\phase=1}^{\Tp}\mathbbm{1}\left\{A_\phase\in\Riskset\right\}
                \mathbbm{1}\left\{ U_{A_\phase}^\rmv(p-1)>r\bsigma^2 \right\}
        \\
        &\overset{(a)}{\leq}
        \sum_{\phase=1}^{\Tp}
            \mathbbm{1}\left\{A_\phase\in\Riskset\right\}
                \mathbbm{1}\left\{
                \mathcal{F}_\phase\left(\frac{\Deltav_{A_\phase}}{3},\omega_\rmv^\prime\right)
                ,
                \mathcal{F}_\phase\left(\frac{(r-1)\bsigma^2-\Deltav_{A_\phase}}{3},\omega_\rmv\right)
                \right\}
        \\
        &\overset{(b)}{\leq}
        \sum_{\phase=1}^{\Tp}
            \mathbbm{1}\left\{A_\phase\in\Riskset\right\}
                \mathbbm{1}\left\{
                \mathcal{F}_\phase\left(
                \max
                \left\{\frac{\Deltav_{A_\phase}}{3},\bomega\cdot\frac{(r-1)\bsigma^2-\Deltav_{A_\phase}}{3}
                \right\},
                \omega_\rmv^\prime\right)
                \right\}
        \\
        &\overset{(c)}{\leq}
        \sum_{\phase=1}^{\Tp}
        \sum_{j\in\mathbb{N}}
            \mathbbm{1}\left\{A_\phase\in\Riskset\right\}
                \mathbbm{1}\left\{\mathcal{G}_{j,\phase}\left(
                \max\left\{\frac{\Deltav_{A_\phase}}{3},\bomega\cdot\frac{(r-1)\bsigma^2-\Deltav_{A_\phase}}{3}
                \right\},
                \omega_\rmv^\prime\right)\right\}
        \\
        &\overset{(d)}{\leq}
       \sum_{\phase=1}^{\Tp}
        \sum_{j\in\mathbb{N}}\mathbbm{1}\left\{A_\phase\in\Riskset\right\}
        \frac{1}{b_j K} \sum_{i\in A_\phase}
                \mathbbm{1}\left\{T_i(\phase-1)\leq m_j\left(\max\left\{\frac{\Deltav_{A_\phase}}{3},\bomega\cdot\frac{(r-1)\bsigma^2-\Deltav_{A_\phase}}{3}
                \right\},\omega_\rmv^\prime\right)\right\}
        \\
        &=
        \sum_{i\in E}
        \sum_{j\in\mathbb{N}}\frac{1}{b_j K}
        \sum_{\phase=1}^{\Tp}
        \mathbbm{1}\left\{A_\phase\in\Riskset\right\}
                \mathbbm{1}\left\{i\in A_\phase,T_i(\phase-1)\leq  m_j\left(\max\left\{\frac{\Deltav_{A_\phase}}{3},\bomega\cdot\frac{(r-1)\bsigma^2-\Deltav_{A_\phase}}{3}
                \right\},\omega_\rmv^\prime\right)\right\}
        \label{equ:discussionOnriRisk}
        \end{align}
        where $(a)$ utilizes Lemma~\ref{lem:upbd_decomposition2},
        $(b)$ makes use of Lemma~\ref{lem:FFtoF}, 
        $(c)$ is due to Lemma~\ref{lem:FtoG} 
        and $(d)$ follows the definition of $\mathcal{G}_{j,\phase}$.

        Given $i\in E$,

        (1) if $r\geq r_i+2$, for any $S\in \Riskset$ that contains item $i$,
        \begin{align}
            \max\left\{\frac{\Deltav_{S}}{3},\bomega\cdot\frac{(r-1)\bsigma^2-\Deltav_{S}}{3}\right\}
            \geq 
            \max\left\{\frac{\Deltav_{S}}{3},\frac{\bomega}{1+\bomega}\cdot\frac{(r-1)\bsigma^2}{3}\right\}
            \geq 
            \frac{\bomega}{1+\bomega}\cdot\frac{(r-1)\bsigma^2}{3}
            \geq
            \frac{\Deltav_{i,\Riskset}}{3}
        \end{align}
        where the first inequality uses the fact that for $x,y\in\mathbb{R}_+$ and $z\in[0,1]$, , $\max\{x,y\}\geq \max\{x,xz+y(1-z)\}$. We take $x=\frac{\Deltav_{A_\phase}}{3},y=\bomega\cdot\frac{(r-1)\bsigma^2-\Deltav_{A_\phase}}{3}$ and $z=\frac{\bomega}{1+\bomega}$.
        Thus, 
        \begin{align}
            &\sum_{j\in\mathbb{N}}\frac{1}{b_j K}
                \sum_{\phase=1}^{\Tp}
                \mathbbm{1}\left\{A_\phase\in\Riskset\right\}
                \mathbbm{1}\left\{i\in A_\phase,T_i(\phase-1)\leq  m_j\left(\max\left\{\frac{\Deltav_{A_\phase}}{3},\bomega\cdot\frac{(r-1)\bsigma^2-\Deltav_{A_\phase}}{3}
                \right\},\omega_\rmv^\prime\right)\right\}
            \\
            &\leq
            \sum_{j\in\mathbb{N}}\frac{1}{b_j K}
                \sum_{\phase=1}^{\Tp}
                \mathbbm{1}\left\{A_\phase\in\Riskset\right\}
                \mathbbm{1}\left\{i\in A_\phase,T_i(\phase-1)\leq  m_j\left(\frac{\bomega}{1+\bomega}\cdot\frac{(r-1)\bsigma^2}{3},\omega_\rmv^\prime\right)\right\}
            \\
            &=
            \sum_{j\in\mathbb{N}}
            \frac{1}{b_j K} 
                    m_j\left(\frac{\bomega}{1+\bomega}\cdot\frac{(r-1)\bsigma^2}{3},\omega_\rmv^\prime\right)
            \\
            &=
            \sum_{j\in\mathbb{N}}
            \frac{a_j\cdot \gamma K^2}{b_j K} 
            \frac{1}{(\frac{\bomega}{1+\bomega}\cdot\frac{(r-1)\bsigma^2}{3})^2}\left(2\ln\frac{1}{\omega_\rmv^\prime}+\ln\ln_+\frac{1}{(\frac{\bomega}{1+\bomega}\cdot\frac{(r-1)\bsigma^2}{3})^2}+\upbdcstt\right)
             \\
             &\leq
            \frac{\abcstt\gamma K}{(\frac{\bomega}{1+\bomega}\cdot\frac{(r-1)\bsigma^2}{3})^2}\left(2\ln\frac{1}{\omega_\rmv^\prime}+\ln\ln_+\frac{1}{(\frac{\bomega}{1+\bomega}\cdot\frac{(r-1)\bsigma^2}{3})^2}+\upbdcstt\right)
            \\
             &\leq
            \frac{\abcstt\gamma K}{(\frac{(r-1)\bsigma^2}
            {3\omega_{\rmsum}})^2}\left(2+\frac{1}{\ln(1/\omega_\rmv^\prime)}\left(\ln\ln_+\frac{1}{\ln(1/\omega_\rmv^\prime)(\frac{\Deltav_{i,\Riskset}}{3\sqrt{\ln(1/\omega_\rmv^\prime)}})^2}+\upbdcstt\right)\right)
        \end{align}

         (2) if $r\leq r_i+1$, for any $S\in \Riskset$ that contains item $i$,
        \begin{align}
            \max\left\{\frac{\Deltav_{S}}{3},\bomega\cdot\frac{(r-1)\bsigma^2-\Deltav_{S}}{3}\right\}
            \geq 
            \max\left\{\frac{\Deltav_{S}}{3},\frac{\bomega}{1+\bomega}\cdot\frac{(r-1)\bsigma^2}{3}\right\}
            \geq
            \frac{\Deltav_{i,\Riskset}}{3}
        \end{align}
        Thus, 
        \begin{align}
            &\sum_{j\in\mathbb{N}}\frac{1}{b_j K}
                \sum_{\phase=1}^{\Tp}
                \mathbbm{1}\left\{A_\phase\in\Riskset\right\}
                \mathbbm{1}\left\{i\in A_\phase,T_i(\phase-1)\leq  m_j\left(\max\left\{\frac{\Deltav_{A_\phase}}{3},\bomega\cdot\frac{(r-1)\bsigma^2-\Deltav_{A_\phase}}{3}
                \right\},\omega_\rmv^\prime\right)\right\}
            \\
            &\leq
            \sum_{j\in\mathbb{N}}\frac{1}{b_j K}
                \sum_{\phase=1}^{\Tp}
                \mathbbm{1}\left\{A_\phase\in\Riskset\right\}
                \mathbbm{1}\left\{i\in A_\phase,T_i(\phase-1)\leq  m_j\left(\frac{\Deltav_{i,\Riskset}}{3},\omega_\rmv^\prime\right)\right\}
            \\
            &=
            \sum_{j\in\mathbb{N}}
            \frac{1}{b_j K} 
                    m_j\left(\frac{\Deltav_{i,\Riskset}}{3},\omega_\rmv^\prime\right)
            \\
            &=
            \sum_{j\in\mathbb{N}}
            \frac{a_j\cdot \gamma K^2}{b_j K} 
            \frac{1}{(\frac{\Deltav_{i,\Riskset}}{3})^2}\left(2\ln\frac{1}{\omega_\rmv^\prime}+\ln\ln_+\frac{1}{(\frac{\Deltav_{i,\Riskset}}{3})^2}+\upbdcstt\right)
             \\
             &\leq
            \frac{\abcstt\gamma K}{(\frac{\Deltav_{i,\Riskset}}{3})^2}\left(2\ln\frac{1}{\omega_\rmv^\prime}+\ln\ln_+\frac{1}{(\frac{\Deltav_{i,\Riskset}}{3})^2}+\upbdcstt\right)
            \\
            &=
            \frac{\abcstt\gamma K}{(\frac{\Deltav_{i,\Riskset}}{3\sqrt{\ln(1/\omega_\rmv^\prime)}})^2}
            \left(2+\frac{1}{\ln(1/\omega_\rmv^\prime)}\left(\ln\ln_+\frac{1}{\ln(1/\omega_\rmv^\prime)(\frac{\Deltav_{i,\Riskset}}{3\sqrt{\ln(1/\omega_\rmv^\prime)}})^2}+\upbdcstt\right)\right)
        \end{align}
        Therefore, if we denote 
        \begin{align}
            g_{\Riskset,1}(r,\Deltav_{i,\Riskset}):=\left\{
            \begin{aligned}
                &
                \frac{\abcstt\gamma K}{(\frac{(r-1)\bsigma^2}
                {3\omega_{\rmsum}})^2}\left(2+\frac{1}{\ln(1/\omega_\rmv^\prime)}\left(\ln\ln_+\frac{1}{\ln(1/\omega_\rmv^\prime)(\frac{\Deltav_{i,\Riskset}}{3\sqrt{\ln(1/\omega_\rmv^\prime)}})^2}+\upbdcstt\right)\right), 
                r\geq \left\lfloor\frac{\omega_\rmsum\cdot\Deltav_{i,\Riskset} }{\sqrt{\ln(1/\omega_\rmv^\prime)}\bsigma^2}\right\rfloor+2
                \\
                &
                \frac{\abcstt\gamma K}{(\frac{\Deltav_{i,\Riskset}}{3\sqrt{\ln(1/\omega_\rmv^\prime)}})^2}
                \left(2+\frac{1}{\ln(1/\omega_\rmv^\prime)}\left(\ln\ln_+\frac{1}{\ln(1/\omega_\rmv^\prime)(\frac{\Deltav_{i,\Riskset}}{3\sqrt{\ln(1/\omega_\rmv^\prime)}})^2}+\upbdcstt\right)\right),
                r\leq \left\lfloor\frac{\omega_\rmsum\cdot\Deltav_{i,\Riskset} }{\sqrt{\ln(1/\omega_\rmv^\prime)}\bsigma^2}\right\rfloor+1
            \end{aligned}\right.
        \end{align}
        then \eqref{equ:discussionOnriRisk} can be upper bounded by
        $$\sum_{i\in E}g_{\Riskset,1}(r,\Deltav_{i,\Riskset}).$$
        
    \textbf{Scenario 2: $\omega_\rmv\geq \omega_\rmv^\prime$}
        For the case $\omega_\rmv\geq \omega_\rmv^\prime$, we have $\bomega\geq 1$.

        \begin{align}
        &\sum_{\phase=1}^{\Tp}\mathbbm{1}\left\{A_\phase\in\Riskset\right\}
                \mathbbm{1}\left\{ U_{A_\phase}^\rmv(p-1)>r\bsigma^2 \right\}
        \\
        &\overset{(a)}{\leq}
        \sum_{\phase=1}^{\Tp}
            \mathbbm{1}\left\{A_\phase\in\Riskset\right\}
                \mathbbm{1}\left\{
                \mathcal{F}_\phase\left(\frac{\Deltav_{A_\phase}}{3},\omega_\rmv^\prime\right)
                ,
                \mathcal{F}_\phase\left(\frac{(r-1)\bsigma^2-\Deltav_{A_\phase}}{3},\omega_\rmv\right)
                \right\}
        \\
        &\overset{(b)}{\leq}
        \sum_{\phase=1}^{\Tp}
            \mathbbm{1}\left\{A_\phase\in\Riskset\right\}
                \mathbbm{1}\left\{
                \mathcal{F}_\phase\left(
                \max\left\{\frac{1}{\bomega}\frac{\Deltav_{A_\phase}}{3},\frac{(r-1)\bsigma^2-\Deltav_{A_\phase}}{3}
                \right\},
                \omega_\rmv\right)
                \right\}
        \\
        &\overset{(c)}{\leq}
        \sum_{\phase=1}^{\Tp}
        \sum_{j\in\mathbb{N}}
            \mathbbm{1}\left\{A_\phase\in\Riskset\right\}
                \mathbbm{1}\left\{\mathcal{G}_{j,\phase}\left(
                \max\left\{\frac{1}{\bomega}\frac{\Deltav_{A_\phase}}{3},\frac{(r-1)\bsigma^2-\Deltav_{A_\phase}}{3}
                \right\},
                \omega_\rmv\right)\right\}
        \\
        &\overset{(d)}{\leq}
       \sum_{\phase=1}^{\Tp}
        \sum_{j\in\mathbb{N}}\mathbbm{1}\left\{A_\phase\in\Riskset\right\}
        \frac{1}{b_j K} \sum_{i\in A_\phase}
                \mathbbm{1}\left\{T_i(\phase-1)\leq m_j\left(
                \max\left\{\frac{1}{\bomega}\frac{\Deltav_{A_\phase}}{3},\frac{(r-1)\bsigma^2-\Deltav_{A_\phase}}{3}
                \right\},
                \omega_\rmv\right)\right\}
        \\
        &=
        \sum_{i\in E}
        \sum_{j\in\mathbb{N}}\frac{1}{b_j K}
        \sum_{\phase=1}^{\Tp}
        \mathbbm{1}\left\{A_\phase\in\Riskset\right\}
                \mathbbm{1}\left\{i\in A_\phase,T_i(\phase-1)\leq  m_j\left(
                \max\left\{\frac{1}{\bomega}\frac{\Deltav_{A_\phase}}{3},\frac{(r-1)\bsigma^2-\Deltav_{A_\phase}}{3}
                \right\},
                \omega_\rmv\right)\right\}
        \label{equ:discussionOnriRisk2}
        \end{align}
        where $(a)$ utilizes Lemma~\ref{lem:upbd_decomposition2},
        $(b)$ makes use of Lemma~\ref{lem:FFtoF}, 
        $(c)$ is due to Lemma~\ref{lem:FtoG} 
        and $(d)$ follows the definition of $\mathcal{G}_{j,\phase}$.

        Given $i\in E$,

        (1) if $r\geq r_i+2$, for any $S\in \Riskset$ that contains item $i$,
        \begin{align}
           \max\left\{\frac{1}{\bomega}\frac{\Deltav_{S}}{3},\frac{(r-1)\bsigma^2-\Deltav_{S}}{3}
                \right\}
            \geq 
            \max\left\{\frac{1}{\bomega}\frac{\Deltav_{S}}{3},\frac{\bomega}{1+\bomega}\cdot\frac{(r-1)\bsigma^2}{3}\right\}
            \geq 
            \frac{1}{1+\bomega}\cdot\frac{(r-1)\bsigma^2}{3}
            \geq
            \frac{1}{\bomega}\frac{\Deltav_{i,\Riskset}}{3}
        \end{align}
        where the first inequality uses the fact that for $x,y\in\mathbb{R}_+$ and $z\in[0,1]$, , $\max\{x,y\}\geq \max\{x,xz+y(1-z)\}$. We take $x=\frac{1}{\bomega}\frac{\Deltav_{A_\phase}}{3},y=\frac{(r-1)\bsigma^2-\Deltav_{A_\phase}}{3}$ and $z=\frac{\bomega}{1+\bomega}$.
        Thus, 
        \begin{align}
            &
            \sum_{j\in\mathbb{N}}\frac{1}{b_j K}
            \sum_{\phase=1}^{\Tp}
            \mathbbm{1}\left\{A_\phase\in\Riskset\right\}
            \mathbbm{1}\left\{i\in A_\phase,T_i(\phase-1)\leq  m_j\left(
            \max\left\{\frac{1}{\bomega}\frac{\Deltav_{A_\phase}}{3},\frac{(r-1)\bsigma^2-\Deltav_{A_\phase}}{3}
            \right\},
            \omega_\rmv\right)\right\}
            \\
            &\leq
            \sum_{j\in\mathbb{N}}\frac{1}{b_j K}
                \sum_{\phase=1}^{\Tp}
                \mathbbm{1}\left\{A_\phase\in\Riskset\right\}
                \mathbbm{1}\left\{i\in A_\phase,T_i(\phase-1)\leq  m_j\left(\frac{1}{1+\bomega}\cdot\frac{(r-1)\bsigma^2}{3},\omega_\rmv\right)\right\}
            \\
            &=
            \sum_{j\in\mathbb{N}}
            \frac{1}{b_j K} 
                    m_j\left(\frac{1}{1+\bomega}\cdot\frac{(r-1)\bsigma^2}{3},\omega_\rmv\right)
            \\
            &=
            \sum_{j\in\mathbb{N}}
            \frac{a_j\cdot \gamma K^2}{b_j K} 
            \frac{1}{(\frac{1}{1+\bomega}\cdot\frac{(r-1)\bsigma^2}{3})^2}\left(2\ln\frac{1}{\omega_\rmv}+\ln\ln_+\frac{1}{(\frac{1}{1+\bomega}\cdot\frac{(r-1)\bsigma^2}{3})^2}+\upbdcstt\right)
             \\
             &\leq
            \frac{\abcstt\gamma K}{(\frac{1}{1+\bomega}\cdot\frac{(r-1)\bsigma^2}{3})^2}\left(2\ln\frac{1}{\omega_\rmv}+\ln\ln_+\frac{1}{(\frac{1}{1+\bomega}\cdot\frac{(r-1)\bsigma^2}{3})^2}+\upbdcstt\right)
             \\
             &\leq
             \frac{\abcstt\gamma K}{(\frac{(r-1)\bsigma^2}
            {3\omega_{\rmsum}})^2}\left(2+\frac{1}{\ln(1/\omega_\rmv)}\left(\ln\ln_+\frac{1}{\ln(1/\omega_\rmv)(\frac{\Deltav_{i,\Riskset}}{3\sqrt{\ln(1/\omega_\rmv^\prime)}})^2}+\upbdcstt\right)\right)
        \end{align}

         (2) if $r\leq r_i+1$, for any $S\in \Riskset$ that contains item $i$,
        \begin{align}
           \max\left\{\frac{1}{\bomega}\frac{\Deltav_{S}}{3},\frac{(r-1)\bsigma^2-\Deltav_{S}}{3}
                \right\}
            \geq 
            \max\left\{\frac{1}{\bomega}\frac{\Deltav_{A}}{3},\frac{\bomega}{1+\bomega}\cdot\frac{(r-1)\bsigma^2}{3}\right\}
            \geq
            \frac{1}{\bomega}\frac{\Deltav_{i,\Riskset}}{3}
        \end{align}
        Thus, 
        \begin{align}
            &\sum_{j\in\mathbb{N}}\frac{1}{b_j K}
            \sum_{\phase=1}^{\Tp}
            \mathbbm{1}\left\{A_\phase\in\Riskset\right\}
            \mathbbm{1}\left\{i\in A_\phase,T_i(\phase-1)\leq  m_j\left(
            \max\left\{\frac{1}{\bomega}\frac{\Deltav_{A_\phase}}{3},\frac{(r-1)\bsigma^2-\Deltav_{A_\phase}}{3}
            \right\},
            \omega_\rmv\right)\right\}
            \\
            &\leq
            \sum_{j\in\mathbb{N}}\frac{1}{b_j K}
                \sum_{\phase=1}^{\Tp}
                \mathbbm{1}\left\{A_\phase\in\Riskset\right\}
                \mathbbm{1}\left\{i\in A_\phase,T_i(\phase-1)\leq  m_j\left(\frac{1}{\bomega}\frac{\Deltav_{i,\Riskset}}{3},\omega_\rmv\right)\right\}
            \\
            &=
            \sum_{j\in\mathbb{N}}
            \frac{1}{b_j K} 
                    m_j\left(\frac{1}{\bomega}\frac{\Deltav_{i,\Riskset}}{3},\omega_\rmv\right)
            \\
            &=
            \sum_{j\in\mathbb{N}}
            \frac{a_j\cdot \gamma K^2}{b_j K} 
            \frac{1}{(\frac{1}{\bomega}\frac{\Deltav_{i,\Riskset}}{3})^2}\left(2\ln\frac{1}{\omega_\rmv}+\ln\ln_+\frac{1}{(\frac{1}{\bomega}\frac{\Deltav_{i,\Riskset}}{3})^2}+\upbdcstt\right)
             \\
             &\leq
            \frac{\abcstt\gamma K}{(\frac{1}{\bomega}\frac{\Deltav_{i,\Riskset}}{3})^2}\left(2\ln\frac{1}{\omega_\rmv}+\ln\ln_+\frac{1}{(\frac{1}{\bomega}\frac{\Deltav_{i,\Riskset}}{3})^2}+\upbdcstt\right)
            \\
            &=
               \frac{\abcstt\gamma K}{(\frac{\Deltav_{i,\Riskset}}{3\sqrt{\ln(1/\omega_\rmv^\prime)}})^2}
            \left(2+\frac{1}{\ln(1/\omega_\rmv)}\left(\ln\ln_+\frac{1}{\ln(1/\omega_\rmv)(\frac{\Deltav_{i,\Riskset}}{3\sqrt{\ln(1/\omega_\rmv^\prime)}})^2}+\upbdcstt\right)\right)
        \end{align}
        Therefore, if we denote 
        \begin{align}
            g_{\Riskset,2}(r,\Deltav_{i,\Riskset}):=\left\{
            \begin{aligned}
                &
                \frac{\abcstt\gamma K}{(\frac{(r-1)\bsigma^2}
                {3\omega_{\rmsum}})^2}\left(2+\frac{1}{\ln(1/\omega_\rmv)}\left(\ln\ln_+\frac{1}{\ln(1/\omega_\rmv)(\frac{\Deltav_{i,\Riskset}}{3\sqrt{\ln(1/\omega_\rmv^\prime)}})^2}+\upbdcstt\right)\right), 
                r\geq \left\lfloor\frac{\omega_\rmsum\cdot\Deltav_{i,\Riskset} }{\sqrt{\ln(1/\omega_\rmv^\prime)}\bsigma^2}\right\rfloor+2
                \\
                &
                \frac{\abcstt\gamma K}{(\frac{\Deltav_{i,\Riskset}}{3\sqrt{\ln(1/\omega_\rmv^\prime)}})^2}
                \left(2+\frac{1}{\ln(1/\omega_\rmv)}\left(\ln\ln_+\frac{1}{\ln(1/\omega_\rmv)(\frac{\Deltav_{i,\Riskset}}{3\sqrt{\ln(1/\omega_\rmv^\prime)}})^2}+\upbdcstt\right)\right),
                r\leq \left\lfloor\frac{\omega_\rmsum\cdot\Deltav_{i,\Riskset} }{\sqrt{\ln(1/\omega_\rmv^\prime)}\bsigma^2}\right\rfloor+1
            \end{aligned}\right.
        \end{align}
        then \eqref{equ:discussionOnriRisk2} can be upper bounded by
        $$\sum_{i\in E}g_{\Riskset,2}(r,\Deltav_{i,\Riskset}).$$

    In conclusion, for $i\in E$, we denote $\Deltav_{i,\Riskset}:=\min_{S\ni i,S\in\Riskset}\Deltav_S$, 
    $\omega_{\rmv\rmv^\prime}:=\max\{\omega_\rmv,\omega_\rmv^\prime\}$, $\omega_{\rmsum}:=\sqrt{\ln\frac{1}{\omega_\rmv^\prime}}+\sqrt{\ln\frac{1}{\omega_\rmv}}$and
   \begin{align}
            g_{\Riskset}(r,\Deltav_{i,\Riskset}):=\left\{
            \begin{aligned}
                &
                \frac{\abcstt\gamma K}{(\frac{(r-1)\bsigma^2}
                {3\omega_{\rmsum}})^2}\left(2+\frac{1}{\ln(1/\omega_{\rmv\rmv^\prime})}\left(\ln\ln_+\frac{1}{\ln(1/\omega_{\rmv\rmv^\prime})(\frac{\Deltav_{i,\Riskset}}{3\sqrt{\ln(1/\omega_\rmv^\prime)}})^2}+\upbdcstt\right)\right), 
                r\geq \left\lfloor\frac{\omega_\rmsum\cdot\Deltav_{i,\Riskset} }{\sqrt{\ln(1/\omega_\rmv^\prime)}\bsigma^2}\right\rfloor+2
                \\
                &
                \frac{\abcstt\gamma K}{(\frac{\Deltav_{i,\Riskset}}{3\sqrt{\ln(1/\omega_\rmv^\prime)}})^2}
                \left(2+\frac{1}{\ln(1/\omega_{\rmv\rmv^\prime})}\left(\ln\ln_+\frac{1}{\ln(1/\omega_{\rmv\rmv^\prime})(\frac{\Deltav_{i,\Riskset}}{3\sqrt{\ln(1/\omega_\rmv^\prime)}})^2}+\upbdcstt\right)\right),
                r\leq \left\lfloor\frac{\omega_\rmsum\cdot\Deltav_{i,\Riskset} }{\sqrt{\ln(1/\omega_\rmv^\prime)}\bsigma^2}\right\rfloor+1
            \end{aligned}\right.
    \end{align}
    then 
    \begin{align}
         &\sum_{\phase=1}^{\Tp}
            \mathbbm{1}\left\{A_\phase\in\Riskset\right\}
                \mathbbm{1}\left\{ U_{A_\phase}^\rmv(\phase-1)>r\bsigma^2 \right\}
        \leq\sum_{i\in E}g_{\Riskset}(r,\Deltav_{i,\Riskset})
    \end{align}

    \noindent
    \underline{Case 4: $A_\phase\in\Safeset^c\cap\Subopt$}

     Under this case, there will be a comparison among $\omega_\mu,\omega_\rmv$ and $\omega_\rmv^\prime$ thus we denote
     $\omega_{\max}=\max\{\omega_\mu,\omega_\rmv,\omega_\rmv^\prime\}$ and 
     $\omega_1=\sqrt{\frac{\ln\frac{1}{\omega_{\max}}}{\ln\frac{1}{\omega_\mu}}},
        \omega_2=\sqrt{\frac{\ln\frac{1}{\omega_{\max}}}{\ln\frac{1}{\omega_\rmv}}},
        \omega_3=\sqrt{\frac{\ln\frac{1}{\omega_{\max}}}{\ln\frac{1}{\omega_\rmv^\prime}}}$.
    For $S\in\Safeset^c\cap\Subopt$, we denote 
    $\bDelta_S:=\max\left\{\omega_1\Delta_{S}
                            ,
                            \omega_3\frac{\Deltav_{S}}{3}
                        \right\}  $.
                        
    For $i\in E$, denote 
    $$\bDelta_{i,\Safeset^c\cap\Subopt}
    :=
    \min_{S\ni i,S\in\Safeset^c\cap\Subopt} \max\left\{\frac{\Delta_S}{\sqrt{\ln(1/\omega_\mu)}}
    ,
    \frac{\Deltav_S}{3\sqrt{\ln(1/\omega_\rmv^\prime)}}
    \right\}
    =
    \sqrt{\frac{1}{\ln(1/\omega_{\max})}}\min_{S\ni i,S\in\Safeset^c\cap\Subopt}\bDelta_{S}.$$
    and assume 
    $\bDelta_{i,\Safeset^c\cap\Subopt}\in
    (
        r_i\frac{\bsigma^2/3}{\sqrt{\ln(1/\omega_\rmv)}+\sqrt{\ln(1/\omega_\rmv^\prime)}}
        ,
        (r_i+1)\frac{\bsigma^2/3}{\sqrt{\ln(1/\omega_\rmv)}+\sqrt{\ln(1/\omega_\rmv^\prime)}}
    ]$

    \begin{align}
        &\sum_{\phase=1}^{\Tp}\mathbbm{1}\left\{A_\phase\in\Safeset^c\cap\Subopt\right\}\Big)
                \mathbbm{1}\left\{ U_{A_\phase}^\rmv(p-1)>r\bsigma^2 \right\}\cdot
                \mathbbm{1}
			     \left\{\mathcal{E}\right\}
        \\
        &\overset{(a)}{\leq}
        \sum_{\phase=1}^{\Tp}
            \mathbbm{1}\left\{A_\phase\in\Safeset^c\cap\Subopt\right\}
                \mathbbm{1}\left\{
                \mathcal{F}_\phase^\mu
                ,
                \mathcal{F}_\phase\left(\frac{\Deltav_{A_\phase}}{3},\omega_\rmv^\prime\right)
                ,
                \mathcal{F}_\phase\left(\frac{(r-1)\bsigma^2-\Deltav_{A_\phase}}{3},\omega_\rmv\right)
                \right\}
        \\
        &\overset{(b)}{\leq}
        \sum_{\phase=1}^{\Tp}
            \mathbbm{1}\left\{A_\phase\in\Safeset^c\cap\Subopt\right\}
                \mathbbm{1}\left\{
                \mathcal{F}_\phase\left(\Delta_{A_\phase},\omega_\mu\right)
                ,
                \mathcal{F}_\phase\left(\frac{\Deltav_{A_\phase}}{3},\omega_\rmv^\prime\right)
                ,
                \mathcal{F}_\phase\left(\frac{(r-1)\bsigma^2-\Deltav_{A_\phase}}{3},\omega_\rmv\right)
                \right\}
        \\
        &\overset{(c)}{\leq}
        \sum_{\phase=1}^{\Tp}
            \mathbbm{1}\left\{A_\phase\in\Safeset^c\cap\Subopt\right\}
                \mathbbm{1}\left\{
                \mathcal{F}_\phase\left(
                \max\left\{\omega_1\Delta_{A_\phase}
                ,
                \omega_3\frac{\Deltav_{A_\phase}}{3}
                ,
                \omega_2\frac{(r-1)\bsigma^2-\Deltav_{A_\phase}}{3}
                \right\},\omega_{\max}\right)
                \right\}
        \\
        &\overset{(d)}{\leq}
        \sum_{\phase=1}^{\Tp}
            \mathbbm{1}\left\{A_\phase\in\Safeset^c\cap\Subopt\right\}
                \mathbbm{1}\left\{
                \mathcal{F}_\phase\left(
                \max\left\{
                    \max\left\{\omega_1\Delta_{A_\phase}
                            ,
                            \omega_3\frac{\Deltav_{A_\phase}}{3}
                        \right\}  
                ,
                \omega_2\frac{(r-1)\bsigma^2}{3}-\frac{\omega_2}{\omega_3}
                    \max\left\{\omega_1\Delta_{A_\phase}
                            ,
                            \omega_3\frac{\Deltav_{A_\phase}}{3}
                        \right\}  
                \right\}
                \right.\right.
                \\
                &\hspace{14em}
                ,\omega_{\max}
                \bigg)
                \bigg\}
        \\
        &\overset{(e)}{\leq}
        \sum_{\phase=1}^{\Tp}
        \sum_{j\in\mathbb{N}}
            \mathbbm{1}\left\{A_\phase\in\Safeset^c\cap\Subopt\right\}
                \mathbbm{1}\left\{\mathcal{G}_{j,\phase}\left(
                \max\left\{
                    \bDelta_{A_\phase}
                    ,
                    \omega_2\frac{(r-1)\bsigma^2}{3}-\frac{\omega_2}{\omega_3}\bDelta_{A_\phase}
                \right\},
                \omega_{\max}\right)\right\}
        \\
        &\overset{(f)}{\leq}
       \sum_{\phase=1}^{\Tp}
        \sum_{j\in\mathbb{N}}\mathbbm{1}\left\{A_\phase\in\Safeset^c\cap\Subopt\right\}
        \frac{1}{b_j K} \sum_{i\in A_\phase}
                \mathbbm{1}\left\{T_i(\phase-1)\leq m_j\left(
                \max\left\{
                    \bDelta_{A_\phase}
                    ,
                    \omega_2\frac{(r-1)\bsigma^2}{3}-\frac{\omega_2}{\omega_3}\bDelta_{A_\phase}
                \right\},
                \omega_{\max}\right)\right\}
        \\
        &=
        \sum_{i\in E}
        \sum_{j\in\mathbb{N}}\frac{1}{b_j K}
        \sum_{\phase=1}^{\Tp}
        \mathbbm{1}\left\{A_\phase\in\Safeset^c\cap\Subopt\right\}
                \mathbbm{1}\left\{i\in A_\phase,T_i(\phase-1)\leq  m_j\left(
                \max\left\{
                    \bDelta_{A_\phase}
                    ,
                    \omega_2\frac{(r-1)\bsigma^2}{3}-\frac{\omega_2}{\omega_3}\bDelta_{A_\phase}
                \right\},
                \omega_{\max}\right)\right\}
        \label{equ:discussionOnriUnsafecapSub}
        \end{align}

        Given $i\in E$
        
        (1) if $r\geq r_i+2$, for any $S\in \Safeset^c\cap\Subopt$ that contains item $i$,
        \begin{align}
            \max\left\{
                    \bDelta_{S}
                    ,
                    \omega_2\frac{(r-1)\bsigma^2}{3}-\frac{\omega_2}{\omega_3}\bDelta_{S}
                \right\}
            \geq 
            \max\left\{
                    \bDelta_{S}
                    ,
                    \frac{\omega_2\omega_3}{\omega_2+\omega_3}\frac{(r-1)\bsigma^2}{3}
                \right\}
            \geq 
            \frac{\omega_2\omega_3}{\omega_2+\omega_3}\frac{(r-1)\bsigma^2}{3}
            \geq
            \sqrt{\ln(1/\omega_{\max})}\bDelta_{i,\Safeset^c\cap\Riskset}
        \end{align}
        where the first inequality uses the fact that for $x,y\in\mathbb{R}_+$ and $z\in[0,1]$, , $\max\{x,y\}\geq \max\{x,xz+y(1-z)\}$. We take $x=\bDelta_{A_\phase},y=\omega_2\frac{(r-1)\bsigma^2}{3}-\frac{\omega_2}{\omega_3}\bDelta_{A_\phase}$ and $z=\frac{\omega_2}{\omega_2+\omega_3}$.
        
        Similar to the computations for the case $A_\phase\in\Riskset$, we have
        \begin{align}
                &\sum_{j\in\mathbb{N}}\frac{1}{b_j K}
                \sum_{\phase=1}^{\Tp}
                \mathbbm{1}\left\{A_\phase\in\Safeset^c\cap\Subopt\right\}
                \mathbbm{1}\left\{i\in A_\phase,T_i(\phase-1)\leq  m_j\left(
                \max\left\{
                    \bDelta_{A_\phase}
                    ,
                    \omega_2\frac{(r-1)\bsigma^2}{3}-\frac{\omega_2}{\omega_3}\bDelta_{A_\phase}
                \right\},
                \omega_{\max}\right)\right\}
                \\
                &\leq
                \sum_{j\in\mathbb{N}}\frac{1}{b_j K}
                \sum_{\phase=1}^{\Tp}
                \mathbbm{1}\left\{A_\phase\in\Safeset^c\cap\Subopt\right\}
                \mathbbm{1}\left\{i\in A_\phase,T_i(\phase-1)\leq  m_j\left(
                \frac{\omega_2\omega_3}{\omega_2+\omega_3}\frac{(r-1)\bsigma^2}{3},
                \omega_{\max}\right)\right\}
                \\
                &\leq
                 \frac{\abcstt\gamma K}{(\frac{\omega_2\omega_3}{\omega_2+\omega_3}\frac{(r-1)\bsigma^2}{3})^2}\left(2\ln\frac{1}{\omega_{\max}}+\ln\ln_+\frac{1}{(\frac{\omega_2\omega_3}{\omega_2+\omega_3}\frac{(r-1)\bsigma^2}{3})^2}+\upbdcstt\right)
                 \\
                 &\leq
                 \frac{\abcstt\gamma K}{(\frac{\omega_2\omega_3}{\omega_2+\omega_3}\frac{(r-1)\bsigma^2}{3})^2}\left(2\ln\frac{1}{\omega_{\max}}+\ln\ln_+\frac{1}{\ln(1/\omega_{\max})(\bDelta_{i,\Safeset^c\cap\Riskset})^2}+\upbdcstt\right)
                 \\
                 &=
                 \frac{\abcstt\gamma K}{(\frac{(r-1)\bsigma^2/3}{\sqrt{\ln(1/\omega_\rmv)}+\sqrt{\ln(1/\omega_\rmv^\prime)}})^2}\left(2+\frac{1}{\ln(1/\omega_{\max})}\left(\ln\ln_+\frac{1}{\ln(1/\omega_{\max})(\bDelta_{i,\Safeset^c\cap\Riskset})^2}+\upbdcstt\right)\right)
        \end{align}

        (2) if $r\leq r_i+1$, for any $S\in \Safeset^c\cap\Subopt$ that contains item $i$,
        \begin{align}
            \max\left\{
                    \bDelta_{S}
                    ,
                    \omega_2\frac{(r-1)\bsigma^2}{3}-\frac{\omega_2}{\omega_3}\bDelta_{S}
                \right\}
            \geq 
            \max\left\{
                    \bDelta_{S}
                    ,
                    \frac{\omega_2\omega_3}{\omega_2+\omega_3}\frac{(r-1)\bsigma^2}{3}
                \right\}
            \geq 
            \bDelta_{S}
            \geq
            \sqrt{\ln(1/\omega_{\max})}\bDelta_{i,\Safeset^c\cap\Riskset}
        \end{align}
        
        Similar to the computations for the case $A_\phase\in\Riskset$, we have
        \begin{align}
                &\sum_{j\in\mathbb{N}}\frac{1}{b_j K}
                \sum_{\phase=1}^{\Tp}
                \mathbbm{1}\left\{A_\phase\in\Safeset^c\cap\Subopt\right\}
                \mathbbm{1}\left\{i\in A_\phase,T_i(\phase-1)\leq  m_j\left(
                \max\left\{
                    \bDelta_{A_\phase}
                    ,
                    \omega_2\frac{(r-1)\bsigma^2}{3}-\frac{\omega_2}{\omega_3}\bDelta_{A_\phase}
                \right\},
                \omega_{\max}\right)\right\}
                \\
                &\leq
                \sum_{j\in\mathbb{N}}\frac{1}{b_j K}
                \sum_{\phase=1}^{\Tp}
                \mathbbm{1}\left\{A_\phase\in\Safeset^c\cap\Subopt\right\}
                \mathbbm{1}\left\{i\in A_\phase,T_i(\phase-1)\leq  m_j\left(
                \sqrt{\ln(1/\omega_{\max})}\bDelta_{i,\Safeset^c\cap\Riskset},
                \omega_{\max}\right)\right\}
                \\
                &\leq
                 \frac{\abcstt\gamma K}{(\sqrt{\ln(1/\omega_{\max})}\bDelta_{i,\Safeset^c\cap\Subopt})^2}\left(2\ln\frac{1}{\omega_{\max}}+\ln\ln_+\frac{1}{(\sqrt{\ln(1/\omega_{\max})}\bDelta_{i,\Safeset^c\cap\Subopt})^2}+\upbdcstt\right)
                 \\
                 &=
                 \frac{\abcstt\gamma K}{(\bDelta_{i,\Safeset^c\cap\Subopt})^2}\left(2+\frac{1}{\ln(1/\omega_{\max})}\left(\ln\ln_+\frac{1}{\ln(1/\omega_{\max})(\bDelta_{i,\Safeset^c\cap\Riskset})^2}+\upbdcstt\right)\right)
        \end{align}

In conclusion, for $i\in E$, we denote 
    $$\bDelta_{i,\Safeset^c\cap\Subopt}
    :=
    \min_{S\ni i,S\in\Safeset^c\cap\Subopt} \max\left\{\frac{\Delta_S}{\sqrt{\ln(1/\omega_\mu)}}
    ,
    \frac{\Deltav_S}{3\sqrt{\ln(1/\omega_\rmv^\prime)}}
    \right\}.
    $$
    and $\omega_{\max}:=\max\{\omega_\mu,\omega_\rmv,\omega_\rmv^\prime\}$
    and
    $\omega_{\rmsum}:=\sqrt{\ln\frac{1}{\omega_\rmv^\prime}}+\sqrt{\ln\frac{1}{\omega_\rmv}}$ and $g_{\Safeset^c\cap\Subopt}(r,\bDelta_{i,\Safeset^c\cap\Subopt}):=$
   \begin{align}
            \left\{
            \begin{aligned}
                &
                \frac{\abcstt\gamma K}{(\frac{(r-1)\bsigma^2}{3\omega_{\rmsum}})^2}\left(2+\frac{1}{\ln(1/\omega_{\max})}\left(\ln\ln_+\frac{1}{\ln(1/\omega_{\max})(\bDelta_{i,\Safeset^c\cap\Riskset})^2}+\upbdcstt\right)\right), 
                r\geq \left\lfloor\frac{\omega_\rmsum\cdot\bDelta_{i,\Safeset^c\cap\Subopt} }{\bsigma^2/3}\right\rfloor+2
                \\
                &
                 \frac{\abcstt\gamma K}{(\bDelta_{i,\Safeset^c\cap\Riskset})^2}\left(2+\frac{1}{\ln(1/\omega_{\max})}\left(\ln\ln_+\frac{1}{\ln(1/\omega_{\max})(\bDelta_{i,\Safeset^c\cap\Riskset})^2}+\upbdcstt\right)\right),
                r\leq \left\lfloor\frac{\omega_\rmsum\cdot\bDelta_{i,\Safeset^c\cap\Subopt} }{\bsigma^2/3}\right\rfloor+1
            \end{aligned}\right.
    \end{align}
    then 
    \begin{align}
         &\sum_{\phase=1}^{\Tp}
            \mathbbm{1}\left\{A_\phase\in\Safeset^c\cap\Subopt\right\}
                \mathbbm{1}\left\{ U_{A_\phase}^\rmv(\phase-1)>r\bsigma^2 \right\}
        \leq\sum_{i\in E}g_{\Safeset^c\cap\Subopt}(r,\bDelta_{i,\Safeset^c\cap\Subopt})
    \end{align}

    \textbf{Conclusion of Step 2:}
    
    \underline{For $S^\star$}: denote 
    \begin{align}\label{equ:gAstar}
        g_{S^\star}(r,\Deltav_{S^\star}):=
        \frac{9\cdot\abcstt\gamma K}{((r-1)\bsigma^2+\Deltav_{S^\star})^2}
        \left(2\ln\frac{1}{\omega_\rmv}+\ln\ln_+\frac{1}{{\Deltav_{S^\star}}^2}+\upbdcstt\right)
    \end{align}
    we have
    \begin{align}
        \sum_{\phase=1}^{\Tp}
        \mathbbm{1}\left\{A_\phase=S^\star\right\}
        \mathbbm{1}\left\{ U_{A_\phase}^\rmv(\phase-1)>r\bsigma^2 \right\}
            \leq
            \sum_{i\in S^\star}g_{S^\star}(r,\Deltav_{S^\star})
    \end{align}

    \underline{For $\Safeset\cap\Subopt$}: 
    for $i\in E$, denote $\bDelta_{i,\Safeset\cap\Subopt}:=\min_{S\ni i,S\in\Safeset\cap\Subopt}\max\left\{\frac{\Delta_{S}}{\sqrt{\ln(1/\omega_\mu)}},\frac{\Deltav_{S}}{3\sqrt{\ln(1/\omega_\rmv)}}\right\}$, 
    $\omega_{\mu\rmv}:=\max\{\omega_\mu,\omega_\rmv\}$ and
   \begin{align}\label{equ:gsafesub}
            &g_{\Safeset\cap\Subopt}(r,\bDelta_{i,\Safeset\cap\Subopt})
            \\
            &:=
            \left\{
            \begin{aligned}
                &
                \frac{\abcstt\gamma K}{(\frac{(r-1)\bsigma^2}{3\sqrt{\ln(1/\omega_\rmv)}})^2}
                \left(2+\frac{1}{\ln(1/\omega_{\mu\rmv})}\left(\ln\ln_+\frac{1}{\ln(1/\omega_{\mu\rmv})\bDelta_{i,\Safeset\cap\Subopt}^2}+\upbdcstt\right)\right), 
                r\geq \left\lfloor\frac{\bDelta_{i,\Safeset\cap\Subopt} }{\frac{\bsigma^2}{3\sqrt{\ln(1/\omega_\rmv)}}}\right\rfloor+2
                \\
                &
                \frac{\abcstt\gamma K}{\bDelta_{i,\Safeset\cap\Subopt}^2}
                \left(2
                +\frac{1}{\ln(1/\omega_{\mu\rmv})}\left(\ln\ln_+\frac{1}{\ln(1/\omega_{\mu\rmv})\bDelta_{i,\Safeset\cap\Subopt}^2}+\upbdcstt\right)\right),
                r\leq \left\lfloor\frac{\bDelta_{i,\Safeset\cap\Subopt} }{\frac{\bsigma^2}{3\sqrt{\ln(1/\omega_\rmv)}}}\right\rfloor+1
            \end{aligned}\right.
    \end{align}
    then 
    \begin{align}
         &\sum_{\phase=1}^{\Tp}
            \mathbbm{1}\left\{A_\phase\in\Safeset\cap\Subopt\right\}
                \mathbbm{1}\left\{ U_{A_\phase}^\rmv(\phase-1)>r\bsigma^2 \right\}
        \leq\sum_{i\in E}g_{\Safeset\cap\Subopt}(r,\bDelta_{i,\Safeset\cap\Subopt})
    \end{align}

    \underline{For $\Riskset$}: for $i\in E$, denote $\Deltav_{i,\Riskset}:=\min_{S\ni i,S\in\Riskset}\Deltav_S$, 
    $\omega_{\rmv\rmv^\prime}:=\max\{\omega_\rmv,\omega_\rmv^\prime\}$, $\omega_{\rmsum}:=\sqrt{\ln\frac{1}{\omega_\rmv^\prime}}+\sqrt{\ln\frac{1}{\omega_\rmv}}$ and
   \begin{align}\label{equ:grisk}
            &g_{\Riskset}(r,\Deltav_{i,\Riskset})
            \\
            &:=
            \left\{
            \begin{aligned}
                &
                \frac{\abcstt\gamma K}{(\frac{(r-1)\bsigma^2}
                {3\omega_{\rmsum}})^2}\left(2+\frac{1}{\ln(1/\omega_{\rmv\rmv^\prime})}\left(\ln\ln_+\frac{1}{\ln(1/\omega_{\rmv\rmv^\prime})(\frac{\Deltav_{i,\Riskset}}{3\sqrt{\ln(1/\omega_\rmv^\prime)}})^2}+\upbdcstt\right)\right), 
                r\geq \left\lfloor\frac{\omega_\rmsum\cdot\Deltav_{i,\Riskset} }{\sqrt{\ln(1/\omega_\rmv^\prime)}\bsigma^2}\right\rfloor+2
                \\
                &
                \frac{\abcstt\gamma K}{(\frac{\Deltav_{i,\Riskset}}{3\sqrt{\ln(1/\omega_\rmv^\prime)}})^2}
                \left(2+\frac{1}{\ln(1/\omega_{\rmv\rmv^\prime})}\left(\ln\ln_+\frac{1}{\ln(1/\omega_{\rmv\rmv^\prime})(\frac{\Deltav_{i,\Riskset}}{3\sqrt{\ln(1/\omega_\rmv^\prime)}})^2}+\upbdcstt\right)\right),
                r\leq \left\lfloor\frac{\omega_\rmsum\cdot\Deltav_{i,\Riskset} }{\sqrt{\ln(1/\omega_\rmv^\prime)}\bsigma^2}\right\rfloor+1
            \end{aligned}\right.
    \end{align}
    then 
    \begin{align}
         &\sum_{\phase=1}^{\Tp}
            \mathbbm{1}\left\{A_\phase\in\Riskset\right\}
                \mathbbm{1}\left\{ U_{A_\phase}^\rmv(\phase-1)>r\bsigma^2 \right\}
        \leq\sum_{i\in E}g_{\Riskset}(r,\Deltav_{i,\Riskset})
    \end{align}

    \underline{For $\Safeset^c\cap\Subopt$}: 
    for $i\in E$,  denote 
    $$\bDelta_{i,\Safeset^c\cap\Subopt}
    :=
    \min_{S\ni i,S\in\Safeset^c\cap\Subopt} \max\left\{\frac{\Delta_S}{\sqrt{\ln(1/\omega_\mu)}}
    ,
    \frac{\Deltav_S}{3\sqrt{\ln(1/\omega_\rmv^\prime)}}
    \right\}.
    $$
    and $\omega_{\max}:=\max\{\omega_\mu,\omega_\rmv,\omega_\rmv^\prime\}$
    and
    $\omega_{\rmsum}:=\sqrt{\ln\frac{1}{\omega_\rmv^\prime}}+\sqrt{\ln\frac{1}{\omega_\rmv}}$ and
   \begin{align}\label{equ:gunsafesub}
            &g_{\Safeset^c\cap\Subopt}(r,\bDelta_{i,\Safeset^c\cap\Subopt})
            \\
            &:=
            \left\{
            \begin{aligned}
                &
                \frac{\abcstt\gamma K}{(\frac{(r-1)\bsigma^2}{3\omega_{\rmsum}})^2}\left(2+\frac{1}{\ln(1/\omega_{\max})}\left(\ln\ln_+\frac{1}{\ln(1/\omega_{\max})(\bDelta_{i,\Safeset^c\cap\Subopt})^2}+\upbdcstt\right)\right), 
                r\geq \left\lfloor\frac{\omega_\rmsum\cdot\bDelta_{i,\Safeset^c\cap\Subopt} }{\bsigma^2/3}\right\rfloor+2
                \\
                &
                 \frac{\abcstt\gamma K}{(\bDelta_{i,\Safeset^c\cap\Subopt})^2}\left(2+\frac{1}{\ln(1/\omega_{\max})}\left(\ln\ln_+\frac{1}{\ln(1/\omega_{\max})(\bDelta_{i,\Safeset^c\cap\Subopt})^2}+\upbdcstt\right)\right),
                r\leq \left\lfloor\frac{\omega_\rmsum\cdot\bDelta_{i,\Safeset^c\cap\Subopt} }{\bsigma^2/3}\right\rfloor+1
            \end{aligned}\right.
    \end{align}
    then 
    \begin{align}
         &\sum_{\phase=1}^{\Tp}
            \mathbbm{1}\left\{A_\phase\in\Safeset^c\cap\Subopt\right\}
                \mathbbm{1}\left\{ U_{A_\phase}^\rmv(\phase-1)>r\bsigma^2 \right\}
        \leq\sum_{i\in E}g_{\Safeset^c\cap\Subopt}(r,\bDelta_{i,\Safeset^c\cap\Subopt})
    \end{align}

    \noindent
    \textbf{\underline{Step 3:}}

    According to the results in Step 2, given $r\in[Q-1]$, the event $\mathcal{U}_p(r)$ can happen at most 
    \begin{align}\label{equ:upbd_phase}
        &\sum_{\phase=1}^{\Tp}
        \mathbbm{1}\left\{\mathcal{U}_p(r)\right\}
        \\
        &\leq
        \min\left\{\Tp,\sum_{i\in S^\star}g_{S^\star}(r,\Deltav_{S^\star})
        +
        \sum_{i\in E}g_{\Safeset\cap\Subopt}(r,\bDelta_{i,\Safeset\cap\Subopt})
        +
        \sum_{i\in E} g_{\Riskset}(r,\Deltav_{i,\Riskset})
        +
        \sum_{i\in E}g_{\Safeset^c\cap\Subopt}(r,\bDelta_{i,\Safeset^c\cap\Subopt})
        \right\}
    \end{align}
    phases, where the $g$ functions are defined in \eqref{equ:gAstar}, \eqref{equ:gsafesub}, \eqref{equ:grisk} and \eqref{equ:gunsafesub}. By Lemma~\ref{lem:np},  
    (1) event $\mathcal{U}_p(r)$ indicates $ r+1\leq n_\phase$, thus \eqref{equ:upbd_phase} also indicates at most in this number of phases there are at least $r+1$ being pulled at each phase.
    (2) event $\mathcal{U}_p(r)\cap\mathcal{U}_p(r+1)^c$ indicates $ r+1\leq n_\phase\leq 2r+1$, i.e., there are at least $r+1$ and at most $2r+1$ solutions being pulled at each phase.
    For $r\in[Q]$, we denote 
    \begin{align}\label{equ:Tpr}
        \Tp_r:=\sum_{i\in S^\star}g_{S^\star}(r,\Deltav_{S^\star})
        +
        \sum_{i\in E}g_{\Safeset\cap\Subopt}(r,\bDelta_{i,\Safeset\cap\Subopt})
        +
        \sum_{i\in E} g_{\Riskset}(r,\Deltav_{i,\Riskset})
        +
        \sum_{i\in E}g_{\Safeset^c\cap\Subopt}(r,\bDelta_{i,\Safeset^c\cap\Subopt}),\quad r\in[Q-1]
    \end{align}
    and $\Tp_Q:=0,\Tp_0=\infty$.
    Note that the $g$ functions are increasing as $r$ decreases, so if there exists an $r^\prime\in[Q]$, such that $\Tp\in[\Tp_{r^\prime},\Tp_{r^\prime-1})$ (or $\Tp\leq\Tp_{r^\prime-1}$) , then 
     \begin{align}
        &\sum_{r=1}^{Q-1}
            \sum_{\phase=1}^{\Tp}
            \mathbbm{1}\left\{\mathcal{U}_p(r)\right\}
        \\
        &=
        \sum_{r=1}^{r^\prime-1}
            \sum_{\phase=1}^{\Tp}
            \mathbbm{1}\left\{\mathcal{U}_p(r)\right\}
            +
            \sum_{r=r^\prime}^{Q-1}
            \sum_{\phase=1}^{\Tp}
            \mathbbm{1}\left\{\mathcal{U}_p(r)\right\}
        \\
        &\leq
        \Tp\cdot (r^\prime-1)+
        \sum_{r=r^\prime}^{Q-1}
        \sum_{i\in S^\star}g_{S^\star}(r,\Deltav_{S^\star})
        +
        \sum_{i\in E}g_{\Safeset\cap\Subopt}(r,\bDelta_{i,\Safeset\cap\Subopt})
        +
        \sum_{i\in E} g_{\Riskset}(r,\Deltav_{i,\Riskset})
        +
        \sum_{i\in E}g_{\Safeset^c\cap\Subopt}(r,\bDelta_{i,\Safeset^c\cap\Subopt})
        \\
        &=
        \Tp\cdot (r^\prime-1)+
        \sum_{r=r^\prime}^{Q-1}
        \Tp_r
        \\
        &\leq
        \Tp\cdot (r^\prime-1)+
        \sum_{i\in S^\star}h_{S^\star}(r^\prime,\Deltav_{S^\star})
        +
        \sum_{i\in E}h_{\Safeset\cap\Subopt}(r^\prime,\bDelta_{i,\Safeset\cap\Subopt})
        +
        \sum_{i\in E} h_{\Riskset}(r^\prime,\Deltav_{i,\Riskset})
        +
        \sum_{i\in E}h_{\Safeset^c\cap\Subopt}(r^\prime,\bDelta_{i,\Safeset^c\cap\Subopt})
        \\
        &=\Tp\cdot (r^\prime-1)+H(r^\prime,\instance)
    \end{align}
    where 
    \begin{align}\label{equ:Hinstance}
        H(r^\prime,\instance):=
        \sum_{i\in S^\star}h_{S^\star}(r^\prime,\Deltav_{S^\star})
        +
        \sum_{i\in E}h_{\Safeset\cap\Subopt}(r^\prime,\bDelta_{i,\Safeset\cap\Subopt})
        +
        \sum_{i\in E} h_{\Riskset}(r^\prime,\Deltav_{i,\Riskset})
        +
        \sum_{i\in E}h_{\Safeset^c\cap\Subopt}(r^\prime,\bDelta_{i,\Safeset^c\cap\Subopt})
    \end{align} and  the $h$ functions are defined at the end of the proof.
    This indicates, when $\Tp\leq\Tp_{r^\prime-1}$, the upper bound of the high-probability regret due to safeness-checking $\rmR_2(T^\prime)$ (hence the the total regret) is lower bounded by a linear function with slope $r^\prime-1$ . 
    In particular, the upper bound of $\rmR_2(\Tp)$ is lower bounded by a linear function when $\Tp\leq\Tp_{1}$ and it remains a constant when $\Tp>\Tp_{1}$.
    We can compute an upper bound for the number of solutions being pulled during these $\Tp$ phases when $\Tp\in[\Tp_{r^\prime},\Tp_{r^\prime-1})$:
    \begin{align}
        &\Tp_{Q-1}\cdot (2Q-1)+(\Tp_{Q-2}-\Tp_{Q-1})\cdot (2Q-3)+\cdots+(\Tp_{r^\prime}-\Tp_{r^\prime+1})\cdot (2r^\prime+1)+(\Tp-\Tp_{r^\prime})(2r^\prime-1)
        \\
        &=
        \Tp+2\left(\Tp_{Q-1}\cdot (Q-1)+(\Tp_{Q-2}-\Tp_{Q-1})\cdot (Q-2)+\cdots+(\Tp_{r^\prime}-\Tp_{r^\prime+1})\cdot r^\prime+(\Tp-\Tp_{r^\prime})(r^\prime-1)\right)
        \\
        &=(2r^\prime-1)\Tp+2\sum_{r=r^\prime}^{Q-1}\Tp_r
        \\
        &\leq (2r^\prime-1)\Tp+2H(r^\prime,\instance)
    \end{align}
    Thus, the number of pulled solutions is at most $2H(1,\instance)$.\footnote{Note that the upper bound for the regret is $2\mu^\star \Tp\cdot (r^\prime-1)+2\mu^\star \cdot H(r^\prime,\instance)$ and the upper bound for the number of solutions is $(2r^\prime-1)\Tp+2H(r^\prime,\instance)$, which indicates we can roughly use $\min\{T\mu^\star ,2\mu^\star H(r^\prime,\instance)\}$ to bound the regret due to safeness-checking with $T$ time steps.}
    

    \textbf{In conclusion}, the regret due to safeness-checking can be upper bounded by 
    \begin{align}
        \rmR_2(\Tp)
            &=
            2\mu^\star \sum_{r=1}^{Q-1}
            \sum_{\phase=1}^{\Tp}
            \mathbbm{1}\left\{\mathcal{U}_p(r)\right\}
            \\
            &\leq
            2\mu^\star \Tp\cdot (r^\prime-1)
            \\
            &\quad+
            2\mu^\star \left(\sum_{i\in S^\star}h_{S^\star}(r^\prime,\Deltav_{S^\star})
            +
            \sum_{i\in E}h_{\Safeset\cap\Subopt}(r^\prime,\bDelta_{i,\Safeset\cap\Subopt})
            +
            \sum_{i\in E} h_{\Riskset}(r^\prime,\Deltav_{i,\Riskset})
            +
            \sum_{i\in E}h_{\Safeset^c\cap\Subopt}(r^\prime,\bDelta_{i,\Safeset^c\cap\Subopt})\right)
            \\
            &=2\mu^\star \Tp\cdot (r^\prime-1)+2\mu^\star \cdot H(r^\prime,\instance)
    \end{align}
    where $\Tp\in[\Tp_{r^\prime},\Tp_{r^\prime-1})$ with $\Tp_{r^\prime}$ defined in \eqref{equ:Tpr}, 
    for $i\in E$,
    $\bDelta_{i,\Safeset\cap\Subopt}:=\min_{S\ni i,S\in\Safeset\cap\Subopt}\max\left\{\frac{\Delta_{S}}{\sqrt{\ln(1/\omega_\mu)}},\frac{\Deltav_{S}}{3\sqrt{\ln(1/\omega_\rmv)}}\right\}$, $\Deltav_{i,\Riskset}:=\min_{S\ni i,S\in\Riskset}\Deltav_S$ and 
    $\bDelta_{i,\Safeset^c\cap\Subopt}
    :=
    \min_{S\ni i,S\in\Safeset^c\cap\Subopt} \max\left\{\frac{\Delta_S}{\sqrt{\ln(1/\omega_\mu)}}
    ,
    \frac{\Deltav_S}{3\sqrt{\ln(1/\omega_\rmv^\prime)}}
    \right\}.
    $

    
    \textbf{The $h$ functions:}

    (For convenience, we restate the notations:
    $\omega_{\mu\rmv}:=\max\{\omega_\mu,\omega_\rmv\}$, $\omega_{\rmv\rmv^\prime}:=\max\{\omega_\rmv,\omega_\rmv^\prime\}$,  $\omega_{\max}:=\max\{\omega_\mu,\omega_\rmv,\omega_\rmv^\prime\}$ and  $\omega_{\rmsum}:=\sqrt{\ln\frac{1}{\omega_\rmv^\prime}}+\sqrt{\ln\frac{1}{\omega_\rmv}}$.)
    
    For $S^\star$: for each $i\in S^\star$
    \begin{align}\label{equ:hAstar}
        &h_{S^\star}(r^\prime,\Deltav_{S^\star}):=\sum_{r=r^\prime}^{Q-1}
        g_{S^\star}(r,\Deltav_{S^\star})
        \\
        &=\sum_{r=r^\prime}^{Q-1}
        \frac{9\cdot\abcstt\gamma K}{((r-1)\bsigma^2+\Deltav_{S^\star})^2}
        \left(2\ln\frac{1}{\omega_\rmv}+\ln\ln_+\frac{1}{{\Deltav_{S^\star}}^2}+\upbdcstt\right)
        \\
        &\leq\left\{\begin{aligned}
            &\frac{18\cdot\abcstt\gamma K}{(r^\prime-1)\bsigma^4}
            \left(2\ln\frac{1}{\omega_\rmv}+\ln\ln_+\frac{1}{(\Deltav_{S^\star})^2}+\upbdcstt\right),\quad r^\prime\geq 2
            \\
            &\frac{18\cdot\abcstt\gamma K}{(\Deltav_{S^\star})^2}
            \left(2\ln\frac{1}{\omega_\rmv}+\ln\ln_+\frac{1}{(\Deltav_{S^\star})^2}+\upbdcstt\right),\quad r^\prime=1
        \end{aligned}\right.
    \end{align}

    For $\Safeset\cap\Subopt$: for each $i\in E$, there is a changing point $\left\lfloor\frac{\bDelta_{i,\Safeset\cap\Subopt} }{\frac{\bsigma^2}{3\sqrt{\ln(1/\omega_\rmv)}}}\right\rfloor+2$ in $g_{\Safeset\cap\Subopt}(r,\bDelta_{i,\Safeset\cap\Subopt})$, thus
    \begin{align}\label{equ:hsafesub}
        &h_{\Safeset\cap\Subopt}(r^\prime,\bDelta_{i,\Safeset\cap\Subopt}):=\sum_{r=r^\prime}^{Q-1}
        g_{\Safeset\cap\Subopt}(r,\bDelta_{i,\Safeset\cap\Subopt})
        \\
        &\leq
        \left\{\begin{aligned}
            &0,r^\prime=Q
            \\
            &(Q-r^\prime)\frac{\abcstt\gamma K}{\bDelta_{i,\Safeset\cap\Subopt}^2}
                \left(2
                +\frac{1}{\ln(1/\omega_{\mu\rmv})}\left(\ln\ln_+\frac{1}{\ln(1/\omega_{\mu\rmv})\bDelta_{i,\Safeset\cap\Subopt}^2}+\upbdcstt\right)\right),
                \left\lfloor\frac{\bDelta_{i,\Safeset\cap\Subopt} }{\frac{\bsigma^2}{3\sqrt{\ln(1/\omega_\rmv)}}}\right\rfloor\geq Q-3
            \\
            &\frac{2\cdot\abcstt\gamma K}{(r^\prime-1)(\frac{\bsigma^2}{3\sqrt{\ln(1/\omega_\rmv)}})^2}
                \left(2+\frac{1}{\ln(1/\omega_{\mu\rmv})}\left(\ln\ln_+\frac{1}{\ln(1/\omega_{\mu\rmv})\bDelta_{i,\Safeset\cap\Subopt}^2}+\upbdcstt\right)\right), 
                \left\lfloor\frac{\bDelta_{i,\Safeset\cap\Subopt} }{\frac{\bsigma^2}{3\sqrt{\ln(1/\omega_\rmv)}}}\right\rfloor+2
                \leq r^\prime<Q-1
            \\
             &\frac{3\cdot\abcstt\gamma K}{\bDelta_{i,\Safeset\cap\Subopt}^2}
            \left(2
            +\frac{1}{\ln(1/\omega_{\mu\rmv})}\left(\ln\ln_+\frac{1}{\ln(1/\omega_{\mu\rmv})\bDelta_{i,\Safeset\cap\Subopt}^2}+\upbdcstt\right)\right),
            \left\lfloor\frac{\bDelta_{i,\Safeset\cap\Subopt} }{\frac{\bsigma^2}{3\sqrt{\ln(1/\omega_\rmv)}}}\right\rfloor=0,r^\prime=1
            \\
            &\abcstt\gamma K\left(\frac{4}{\frac{\bsigma^2}{3\sqrt{\ln(1/\omega_\rmv)}}\cdot\bDelta_{i,\Safeset\cap\Subopt}}-\frac{r^\prime-1}{\bDelta_{i,\Safeset\cap\Subopt}^2}\right)
            \left(2
            +\frac{1}{\ln(1/\omega_{\mu\rmv})}\left(\ln\ln_+\frac{1}{\ln(1/\omega_{\mu\rmv})\bDelta_{i,\Safeset\cap\Subopt}^2}+\upbdcstt\right)\right),
            \text{otherwise}
        \end{aligned}\right.
    \end{align}

    For $\Riskset$: for each $i\in E$, there is a changing point $\left\lfloor\frac{\omega_\rmsum\cdot\Deltav_{i,\Riskset} }{\sqrt{\ln(1/\omega_\rmv^\prime)}\bsigma^2}\right\rfloor+2$, thus
    \begin{align}\label{equ:hrisk}
        &h_{\Riskset}(r^\prime,\Deltav_{i,\Riskset}):=\sum_{r=r^\prime}^{Q-1}
        g_{\Riskset}(r,\Deltav_{i,\Riskset})\leq
        \\
        &\left\{
        \begin{aligned}
                &0,r^\prime=Q
                \\
                &
                (Q-r^\prime)\frac{\abcstt\gamma K}{(\frac{\Deltav_{i,\Riskset}}{3\sqrt{\ln(1/\omega_\rmv^\prime)}})^2}
                \left(2+\frac{1}{\ln(1/\omega_{\rmv\rmv^\prime})}\left(\ln\ln_+\frac{1}{\ln(1/\omega_{\rmv\rmv^\prime})(\frac{\Deltav_{i,\Riskset}}{3\sqrt{\ln(1/\omega_\rmv^\prime)}})^2}+\upbdcstt\right)\right),
                \left\lfloor\frac{\omega_\rmsum\cdot\Deltav_{i,\Riskset} }{\sqrt{\ln(1/\omega_\rmv^\prime)}\bsigma^2}\right\rfloor\geq Q-3
                \\
                &
                \frac{2\abcstt\gamma K}{(r^\prime-1)(\frac{\bsigma^2}
                {3\omega_{\rmsum}})^2}\left(2+\frac{1}{\ln(1/\omega_{\rmv\rmv^\prime})}\left(\ln\ln_+\frac{1}{\ln(1/\omega_{\rmv\rmv^\prime})(\frac{\Deltav_{i,\Riskset}}{3\sqrt{\ln(1/\omega_\rmv^\prime)}})^2}+\upbdcstt\right)\right), 
                 \left\lfloor\frac{\omega_\rmsum\cdot\Deltav_{i,\Riskset} }{\sqrt{\ln\frac{1}{\omega_\rmv^\prime}}\bsigma^2}\right\rfloor+2\leq r^\prime<Q-1
                \\
                &
                \frac{3\abcstt\gamma K}{(\frac{\Deltav_{i,\Riskset}}{3\sqrt{\ln(1/\omega_\rmv^\prime)}})^2}
                \left(2+\frac{1}{\ln(1/\omega_{\rmv\rmv^\prime})}\left(\ln\ln_+\frac{1}{\ln(1/\omega_{\rmv\rmv^\prime})(\frac{\Deltav_{i,\Riskset}}{3\sqrt{\ln(1/\omega_\rmv^\prime)}})^2}+\upbdcstt\right)\right),
                \left\lfloor\frac{\omega_\rmsum\cdot\Deltav_{i,\Riskset} }{\sqrt{\ln(1/\omega_\rmv^\prime)}\bsigma^2}\right\rfloor=0,r^\prime=1
                \\
                &
                \abcstt\gamma K\left(
                \frac{3}{\frac{\bsigma^2}{3\omega_{\rmsum}}\cdot
                \frac{\Deltav_{i,\Riskset}}{3\sqrt{\ln(1/\omega_\rmv^\prime)}}}
                -\frac{r^\prime-1}{(\frac{\Deltav_{i,\Riskset}}{3\sqrt{\ln(1/\omega_\rmv^\prime)}})^2}
                \right)
                \left(2+\frac{1}{\ln(1/\omega_{\rmv\rmv^\prime})}\left(\ln\ln_+\frac{1}{\ln(1/\omega_{\rmv\rmv^\prime})(\frac{\Deltav_{i,\Riskset}}{3\sqrt{\ln(1/\omega_\rmv^\prime)}})^2}+\upbdcstt\right)\right),
                \text{else}
            \end{aligned}\right.
    \end{align} 

    For $\Safeset^c\cap\Subopt$: for each $i\in E$, there is a changing point $\left\lfloor\frac{\omega_\rmsum\cdot\bDelta_{i,\Safeset^c\cap\Subopt} }{\bsigma^2/3}\right\rfloor+2$, thus
    \begin{align}\label{equ:hunsafesub}
            &h_{\Safeset^c\cap\Subopt}(r^\prime,\bDelta_{i,\Safeset^c\cap\Subopt}):=\sum_{r=r^\prime}^{Q-1}g_{\Safeset^c\cap\Subopt}(r,\bDelta_{i,\Safeset^c\cap\Subopt})
            \\
            &\leq
            \left\{
            \begin{aligned}
                &0,r^\prime=Q
                \\
                &
                (Q-r^\prime)\frac{\abcstt\gamma K}{(\bDelta_{i,\Safeset^c\cap\Subopt})^2}\left(2+\frac{1}{\ln(1/\omega_{\max})}\left(\ln\ln_+\frac{1}{\ln(1/\omega_{\max})(\bDelta_{i,\Safeset^c\cap\Subopt})^2}+\upbdcstt\right)\right),\left\lfloor\frac{\omega_\rmsum\cdot\bDelta_{i,\Safeset^c\cap\Subopt} }{\bsigma^2/3}\right\rfloor\geq Q-3
                \\
                &
                \frac{2\abcstt\gamma K}{(r^\prime-1)(\frac{\bsigma^2}{3\omega_{\rmsum}})^2}\left(2+\frac{1}{\ln(1/\omega_{\max})}\left(\ln\ln_+\frac{1}{\ln(1/\omega_{\max})(\bDelta_{i,\Safeset^c\cap\Subopt})^2}+\upbdcstt\right)\right), 
                 \left\lfloor\frac{\omega_\rmsum\cdot\bDelta_{i,\Safeset^c\cap\Subopt} }{\bsigma^2/3}\right\rfloor+2\leq r^\prime <Q-1
                \\
                &
                 \frac{3\abcstt\gamma K}{(\bDelta_{i,\Safeset^c\cap\Subopt})^2}\left(2+\frac{1}{\ln(1/\omega_{\max})}\left(\ln\ln_+\frac{1}{\ln(1/\omega_{\max})(\bDelta_{i,\Safeset^c\cap\Subopt})^2}+\upbdcstt\right)\right),
                \left\lfloor\frac{\omega_\rmsum\cdot\bDelta_{i,\Safeset^c\cap\Subopt} }{\bsigma^2/3}\right\rfloor=0,r^\prime=1
                \\
                &
                \abcstt\gamma K\left(
                 \frac{3}{\frac{\bsigma^2}{3\omega_{\rmsum}}\bDelta_{i,\Safeset^c\cap\Subopt}}
                 -
                 \frac{r^\prime-1}{(\bDelta_{i,\Safeset^c\cap\Subopt})^2}
                 \right)
                 \left(2+\frac{1}{\ln(1/\omega_{\max})}\left(\ln\ln_+\frac{1}{\ln(1/\omega_{\max})(\bDelta_{i,\Safeset^c\cap\Subopt})^2}+\upbdcstt\right)\right),
                \text{otherwise}
            \end{aligned}\right.
    \end{align}

\end{proof}
 
    \subsection{Proofs of Theorem~\ref{thm:upbd} and Theorem~\ref{thm:problemindependent_upbd}}\label{sec:proofUppBd}

\thmUpperBdProbDep*
        \begin{proof}[Proof of Theorem~\ref{thm:upbd}]
             According to Lemma~\ref{lem:upbd_decomposition}, Lemma~\ref{lem:subreg} and Lemma~\ref{lem:safereg}, and take $\omega_\mu=\omega_\rmv^\prime=\frac{1}{T^2}$ and $\omega_\rmv=\frac{\delta_T}{T^2}$, the expected regret of $\Tp$ phases $\mathbb{E}[\rmR(\Tp)]$ can be upper bounded as
        \begin{align}\label{equ:proof_upbd1}
             \mathbb{E}[\rmR(\Tp)]
             &\leq
             \mathbb{E}[\rmR_1(T^\prime)|\mathcal{E}]+\mathbb{E}[\rmR_2(T^\prime)|\mathcal{E}]+\rmR_3(T)
             \\
             &\leq 
             O\left(\sum_{i\in E\setminus S^\star}
            \frac{K}{\Delta_{i,\Safeset\cap\Subopt,\min}} 
            \ln\frac{1}{\omega_\mu}+
            \sum_{i\in E}
            \frac{c_i K}{\Delta_{i,\Safeset^c\cap\Subopt,\min}} \ln\frac{1}{\omega_\rmv^\prime}\right)
            +2\mu^\star \left[\Tp\cdot (r^\prime-1)+H(r^\prime,\instance)\right] 
            \\
            &\quad
            +2\mu^\star  L+2\mu^\star TL\left(\xi(\omega_\mu)+2\xi(\omega_v)+2\xi(\omega_v^\prime)\right)
            \\
             &
             \leq 
             O\left(\sum_{i\in E\setminus S^\star}
            \frac{K}{\Delta_{i,\Safeset\cap\Subopt,\min}} 
            \ln T+
            \sum_{i\in E}
            \frac{c_i K}{\Delta_{i,\Safeset^c\cap\Subopt,\min}} \ln T\right)
            +2\mu^\star H(1,\instance)
            \\
            &\quad
            +2\mu^\star  L+2\mu^\star TL\left(3\xi(1/T^2)+2\xi(\delta_T/T^2)\right)
            \\
            &=\Regsub(T)+\Regsafe(T)+\Regfail(T)
        \end{align}
    On the other hand, the high-probability regret \eqref{equ:high_pro_regret} can be na\"ive bounded as
    \begin{align}
        \hat{\rmR}(\Tp)&=\sum_{\phase=1}^{\Tp}\sum_{r=1}^{n_\phase}(\mu^\star -\mu_{A_{\phase,r}})
        \\
        &\leq
        \sum_{\phase=1}^{\Tp}\sum_{r=1}^{n_\phase}\mu^\star 
        \\
        &=
        T\mu^\star .
    \end{align}
    Therefore, we have 
    \begin{align}\label{equ:proof_upbd2}
        \mathbb{E}[\rmR(\Tp)]
            &\leq \mathbb{E}[\hat{\rmR}(\Tp)|\mathcal{E}]+\Regfail(T)
            \\
             &\leq T\mu^\star +\Regfail(T)
    \end{align}
    \eqref{equ:proof_upbd1} and \eqref{equ:proof_upbd2} give the final upper bound with $T$ time steps.
             
     \end{proof}

        \thmproblemindependentupbd*
       \begin{proof}[Proof of Theorem~\ref{thm:problemindependent_upbd}]
        We firstly deal with the regret due to suboptimality $\rmR_1(\Tp)$.
        Let $\Deltamu$ be a constant that is to be chosen.
        \begin{align}
            \rmR_1(\Tp)
            &=\sum_{\phase=1}^{\Tp}\mathbbm{1}\{A_\phase\in\Subopt\}\Delta_{A_\phase}
            \\
            &=\sum_{\phase=1}^{\Tp}\mathbbm{1}\{\Delta_{A_\phase}\geq \Delta^\mu\}\mathbbm{1}\{A_\phase\in\Subopt\}\Delta_{A_\phase}
            +
            \sum_{\phase=1}^{\Tp}\mathbbm{1}\{\Delta_{A_\phase}<\Deltamu\}\mathbbm{1}\{A_\phase\in\Subopt\}\Delta_{A_\phase}
            \\
            &\leq
            \sum_{\phase=1}^{\Tp}\mathbbm{1}\{\Delta_{A_\phase}\geq \Delta^\mu\}\mathbbm{1}\{A_\phase\in\Subopt\}\Delta_{A_\phase}
            +
            T \cdot \Deltamu
        \end{align}
        where the second term makes use of the fact that $\Tp\leq T$ w.p. $1$.

        The first term can be upper bounded by adopting the proof of Lemma~\ref{lem:safereg} with the constraint that $\Delta_{A_\phase}\geq \Delta^\mu$. Thus,
        for $i\in E\setminus S^\star$,  
            $\Delta_{i,\Safeset\cap\Subopt,\min}\geq \Deltamu.$
        For $i\in E$,
        \begin{align}
            \Delta_{i,\Safeset^c\cap\Subopt,\min}&\geq\Deltamu
            \\
            \bDelta_{i,\Safeset^c\cap\Subopt}^\prime&\geq \max
            \{\bomega\Deltamu,\Deltav/3\}
            =
            \max
            \{\Deltamu,\Deltav/3\}
            =:\Delta^{\mu\rmv}
            \\
            c_i&\leq \frac{1}{\bomega^2}=1
        \end{align}
        where $\bomega:=\sqrt{
            \frac{\ln\frac{1}{\omega_\rmv^\prime}}{\ln\frac{1}{\omega_\mu}}
            }=1
            $. 
        The regret due to suboptimality can be upper bounded by
        \begin{align}
            \sum_{\phase=1}^{\Tp}\mathbbm{1}\{\Delta_{A_\phase}\geq \Delta^\mu\}\mathbbm{1}\{A_\phase\in\Subopt\}\Delta_{A_\phase}
            \leq&
            \sum_{i\in E\setminus S^\star}
            \frac{2\abcstt\gamma K}{\Delta_{i,\Safeset\cap\Subopt,\min}} 
            \left(2\ln\frac{1}{\omega_\mu}+\ln\ln_+\frac{1}{\Delta_{i,\Safeset\cap\Subopt,\min}^2}+\upbdcstt\right)
            \\
            &+
            \sum_{i\in E}
            \frac{2c_i\cdot \abcstt\gamma K}{\Delta_{i,\Safeset^c\cap\Subopt,\min}} 
            \left(2\ln\frac{1}{\omega_\rmv^\prime}+\ln\ln_+\frac{1}{(\bDelta_{i,\Safeset^c\cap\Subopt}^\prime)^2}+\upbdcstt\right)
            \\
            \leq&
            \sum_{i\in E\setminus S^\star}
            \frac{2\abcstt\gamma K}{\Deltamu} 
            \left(2\ln\frac{1}{\omega_\mu}+\ln\ln_+\frac{1}{(\Deltamu)^2}+\upbdcstt\right)
            \\
            &+
            \sum_{i\in E}
            \frac{2\abcstt\gamma K}{\Deltamu} 
            \left(2\ln\frac{1}{\omega_\rmv^\prime}+\ln\ln_+\frac{1}{(\Delta^{\mu\rmv})^2}+\upbdcstt\right)
            \\
            \leq&
            L\cdot
            \frac{8\abcstt\gamma K}{\Deltamu} 
            \left(2\ln T+\ln\ln_+\frac{1}{(\Deltamu)^2}+\upbdcstt\right)
        \end{align}
    By taking $\Deltamu=\sqrt{\frac{KL\ln (TL)}{T}}$, the regret due to suboptimality is bounded by
    \begin{align}
            \rmR_1(\Tp)
            &\leq
            \sum_{\phase=1}^{\Tp}\mathbbm{1}\{\Delta_{A_\phase}\geq \Delta^\mu\}\mathbbm{1}\{A_\phase\in\Subopt\}\Delta_{A_\phase}
            +
            T \cdot \Deltamu
            \\
            &\leq 
            16\abcstt\gamma\sqrt{KLT\ln (TL)}+8\abcstt \gamma\sqrt{\frac{KLT}{\ln T}}\ln\ln_+\frac{T}{KL\ln T}+8\abcstt \gamma\sqrt{\frac{KLT}{\ln T}}\upbdcstt+\sqrt{KLT\ln (TL)}
            \\
            &=O(\sqrt{KLT\ln (TL)})
            \\
            &=O(\sqrt{KLT\ln T}).
        \end{align}
    where we utilize $T\geq L$.
    
    We then cope with the regret due to safeness-checking $\rmR_2(\Tp).$ According to Lemma~\ref{lem:safereg}, we only need to upper bound $H(1,\instance)$, i.e., the $h$ functions defined in \eqref{equ:hAstar}, \eqref{equ:hsafesub}, \eqref{equ:hrisk} and \eqref{equ:hunsafesub}.

    For $h_{S^\star}(1,\Deltav_{S^\star})$:
    \begin{align}\label{equ:indendent_hAstar}
        h_{S^\star}(1,\Deltav_{S^\star})
        &\leq h_{S^\star}(1,\Deltav)
        \\
        &
        =
        \frac{18\cdot\abcstt\gamma K}{(\Deltav)^2}
        \left(2\ln\frac{1}{\omega_\rmv}+\ln\ln_+\frac{1}{(\Deltav)^2}+\upbdcstt\right)
        \\
        &=O\left(\frac{ K}{(\Deltav)^2}\ln\frac{1}{\omega_\rmv}\right)
        \\
        &=O\left(\frac{ K}{(\Deltav)^2}\ln\frac{T}{\delta}\right)
    \end{align}
    For $h_{\Safeset\cap\Subopt}(1,\bDelta_{i,\Safeset\cap\Subopt})$, define
    $\Delta^{\mu\rmv}
    :=\max\left\{\frac{\Deltamu}{\sqrt{\ln(1/\omega_\mu)}}
    ,
    \frac{\Deltav}{3\sqrt{\ln(1/\omega_\rmv)}}\right\}
    =
    \max\left\{\sqrt{\frac{KL}{T}}
    ,
    \frac{\Deltav}{3\sqrt{\ln(TL/\delta)}}\right\}$. 
    We have $\bDelta_{i,\Safeset\cap\Subopt}\geq \Delta^{\mu\rmv}$.
    The threshold 
    $\left\lfloor\frac{\Delta^{\mu\rmv}}{\frac{\bsigma^2}{3\sqrt{\ln(1/\omega_\rmv)}}}\right\rfloor
    =
     \left\lfloor
     \frac{
     \max\left\{ 3\sqrt{\frac{KL\ln(TL/\delta)}{T}}
    ,
    \Deltav
    \right\}}
    {\bsigma^2}
    \right\rfloor.
    $
    \begin{align}\label{equ:indendent_hsafesub}
        &
        h_{\Safeset\cap\Subopt}(1,\bDelta_{i,\Safeset\cap\Subopt})
        \leq
        h_{\Safeset\cap\Subopt}(1,\Delta^{\mu\rmv})
        \\
        &=
        \left\{\begin{aligned}
            &(Q-1)\frac{\abcstt\gamma K}{(\Delta^{\mu\rmv})^2}
                \left(2
                +\frac{1}{\ln(1/\omega_{\mu\rmv})}\left(\ln\ln_+\frac{1}{\ln(1/\omega_{\mu\rmv})(\Delta^{\mu\rmv})^2}+\upbdcstt\right)\right),
                \left\lfloor\frac{\Delta^{\mu\rmv}}{\frac{\bsigma^2}{3\sqrt{\ln(1/\omega_\rmv)}}}\right\rfloor\geq Q-3
            \\
            &\abcstt\gamma K\frac{4}{\frac{\bsigma^2}{3\sqrt{\ln(1/\omega_\rmv)}}\cdot\Delta^{\mu\rmv}}
            \left(2
            +\frac{1}{\ln(1/\omega_{\mu\rmv})}\left(\ln\ln_+\frac{1}{\ln(1/\omega_{\mu\rmv})(\Delta^{\mu\rmv})^2}+\upbdcstt\right)\right),
            0<\left\lfloor\frac{\Delta^{\mu\rmv} }{\frac{\bsigma^2}{3\sqrt{\ln(1/\omega_\rmv)}}}\right\rfloor<Q-3
            \\
             &\frac{3\cdot\abcstt\gamma K}{(\Delta^{\mu\rmv})^2}
            \left(2
            +\frac{1}{\ln(1/\omega_{\mu\rmv})}\left(\ln\ln_+\frac{1}{\ln(1/\omega_{\mu\rmv})(\Delta^{\mu\rmv})^2}+\upbdcstt\right)\right),
            \left\lfloor\frac{\Delta^{\mu\rmv} }{\frac{\bsigma^2}{3\sqrt{\ln(1/\omega_\rmv)}}}\right\rfloor=0
        \end{aligned}\right.
        \\
        &=
        \left\{\begin{aligned}
            &O\left((Q-1)\frac{K}{(\Delta^{\mu\rmv})^2}\right),
               && \left\lfloor\frac{\Delta^{\mu\rmv}}{\frac{\bsigma^2}{3\sqrt{\ln(1/\omega_\rmv)}}}\right\rfloor\geq Q-3
            \\
            &O\left(\frac{K}{\frac{\bsigma^2}{3\sqrt{\ln(1/\omega_\rmv)}}\cdot\Delta^{\mu\rmv}}\right),
           && 0<\left\lfloor\frac{\Delta^{\mu\rmv} }{\frac{\bsigma^2}{3\sqrt{\ln(1/\omega_\rmv)}}}\right\rfloor<Q-3
            \\
             &O\left(\frac{ K}{(\Delta^{\mu\rmv})^2}\right),
            &&\left\lfloor\frac{\Delta^{\mu\rmv} }{\frac{\bsigma^2}{3\sqrt{\ln(1/\omega_\rmv)}}}\right\rfloor=0
        \end{aligned}\right.
    \end{align}
    In particular, in the asymptotic case where $T\to \infty$ and $\ln\frac{1}{\delta}=\ln\frac{1}{\delta_T}=o(T^b),\forall b>0$ (this includes the scenario where $\delta$ is fixed with respect to $T$),  we have 
    \begin{align}
    h_{\Safeset\cap\Subopt}(1,\Delta^{\mu\rmv})
    =
    O\left(\frac{ K}{(\Delta^{\mu\rmv})^2}\right)
    =
    O\left(\frac{ K}{(\Deltav)^2}\ln\frac{T}{\delta}\right)
    \end{align}
    
For $h_{\Riskset}(1,\Deltav_{i,\Riskset})$, we have $\Deltav_{i,\Riskset}\geq\Deltav$. Furthermore, $\omega_\rmsum=\sqrt{\ln (TL)}+\sqrt{\ln (TL/\delta)}$ and the changing point $\left\lfloor\frac{\omega_\rmsum\cdot\Deltav }{\sqrt{\ln(1/\omega_\rmv^\prime)}\bsigma^2}\right\rfloor
=
\left\lfloor\frac{(\sqrt{\ln (TL)}+\sqrt{\ln (TL/\delta)})\cdot\Deltav }{\sqrt{\ln(TL)}\bsigma^2}\right\rfloor$.

Thus
    \begin{align}\label{equ:indendent_hrisk}
        &h_{\Riskset}(1,\Deltav_{i,\Riskset})
        \leq
        h_{\Riskset}(r^\prime,\Deltav)
        \\
        &=\left\{
        \begin{aligned}
                &
                (Q-1)\frac{\abcstt\gamma K}{(\frac{\Deltav}{3\sqrt{\ln(1/\omega_\rmv^\prime)}})^2}
                \left(2+\frac{1}{\ln(1/\omega_{\rmv\rmv^\prime})}\left(\ln\ln_+\frac{1}{\ln(1/\omega_{\rmv\rmv^\prime})(\frac{\Deltav}{3\sqrt{\ln(1/\omega_\rmv^\prime)}})^2}+\upbdcstt\right)\right),
                \left\lfloor\frac{\omega_\rmsum\cdot\Deltav }{\sqrt{\ln(1/\omega_\rmv^\prime)}\bsigma^2}\right\rfloor\geq Q-3
                \\
                &
                \abcstt\gamma K
                \frac{3}{\frac{\bsigma^2}{3\omega_{\rmsum}}\cdot
                \frac{\Deltav_{i,\Riskset}}{3\sqrt{\ln(1/\omega_\rmv^\prime)}}}
                \left(2+\frac{1}{\ln(1/\omega_{\rmv\rmv^\prime})}\left(\ln\ln_+\frac{1}{\ln(1/\omega_{\rmv\rmv^\prime})(\frac{\Deltav_{i,\Riskset}}{3\sqrt{\ln(1/\omega_\rmv^\prime)}})^2}+\upbdcstt\right)\right),
                0<\left\lfloor\frac{\omega_\rmsum\cdot\Deltav }{\sqrt{\ln(1/\omega_\rmv^\prime)}\bsigma^2}\right\rfloor\leq Q-3
                \\
                &
                \frac{3\abcstt\gamma K}{(\frac{\Deltav}{3\sqrt{\ln(1/\omega_\rmv^\prime)}})^2}
                \left(2+\frac{1}{\ln(1/\omega_{\rmv\rmv^\prime})}\left(\ln\ln_+\frac{1}{\ln(1/\omega_{\rmv\rmv^\prime})(\frac{\Deltav}{3\sqrt{\ln(1/\omega_\rmv^\prime)}})^2}+\upbdcstt\right)\right),
                \left\lfloor\frac{\omega_\rmsum\cdot\Deltav }{\sqrt{\ln(1/\omega_\rmv^\prime)}\bsigma^2}\right\rfloor=0
            \end{aligned}\right.
            \\
        &=\left\{
        \begin{aligned}
                &
                O\left((Q-1)\frac{K}{(\frac{\Deltav}{3\sqrt{\ln(1/\omega_\rmv^\prime)}})^2}\right),
                &&\left\lfloor\frac{\omega_\rmsum\cdot\Deltav }{\sqrt{\ln(1/\omega_\rmv^\prime)}\bsigma^2}\right\rfloor\geq Q-3
                \\
                &
                O\left(
                \frac{K}{\frac{\bsigma^2}{3\omega_{\rmsum}}\cdot
                \frac{\Deltav_{i,\Riskset}}{3\sqrt{\ln(1/\omega_\rmv^\prime)}}}\right),
                &&0<\left\lfloor\frac{\omega_\rmsum\cdot\Deltav }{\sqrt{\ln(1/\omega_\rmv^\prime)}\bsigma^2}\right\rfloor\leq Q-3
                \\
                &=
                O\left(\frac{K}{(\frac{\Deltav}{3\sqrt{\ln(1/\omega_\rmv^\prime)}})^2}\right),
                &&\left\lfloor\frac{\omega_\rmsum\cdot\Deltav }{\sqrt{\ln(1/\omega_\rmv^\prime)}\bsigma^2}\right\rfloor=0
            \end{aligned}\right.
    \end{align} 
    In particular, in the asymptotic case where $T\to \infty$ and $\ln\frac{1}{\delta}=\ln\frac{1}{\delta_T}=o(T^b),\forall b>0$, we have 
    \begin{align}
    h_{\Riskset}(1,\Delta^{\rmv})
    =
    O\left(\frac{ QK}{(\Delta^{\rmv})^2}\ln\frac{1}{\omega_\rmv^\prime}\right)
    =
    O\left(\frac{ QK}{(\Delta^{\rmv})^2}\ln T\right)
    \end{align}

    For $\Safeset^c\cap\Subopt$: define $\bDelta^{\mu\rmv}
    :=\max\left\{\frac{\Deltamu}{\sqrt{\ln(1/\omega_\mu)}}
    ,
    \frac{\Deltav}{3\sqrt{\ln(1/\omega_\rmv^\prime)}}
    \right\}
    =
    \max\left\{\sqrt{\frac{KL}{T}}
    ,
    \frac{\Deltav}{3\sqrt{\ln(TL)}}
    \right\}
    .$
    We have
    $\bDelta_{i,\Safeset^c\cap\Subopt}\geq \bDelta^{\mu\rmv} $ and the changing point 
    $\left\lfloor\frac{\omega_\rmsum\cdot\bDelta^{\mu\rmv} }{\bsigma^2/3}\right\rfloor
    =
    \left\lfloor\frac{(\sqrt{\ln (TL)}+\sqrt{\ln (TL/\delta)})\cdot\Delta^{\mu\rmv} }{\bsigma^2/3}\right\rfloor$
    thus
    \begin{align}\label{equ:indendent_hunsafesub}
            &h_{\Safeset^c\cap\Subopt}(1,\bDelta_{i,\Safeset^c\cap\Subopt})
            \leq
            h_{\Safeset^c\cap\Subopt}(1,\bDelta^{\mu\rmv})
            \\
            &=
            \left\{
            \begin{aligned}
                &
                (Q-1)\frac{\abcstt\gamma K}{(\bDelta^{\mu\rmv})^2}\left(2+\frac{1}{\ln(1/\omega_{\max})}\left(\ln\ln_+\frac{1}{\ln(1/\omega_{\max})(\bDelta^{\mu\rmv})^2}+\upbdcstt\right)\right),
                \left\lfloor\frac{\omega_\rmsum\cdot\bDelta^{\mu\rmv} }{\bsigma^2/3}\right\rfloor\geq Q-3
                \\
                &
                \abcstt\gamma K\left(
                 \frac{3}{\frac{\bsigma^2}{3\omega_{\rmsum}}\bDelta^{\mu\rmv}}
                 \right)
                 \left(2+\frac{1}{\ln(1/\omega_{\max})}\left(\ln\ln_+\frac{1}{\ln(1/\omega_{\max})(\bDelta^{\mu\rmv})^2}+\upbdcstt\right)\right),
                0<\left\lfloor\frac{\omega_\rmsum\cdot\bDelta^{\mu\rmv}}{\bsigma^2/3}\right\rfloor<Q-3
                \\
                &
                 \frac{3\abcstt\gamma K}{(\bDelta^{\mu\rmv})^2}\left(2+\frac{1}{\ln(1/\omega_{\max})}\left(\ln\ln_+\frac{1}{\ln(1/\omega_{\max})(\bDelta^{\mu\rmv})^2}+\upbdcstt\right)\right),
                \left\lfloor\frac{\omega_\rmsum\cdot\bDelta^{\mu\rmv} }{\bsigma^2/3}\right\rfloor=0,r^\prime=1
            \end{aligned}\right.
            \\
            & =
            \left\{
            \begin{aligned}
                &
                O\left((Q-1)\frac{K}{(\bDelta^{\mu\rmv})^2}\right),
                &&\left\lfloor\frac{\omega_\rmsum\cdot\bDelta^{\mu\rmv} }{\bsigma^2/3}\right\rfloor\geq Q-3
                \\
                &
                O\left(
                 \frac{K}{\frac{\bsigma^2}{3\omega_{\rmsum}}\cdot\bDelta^{\mu\rmv}}
                 \right),
                &&0<\left\lfloor\frac{\omega_\rmsum\cdot\bDelta^{\mu\rmv} }{\bsigma^2/3}\right\rfloor<Q-3
                \\
                &
                 O\left(\frac{K}{(\bDelta^{\mu\rmv})^2}\right),
                &&\left\lfloor\frac{\omega_\rmsum\cdot\bDelta^{\mu\rmv} }{\bsigma^2/3}\right\rfloor=0,r^\prime=1
            \end{aligned}\right.
    \end{align}
    In particular, in the asymptotic case where $T\to \infty$ and $\ln\frac{1}{\delta}=\ln\frac{1}{\delta_T}=o(T^b),\forall b>0$, we have 
    \begin{align}
    h_{\Safeset^c\cap\Subopt}(1,\bDelta^{\mu\rmv})
    =
    O\left(\frac{ QK}{(\Deltav)^2}\ln T\right)
    \end{align}

    Lastly,
    \begin{align}
        \rmR_3(T)
        &= 
        2\mu^\star  L+2\mu^\star TL\left(\xi(\omega_\mu)+2\xi(\omega_v)+2\xi(\omega_v^\prime)\right)
        \\
        &\leq
        2KL+K\delta+2KTL\cdot 4 \cdot\frac{2+\epsilon}{\epsilon}\big(\frac{\frac{1}{T^2}}{\ln (1+\epsilon)}\big)^{1+\epsilon}
        \\
        &\leq
        2KL+K\delta+4K\cdot\frac{2+\epsilon}{\epsilon}\big(\frac{1}{\ln (1+\epsilon)}\big)^{1+\epsilon}
        \\
        &= O(1)
    \end{align}
    where we utilize $T>L$ and the $O$ notation refers to the fact that the preceding term is bounded as a function of $T$.

    Note that $\mu^\star \leq K$, so $T\mu^\star +K\delta\leq TK+K\delta$.

    \textbf{In conclusion}, according to Theorem~\ref{thm:upbd}, for any $T>L$, the problem-independent upper bound is the minimum of $TK+K\delta$ and
    \begin{align}
        &O(\sqrt{KLT\ln T})
        +K\left( \sum_{i\in S^\star}h_{S^\star}(1,\Deltav)
        +\sum_{i\in E}h_{\Safeset\cap\Subopt}(1,\Delta^{\mu\rmv})
        +\sum_{i\in E}h_{\Riskset}(1,\Deltav)
        +\sum_{i\in E}h_{\Safeset^c\cap\Subopt}(1,\bDelta^{\mu\rmv})\right)
        \\
        &\leq 
        O(\sqrt{KLT\ln T})
        +K\left( Kh_{S^\star}(1,\Deltav)
        +Lh_{\Safeset\cap\Subopt}(1,\Delta^{\mu\rmv})
        +Lh_{\Riskset}(1,\Deltav)
        +Lh_{\Safeset^c\cap\Subopt}(1,\bDelta^{\mu\rmv})\right)
        \\
        &=
        O(\sqrt{KLT\ln T})
        +O\left(\frac{ K^3}{(\Deltav)^2}\ln\frac{T}{\delta}\right)
        +KL\left(h_{\Safeset\cap\Subopt}(1,\Delta^{\mu\rmv})
        +h_{\Riskset}(1,\Deltav)
        +h_{\Safeset^c\cap\Subopt}(1,\bDelta^{\mu\rmv})\right)
    \end{align}
    where the $h$ functions are defined in \eqref{equ:indendent_hsafesub}, \eqref{equ:indendent_hrisk} and \eqref{equ:indendent_hunsafesub}.
    In the asymptotic case where $T\to\infty$ and $\ln\frac{1}{\delta}=\ln\frac{1}{\delta_T}=o(T^b),\forall b>0$ (this includes the scenario where $\delta$ is fixed with respect to $T$), the asymptotic problem-independent upper bound is
    \begin{align}
        &
        O(\sqrt{KLT\ln T})
        +K\left(\sum_{i\in E}O\left(\frac{ K}{(\Deltav)^2}\ln\frac{T}{\delta}\right)
        +O\sum_{i\in E}\left(\frac{ QK}{(\Delta^{\rmv})^2}\ln T\right)\right)
        \\
        &= 
        O(\sqrt{KLT\ln T})
        +O\left(\frac{ LK^2}{(\Deltav)^2}\ln\frac{1}{\delta}\right)
    \end{align}
    where we utilize $\sqrt{T\ln T}\geq \frac{QK^2}{(\Deltav)^2}\ln T$ when $T$ is sufficiently large.      
    \end{proof}

\section{Proofs of the Lower Bounds}\label{sec:lower_bd}
\subsection{Preliminaries and the Impossibility Result}\label{sec:proof_LwBd_imp}
Let $\mathrm{KL}(\nu,\nu^\prime)$ denote the KL divergence between distributions $\nu$ and $\nu^\prime$, and 
	\begin{align*}
		d(x, y) := x \ln\bigg( \frac{x}{ y} \bigg)+(1-x) \ln\bigg( \frac{1-x}{1-y} \bigg)
	\end{align*}
	denote the Kullback--Leibler (KL) divergence between the Bernoulli distributions $\mathrm{Bern}(x)$ and  $\mathrm{Bern}(y)$.

 \begin{restatable}[Pinsker's and reverse Pinsker's Inequality]{lem}{lemPinsker}
\label{lem:pinsker}
Consider two probability mass functions $P_X, P_Y$ defined on the same discrete probability space ${\cal A} \subset [0, 1]$. The following inequalities hold:
\begin{align*}
\left|\mathbb{E}_{X\sim P_X}[X] - \mathbb{E}_{Y\sim P_Y}[Y]\right| & \leq \delta(P_X, P_Y) \leq  \sqrt{\frac{1}{2}\mathrm{KL}(P_X, P_Y)}
\le \frac{1}{ \sqrt{\alpha_Y} } \cdot  \delta(P_X,P_Y) ,
\nonumber.
\end{align*}
where
$\delta(P_X,P_Y) := \sup_{A\subseteq {\cal A} } \left\{\sum_{a\in A}P_X(a) - \sum_{a\in A}P_Y(a)\right\}
	=\frac{1}{2} \sum_{a\in  \mathcal{A}} |P_X(a) - P_Y(a)|$
	is the total variational distance,
	and  
	$\alpha_Y := \min_{a\in \mathcal{A}: P_Y(a)>0} Q(a) \vphantom{\bigg(}
	$.
\end{restatable}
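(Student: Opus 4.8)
The plan is to prove the three-link chain separately, reading from left to right. \emph{For the leftmost inequality} (mean gap versus total variation), I would write the gap as $\mathbb{E}_{P_X}[X]-\mathbb{E}_{P_Y}[Y]=\sum_{a\in\mathcal{A}}a\,(P_X(a)-P_Y(a))$ and let $A^+:=\{a\in\mathcal{A}:P_X(a)\ge P_Y(a)\}$ be the set attaining the supremum in the definition of $\delta(P_X,P_Y)$. Since every $a\in\mathcal{A}\subset[0,1]$ obeys $0\le a\le 1$, discarding the nonpositive contributions and bounding $a\le 1$ on $A^+$ yields $\sum_a a(P_X(a)-P_Y(a))\le\sum_{a\in A^+}(P_X(a)-P_Y(a))=\delta(P_X,P_Y)$; running the symmetric argument on $A^-:=\mathcal{A}\setminus A^+$ controls the negative direction and gives the absolute-value bound.

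\emph{For the middle inequality} (Pinsker), I would use the standard binary reduction. With the same maximizing set $A^+$, the data-processing inequality for KL divergence under the two-cell partition $\{A^+,(A^+)^c\}$ gives $\mathrm{KL}(P_X,P_Y)\ge d\big(P_X(A^+),P_Y(A^+)\big)$. Writing $p:=P_X(A^+)$ and $q:=P_Y(A^+)$, we have $p-q=\delta(P_X,P_Y)$, so it remains to establish the scalar bound $d(p,q)\ge 2(p-q)^2$. I would prove this by fixing $q$ and checking that $\phi(p):=d(p,q)-2(p-q)^2$ satisfies $\phi(q)=0$, $\phi'(q)=0$, and $\phi''(p)=\tfrac{1}{p(1-p)}-4\ge 0$ on $(0,1)$ (as $p(1-p)\le\tfrac14$), whence $\phi\ge 0$ and therefore $\mathrm{KL}(P_X,P_Y)\ge 2\,\delta(P_X,P_Y)^2$.

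\emph{For the rightmost inequality} (reverse Pinsker), the target is $\tfrac12\mathrm{KL}(P_X,P_Y)\le \delta(P_X,P_Y)^2/\alpha_Y$, where $\alpha_Y$ is the minimum positive mass of $P_Y$. I would first pass from KL to the $\chi^2$-divergence via $\ln t\le t-1$ applied to $t=P_X(a)/P_Y(a)$, which gives $\mathrm{KL}(P_X,P_Y)\le\sum_a (P_X(a)-P_Y(a))^2/P_Y(a)$. Bounding each denominator below by $\alpha_Y$ reduces the task to estimating $\sum_a (P_X(a)-P_Y(a))^2$. I would split $\mathcal{A}$ into the cells where $P_X-P_Y$ is positive and where it is negative: on each part the entries have one sign and their absolute values sum to exactly $\delta(P_X,P_Y)$, so $\sum x_i^2\le(\sum|x_i|)^2$ applied separately to each part yields $\sum_a (P_X(a)-P_Y(a))^2\le \delta(P_X,P_Y)^2+\delta(P_X,P_Y)^2=2\,\delta(P_X,P_Y)^2$. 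Combining gives $\mathrm{KL}(P_X,P_Y)\le 2\,\delta(P_X,P_Y)^2/\alpha_Y$, and taking square roots produces the stated bound.

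The step I expect to require the most care is the constant in the reverse direction: the crude estimate $\sum_a(P_X-P_Y)^2\le\big(\sum_a|P_X-P_Y|\big)^2=4\,\delta(P_X,P_Y)^2$ overshoots by a factor of two and would only deliver $\sqrt{2}/\sqrt{\alpha_Y}$ rather than the advertised $1/\sqrt{\alpha_Y}$. The positive/negative decomposition of $P_X-P_Y$, exploiting that each signed part sums to $\delta(P_X,P_Y)$, is precisely what recovers the sharp constant; the leftmost (mean-gap) and middle (Pinsker) links are routine by comparison.
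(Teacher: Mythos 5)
Your proof is correct; the main thing to note is that the paper itself gives no proof of this lemma --- it is stated as a known result (standard Pinsker and reverse-Pinsker inequalities) and used as a black box in the lower-bound arguments, so your derivation supplies what the paper omits. All three links check out: (i) the mean-gap bound via the maximizing set $A^+=\{a:P_X(a)\ge P_Y(a)\}$ together with $0\le a\le 1$; (ii) Pinsker via the two-cell data-processing reduction followed by the scalar bound $d(p,q)\ge 2(p-q)^2$, whose second-derivative verification $\phi''(p)=\tfrac{1}{p(1-p)}-4\ge 0$ is exactly the textbook argument; (iii) reverse Pinsker via $\mathrm{KL}(P_X,P_Y)\le\chi^2(P_X\|P_Y)$ (from $\ln t\le t-1$) and the denominator bound $P_Y(a)\ge\alpha_Y$ on the support. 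Your closing observation is indeed the crux of getting the stated constant: since $\sum_a\bigl(P_X(a)-P_Y(a)\bigr)=0$, the positive and negative parts of $P_X-P_Y$ each sum to $\delta(P_X,P_Y)$ in absolute value, so applying $\sum_i x_i^2\le\bigl(\sum_i |x_i|\bigr)^2$ to each part separately gives $\sum_a\bigl(P_X(a)-P_Y(a)\bigr)^2\le 2\,\delta(P_X,P_Y)^2$ and hence the factor $1/\sqrt{\alpha_Y}$ rather than the $\sqrt{2}/\sqrt{\alpha_Y}$ produced by the crude bound. Two minor caveats worth recording: the rightmost inequality (and your use of $\ln t\le t-1$ with $t=P_X(a)/P_Y(a)$) implicitly requires $P_X\ll P_Y$ --- if $P_X$ charges a point where $P_Y$ vanishes, the KL divergence is infinite and the bound fails, an assumption the paper's statement leaves implicit --- and the paper's definition $\alpha_Y:=\min_{a:P_Y(a)>0}Q(a)$ contains a typo ($Q$ should read $P_Y$), which your reading silently and correctly repairs.
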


\begin{restatable}[Lemma 1 in~\citet{Kaufmann2016}]{lem}{lemKLdecompPinsker} \label{lem:lb_KLdecomp}
    Assume the distributions under instance $\instance_1=(E,\mathcal{A}_K,\nu^{(1)},\bsigma^2)$ and instance $\instance_2=(E,\mathcal{A}_K,\nu^{(2)},\bsigma^2)$ are mutually absolutely continuous.  Given time budget $T$,
		\begin{align*}
			& 
			\sum_{i=1}^L \mathbb{E}_{ \instance_1} [N_{i}(T)] \cdot \rmKL( \nu_i^{(1)}, \nu_i^{(2)} )
			\ge 
			\sup_{\mathcal{E} \in \mathcal{H}_T^{ \instance_1}  } 
			d\big(    \mathbb{P}_{ \instance_1} (\mathcal{E})  ,~   \mathbb{P}_{ \instance_2} (\mathcal{E})   \big).
			\nonumber 
	\end{align*}
    where $N_i(t)$ denotes the number of time steps item $i$ is selected up to and including time step  $t$ and $\mathcal{H}_T^{ \instance_1}$ is all the possible events generated by instance $\instance_1$ and algorithm $\pi$ with $T$ time steps.
\end{restatable}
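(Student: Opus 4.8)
The plan is to follow the standard ``divergence decomposition'' argument from the bandit lower-bound literature, realizing the left-hand side as the KL divergence between the two trajectory laws and then invoking the data-processing inequality. First I would fix the policy $\pi$ and let $P_1$ and $P_2$ denote the probability measures induced on the canonical history space $\calH_T$ by $\pi$ interacting with $\instance_1$ and $\instance_2$ respectively, where a history is the sequence of selected solutions together with the observed semi-bandit feedback. Because $\pi$ is identical under both instances, the action-selection kernels coincide, so the Radon--Nikodym derivative $\mathrm{d}P_1/\mathrm{d}P_2$ factorizes over time steps into a product of per-item likelihood ratios $\mathrm{d}\nu_i^{(1)}/\mathrm{d}\nu_i^{(2)}$ evaluated along the revealed feedback; the assumed mutual absolute continuity guarantees that this derivative is well defined.

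Next I would compute $\rmKL(P_1,P_2)=\bbE_{\instance_1}[\log(\mathrm{d}P_1/\mathrm{d}P_2)]$. Conditioning on the history up to step $t-1$ together with the chosen solution $S_t$, the tower property turns each time-step contribution into a conditional KL divergence of the feedback that depends only on which items lie in $S_t$ and not on $t$ itself. Summing over $t$ and re-indexing by item, so that each item $i$ is counted a number of times equal to $N_i(T)$, yields exactly $\sum_{i=1}^L \bbE_{\instance_1}[N_i(T)]\cdot \rmKL(\nu_i^{(1)},\nu_i^{(2)})$. This identity is the crux of the argument and the step I expect to demand the most care: because the semi-bandit feedback reveals several items at once, one must verify that the joint log-likelihood ratio over a solution decomposes additively into the per-item KL terms, which relies on the product structure of $\nu$ across items.

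Finally, for any event $\calE\in\calH_T^{\instance_1}$ I would apply the data-processing inequality to the two-valued statistic $h\mapsto\mathbbm{1}\{h\in\calE\}$: this statistic pushes $P_1$ and $P_2$ forward to $\bern(\bbP_{\instance_1}(\calE))$ and $\bern(\bbP_{\instance_2}(\calE))$, and since KL divergence cannot increase under any (deterministic) map we obtain $\rmKL(P_1,P_2)\ge d(\bbP_{\instance_1}(\calE),\bbP_{\instance_2}(\calE))$. Combining this bound with the decomposition identity and taking the supremum over $\calE$ completes the argument. The remaining measure-theoretic bookkeeping---well-definedness of the trajectory densities and the legitimacy of conditioning under an adaptive, randomized policy---is routine, and the conceptual content lies entirely in the factorization identity and the data-processing step.
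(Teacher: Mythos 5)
Your proposal is correct, and it is essentially the canonical argument behind the result: the paper itself does not prove this lemma at all---it is invoked by citation to Lemma~1 of \citet{Kaufmann2016}---so your write-up reconstructs exactly the proof that citation stands for, namely the divergence (change-of-measure) decomposition of the trajectory KL followed by the data-processing inequality applied to the indicator $\mathbbm{1}\{h\in\mathcal{E}\}$, which yields $\rmKL(P_1,P_2)\ge d\big(\mathbb{P}_{\instance_1}(\mathcal{E}),\mathbb{P}_{\instance_2}(\mathcal{E})\big)$ for every event and hence for the supremum.

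One point in your proposal deserves emphasis because it is the only place where the argument genuinely differs from the cited single-arm result: in the semi-bandit setting the per-step feedback is the vector $\{W_i(t)\}_{i\in S_t}$, so the per-step conditional KL is a divergence between product measures $\bigotimes_{i\in S_t}\nu_i^{(1)}$ and $\bigotimes_{i\in S_t}\nu_i^{(2)}$, and the additive decomposition $\rmKL\big(\bigotimes_{i\in S_t}\nu_i^{(1)},\bigotimes_{i\in S_t}\nu_i^{(2)}\big)=\sum_{i\in S_t}\rmKL(\nu_i^{(1)},\nu_i^{(2)})$ requires the independence of items' rewards, which holds here since $\nu$ is a product across items. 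You correctly flag this as the step demanding care; summing over $t$ and re-indexing then counts each item $i$ exactly $N_i(T)$ times in expectation, giving the left-hand side. This is the same bookkeeping the paper performs separately (without proof, citing an analysis ``similar to Lemma~6.4 in \citet{zhong2021thompson}'') for its auxiliary decomposition lemma in the problem-independent lower bound, so your proof in effect supplies the detail that the paper defers to external references.
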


\begin{restatable}{lem}{lemSafeCheck}
    \label{lem:safe_check}
    Let solution $S$ containing $|S|=m (q<m\leq K)$ items be a safe solution under instance $\instance_1=(E,\mathcal{A}_K,\nu^{(1)},\bsigma^2)$. Each item in $S$ is i.i.d. with reward distribution $\nu_1$ , mean $\mu_1$ and variance $\sigma_1^2<\bsigma^2$. 
    Define event 
    $\calE_{(t,1)} = \{ S \text{ is identified as safe after time step $t$}\}$,
    $\calE_{(t,2)} = \{ S \text{ is chosen at least once after time step $t$}\}$,
    and $\calE_{(t)} = \calE_{(t,1)} \cap \calE_{(t,2)}$. 
    Assume there exists $\tau \le T$ such that $\bbP_{\instance_1}[ \calE_{(\tau)} ] \ge 1- \delta$ and $\bbP_{\instance_1}[ \calE_{(\tau-1,1)} ] < 1- \delta$.
    If $\tau$ exists, we have
    \begin{align*}
        \sum_{i\in S} \bbE_{\instance_1}[ N_i(\tau) ]  \ge\sup_{ \nu_2 \in E(\nu_1) } \frac{  d(\delta, 1-\delta ) }{ \rmKL( \nu_1, \nu_2 )  }.
    \end{align*}
    Furthermore, 
    \begin{align*}
        \bbE_{\instance_1}[ M(\tau) ] \ge \sup_{ \nu_2 \in E(\nu_1) }\frac{ 1 }{|S|-1} \cdot \frac{  d(\delta, 1-\delta ) }{ \rmKL( \nu_1, \nu_2 )  } : = T(\nu^{(1)}),
    \end{align*}
    where 
     $M(t)$ is the number of times that a solution $S^\prime\subset S$ is sampled up to and include time step $t$ and
    $E(\nu_1) = \{  \nu_2: \text{ the variance associated to } \nu_2 \text{ is larger than } \bar{\sigma}^2/|S| \}$.
   
\end{restatable}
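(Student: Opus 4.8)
The plan is to derive both inequalities from one change-of-measure argument anchored on the KL-decomposition lemma (Lemma~\ref{lem:lb_KLdecomp}). First I would fix an alternative distribution $\nu_2 \in E(\nu_1)$ that is mutually absolutely continuous with $\nu_1$ (only such $\nu_2$ matter, since otherwise $\rmKL(\nu_1,\nu_2)=\infty$ makes the bound vacuous) and construct a companion instance $\instance_2=(E,\calA_K,\nu^{(2)},\bsigma^2)$ that agrees with $\instance_1$ on every item outside $S$ but replaces the law of each $i\in S$ by $\nu_2$. Since each of the $|S|$ items of $S$ then carries variance $\sigma_2^2 > \bsigma^2/|S|$, we have $\sigma_S^2 = |S|\sigma_2^2 > \bsigma^2$, so $S$ is \emph{unsafe} under $\instance_2$, i.e. $S\in\Safeset^c$.

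The crux is to control the two endpoint probabilities of the test event $\calE_{(\tau)}$. Under $\instance_1$ we are handed $\bbP_{\instance_1}[\calE_{(\tau)}]\ge 1-\delta$. Under $\instance_2$ I would argue $\bbP_{\instance_2}[\calE_{(\tau)}]\le\delta$: because $\calE_{(\tau)}\subseteq\calE_{(\tau,2)}$ forces $S$ to be chosen at least once, and $S\in\Safeset^c$ under $\instance_2$, any realization in $\calE_{(\tau)}$ entails a safety violation; applying the probably anytime-safe constraint~\eqref{constraint} to $\instance_2$ then gives $\bbP_{\instance_2}[\exists\,t:\,S_t\notin\Safeset]\le\delta$, whence $\bbP_{\instance_2}[\calE_{(\tau)}]\le\delta$.

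Next I would invoke Lemma~\ref{lem:lb_KLdecomp} at horizon $\tau$ with the event $\calE_{(\tau)}$. As $\nu_i^{(1)}=\nu_i^{(2)}$ for every $i\notin S$, all those terms have zero KL and the decomposition collapses to $\rmKL(\nu_1,\nu_2)\sum_{i\in S}\bbE_{\instance_1}[N_i(\tau)] \ge d(\bbP_{\instance_1}(\calE_{(\tau)}),\bbP_{\instance_2}(\calE_{(\tau)}))$. Using the two probability bounds above, the monotonicity of $d(x,y)$ (increasing in $x$ and decreasing in $y$ on the regime $x>y$, valid here since $\delta<1/2$ implies $1-\delta>\delta$), and the symmetry $d(1-\delta,\delta)=d(\delta,1-\delta)$, the right-hand side is at least $d(\delta,1-\delta)$. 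Dividing by $\rmKL(\nu_1,\nu_2)$ and taking the supremum over $\nu_2\in E(\nu_1)$ yields the first claim.

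For the second claim I would pass from item pulls to sub-solution samples: every sampled solution $S'\subset S$ is a proper subset, so it touches at most $|S|-1$ items of $S$, giving the pathwise inequality $\sum_{i\in S}N_i(\tau)\le(|S|-1)\,M(\tau)$. Taking expectations under $\instance_1$ and substituting the first bound gives $\bbE_{\instance_1}[M(\tau)]\ge \frac{1}{|S|-1}\cdot\frac{d(\delta,1-\delta)}{\rmKL(\nu_1,\nu_2)}$, and a final supremum over $\nu_2$ completes the argument. The hard part will be the measurability and stopping-time bookkeeping around $\tau$ — ensuring $\calE_{(\tau)}$ is an $\mathcal{H}_\tau$-measurable event so that Lemma~\ref{lem:lb_KLdecomp} is legitimately applied at horizon $\tau$ rather than $T$ — together with the clean translation of ``choosing the unsafe $S$'' into a violation of the anytime-safety constraint under $\instance_2$ that caps $\bbP_{\instance_2}[\calE_{(\tau)}]$ by $\delta$.
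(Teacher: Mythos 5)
Your treatment of the first inequality is correct and follows essentially the paper's own route: the alternative instance $\instance_2$ that inflates the variance of every item of $S$ above $\bsigma^2/|S|$ (making $S$ unsafe there), the observation that the probably anytime-safe constraint applied under $\instance_2$ forces $\bbP_{\instance_2}[\calE_{(\tau)}]\le\delta$ (since $\calE_{(\tau)}\subseteq\calE_{(\tau,2)}$), and the KL decomposition of Lemma~\ref{lem:lb_KLdecomp} collapsed onto the items of $S$, finished with monotonicity and symmetry of the binary divergence $d(\cdot,\cdot)$. This matches the paper's proof up to its use of the complement event $\calE_{(\tau)}^c$, which is immaterial because $d(x,y)=d(1-x,1-y)$. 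Your attention to measurability of $\calE_{(\tau)}$ at horizon $\tau$ and to mutual absolute continuity is a reasonable tightening of points the paper leaves implicit.

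The second inequality, however, has a genuine gap. You justify the pathwise bound $\sum_{i\in S}N_i(\tau)\le(|S|-1)\,M(\tau)$ by asserting that ``every sampled solution $S'\subset S$ is a proper subset.'' Nothing in your argument establishes this: under $\instance_1$ the full solution $S$ \emph{is} safe, so a constraint-satisfying policy is not intrinsically barred from selecting all $|S|$ items at a single time step, and if it ever did so before $\tau$ your inequality fails (whether or not $M$ counts that pull). The missing ingredient is exactly the second hypothesis of the lemma, $\bbP_{\instance_1}[\calE_{(\tau-1,1)}]<1-\delta$, which you never invoke. The paper uses precisely this to conclude that during the first $\tau$ time steps $S$ has not yet been identified as safe with the required confidence, so the full $S$ cannot be selected --- the point being that a policy selecting the uncertified $S$ would, under the statistically indistinguishable alternative $\instance_2$, be choosing an unsafe solution with probability exceeding $\delta$ and hence violate the anytime-safe constraint. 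Only with this step does ``at most $|S|-1$ items of $S$ per time step among the first $\tau$ steps'' hold, which is what converts the item-count bound of the first claim into the bound on $\bbE_{\instance_1}[M(\tau)]$ carrying the $1/(|S|-1)$ factor.
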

\begin{proof}
     With $\sigma_2^2> \bar{\sigma}^2/|S|$, we construct an alternative instance $\instance_2=(E,\mathcal{A}_K,\nu_2,\bsigma^2)$, under which each item in $S$ is with reward distribution $\nu_2$ , mean $\mu_2$ and variance $\sigma_2^2$, while the distributions of other items remain unchanged.  
    
    Define event 
    $\calE_{(t,1)} = \{ S \text{ is identified as safe after time step $t$}\}$,
    $\calE_{(t,2)} = \{ S \text{ is chosen at least once after time step $t$}\}$,
    and $\calE_{(t)} = \calE_{(t,1)} \cap \calE_{(t,2)}$. 
    Assume there exists $\tau \le T$ such that $\bbP_{\instance_1} [\calE_{(\tau)} ] \ge 1- \delta$ and $\bbP_{\instance_1} [\calE_{(\tau-1,1)}]  < 1- \delta$.
    Since $S$ is unsafe under instance $\instance_2$ and all the solutions chosen $\{S_t\}_{t=1}^T \subset \calA_K$ are safe with probability at least $1 -\delta$, we have $\bbP_{\instance_2}[\calE_{(t)}] < \delta$ for all $t\le T$.
    



We now apply Lemma~\ref{lem:lb_KLdecomp}  to obtain that
\begin{align*}
    & \sum_{i\in S} \bbE_{\instance_1}[ N_i(\tau) ] \cdot \rmKL( \nu_1, \nu_2 ) \ge d(  \bbP_1( \calE_{(\tau)}^c ),  \bbP_2( \calE_{(\tau)}^c ) ) \ge d(\delta, 1-\delta )
    ~\Rightarrow~ 
    \sum_{i\in S} \bbE_{\instance_1}[ N_i(\tau) ]  \ge \frac{  d(\delta, 1-\delta ) }{ \rmKL( \nu_1, \nu_2 )  }.
\end{align*}
Since $\bbP_{\instance_1}[ \calE_{(\tau-1,1)} ] < 1-\delta $, we can select at most $m-1$ items at one time step among the first $\tau$ time steps. Therefore,
\begin{align*}
    \bbE_{\instance_1}[ M(\tau) ] \ge \frac{ 1 }{m-1}  \sum_{i\in P} \bbE_{\instance_1}[ N_i(\tau) ] \ge \frac{ 1 }{m-1} \cdot \frac{ d(\delta, 1-\delta ) }{ \rmKL( \nu_1, \nu_2 )  }.
\end{align*}
\end{proof}

\lemReqOnDeltaT*
\begin{proof}
    The proof is similar to the proof of Lemma~\ref{lem:safe_check}. We consider an alternative instance $\instance_2=(E,\mathcal{A}_K,\nu^{(2)},\bsigma^2)$ with the distributions of the items in the optimal safe solution $S^\star$ changed such that (assume the variances of all the items in $S^\star$ are changed (increased))
    \begin{align}
        \sum_{i\in S^\star}(\sigma_i^{(2)})^2\geq \bsigma^2
    \end{align}
    i.e., under instance $\instance_2$, this solution $S^\star$ is unsafe (thus not optimal safe). The other items remain unchanged.
    
    By a similar argument as the proof of Lemma~\ref{lem:safe_check}, we have
    \begin{align}
        &\sum_{i\in S^\star} \bbE_{\instance_1}[ N_i(\tau) ] \cdot \rmKL( \nu_i^{(1)}, \nu_i^{(2)} )\geq d(\delta, 1-\delta )\\
        \Rightarrow
        &
        \sum_{i\in S^\star} \bbE_{\instance_1}[ N_i(\tau) ] \geq \frac{d(\delta, 1-\delta )}{\min_{i\in S^\star} \rmKL( \nu_i^{(1)}, \nu_i^{(2)} )}
    \end{align}
    So the safeness checking of $S^\star$ will take $\Omega(\ln\frac{1}{2.4\delta_T})=\Omega(T^b)$

    Recall the probably anytime-safe constraint \eqref{constraint}: 
    \begin{align}
		  \mathbb{P}\big[ \forall\, t\in [T], S_t\in\Safeset \big]\geq 1-\delta_T.
	\end{align}
    This indicates at time step $t$, for any solution $S$,  if $\mathbb{P}_{ \mathcal{H}_{t}^{(0)} }[S\in\Safeset]<1- \delta_T$, $S$ will not be selected at this time step. Otherwise, \eqref{constraint} is violated. Therefore, before the safeness of the optimal safe solution $S$ is ascertained, it is not going to be sampled and the instantaneous regret will be lower bounded by $\min_{S\in \Safeset\cap\Subopt}\Delta_S$.

    In conclusion, the regret is at least $\ln\frac{1}{2.4\delta_T}\cdot\min_{S\in \Safeset\cap\Subopt}\Delta_S=\Omega(T^b)$.
\end{proof}


   We derive both the problem-dependent and problem independent lower bounds on the {\em $K$-path semi-bandit} problem. The items in the ground set are divided into $L_0$ paths: $P_1, \ldots, P_{L_0}$, each of which contains $K$ unique items. Path $P_j$ contains items $(j-1)K+1, \ldots, jK$. Without loss of generality, we assume that $L/K$ is an integer.
    A set $S$ is a solution if and only if $S\subset P_j$ for some $j$. In other words, the solution set $\calA_K =\{  S: S\subset P_j,\ \exists j =1, \ldots, L_0 \} $.
    We let $N_i(t)$ denote the number of time steps item $i$ is selected up to and including time step  $t$, $M_j(t)$ denote the number of time steps a safe subset in path $j$ is selected up to and including time step $t$, and $S_j (t) $ denote the time steps when a safe subset in path $j$ is selected,  i.e.,
\begin{align*}
    N_i(t) = \sum_{s=1}^t \mathbbm{1}\{ i \in S_s \},\quad
    M_j(t) = \sum_{s=1}^t \mathbbm{1}\{ S_s \subset P_j, S_s \text{ is safe } \},\quad
    S_j(t) = \{ 1\le s \le t:  S_s \subset P_j, S_s \text{ is safe } \}.
\end{align*}
We let $\Reg[j](t)$ denote the regret accumulated in $S_j(t)$. Since all the chosen solutions are safe with probability at
least $1 -\delta$, we have $\Reg(T) \ge (1-\delta) \cdot \sum_{j=1}^{L_0} \Reg[j](T)$.
In the following, we lower bound the regret accumulated in time steps where safe solutions are chosen.

    We will construct several instances to prove each of the lower bounds (which will be specified in the proof) such that 
	under instance $k$ ($\instance_k=(E,\mathcal{A}_K,\nu^{(k)},\bsigma^2)$), 
    the stochastic reward of items in path $j$~($1\le j \le L_0$) are {\em i.i.d.}, i.e. ,
    $\nu_i^{(k)}=\nu_{P_j}^{(k)},\forall i\in P_j$, 
    which will be specified in each case.
Under instance $k$, we define several other notations as follows: 
	\begin{itemize}
		\item Let $W_i(t)^{(k)}  $ be the random reward of arm $i$ at time step $t$. 
		\item Let $S_t^{(k)}$ be the pulled solution at time step $t$, and $\mathcal{H}_t^{(j)} = \{ (S_s^{(j)},  \{ W_i(s)^{(k)} \}_{s\in S_s^{(k)}}  )\}^t_{s=1}$ be the sequence of selected solutions and observed rewards up to and including time step $t$.  
	\end{itemize} 
	For simplicity, we abbreviate  
	$\bbE_{ \mathcal{H}_{T}^{(k)} }$, $\bbP_{ \mathcal{H}_{T}^{(k)} }$, as  
	$\bbE_k$, $\bbP_k$ respectively.

\subsection{Problem-dependent Lower bound}\label{sec:proof_LwBd_dep}

In order to provide a better understanding of the analysis, we first derive a lower bound with Gaussian distributions (unbounded) in Theorem~\ref{thm:low_bd_prob_dep_gauss}. With the same technique, we derive a lower bound with bounded Bernoulli distributions in Theorem~\ref{thm:low_bd_prob_dep}, which corroborates with our problem setup.
\begin{restatable}[Problem-dependent lower bound for sub-Gaussian instances]{thm}{thmLowBdProbDepGauss}
\label{thm:low_bd_prob_dep_gauss}
Let $\{\delta_T\}_{T=1}^\infty \in o(1)$ be a sequence that satisfies $\ln(1/\delta_T)=o(T^b)$ for all $b>0$. There exists an instance $\instance$, for any $\{\delta_T\}_{T\in\mathbb{N}} $-variance-constrained consistent algorithm $\pi$, the regret is lower bounded by 
\begin{align}
    \Omega&\left(\sum_{i\in E}\frac{\ln T}{\Delta_{i,\Safeset\cap\Subopt,\min}}
    \right)+
    \frac{\mu^\star }{K}
    \cdot\Omega\bigg(\frac{K\cdot\ln(1/\delta_T)}{(\Delta_{S^\star})^2}
    +\sum_{i\in E}\Big(\Psi^\prime_{i,\Safeset\cap\Subopt} 
    +\frac{\ln T}{(\Deltav_{i,\Riskset})^2}
    +\Phi_{i,\Safeset^c\cap\Subopt} \Big)
        \bigg).
\end{align}
\end{restatable}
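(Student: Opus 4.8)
The plan is to instantiate the $K$-path construction described above, so that the regret factorizes across paths as $\Reg(T)\ge(1-\delta_T)\sum_{j=1}^{L_0}\Reg[j](T)$, and then to design the paths so that each of the four terms in the claimed bound is isolated on its own family of paths: one path realizing the optimal safe solution $S^\star$, a family of paths whose best safe solutions are suboptimal, a family containing risky solutions, and a family containing unsafe-suboptimal solutions. Since all items within a path are i.i.d.\ under each instance, for each target path I will build a single alternative instance $\instance_2$ that perturbs only the common distribution $\nu_{P_j}$, apply the KL-decomposition inequality (Lemma~\ref{lem:lb_KLdecomp}) to a carefully chosen event $\calE$, and use (reverse) Pinsker (Lemma~\ref{lem:pinsker}) to turn a ``too-few-samples'' hypothesis into a contradiction with either $\{\delta_T\}$-variance-consistency or the anytime-safe constraint \eqref{constraint}.

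First, for the purely mean-based term $\sum_i \ln T/\Delta_{i,\Safeset\cap\Subopt,\min}$ I would run the classical change-of-measure argument on the means only: on a path whose best safe solution $S$ is suboptimal, raise the mean of its items just enough to make $S$ optimal while keeping it safe. Consistency ($\Reg^\pi(T)=o(T^a)$) then forces $\bbE_{\instance_1}[N_i(T)]=\Omega(\ln T/\Delta_S^2)$ on the items of $S$, and the incurred regret $\Delta_S\cdot N_S=\Omega(\ln T/\Delta_S)$ summed over items, minimized over containing solutions, yields the stated $\Omega(\ln T/\Delta_{i,\Safeset\cap\Subopt,\min})$ contribution. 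This term carries no $\mu^\star/K$ prefactor because it is paid directly as mean-gap regret.

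Second, for the three safeness-related terms I would invoke Lemma~\ref{lem:safe_check}: under the anytime-safe constraint, no path-solution can be selected before the algorithm has gathered enough observations to rule out the alternative in which that solution is unsafe, and Lemma~\ref{lem:safe_check} quantifies this as $\bbE_{\instance_1}[M_j(\tau)]=\Omega\!\big(\tfrac{1}{|S|-1}\,d(\delta_T,1-\delta_T)/\rmKL\big)$ solution-selections, each costing regret $\approx\mu^\star$; using $d(\delta_T,1-\delta_T)=\Theta(\ln(1/\delta_T))$ and $|S|-1=O(K)$ explains the $\tfrac{\mu^\star}{K}$ prefactor. For $S^\star$ the variance perturbation that tips it into $\Safeset^c$ gives $\rmKL=\Theta((\Deltav_{S^\star})^2)$ (Gaussian variance-KL), producing the confirmation term $\tfrac{\mu^\star}{K}\cdot\tfrac{K\ln(1/\delta_T)}{(\Deltav_{S^\star})^2}$. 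For a risky solution the roles swap: the algorithm must confirm it is \emph{unsafe} (otherwise selecting it $\Omega(T)$ times is forced on the alternative where it is safe-and-attractive, contradicting consistency), yielding $\tfrac{\mu^\star}{K}\cdot\tfrac{\ln T}{(\Deltav_{i,\Riskset})^2}$. The $\min$-structure of $\Psi'_{i,\Safeset\cap\Subopt}$ and $\Phi_{i,\Safeset^c\cap\Subopt}$ then arises by giving the adversary, for each suboptimal solution, \emph{two} alternatives — a mean perturbation (certification cost $\propto\ln T/\Delta_S^2$) and a variance perturbation (cost $\propto\ln(1/\delta_T)/(\Deltav_S)^2$ in the safe case, $\propto\ln T/(\Deltav_S)^2$ in the unsafe case, according to the confidence level at which the relevant property must be established) — and arguing that to avoid regret the algorithm need only solve the \emph{easier} of the two detection problems, which forces the minimum and hence exactly $\Psi'$ and $\Phi$.

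The main obstacle will be making this ``either--or'' lower bound simultaneously rigorous and additive: I must guarantee that the alternative instances for the distinct detection tasks on different paths do not interfere, that the stopping time $\tau$ of Lemma~\ref{lem:safe_check} is well defined under the anytime constraint, and that each change of measure is applied to an event $\calE$ whose probability is controlled under \emph{both} $\instance_1$ and $\instance_2$ (the safe-identification event for the safeness terms, count events $\{N_i(T)\le\cdots\}$ for the mean terms). The genuinely delicate step is the $\min$: I need the adversary to be able to always present whichever of the mean- or variance-alternative is cheaper for the algorithm to certify, so that the bound is the minimum of the two sample complexities rather than a convenient sum — this is precisely what yields the sharp $\Psi'_{i,\Safeset\cap\Subopt}$ and $\Phi_{i,\Safeset^c\cap\Subopt}$ quantities and matches the upper bound of Theorem~\ref{thm:upbd} up to the factors of $K$ discussed after Corollary~\ref{cor:tightnessDep}.
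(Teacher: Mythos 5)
Your proposal takes essentially the same route as the paper's proof: the identical $K$-path construction with four path families (one optimal-safe path, safe-suboptimal paths, risky paths, unsafe-suboptimal paths), the per-path decomposition $\Reg(T)\geq(1-\delta_T)\sum_j \Reg[j](T)$, Lemma~\ref{lem:safe_check} together with Lemma~\ref{lem:lb_KLdecomp} and Lemma~\ref{lem:pinsker} for the safeness-certification terms, the classical consistency change-of-measure for the mean term, and the $\min$-structure of $\Psi'_{i,\Safeset\cap\Subopt}$ and $\Phi_{i,\Safeset^c\cap\Subopt}$ obtained by pitting the mean-perturbation against the variance-perturbation (which the paper realizes via the case split $T(\nu_{P_j})\lessgtr T_{\safe}(\nu_{P_j})$ for safe-suboptimal paths, and via a single joint perturbation whose KL is the sum of the mean and variance components for unsafe-suboptimal paths). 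One bookkeeping caveat: each forced sub-solution selection during safeness certification costs only $\Delta=\mu^\star/K$ (not $\approx\mu^\star$ as you state), and the compensating factor of $K$ in the theorem emerges from the per-item KL scaling as $\big((\Deltav_S/K)\big/(\bsigma^2/K)\big)^2$; this washes out once the instance parameters are substituted, as in the paper's calculation.
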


\begin{proof}
    Given a vanishing sequence $\{\delta_T\}_{T\in\mathbb{N}} $, we consider a fixed $\{\delta_T\}_{T\in\mathbb{N}} $-variance-constrained consistent algorithm $\pi$ on the $K$-path semi-bandit problem. 
    For simplicity, the distributions of the items are assumed to be Gaussian in the proof, but the techniques can be applied to the Bernoulli case and we provide the instance design and the corresponding bound at the end of the proof.
    


    Under instance $\instance_0$ (the base instance), $\sigma^2=\frac{2\bsigma^2}{K}$ (so the absolutely safe solutions contain at most $K/2$ items) and the distributions of the items are 
    \begin{align}
        \nu_i^{(0)}=N(\mu_j^{(0)},(\sigma_i^2)^{(0)})=\left\{
        \begin{aligned}
            &\nu_{P_1}^{(0)}=N(\Delta,\frac{\bsigma^2-\epsilonv}{K}), &&\quad i\in P_1
            \\
            &
            \nu_{P_j}^{(0)}=N(\Delta-\frac{\epsilonmu}{K},\frac{\bsigma^2-\epsilonv}{K}), &&\quad i\in P_j,2\leq j\leq L_1+1
            \\
            &
            \nu_{P_j}^{(0)}=N(\Delta+\frac{\epsilonmu}{K},\frac{\bsigma^2+\epsilonv}{K}), &&\quad i\in P_j,L_1+2\leq j\leq L_2+1
            \\
            &
            \nu_{P_j}^{(0)}=N(\Delta-\frac{\epsilonmu}{K},\frac{\bsigma^2+\epsilonv}{K}), &&\quad i\in P_j,L_2+2\leq j\leq L_0
        \end{aligned}
        \right.
    \end{align}
    where $\epsilonmu<\frac{\Delta}{K}$ and $\epsilonv$ are small positive constants (e.g. $\epsilonmu=\frac{\Delta}{2K},\epsilonv\leq\frac{\bsigma^2}{K^2}$), and $L_1=L_2-L_1=L_0-L_2-1=\frac{L_0-1}{3}$ (assume it is an integer). In this case,
    \begin{itemize}
        \item  path $1$ is an optimal safe path, 
        \item path $2$ to path $L_1+1$ are the safe and suboptimal paths, 
        \item path $L_1+2$ to path $L_2+1$ are the risky paths
        \item path $L_2+2$ to path $L_3+1=L_0$ are the unsafe and suboptimal paths.
    \end{itemize}
    In the following, we will compute the minimum regret yielded from each of the paths.

    \noindent
    \textbf{\underline{Case 1: the optimal safe path $P_1$}}
     
    In order to achieve $o(T^a),\forall a>0$ regret, any algorithm has to identify the safeness of the optimal safe solution $P_1$ and sample $P_1$ $\Omega(T)$ times, otherwise, the regret is linear.
    According to Lemma~\ref{lem:safe_check},
    the expected number of time steps needed for the safeness identification of $P_1$ is lower bounded by
    \begin{align*}
        \bbE_0[ M_1(\tau) ] \ge \sup_{ \nu_{P_1}^{(1)} \in E( \nu_{P_1}^{(0)}) }\frac{ 1 }{K-1} \cdot \frac{  d(\delta_T, 1-\delta_T ) }{ \rmKL( \nu_{P_1}^{(0)}, \nu_{P_1}^{(1)}  )  } : = T(\nu_{P_1}^{(0)})
    \end{align*}
    where 
    $ E( \nu_{P_1}^{(0)})  = \{  \nu_{P_1}^{(1)}: \text{ the variance associated to } \nu_{P_1}^{(1)}\text{ is larger than } \bar{\sigma}^2/K \}$. 
    In particular, we let instance $\instance_1=(E,\mathcal{A}_K,\nu^{(1)},\bsigma^2)$ with
    \begin{align}
        \nu_i^{(1)}=\left\{
        \begin{aligned}
            &\nu_{P_1}^{(1)}=N\left(\Delta,\frac{\bsigma^2+\epsilonv_1}{K}\right), &&\quad i\in P_1
            \\
            &\nu_i^{(0)}, &&\quad i\notin P_1
        \end{aligned}
        \right.
    \end{align}
    where $\epsilonv_1$ is an arbitrarily small constant.
    Thus, we can take supremum over $\epsilonv_1>0$ and have
    \begin{align}
        T(\nu_{P_1}^{(0)})
        \geq 
        \frac{ 1 }{K-1} \cdot \frac{ \ln\frac{1}{2.4\delta_T} }{ \rmKL(N(\Delta,\frac{\bsigma^2-\epsilonv}{K}),N(\Delta,\frac{\bsigma^2}{K})  )  }
        \geq
         \frac{ 1 }{K-1} \cdot \frac{4K^2(\bsigma^2/K)^2}{(\epsilonv)^2}\ln\frac{1}{2.4\delta_T}
         =
         \frac{K(\sigma^2)^2}{(\epsilonv)^2}\ln\frac{1}{2.4\delta_T}
    \end{align}
    When a solution $S\subset P_1$ is chosen before this time step, the instantaneous regret is lower bounded by $\Delta$.
    The accumulative regret from $P_1$ is lower bounded by 
    \begin{align}
        \Reg[1](T)\geq \Delta \cdot T(\nu_{P_1}^{(0)})=\Omega(\frac{K\Delta}{(\Deltav_{P_1})^2}\ln\frac{1}{\delta_T})
    \end{align}

    \noindent
    \textbf{\underline{Case 2: the safe and suboptimal paths}}
    
    For any safe and suboptimal path $P_j (2\leq j\leq L_1+1)$,
    we define instance $\instance_j=(E,\mathcal{A}_K,\nu^{(j)},\bsigma^2)$ with
    \begin{align}
        \nu_i^{(j)}=\left\{
        \begin{aligned}
            &\nu_{P_j}^{(j)}=N(\Delta+\frac{\epsilonmu_j}{K},\frac{\bsigma^2-\epsilonv}{K}), &&\quad i\in P_j
            \\
            &\nu_i^{(0)}, &&\quad i\notin P_j
        \end{aligned}
        \right.
    \end{align}
    where $\epsilonmu_j<\Delta$ is an arbitrarily small constant. So $P_j$ is the optimal safe solution under instance $j$.

    Fix any item $i\in P_j$, consider the event $\mathcal{E}_j=\{N_i(T)\geq\frac{T}{2}\}$, under instance $1$, $\calE_j$ indicates the optimal safe solution $P_1$ is sampled less than $\frac{T}{2}$ times; under instance $j$, $\calE_j^c$ indicates the optimal safe solution $P_j$ is sampled less than $\frac{T}{2}$ times. Therefore,
    \begin{align}\label{equ:low_bd_safesub}
        \Reg_{\instance_0}(T)+\Reg_{\instance_j}(T)
        &\geq
        \frac{T}{2}\epsilonmu \bbP_0[\calE_j]+\frac{T}{2}\epsilonmu_j \bbP_j[\calE_j^c]
        \\
        &\geq
        \frac{T}{2}\min\{\epsilonmu,\epsilonmu_j\} \left(\bbP_0[\calE_j]+\bbP_j[\calE_j^c]\right)
    \end{align}
     According to Lemma~\ref{lem:pinsker} and Lemma~\ref{lem:lb_KLdecomp},  we have
    \begin{align}
        &\bbP_0[\calE_j]+\bbP_j[\calE_j^c]\geq \frac{1}{2}\exp\{-\rmKL(\bbP_1,\bbP_j)\}
        \\
        &
        \rmKL(\bbP_1,\bbP_j)=\sum_{i=1}^L  \bbE_0[ N_i(T) ] \cdot \rmKL \big(\nu_i^{(0)}, \nu_i^{(j)} \big)
        =\sum_{i\in P_j}  \bbE_0[ N_i(T) ] \cdot \rmKL \big(\nu_i^{(0)}, \nu_i^{(j)} \big)
    \end{align}

    Thus
    \begin{align}
        &\Reg_{\instance_0}(T)+\Reg_{\instance_j}(T)
        \geq 
        \frac{T}{2}\min\{\epsilonmu,\epsilonmu_j\}\cdot \frac{1}{2}\exp\{-\rmKL(\bbP_1,\bbP_j)\}
        \\
        \Longleftrightarrow \quad
        &
        \frac{\ln\left(\Reg_{\instance_0}(T)+\Reg_{\instance_j}(T)\right)}{\ln T}
        \geq
        1 + \frac{\min\{\epsilonmu,\epsilonmu_j\}/4}{\ln T}-\frac{\rmKL(\bbP_1,\bbP_j)}{\ln T}
        \\
        \overset{(*)}{\Longrightarrow} \quad
        &
        \frac{\rmKL(\bbP_1,\bbP_j)}{\ln T}  \geq  1
        \\
        \Longleftrightarrow \quad
        &
        \frac{\sum_{i\in P_j}  \bbE_0[ N_i(T) ] }{\ln T}  \geq  \frac{1}{\rmKL \big(\nu_{P_j}^{(0)}, \nu_{P_j}^{(j)} \big)}=\frac{2\frac{\bsigma^2-\epsilonv}{K}}{(\frac{\epsilonmu+\epsilonmu_j}{K})^2}
        \\
        \overset{(**)}{\Longrightarrow} \quad
        &
        \frac{\sum_{i\in P_j}  \bbE_0[ N_i(T) ] }{\ln T}  
        \geq  
        \frac{2\frac{\bsigma^2-\epsilonv}{K}}{(\frac{\epsilonmu}{K})^2}=:T(\nu_{P_j}^{(0)})
    \end{align}
    where in $(*)$ we let $T\to\infty$ and note that both $\Reg_{\instance_0}(T)$ and $\Reg_{\instance_j}(T)$ are of order $o(T^a),\forall a>0$; in $(**)$ we take supremum over $\epsilonmu_j>0$.

    We also have to take the safeness constraint into consideration. 
    According to Lemma~\ref{lem:safe_check}, in order to check the safeness of 
    $P_j$, the items in $P_j$ have to be sampled
    \begin{align}
        &\sum_{i\in P_j} \bbE_{\instance_0}[ N_i(\tau) ]  \ge\sup_{ \nu_{P_j}^\prime \in E(\nu_{P_j}^{(0)}) } \frac{ d(\delta_T, 1-\delta_T )  }{ \rmKL( \nu_{P_j}^{(0)}, \nu_{P_j}^\prime)  }.
        \\
        \Longleftrightarrow \quad
        &\frac{\sum_{i\in P_j} \bbE_{\instance_0}[ N_i(\tau) ] }{\ln T} 
        \ge
        \sup_{ \nu_{P_j}^\prime  \in E(\nu_{P_j}^{(0)}) } \frac{ 1 }{ \rmKL( \nu_{P_j}^{(0)}, \nu_{P_j}^\prime  )  }\cdot\frac{d(\delta_T, 1-\delta_T )}{\ln T}
        \geq
        \frac{4K^2(\bsigma^2/K)^2}{(\epsilonv)^2}\frac{\ln\frac{1}{2.4\delta_T}}{\ln T}
        :=T_{\safe}(\nu_{P_j}^{(0)}). \label{equ:low_safesub_safe}
    \end{align}
    
    \underline{If $T(\nu_{P_j}^{(0)})\leq T_\safe(\nu_{P_j}^{(0)})$}, it indicates the suboptimality of $P_j$ is identified before the safeness. Furthermore, whenever a solution $S\subset P_j$ is sampled, $S$ can have most $K-1$ items. So the regret is lower bounded by 
    \begin{align}
        &\frac{\Reg[j](T)}{\ln T}\geq \frac{T(\nu_{P_j}^{(0)})}{K-1}\cdot (\epsilonmu+\Delta-\frac{\epsilonmu}{K})
        \geq
        \frac{2K^2\frac{\bsigma^2-\epsilonv}{K}}{(\epsilonmu)^2}\cdot (\frac{\Delta}{K-1}+\frac{\epsilonmu}{K})
        \\
        \Longrightarrow\quad
        &
        \frac{\Reg[j](T)}{\ln T}\geq
        \frac{2K\frac{\bsigma^2-\epsilonv}{K}}{(\epsilonmu)^2}\cdot (\Delta+\epsilonmu)
        \geq
        \frac{K\sigma^2/2}{(\epsilonmu)^2}\cdot (\Delta+\epsilonmu)
        \\
        \Longrightarrow\quad
        &
        \Reg[j](T)=\Omega\left(\frac{K}{\Delta_{P_j}^2}(\Delta_{P_j}+\Delta)\ln T\right)\label{equ:low_safesub_1}
    \end{align}
    
    \underline{If $T(\nu_{P_j}^{(0)})\geq T_\safe(\nu_{P_j}^{(0)})$}, it indicates the suboptimality of $P_j$ is identified after the safeness. Thus, whenever a solution $S\subset P_j$ is sampled, $S$ can have most $K-1$ items before the safeness checking is finished. So the regret is lower bounded by 
    \begin{align}
        \frac{\Reg[j](T)}{\ln T}&\geq \frac{T_{\safe}(\nu_{P_j}^{(0)})}{K-1}\cdot (\epsilonmu+\Delta-\frac{\epsilonmu}{K})+\frac{T(\nu_{P_j}^{(0)})-T_{\safe}(\nu_{P_j}^{(0)})}{K}\cdot \epsilonmu
        \\
        &
        \geq \frac{T(\nu_{P_j}^{(0)})}{K}\cdot \epsilonmu+ 
        (\Delta-\frac{\epsilonmu}{K})\cdot\frac{T_{\safe}(\nu_{P_j}^{(0)})}{K-1}
        \\
        &
        \geq \frac{2K\frac{\bsigma^2-\epsilonv}{K}}{(\epsilonmu)^2}\cdot \epsilonmu
                +(\Delta-\frac{\epsilonmu}{K})\cdot 
                \frac{ 1 }{K-1} \cdot \frac{4K^2(\bsigma^2/K)^2}{(\epsilonv)^2}\frac{\ln\frac{1}{2.4\delta_T}}{\ln T}
        \\
        &\geq\frac{K\sigma^2/2}{\epsilonmu}
                +(\Delta-\frac{\epsilonmu}{K})\cdot 
                \frac{ 1 }{K-1} \cdot \frac{K^2(\sigma^2)^2}{(\epsilonv)^2}\frac{\ln\frac{1}{2.4\delta_T}}{\ln T}
        \\
        \Longrightarrow\quad
        &
        \Reg[j](T)=\Omega\left(\frac{K}{\Delta_{P_j}}\ln T + (\Delta-\frac{\Delta_{P_j}}{K}) \frac{K}{(\Deltav_{P_j})^2}\ln \frac{1}{\delta_T}\right)\label{equ:low_safesub_2}
    \end{align}

    Based on \eqref{equ:low_safesub_1} and \eqref{equ:low_safesub_2}, the regret is
    \begin{align}
        \Reg[j](T)=\Omega\left(\frac{K}{\Delta_{P_j}}\ln T + \Delta\cdot\min\left\{\frac{K}{\Delta_{P_j}^2}\ln T, \frac{K}{(\Deltav_{P_j})^2}\ln \frac{1}{\delta_T}\right\}
        \right)
    \end{align}

    \noindent
    \textbf{\underline{Case 3: the risky paths}}

    For any risky path $P_j$ ($L_1+2\leq j\leq L_2+1$, we define instance $\instance_j=(E,\mathcal{A}_K,\nu^{(j)},\bsigma^2)$ with
    \begin{align}
        \nu_i^{(j)}=\left\{
        \begin{aligned}
            &\nu_{P_j}^{(j)}=N(\Delta+\frac{\epsilonmu}{K},\frac{\bsigma^2-\epsilonv_j}{K}), &&\quad i\in P_j
            \\
            &\nu_i^{(0)}, &&\quad i\notin P_j
        \end{aligned}
        \right.
    \end{align}
    where $\epsilonv_j$ is an arbitrarily small constant. So $P_j$ is the optimal safe solution under instance $j$.

    Fix any item $i\in P_j$, consider the event $\mathcal{E}_j=\{N_i(T)\geq\frac{T}{2}\}$, under instance $1$, $\calE_j$ indicates the optimal safe solution $P_1$ is sampled less than $\frac{T}{2}$ times; under instance $j$, $\calE_j^c$ indicates the optimal safe solution $P_j$ is sampled less than $\frac{T}{2}$ times. Therefore,
    \begin{align}\label{equ:low_bd_risk}
        \Reg_{\instance_0}(T)+\Reg_{\instance_j}(T)
        &\geq
        \frac{T}{2}\left(\Delta-\frac{K-1}{K}\epsilonmu\right) \bbP_0[\calE_j]+\frac{T}{2}\epsilonmu \bbP_j[\calE_j^c]
        \\
        &\geq
        \frac{T}{2}\epsilonmu\cdot \left(\bbP_0[\calE_j]+\bbP_j[\calE_j^c]\right)
    \end{align}
     According to Lemma~\ref{lem:pinsker} and Lemma~\ref{lem:lb_KLdecomp},  we have
    \begin{align}
        &\bbP_0[\calE_j]+\bbP_j[\calE_j^c]\geq \frac{1}{2}\exp\{-\rmKL(\bbP_1,\bbP_j)\}
        \\
        &
        \rmKL(\bbP_1,\bbP_j)=\sum_{i=1}^L  \bbE_0[ N_i(T) ] \cdot \rmKL \big(\nu_i^{(0)}, \nu_i^{(j)} \big)
        =\sum_{i\in P_j}  \bbE_0[ N_i(T) ] \cdot \rmKL \big(\nu_i^{(0)}, \nu_i^{(j)} \big)
    \end{align}

    Thus
    \begin{align}
        &\Reg_{\instance_0}(T)+\Reg_{\instance_j}(T)
        \geq 
        \frac{T}{2}\epsilonmu\cdot \frac{1}{2}\exp\{-\rmKL(\bbP_1,\bbP_j)\}
        \\
        \Longleftrightarrow \quad
        &
        \frac{\ln\left(\Reg_{\instance_0}(T)+\Reg_{\instance_j}(T)\right)}{\ln T}
        \geq
        1 + \frac{\min\{\epsilonmu,\epsilonmu_j\}/4}{\ln T}-\frac{\rmKL(\bbP_1,\bbP_j)}{\ln T}
        \\
        \overset{(*)}{\Longrightarrow} \quad
        &
        \frac{\rmKL(\bbP_1,\bbP_j)}{\ln T}  \geq  1
        \\
        \Longleftrightarrow \quad
        &
        \frac{\sum_{i\in P_j}  \bbE_0[ N_i(T) ] }{\ln T}  \geq  \frac{1}{\rmKL \big(\nu_{P_j}^{(0)}, \nu_{P_j}^{(j)} \big)}
        =
        \frac{K^2(\sigma^2)^2}{(\epsilonv)^2}
        =:T(\nu_{P_j}^{(0)})
    \end{align}
    where in $(*)$ we let $T\to\infty$ and note that both $\Reg_{\instance_0}(T)$ and $\Reg_{\instance_j}(T)$ are of order $o(T^a),\forall a>0$.

    Note that $P_j$ is a risky path and if any solution $S\subset P_j$ is sampled, $|S|\leq K-1$. Thus, $\bbE[M_j(T)]\geq \frac{T(\nu_{P_j}^{(0)})}{K-1}\ln T$. The regret is lower bounded by 
    \begin{align}
        \Reg[j](T)&=
        \Omega\left(\left(\Delta-\frac{K-1}{K}\epsilonmu\right)\frac{1}{K-1}\frac{K^2(\sigma^2)^2}{(\epsilonv)^2}\ln T
        \right)
        \\
        &=
        \Omega\left(\frac{K\Delta}{(\Deltav_{P_j})^2}\ln T
        \right)
    \end{align}

    \noindent
    \textbf{\underline{Case 4: the unsafe and suboptimal paths}}
    
    For any unsafe and suboptimal path $P_j (L_2+2\leq j\leq L_0)$,
    we define instance $\instance_j=(E,\mathcal{A}_K,\nu^{(j)},\bsigma^2)$ with
    \begin{align}
        \nu_i^{(j)}=\left\{
        \begin{aligned}
            &\nu_{P_j}^{(j)}=N(\Delta+\frac{\epsilonmu_j}{K},\frac{\bsigma^2-\epsilonv_j}{K}), &&\quad i\in P_j
            \\
            &\nu_i^{(0)}, &&\quad i\notin P_j
        \end{aligned}
        \right.
    \end{align}
    where $\epsilonmu_j<\Delta, \epsilonv_j<\frac{\bsigma^2}{K}$ are arbitrarily small constants. So $P_j$ is the optimal safe solution under instance $j$.

    Fix any item $i\in P_j$, consider the event $\mathcal{E}_j=\{N_i(T)\geq\frac{T}{2}\}$, under instance $1$, $\calE_j$ indicates the optimal safe solution $P_1$ is sampled less than $\frac{T}{2}$ times; under instance $j$, $\calE_j^c$ indicates the optimal safe solution $P_j$ is sampled less than $\frac{T}{2}$ times. Therefore,
    \begin{align}\label{equ:low_bd_unsafesub}
        \Reg_{\instance_0}(T)+\Reg_{\instance_j}(T)
        &\geq
        \frac{T}{2}\epsilonmu \bbP_0[\calE_j]+\frac{T}{2}\epsilonmu_j \bbP_j[\calE_j^c]
        \\
        &\geq
        \frac{T}{2}\min\{\epsilonmu,\epsilonmu_j\} \left(\bbP_0[\calE_j]+\bbP_j[\calE_j^c]\right)
    \end{align}
     According to Lemma~\ref{lem:pinsker} and Lemma~\ref{lem:lb_KLdecomp},  we have
    \begin{align}
        &\bbP_0[\calE_j]+\bbP_j[\calE_j^c]\geq \frac{1}{2}\exp\{-\rmKL(\bbP_1,\bbP_j)\}
        \\
        &
        \rmKL(\bbP_1,\bbP_j)=\sum_{i=1}^L  \bbE_0[ N_i(T) ] \cdot \rmKL \big(\nu_i^{(0)}, \nu_i^{(j)} \big)
        =\sum_{i\in P_j}  \bbE_0[ N_i(T) ] \cdot \rmKL \big(\nu_i^{(0)}, \nu_i^{(j)} \big)
    \end{align}

    Thus
    \begin{align}
        &\Reg_{\instance_0}(T)+\Reg_{\instance_j}(T)
        \geq 
        \frac{T}{2}\min\{\epsilonmu,\epsilonmu_j\}\cdot \frac{1}{2}\exp\{-\rmKL(\bbP_1,\bbP_j)\}
        \\
        \Longleftrightarrow \quad
        &
        \frac{\ln\left(\Reg_{\instance_0}(T)+\Reg_{\instance_j}(T)\right)}{\ln T}
        \geq
        1 + \frac{\min\{\epsilonmu,\epsilonmu_j\}/4}{\ln T}-\frac{\rmKL(\bbP_1,\bbP_j)}{\ln T}
        \\
        \overset{(*)}{\Longrightarrow} \quad
        &
        \frac{\rmKL(\bbP_1,\bbP_j)}{\ln T}  \geq  1
        \\
        \Longleftrightarrow \quad
        &
        \frac{\sum_{i\in P_j}  \bbE_0[ N_i(T) ] }{\ln T}  \geq  \frac{1}{\rmKL \big(\nu_{P_j}^{(0)}, \nu_{P_j}^{(j)} \big)}
        =
        4K^2\left(\left(\frac{\epsilonv_j+\epsilonv}{(\bsigma^2-\epsilonv_j)/K}\right)^2
        +
        \frac{2(\epsilonmu_j+\epsilonmu)^2}{(\bsigma^2-\epsilonv_j)/K}\right)^{-1}
        \\
         \overset{(**)}{\Longrightarrow} \quad
         &
           \frac{\sum_{i\in P_j}  \bbE_0[ N_i(T) ] }{\ln T}  
           \geq  
        4K^2\left(\left(\frac{\epsilonv}{\sigma^2/2}\right)^2
        +
        \frac{2(\epsilonmu)^2}{\sigma^2/2}\right)^{-1}
        =:T(\nu_{P_j}^{(0)})
    \end{align}
    where in $(*)$ we let $T\to\infty$ and note that both $\Reg_{\instance_0}(T)$ and $\Reg_{\instance_j}(T)$ are of order $o(T^a),\forall a>0$; in $(**)$ we take the supremum over $\epsilonmu_j>0,\epsilonv_j>0$.

    Note that $P_j$ is an unsafe path and if any solution $S\subset P_j$ is sampled, $|S|\leq K-1$. Thus, $\bbE[M_j(T)]\geq \frac{T(\nu_{P_j}^{(0)})}{K-1}\ln T$. The regret is lower bounded by 
    \begin{align}
        \Reg[j](T)&\geq
        \left(\Delta+\frac{K-1}{K}\epsilonmu\right)\frac{4K^2}{K-1} \left(\left(\frac{\epsilonv}{\sigma^2/2}\right)^2
        +
        \frac{2(\epsilonmu)^2}{\sigma^2/2}\right)^{-1}\ln T
        \\
        &=
        \Omega\left(\frac{K\Delta+K\epsilonmu}{\max\{\epsilonmu,\epsilonv\}^2}\ln T
        \right)
        \\
        &=
        \Omega\left(\min\left\{\frac{K\Delta}{\Delta_{P_j}^2}\ln T,
        \frac{K\Delta}{(\Deltav_{P_j})^2}\ln T\right\}
        \right)
    \end{align}

\textbf{In conclusion},
the regret yielded from these paths is lower bounded by
\begin{align}
    \frac{\Reg(T)}{1-\delta_T}
    &\geq
    \Reg[1](T)+\sum_{j=2}^{L_1+1}\Reg[j](T)+\sum_{j=L_1+2}^{L_2+1}\Reg[j](T)+\sum_{j=L_2+2}^{L_0}\Reg[j](T)
    \\
    &
    \geq
    \Omega\left(\frac{K\Delta}{(\epsilonv)^2}\ln\frac{1}{\delta_T}\right)
    +\sum_{j=2}^{L_1+1} \Omega\left(\frac{K}{\epsilonmu}\ln T + \Delta\cdot\min\left\{\frac{K}{(\epsilonmu)^2}\ln T, \frac{K}{(\epsilonv)^2}\ln \frac{1}{\delta_T}\right\}
        \right)
    \\
    &
    \quad +
    \sum_{j=L_1+2}^{L_2+1}\Omega\left(\frac{K\Delta}{(\epsilonv)^2}\ln T
        \right)
    +\sum_{j=L_2+2}^{L_0}\Omega\left(\min\left\{\frac{K\Delta}{(\epsilonmu)^2}\ln T
    ,\frac{K\Delta}{(\epsilonv)^2}\ln T\right\}
        \right).
\end{align}
    Note
    \begin{itemize}
        \item $L_1=L_2-L_1=L_0-L_2-1=\frac{L_0-1}{3}$, $L_0=\frac{L}{K}$ and $\mu^\star =K\Delta$.
        \item for $S^\star$, $\epsilonv=\Deltav_{S^\star}$.
        \item for $S\in\Safeset\cap\Subopt$ and $i\in S$, we check that if $S\subset P_j,2\leq j\leq L_1+1$ , $ \Delta_{i,\Safeset\cap\Subopt,\min}=\epsilonmu$ and
        \begin{align}
            \Psi^\prime_{i, \Safeset \cap \Subopt}\geq \min\left\{ \frac{\ln T}{\Delta_S^2},\frac{9\ln (1/\delta_T)}{(\Delta_S^\rmv)^2}  \right\}.
        \end{align} 
        where the equality holds when $S=P_j,2\leq j\leq L_1+1$.  
        \item for $S\in\Riskset$ and $i\in S$, $\epsilonv=\Deltav_{i,\Riskset}$.
        \item $S\in\Safeset^c\cap\Subopt$ and $i\in S$, we can easily check $S=P_j$ for some $L_2+2\leq j\leq L_0$, thus $\Delta_{i,\Safeset^c\cap\Subopt,\min}=\epsilonmu$ and 
            \begin{align}
                \Phi_{i, \Safeset^c \cap \Subopt}= \min\left\{ \frac{\ln T}{\Delta_S^2},\frac{9\ln T}{(\Delta_S^\rmv)^2}  \right\}.
            \end{align}
    \end{itemize}
    Therefore, 
    \begin{align}
    \Reg(T)
    &\geq
    \Omega\left(\frac{L\ln T}{\Delta_{i,\Safeset\cap\Subopt,\min}} \right)+
    \frac{\mu^\star }{K}\cdot\Omega\left(\frac{K\cdot\ln(1/\delta_T)}{(\Delta_{S^\star})^2}
    +\sum_{i\in E}\Psi^\prime_{i,\Safeset\cap\Subopt} +
    \sum_{i\in E}\frac{\ln T}{(\Deltav_{i,\Riskset})^2}
    +\sum_{i\in E}\Phi_{i,\Safeset^c\cap\Subopt}
        \right).
\end{align}

\end{proof}

\thmLowBdProbDep*

\begin{proof}
    We divide the whole ground set into several paths. 
Meanwhile, we define four sets $E_1, \ldots, E_4$ such that $E_i \cap E_j = \emptyset$ for any $i\ne j$. $P_1 = E_1$, $P_j \subset E_j$ for $j=2,3,4$. For any $j>4$, there exists $k\ne 1$ such that $P_j \subset E_k$. 
For any path $P_j$, if $P_j \subset E_4$, $|P_j| = K_2$; otherwise, $|P_j| = K_1$. $P_j$ consists of arms
\begin{align*}
    \sum_{i=1}^{j-1} |P_i| + 1, \ldots, 
    \sum_{i=1}^{j-1} |P_i| + K_1\cdot \mathbbm{1}\{P_j \not\subset E_4\} + K_2 \cdot \mathbbm{1}\{ P_j \subset E_4 \}.
\end{align*}
The feasible solution set $\mathcal{A}_K$ consists of all subsets of each path.

We let $\bern(a)$ denote the Bernoulli distribution with parameter $a (a\in (0,1))$. Note that the variance of $\bern(a)$ is $a(1-a)$.
We construct an instance with $\nu_i  = \mathrm{Bern}( \mu_i  )$.
With $\varepsilon_1,\varepsilon_2>0$, we set
\begin{align*}
    &
    \mu_i = \Delta  \quad \forall i \in E_1,
    \\&
    \mu_i = \Delta - \varepsilon_1 \quad \forall i \in E_2,
    \\&
    \mu_i = \Delta + \varepsilon_2 \quad \forall i \in E_3,
    \\&
    \mu_i = \Delta - \varepsilon_3 \quad \forall i \in E_4.
\end{align*}
We let $2 \le K_1\ <  K_2 \le K$, $\Delta<1/2$ and
\begin{align*}
    & 
    K_1\Delta(1-\Delta) < \bar{\sigma}^2 \text{ (paths in $E_1$ and $E_2$ are safe)},
    \\&
    K_1(\Delta+\varepsilon_2)(1-\Delta-\varepsilon_2) > \bar{\sigma}^2 \text{ (paths in $E_3$ are unsafe)},
    \\&
    K_2(\Delta-\varepsilon_3)(1-\Delta+\varepsilon_3) > \bar{\sigma} \text{ (paths in $E_4$ are unsafe)},
    \\&
    K_2(\Delta-\varepsilon_3) < K_1\Delta  \text{ (paths in $E_4$ are suboptimal)},
    \\&
    (K_1-1) \cdot ( \Delta+\varepsilon_2 )
    < K_1  \cdot \Delta 
    ~\Leftrightarrow~
    (K_1-1) \cdot   \varepsilon_2 
    <  \Delta 
    ~\Leftrightarrow~
    \varepsilon_2
    <  \frac{  \Delta}{K_1-1} \text{ (paths in $E_3$ are suboptimal or unsafe)}.
\end{align*}
The conditions above indicate that
\begin{itemize}
    \item $P_1$ is the unique optimal safe set;
    \item if $P_j \subset E_2$, $P_j$ and its subsets are safe but suboptimal;
    \item if $P_j \subset E_3$, $P_j$ is risky, and its proper subsets are suboptimal;
    \item if $P_j \subset E_4$, $P_j$ and its subsets are suboptimal, and $P_j$ is unsafe.
\end{itemize}
Let 
\begin{align*}
	p_1 := \frac{1 - \sqrt{1 - \bar{\sigma}^2/K_1} }{2 }
	~\text{ and }~
	p_2 := \frac{1 - \sqrt{1 - \bar{\sigma}^2/K_2} }{2 },
\end{align*}
The relations between $\mu_i$'s are as in the following figure:
\begin{figure}[H]
    \centering
    \hspace*{1em}
    \includegraphics[width=.9\textwidth]{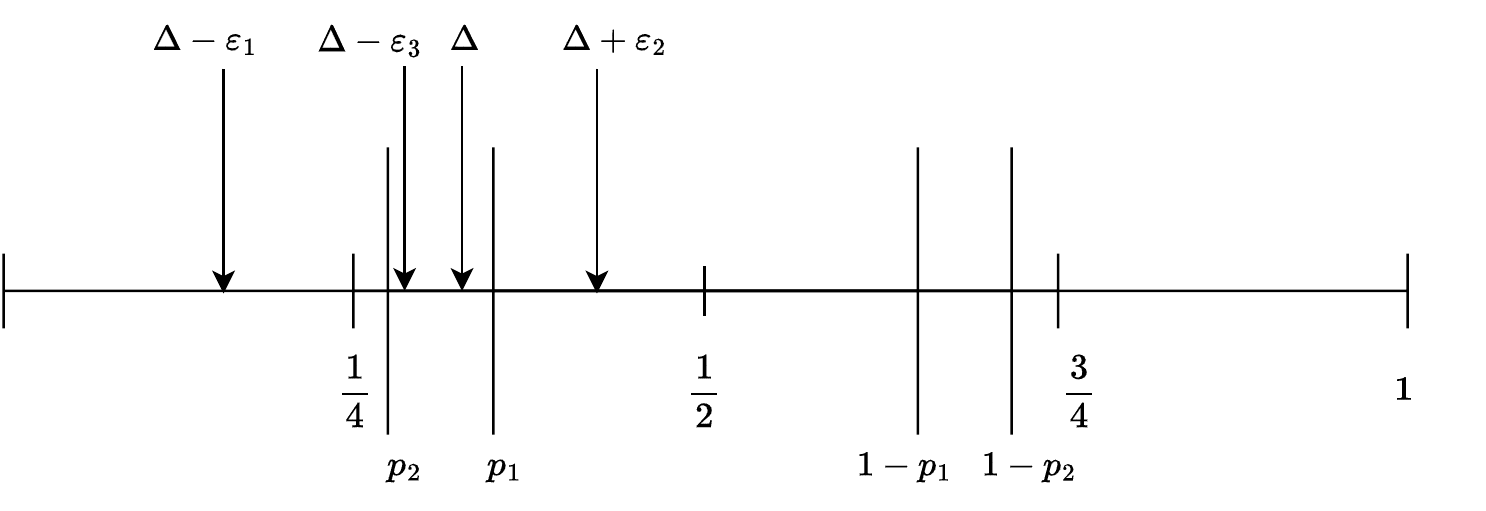}
\end{figure}
With a similar proof to that of Theorem~\ref{thm:low_bd_prob_dep_gauss}, we have,
the accumulative regret from the optimal $P_1$ is lower bounded by 
    \begin{align}
        \Reg[1](T)\geq 
        \Omega(\frac{K_1\Delta}{(\Deltav_{P_1})^2}\ln\frac{1}{\delta_T});
    \end{align}
the accumulative regret from a safe and suboptimal path in $E_2$ is lower bounded by  
    \begin{align}
        \Reg[j](T) \ge \Omega\left(\frac{K_1}{\Delta_{P_j}}\ln T + \Delta\cdot\min\left\{\frac{K_1}{\Delta_{P_j}^2}\ln T, \frac{K_1}{(\Deltav_{P_j})^2}\ln \frac{1}{\delta_T}\right\}
        \right);
    \end{align}
%
the accumulative regret from a risky path in $E_3$ is lower bounded by  
    \begin{align}
        \Reg[j](T)
        &\ge
        \Omega\left(\frac{K_1\Delta}{(\Deltav_{P_j})^2}\ln T
        \right);
    \end{align}

the accumulative regret from a unsafe and suboptimal path in $E_4$ is lower bounded by      
    \begin{align}
        \Reg[j](T)&\geq
        \Omega\left(\min\left\{\frac{K_2\Delta}{\Delta_{P_j}^2}\ln T,
        \frac{K_2\Delta}{(\Deltav_{P_j})^2}\ln T\right\}
        \right).
    \end{align}
We let
\begin{align*}
    \epsilon^\mu = \min_j \{ \Delta_{P_j} \},\quad
    \epsilonv = \min_j \{ \Deltav_{P_j} \};
\end{align*}
 and set $K_1$, $K_2$, $\varepsilon_1$, $\varepsilon_2$, $\varepsilon_3$ such that
 \begin{align*}
     & K_1 > \frac{3}{4} \cdot K_2, \quad K_2 = K, 
     \quad
     \min_j \{ \Delta_{P_j} \} < 2 \epsilon^\mu,
     \quad 
     \min_j \{ \Deltav_{P_j} \} < 2 \epsilonv.
 \end{align*}
\textbf{In conclusion},
the regret yielded from these paths is lower bounded by
\begin{align}
    \frac{\Reg(T)}{1-\delta_T}
    &\geq
    \Reg[1](T)+\sum_{P_j\in E_2}\Reg[j](T)+\sum_{P_j\in E_3}\Reg[j](T)+\sum_{P_j\in E_4}\Reg[j](T)
    \\
    &
    \geq
    \Omega\left(\frac{K \Delta}{(\epsilonv)^2}\ln\frac{1}{\delta_T}\right)
    + |E_2| \cdot \Omega\left(\frac{K}{\epsilonmu}\ln T + \Delta\cdot\min\left\{\frac{K }{(\epsilonmu)^2}\ln T, \frac{K}{(\epsilonv)^2}\ln \frac{1}{\delta_T}\right\}
        \right)
    \\
    &
    \quad +
    |E_3| \cdot \Omega\left(\frac{K \Delta}{(\epsilonv)^2}\ln T
        \right)
    +|E_4|\cdot \Omega\left(\min\left\{\frac{K \Delta}{(\epsilonmu)^2}\ln T
    ,\frac{K \Delta}{(\epsilonv)^2}\ln T\right\}
        \right).
\end{align}
 Note that
    \begin{itemize}
        \item $\mu^\star =K_1\Delta$.
        \item for $S^\star$, $\epsilonv=\Deltav_{S^\star}$.
        \item for $S\in\Safeset\cap\Subopt$ and $i\in S$, we check that if $S\subset P_j,j\in E_2$ , $ \Delta_{i,\Safeset\cap\Subopt,\min}=\epsilonmu$ and
        \begin{align}
            \Psi^\prime_{i, \Safeset \cap \Subopt}\geq \min\left\{ \frac{\ln T}{\Delta_S^2},\frac{9\ln (1/\delta_T)}{(\Delta_S^\rmv)^2}  \right\}.
        \end{align} 
        where the equality holds when $S=P_j,P_j\in E_2$.  
        \item for $S\in\Riskset$ and $i\in S$, $\epsilonv=\Deltav_{i,\Riskset}$.
        \item $S\in\Safeset^c\cap\Subopt$ and $i\in S$, we can easily check $S=P_j$ for some $P_j\in E_4$, thus $\Delta_{i,\Safeset^c\cap\Subopt,\min}=\epsilonmu$ and 
            \begin{align}
                \Phi_{i, \Safeset^c \cap \Subopt}= \min\left\{ \frac{\ln T}{\Delta_S^2},\frac{9\ln T}{(\Delta_S^\rmv)^2}  \right\}.
            \end{align}
    \end{itemize}
    Therefore, 
    \begin{align}
    \Reg(T)
    &\geq
    \Omega\left(\frac{L\ln T}{\Delta_{i,\Safeset\cap\Subopt,\min}} \right)+
    \frac{\mu^\star }{K}\cdot\Omega\left(\frac{K\cdot\ln(1/\delta_T)}{(\Delta_{S^\star})^2}
    +\sum_{i\in E}\Psi^\prime_{i,\Safeset\cap\Subopt} +
    \sum_{i\in E}\frac{\ln T}{(\Deltav_{i,\Riskset})^2}
    +\sum_{i\in E}\Phi_{i,\Safeset^c\cap\Subopt}
        \right).
\end{align}

\end{proof}

\subsection{Problem-independent Lower bound}\label{sec:proof_LwBd_indep}
\thmLowBdProbIndep*

\begin{proof}

Since the rewards of items  are bounded in $[0,1]$, the variance of each arm is at most ${1}/{4}$.
Therefore, when $\bar{\sigma}^2 \in[ K/4,\infty)$, any solution in $\calA_K$ is safe and there exists a generic lower bound \cite{Tight2015Kveton}.
When $\bar{\sigma}^2 \in (0,K/4)$, let 
\begin{align*}
	\bar{a} := \frac{1 + \sqrt{1 - \bar{\sigma}^2/K} }{2 }
	~\text{ and }~
	\underline{a} := \frac{1 - \sqrt{1 - \bar{\sigma}^2/K} }{2 } . 
\end{align*}
We consider the instances containing items with Bernoulli reward distributions. We let $\mathrm{Bern}(a)$ denote the Bernoulli distribution with mean $a$. An item $ i\in[L] $ with reward distribution $ \mathrm{Bern}(\mu_i) $ is with variance $\mu_i(1-\mu_i)$, which is smaller than $\bar{\sigma}^2/K$ if and only if $\mu_i\in (0, \underline{a}) \cup ( \bar{a},1 )$.

We will construct $3$ instances such that 
	under instance $k$~($0\le k \le 2$), the stochastic reward of an item in path $j$~($1\le j \le L_0$) is drawn from distribution 
	$  \nu_j^{(k)}  = \mathrm{Bern}(\mu_j^{(k)}) $,
	where $ \mu_j^{(k)} $ will be specified in each case.

Under instance $0$, with $\mu_0 < \underline{a}$, we let $ \mu_j^{(0)} = \mu_0$ for all $j$, i.e.,  the reward distribution of each item is $\mathrm{Bern}(\mu_0)$. Since $\mu_{P_j} = K \mu_0$ and $\sigma^2_{P_j} = K \mu_0 (1-\mu_0) < \bar{\sigma}^2$ for all $j$, each path is an identical safe and optimal solution.
Since all paths are equivalent under instance $0$, we have $\bbE_0[ M_j(t) ] = T/ L_0$ for all $j \in 1, \ldots, L_0$, where $\bbE_0$ denote the expectation under instance $0$.

We next construct instance $1$ such that 
\begin{align*}
    \mu_1^{(1)} = \mu_1, \qquad \mu_j^{(1)} = \mu_0  \quad j \ne 1,
\end{align*}
where $  \mu_0 < \mu_1 < \underline{a} $.%
\footnote{In this proof, $\mu_1$ are $\mu_2$ are redefined to minimize clutter; the previous definitions of them not used.}
Hence, $P_1$ is the unique optimal safe solution under instance $1$ while all other solutions are safe but suboptimal.

With an analysis similar to that of Lemma 6.4 in \citet{zhong2021thompson}, we can show that
\begin{restatable}{lem}{lemKLDecomp}
    Let the reward distribution of item $i$ be $\nu_i^{(j)}$ under instance $j (j=1,2)$, then
    \begin{align*}
        \rmKL \big( \calH_T^{(1)}, \calH_T^{(2)} \big) = \sum_{i=1}^L  \bbE_0[ N_i(T) ] \cdot \rmKL \big(\nu_i^{(1)}, \nu_i^{(2)} \big)
        .
    \end{align*}     

\end{restatable}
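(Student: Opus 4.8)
The plan is to prove this via the standard divergence-decomposition argument for adaptively collected data: a chain-rule factorization of the history likelihood followed by a Wald-type identity. First I would write down the likelihood of a realized history $\calH_T = \{(S_s, \{W_i(s)\}_{i\in S_s})\}_{s=1}^T$ under instance $j$. By the chain rule, and using that given $\calH_{s-1}$ the solution $S_s$ is drawn from the policy $\pi_s$ and then the rewards $\{W_i(s)\}_{i\in S_s}$ are emitted independently from the per-item distributions, the likelihood factorizes as
\begin{align*}
    \bbP_j(\calH_T) = \prod_{s=1}^T \pi_s(S_s \mid \calH_{s-1}) \prod_{i \in S_s} \nu_i^{(j)}(W_i(s)).
\end{align*}

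Second, I would take the log-likelihood ratio between $j=1$ and $j=2$. The crucial observation is that the policy factors $\pi_s(S_s\mid\calH_{s-1})$ are identical under both instances---the algorithm is instance-agnostic---so they cancel, leaving
\begin{align*}
    \ln\frac{\bbP_1(\calH_T)}{\bbP_2(\calH_T)} = \sum_{s=1}^T \sum_{i \in S_s} \ln\frac{\nu_i^{(1)}(W_i(s))}{\nu_i^{(2)}(W_i(s))}.
\end{align*}
Taking the expectation under the instance attached to the first KL argument and rewriting the inner sum with the indicator $\mathbbm{1}\{i\in S_s\}$ gives $\rmKL(\calH_T^{(1)},\calH_T^{(2)}) = \bbE_1\big[\sum_{s=1}^T\sum_{i=1}^L \mathbbm{1}\{i\in S_s\}\ln(\nu_i^{(1)}(W_i(s))/\nu_i^{(2)}(W_i(s)))\big]$.

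Third, I would carry out the reduction to per-item quantities. For fixed $i$ and $s$, condition on $\calH_{s-1}$ together with the event $\{i\in S_s\}$: the indicator $\mathbbm{1}\{i\in S_s\}$ is measurable with respect to the information available before $W_i(s)$ is observed (it is predictable), while the fresh reward $W_i(s)$ is an independent draw from $\nu_i^{(1)}$ under the data-generating instance. Hence the conditional expectation of the per-observation log-ratio equals exactly $\rmKL(\nu_i^{(1)},\nu_i^{(2)})$ on $\{i\in S_s\}$. Summing over $s$ and invoking the tower property turns $\sum_{s=1}^T \mathbbm{1}\{i\in S_s\}$ into $N_i(T)$, so each item contributes the expected selection count times $\rmKL(\nu_i^{(1)},\nu_i^{(2)})$; summing over $i\in[L]$ yields the claimed identity. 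The main obstacle is precisely this last step: because the data are collected adaptively, $N_i(T)$ is a data-dependent (random) count rather than a deterministic budget, so one cannot naively pull the expectation inside the product. The predictability of $\{i\in S_s\}$ relative to $W_i(s)$---i.e.\ a martingale/optional-stopping (Wald) argument, as formalized in Lemma~\ref{lem:lb_KLdecomp} and in the cited Lemma~6.4 of \citet{zhong2021thompson}---is what legitimizes the interchange. One also needs mutual absolute continuity of $\nu_i^{(1)}$ and $\nu_i^{(2)}$ (here both Bernoulli with parameters in $(0,1)$) so that every log-ratio is integrable and the decomposition is finite.
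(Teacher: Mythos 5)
Your proposal is correct and follows essentially the same route as the paper, which simply invokes the standard divergence-decomposition argument (via Lemma~6.4 of \citet{zhong2021thompson}): chain-rule factorization of the history likelihood, cancellation of the instance-agnostic policy terms, and a tower-property/Wald step that converts the adaptive selection counts into $\bbE[N_i(T)]$. Your reading that the expectation must be taken under the instance in the first argument of the KL (i.e.\ $\bbE_1$, despite the statement's $\bbE_0$ notation, which the paper uses because it applies the lemma to instances $0$ and $1$) is the right interpretation, and your emphasis on the predictability of $\mathbbm{1}\{i\in S_s\}$ is exactly the step that makes the interchange legitimate.
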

Hence, we  have
\begin{align*}
     \rmKL \big( \calH_T^{(0)}, \calH_T^{(1)} \big) = \sum_{i\in P_1}  \bbE_0[ N_i(T) ] \cdot \big(\nu_i^{(0)}, \nu_i^{(1)} \big)
        \overset{(a)}{\le } K \cdot \bbE_0[ M_1(t) ] \cdot d  (\mu_0, \mu_1)
        = K \cdot \frac{ T }{ L_0} \cdot  d (\mu_0, \mu_1),
\end{align*}
where $(a)$ follows from the fact that at most $K$ items are selected at one time step and the definition of instances.

Next, we apply Pinsker's Inequality to bound $\bbE_1[ M_1(T) ]$.
%
%
%
Lemma~\ref{lem:pinsker} indicates that
\begin{align*}
    & \left|  \bbE_0[ M_1(T) ]  - \bbE_1[ M_1(T) ] \right| \le \sqrt{ \frac{1}{2} \rmKL \big( \calH_T^{(0)}, \calH_T^{(1)} \big)},
    \\ \Rightarrow~ &
    \left|   \bbE_1[ M_1(T) ] - \frac{ T }{L_0} \right| \le \sqrt{ \frac{KT}{2 L_0} \cdot  d  (\mu_0, \mu_1) } . 
\end{align*}
Moreover, since the paths $P_j$ for $j \ne 1$ are identical under instance $1$, we have
\begin{align*}
    \bbE_1[ M_j(T) ] = \frac{1}{L_0 - 1} \big(T - \bbE_1[  M_1(T) ] \big) 
    \ge \frac{1}{L_0 - 1} \Bigg(T - \frac{T}{L_0} - \sqrt{ \frac{KT}{2 L_0} \cdot  d  (\mu_0, \mu_1) } ~\Bigg): =  \underline{M} .
\end{align*}

To learn the regret incurred by $P_2$ under instance $1$, we need to take the effects of the safety constraint into consideration. 
Lemma~\ref{lem:safe_check} indicates that
\begin{itemize}
    \item if $\underline{M} < T(\nu_0)$, at each of the first $\underline{M}$ time steps in $S_2(T)$, at most $K-1$ items are pulled  and  regret of at least $[ K(\mu_1 - \mu_2) + \mu_2] \cdot  \underline{M}$ is incurred, i.e.,
        \begin{align*}
            \Reg[2]\ge [ K(\mu_1 - \mu_2) + \mu_2] \cdot  \underline{M}.
        \end{align*}
    \item if $\underline{M} \ge T(\nu_0)$,  at each of the first $ T(\nu_0)$ time steps in $S_2(T)$, at most $K-1$ items are pulled   and  regret of at least $[ K(\mu_1 - \mu_2) + \mu_2] \cdot T(\nu_0)$ is incurred; besides, at the subsequent time steps in $S_2(T)$, regret of at least $K(\mu_1 - \mu_2)  \cdot [ \underline{M}  - T(\nu_0)]$ is incurred, i.e.,
        \begin{align*}
            \Reg[2] \ge [ K(\mu_1 - \mu_2) + \mu_2] \cdot T(\nu_0) + K(\mu_1 - \mu_2)  \cdot [ \underline{M}  - T(\nu_0)]
            =  K(\mu_1 - \mu_2)  \cdot   \underline{M} + \mu_2  \cdot T(\nu_0) 
            .
        \end{align*}
\end{itemize}
In short, we have
\begin{align*}
    \Reg[2]\ge K(\mu_1 - \mu_2)  \cdot   \underline{M} + \mu_2  \cdot  \min \{T(\nu_0) , \underline{M} \}. 
\end{align*}
We can lower bound $\Reg(j) $ for all $j=3,\ldots,L_0$ with the same method.

Besides, since 
\begin{align*}
    \bbE_1[ M_1(T) ]  
    \ge  T - \sqrt{ \frac{KT}{2 L_0} \cdot d (\mu_0, \mu_1) } 
\end{align*}
and at most $K-1$ items are selected at each of the first $ T(\nu_1)$ time steps in $S_1(T)$, we have
\begin{align*}
    \Reg[1] \ge \mu_1 \cdot  \min\bigg\{ T(\nu_1),  T - \sqrt{ \frac{KT}{2 L_0} \cdot  d  (\mu_0, \mu_1) }  \bigg\}.
\end{align*}
Therefore, under instance $1$, we have
\begin{align*}
    \frac{ \Reg(T) }{1-\delta}
    & \ge  \sum_{j=1}^{L_0} \Reg[j]
    \\ &
    \ge \mu_1 \cdot  \min\bigg\{ T(\nu_1),  T - \sqrt{ \frac{KT}{2 L_0} \cdot  d (\mu_0, \mu_1) }  \bigg\} 
        + (L_0-1) \cdot  [ K(\mu_1 - \mu_2)  \cdot   \underline{M} + \mu_2  \cdot  \min \{T(\nu_0) , \underline{M} \} ]
    \\ &
    = \mu_1 \cdot  \min\bigg\{ \sup_{ \nu_1' \in E(\nu_1) }\frac{ 1 }{K-1} \cdot \frac{ d(\delta, 1-\delta ) }{ \rmKL( \nu_1, \nu_1' )  },       \,     T - \sqrt{ \frac{KT}{2 L_0} \cdot  d  (\mu_0, \mu_1) }  \bigg\} 
        \\& \hspace{2em}
        + (L_0-1) \cdot  \bigg[ K(\mu_1 - \mu_2)  \cdot   \underline{M} + \mu_2  \cdot  \min \bigg\{  \sup_{ \nu_0' \in E(\nu_0) }\frac{ 1 }{K-1} \cdot \frac{  d(\delta, 1-\delta ) }{ \rmKL( \nu_0, \nu_0' )  }, \, \underline{M} \bigg\} \bigg]
\end{align*}
where
\begin{align*}
    &
    E(\nu_0) = \{  \nu(0'): \text{ the variance related to } \nu(0') \text{ is larger than } \bar{\sigma}^2/K \},
    \\&
    E(\nu_1) = \{  \nu(1'): \text{ the variance related to } \nu(1') \text{ is larger than } \bar{\sigma}^2/K \},
    \\&
     \underline{M} = \frac{1}{L_0 - 1} \Bigg(T -  \frac{T}{L_0} - \sqrt{ \frac{KT}{2 L_0} \cdot  d (\mu_0, \mu_1) } ~\Bigg).
\end{align*}
By   Pinsker's inequality (see Lemma~\ref{lem:pinsker}),
for $\mu_i\ge 1/2$,
	\begin{align*}
		d( \mu_i,  \bar{a} ) \le \frac{ ( \mu_i - \bar{a} )^2  \cdot (1- \mu_i -\bar{a} )^2 }{  \underline{a} (1- \mu_i -\bar{a} )^2 }  
		= \frac{   [ \mu_i(1-\mu_i)  - \bar{\sigma}^2/K ]^2 }{ \underline{a} (1- \mu_i -\bar{a} )^2 }
		\le  \frac{  [ \mu_i(1-\mu_i) - \bar{\sigma}^2/K ]^2 }{ \underline{a} ( \bar{a} -1/2)^2 }
		= \frac{  [ \mu_i(1-\mu_i)  - \bar{\sigma}^2/K ]^2 }{ \underline{a} ( 1/2 - \underline{a} )^2 };
	\end{align*}
	for $\mu_i < 1/2$,
	\begin{align*}
		d( \mu_i,  \underline{a} ) \le \frac{ ( \mu_i - \underline{a} )^2  \cdot (1- \mu_i -\underline{a} )^2 }{  \underline{a} (1- \mu_i -\underline{a} )^2 }  
		= \frac{  [ \mu_i(1-\mu_i) - \bar{\sigma}^2/K ]^2 }{ \underline{a} (1- \mu_i -\underline{a} )^2 }
		\le  \frac{    [ \mu_i(1-\mu_i)  - \bar{\sigma}^2/K]^2 }{ \underline{a} ( 1/2 - \underline{a} )^2 }.
	\end{align*}

Since $\nu_0 = \mathrm{Bern}(\mu_0)$, $\nu_1 = \mathrm{Bern}(\mu_1)$, and $0<\mu_0<\mu_1<\underline{a} < 1/2$, we have
\begin{align*}
     &
     \sup_{ \nu_0' \in E(\nu_0) }\frac{ 1 }{K-1} \cdot \frac{  d(\delta, 1-\delta ) }{ \rmKL( \nu_0, \nu_0' )  }
     = \frac{   d(\delta, 1-\delta ) }{K-1} \cdot \frac{   ( 1/2 - \underline{a} )^2 }{ [\mu_0(0-\mu_0) - \bar{\sigma}^2/K ]^2 },
     \\&
     \sup_{ \nu_1' \in E(\nu_1) }\frac{ 1 }{K-1} \cdot \frac{  d(\delta, 1-\delta ) }{ \rmKL( \nu_1, \nu_1' )  }
     = \frac{   d(\delta, 1-\delta ) }{K-1} \cdot \frac{   ( 1/2 - \underline{a} )^2 }{ [ \mu_1(1-\mu_1) - \bar{\sigma}^2/K]^2 }.
\end{align*}
We define the minimum variance gap $\Delta^v : = \min_{S\in \calA} \Delta_S^v$.
{
When $K^3\le L^2$ and $LK/T \le \underline{a}^2$, we can let $\mu_1 - \mu_2 = \sqrt{ L/KT }$.} Then we have
\begin{align*}
    \Reg (T)
    & = \Omega\left(   \min \left\{ \frac{ 1}{K-1} \cdot  \frac{d (\delta,1-\delta)  }{ (\Delta^v/K)^2 } , T \right\} +  \sqrt{KLT} + L_0 \cdot \min \left\{ \frac{ 1}{K-1} \cdot  \frac{d (\delta,1-\delta)  }{ (\Delta^v/K)^2 } , \frac{  TK}{L} \right\} \right)
    \\&
    =  \Omega\left(    \min \left\{ K \cdot  \frac{d (\delta,1-\delta)  }{ (\Delta^v/ )^2 } , T \right\} +   \sqrt{KLT} + L \cdot \min \left\{   \frac{d (\delta,1-\delta)  }{ (\Delta^v )^2 } , \frac{  T }{L} \right\} \right)
    \\&
    =  \Omega\left(   \sqrt{KLT} +  \min \left\{ L \cdot  \frac{d (\delta,1-\delta)  }{ (\Delta^v  )^2 } , T \right\}   \right).
\end{align*}
Moreover, we complete the proof with $d(\delta,1-\delta) \ge -\ln(2.4\delta)$ and $\delta_T= \delta$. 

\end{proof}

\section{Tightness of the Upper bound}\label{sec:tightness}
 \begin{restatable}[Tightness of problem-dependent bounds]{cor}{corTightProbdep_complete1}\label{cor:tightness}
Let $\{\delta_T\}_{T=1}^\infty \in o(1)$ be a sequence that satisfies $\ln(1/\delta_T)=o(T^b)$ for all $b>0$,
\begin{itemize}
    \item if $\ln(1/\delta_T)\in o(\ln T)$, in particular, if $\delta_T=\delta_0>0$ for all~$ T$, the regret  $\mathrm{Reg}(T)$ is
    \begin{align}
        &\Omega\left(\sum_{i\in E}\frac{\ln T}{\Delta_{i,\Safeset\cap\Subopt,\min}}+ \frac{\mu^\star \ln T/K}{(\Deltav_{i,\Riskset})^2}+\frac{\mu^\star }{K}\Phi_{i,\Safeset^c\cap\Subopt}\right)\\
        \cap\,&
        O\left(\sum_{i\in E}\frac{K\ln T}{\Delta_{i,\Safeset\cap\Subopt,\min}}
        +\frac{\mu^\star K\ln T}{(\Deltav_{i,\Riskset})^2}
        +\mu^\star K\Phi_{i,\Safeset^c\cap\Subopt}
        \right)
    \end{align}
    \item if $\ln(1/\delta_T)=\lambda \ln T$, i.e., $\delta_T=T^{-\lambda}$ with a fixed $\lambda>0$, the regret  $\mathrm{Reg}(T)$ is
    \begin{align}
        &\Omega\left(\sum_{i\in E}\frac{\ln T}{\Delta_{i,\Safeset\cap\Subopt,\min}}
        +\frac{\lambda\mu^\star \ln T}{(\Deltav_{S^\star})^2}
        +\frac{\mu^\star }{K}\sum_{i\in E}\Big(\Psi^\prime_{i,\Safeset\cap\Subopt}
        +\frac{\ln T}{(\Deltav_{i,\Riskset})^2}
        +\Phi_{i,\Safeset^c\cap\Subopt}\Big)
        \right)\\
        \cap\,&
        O\left(
        \sum_{i\in E}\frac{K\ln T}{\Delta_{i,\Safeset\cap\Subopt,\min}}
        +\frac{\lambda\mu^\star K^2\ln T}{(\Deltav_{S^\star})^2}
        +K\mu^\star \sum_{i\in E}\Big(\Psi_{i,\Safeset\cap\Subopt}+\frac{\ln T}{(\Deltav_{i,\Riskset})^2}
                    +\Phi_{i,\Safeset^c\cap\Subopt}\Big)
        \right)
    \end{align}
    where $\ln(1/\delta_T)$ in the $\Psi$ and $\Psi^\prime$ functions should be replaced by $\lambda\ln T$.
    \item if $\ln(1/\delta_T)\in \omega(\ln T)$, the regret  $\mathrm{Reg}(T)$ is
    \begin{align}
        \Reg(T)\in \Omega\left(\frac{\mu^\star \ln(1/\delta_T)}{(\Deltav_{S^\star})^2}\right)
        \cap
        O\left(\frac{\mu^\star  K^2\ln(1/\delta_T)}{(\Deltav_{S^\star})^2}\right)
    \end{align}
\end{itemize}
The upper bounds above are achieved by {\sc PASCombUCB}.
\end{restatable}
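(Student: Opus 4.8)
The plan is to derive both directions in each of the three regimes directly from the already-established problem-dependent upper bound (Theorem~\ref{thm:upbd}) and lower bound (Theorem~\ref{thm:low_bd_prob_dep}), specializing their generic expressions to the prescribed decay rate of $\{\delta_T\}$ and then reading off the dominant terms. On the upper-bound side, Theorem~\ref{thm:upbd} gives $\Reg(T)\le\min\{T\mu^\star,\Regsub(T)+\Regsafe(T)\}+\Regfail(T)$ with $\Regfail(T)=O(1)$ and $\Regsub(T)=O(\sum_{i}K\ln T/\Delta_{i,\Safeset\cap\Subopt,\min})$ unchanged across regimes. Thus the only regime-dependent object is $\Regsafe(T)=2\mu^\star H(1,\instance)$, and the entire task reduces to estimating $H(1,\instance)$ from \eqref{equ:Hinstance} together with the explicit $h$-functions \eqref{equ:hAstar}--\eqref{equ:hunsafesub}, under the confidence parameters $\omega_\mu=\omega_{\rmv}'=T^{-2}$ and $\omega_{\rmv}=\delta_T T^{-2}$.

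First I would substitute these parameters, so that $\ln(1/\omega_\mu)=\ln(1/\omega_{\rmv}')=2\ln T$ while $\ln(1/\omega_{\rmv})=2\ln T+\ln(1/\delta_T)=\Theta(\ln(T/\delta_T))$. Reading off the leading $1/\bDelta^2$ terms of the $h$-functions then identifies $h_{S^\star}\asymp K\ln(T/\delta_T)/(\Deltav_{S^\star})^2$, $h_{\Safeset\cap\Subopt}\asymp K\Psi_{i,\Safeset\cap\Subopt}$, $h_{\Riskset}\asymp K\ln T/(\Deltav_{i,\Riskset})^2$ and $h_{\Safeset^c\cap\Subopt}\asymp K\Phi_{i,\Safeset^c\cap\Subopt}$, where the identification $\bDelta^{-2}\asymp\max_S\min\{\ln(1/\omega)/\Delta_S^2,\ln(1/\omega')/(\Deltav_S)^2\}$ is precisely the reciprocal-of-max manipulation already carried out in Lemma~\ref{lem:safereg}. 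Summing over $i\in S^\star$ (at most $K$ items) and over $i\in E$ recovers the form of Remark~\ref{rem:H}, namely $H(1,\instance)=O(K^2\ln(T/\delta_T)/(\Deltav_{S^\star})^2+K\sum_i(\Psi_{i,\Safeset\cap\Subopt}+\ln T/(\Deltav_{i,\Riskset})^2+\Phi_{i,\Safeset^c\cap\Subopt}))$.

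With this generic estimate in hand, the three cases follow by specializing $\ln(T/\delta_T)$ and $\ln(1/\delta_T)$. When $\ln(1/\delta_T)=o(\ln T)$ one has $\ln(T/\delta_T)=\Theta(\ln T)$ and the $S^\star$ safeness term $\mu^\star\ln(1/\delta_T)/(\Deltav_{S^\star})^2$ is $o(\mu^\star\ln T)$, hence dominated and droppable; when $\ln(1/\delta_T)=\lambda\ln T$ every term is $\Theta(\ln T)$ and all survive (with $\ln(1/\delta_T)$ inside $\Psi,\Psi'$ replaced by $\lambda\ln T$), recovering Corollary~\ref{cor:tightnessDep}; when $\ln(1/\delta_T)=\omega(\ln T)$ the single term $\mu^\star\ln(1/\delta_T)/(\Deltav_{S^\star})^2$ is $\omega(\ln T)$ and swamps every other ($O(\ln T)$) contribution, so only it remains. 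The matching lower bounds come by plugging the same rates into Theorem~\ref{thm:low_bd_prob_dep} and comparing term by term: the suboptimality term differs by a factor $K$ and each safeness-checking term by a factor $K^2$, exactly as asserted.

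The main obstacle I anticipate is the careful asymptotic bookkeeping in the two extreme regimes, in particular justifying which terms may legitimately be absorbed into the $\Omega/O$ expressions. This requires (i) confirming that $\ln(T/\delta_T)$ and $\ln(1/\delta_T)$ are of the same order precisely when $\ln(1/\delta_T)=\Omega(\ln T)$, so that the $\Psi$ (upper) and $\Psi'$ (lower) variants coincide up to constants and the claimed $K^2$ gap is genuine rather than an artifact of a mismatched logarithm; and (ii) handling the $\min\{\cdot,\cdot\}$ structure inside $\Psi,\Psi',\Phi$, whose active branch can switch between the mean-gap term $\ln T/\Delta_S^2$ and the variance-gap term as the ratio $\Delta_S/\Deltav_S$ and the decay rate vary, so that one must argue uniformly over the relevant $S$ that the claimed dominant branch is attained. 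Everything else is a direct substitution into results already proved, together with the elementary comparison $\min\{a/x^2,b/y^2\}=1/\max\{x^2/a,y^2/b\}$ used throughout the appendix.
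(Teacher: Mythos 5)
Your overall route is the intended one: the paper states this corollary without a separate proof, treating it as a direct specialization of Theorem~\ref{thm:upbd} (through the $h$-functions behind \eqref{equ:Hinstance}, i.e.\ Remark~\ref{rem:H}) combined with Theorem~\ref{thm:low_bd_prob_dep}, and your generic estimate of $H(1,\instance)$ as well as your handling of the regimes $\ln(1/\delta_T)=\lambda\ln T$ and $\ln(1/\delta_T)\in\omega(\ln T)$ are correct (in the latter regime, every other contribution is $O(\ln T)$ for a fixed instance, so the $S^\star$ term indeed dominates both bounds).

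There is, however, a genuine gap in your first case, $\ln(1/\delta_T)\in o(\ln T)$. You justify removing the $S^\star$ term from the \emph{upper} bound by asserting that ``the $S^\star$ safeness term $\mu^\star\ln(1/\delta_T)/(\Deltav_{S^\star})^2$ is $o(\mu^\star\ln T)$, hence dominated and droppable.'' But by your own paragraph-two computation, the $S^\star$ contribution to the upper bound is $h_{S^\star}\asymp K\ln(1/\omega_\rmv)/(\Deltav_{S^\star})^2$ with $\omega_\rmv=\delta_T/T^2$, so it scales as $\ln(T/\delta_T)=2\ln T+\ln(1/\delta_T)=\Theta(\ln T)$ in this regime; the $\ln(1/\delta_T)$ scaling you invoke occurs only in the \emph{lower} bound of Theorem~\ref{thm:low_bd_prob_dep}. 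Consequently the upper bound carries a term $\Theta\big(\mu^\star K^2\ln T/(\Deltav_{S^\star})^2\big)$ of the same order in $T$ as the terms you keep, with an independent instance constant, and it is not dominated by $\sum_{i}\big[K\ln T/\Delta_{i,\Safeset\cap\Subopt,\min}+\mu^\star K\ln T/(\Deltav_{i,\Riskset})^2+\mu^\star K\Phi_{i,\Safeset^c\cap\Subopt}\big]$ in general: take an instance with $\Safeset^c=\emptyset$, so the last two sums are vacuous, and let $\Deltav_{S^\star}\to0$ while the mean gaps stay fixed. The same objection applies to the $\Psi_{i,\Safeset\cap\Subopt}$ terms, which in this regime are $\Theta\big(\max_{S}\min\{\ln T/\Delta_S^2,\,\ln T/(\Deltav_S)^2\}\big)$ and are likewise absent from the case-one upper bound you are trying to establish, yet you never discuss them. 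So your dominance argument does not prove the first bullet; a correct proof must explain why the safeness-checking cost of $S^\star$ and of the safe-suboptimal solutions can be absorbed (or those terms must be retained in the statement). This is precisely the bookkeeping obstacle you flagged in your closing paragraph but did not resolve: $\ln(T/\delta_T)$ and $\ln(1/\delta_T)$ are \emph{not} of the same order when $\ln(1/\delta_T)=o(\ln T)$, and that mismatch is exactly where your argument breaks.
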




 \section{Additional Discussions and Future Research} \label{sec:additional}
 \subsection{Discussions on the Tightness Results}
 In terms of the problem-dependent bounds in Corollary~\ref{cor:tightnessDep}, we consider general instances where the rewards from the items are independent and the gap in $\Reg_1(T)$ can be closed when that the rewards from the items are correlated, as in the lower bound for the unconstrained combinatorial bandits in \citet{Tight2015Kveton}. This assumption also allows us to remove a factor of $K$ from the gap of $\Reg_2(T)$. 

 In terms of the problem-independent bounds in Corollary~\ref{cor:tightnessIndep}, the regret due to suboptimality is almost tight as in the unconstrained case~\citep{Tight2015Kveton}. The regret due to safeness checking is tight up to a factor of $K^2$. During each phase, \AlgSCombUCB selects and samples solutions which are disjoint subsets of $A_{p}$, and hence one item is sampled at most once during one phase. However, it is empirically feasible to sample some items  more than once during one phase, which will help reduce the regret but requires more delicate analysis.

 For future directions, it is of interest to close the gap (the factor $K$) in the regret due to safeness checking with improved analyses or additional assumptions on the instance.
 \subsection{Discussions on the Problem Formulation}
 \emph{Anytime safety} is important in safety-critical applications where \emph{at each point in time} the risk cannot exceed a certain threshold. For example, in a self-driving car that is scheduled to move from start point $x_0$ to end point $x_n$ via $(x_1, x_2,\ldots, x_{n-1})$ (the choice of these waypoints is a combinatorial problem), it is necessary that the car stay in its designated lane at all points in time, and not just ``on average'', otherwise a catastrophic accident might result at some point in time with non-negligible probability. In this example, we might want to choose a route that is safe at all times w.h.p. (in the sense that the car stays in its designated lane) at the cost of a longer travel time. 
 
 This work studies the anytime-safe constraint at the \emph{super-arm}\footnote{In this discussion, the terms ``super arm'' and ``base arm'' refer to ``solution'' and ''item'' respectively.} level and provides a first step to understanding risk in combinatorial semi-bandits. The \emph{sum} $\sum_{i\in A}\sigma_i^2$ of a set of items (base arms) in $A$ is adopted as the risk measure, which is a certain function of $\sigma_i^2,  i\in A$.
 It is also of interest to study the anytime safety at the \emph{base arm level}, where the risk function is  $\max_{i\in A}\sigma_i^2$. From a technical point of view, if the safeness of any single base arm has not been ascertained (as we need to learn the safeness/risk of the base arms), then pulling any base arm can be risky, in the sense that with non-vanishing probability, the anytime-safe constraint (or even the less stringent stagewise safety constraint~\citep{Khezeli2020Safe}) is violated when we do the exploration (by pulling any base arm) \emph{at the beginning}. Thus, this seems infeasible from a technical standpoint. 
 Nevertheless, we believe additional proper assumptions can be made to formulate a practical and feasible model that leads to future researches.

 \subsection{Comparisons}
  \underline{\textbf{Comparison with \citet{Wu2016Conservative,Kazerouni2017ConsContLB}:}}
  While the conservative (linear) constraint 
  $$
\mathbb{P}\left[\bigcap_{t\ge 1}\left\{\sum_{k=1}^t\left\langle X_k, \theta^*\right\rangle \geq(1-\alpha) tb_0\right\} \right] \geq 1-\delta
$$
requires the constraint should be satisfied w.p. $1-\delta$ over the whole horizon, which is similar to our probably anytime-safe constraint, the constraint is in terms of the \emph{cumulative} expected reward (up to time step $t$). The cumulative nature of the conservative constraint maintains a ``budget reservoir'' that makes this constraint less stringent than the stagewise safety constraint~\citep{Khezeli2020Safe}, in the sense that one algorithm may satisfy the conservative constraint but violate the stagewise safety constraint. Both the stagewise safety constraint and the probably anytime-safe constraint consider the reward/risk that is incurred at each \emph{single} time step.

 \underline{\textbf{Comparison with \citet{Khezeli2020Safe,Moradipari2020Stagewise}:}}
To the best of our knowledge, the \emph{stagewise safety} (or the \emph{stagewise conservative)} constraint \citep{Khezeli2020Safe,Moradipari2020Stagewise} is the most related risk-aware constraint to our anytime-safe constraint. \newline
(1) The stagewise conservative constraint is \textit{a constraint on the mean reward} (hence, only one statistics is involved in the problem), which originates from the conservative constraint~\citep{Wu2016Conservative}. In our setup, we post the anytime-safe constraint on the risk while minimizing the regret, which requires us to consider \textit{two statistics and the interaction between them}.\newline
(2) The stagewise safety constraint has only been utilized under the linear bandit setup in the literature, where the arm set is assumed to be a \emph{convex and compact} set in $\mathbb{R}^{n}$~\citep{Moradipari2020Stagewise}, and thus, the arms constitute an \emph{uncountable continuous} set. If an arm $A$ is known to be safe, then it is safe to pull any arm ``near'' $A$. However, in the combinatorial bandit setup, such a continuity property of the arm set does not hold since it is ``discrete''. Specifically, given that super arm $A$ is safe (but not absolutely safe), even the safeness of a nearby arm $\tilde{A}$, which is obtained by replacing one single base arm in $A$ with another base arm, cannot be guaranteed by the safeness of $A$. \newline
(3) In terms of the safety level, consider the stagewise safety constraint with a constant confidence parameter $\delta$ (independent of $T$); intuitively, the safety constraint can be violated approximately $\delta T$ times, which is \emph{linear} in $T$. In addition, consider an algorithm which does safeness checking first, followed by exploration-and-exploitation on the safe super arms, it takes $\Theta(\frac{1}{(\Delta_A^\rmv)^2}\ln\frac{1}{\delta})$ pulls to identify the unsafeness of an unsafe super arm $A$. Note that the time required is independent of $T$, so the regret due to safeness checking is $o(T^a)$ for all $a>0$. From this perspective, the stagewise safety constraint is not stringent enough and can be easily satisfied by such a naive algorithm. A more direct intuition (yet not completely rigorous) is, if the algorithm ignores the stagewise safety constraint, it only takes $o(T^a)$ for all $a>0$ to rule out the unsafe super arms, which indicates it satisfies the stagewise safety constraint (since the unsafe super arms are pulled $o(T^a)<\delta T$ times).
From another point of view, given a confidence parameter $\delta$, if the stagewise safety constraint is satisfied w.p. $1-\delta_t$ at time step $t$ with $\sum_{t=1}^T \delta_t=\delta$, then the probably anytime-safe constraint is met w.p. $1-\delta$.\newline
(4) Besides the constant confidence parameter $\delta$, we have investigated the \emph{whole spectrum} of $\delta$ in terms of the time horizon $T$. The tightness result (Corollary~\ref{cor:tightness}) indicates our algorithm is capable of dealing with \emph{an even stricter constraint} (in the sense that $\delta$ decreases with respect to $T$) and we provide a sharp threshold on the achievability of $o(T^a)$ (for all $a>0$) regret (see Theorem~\ref{thm:impos}).



 \end{document}